\renewcommand{\cleardoublepage}{\clearpage\if@twoside \ifodd\c@page\else%
  \hbox{}%
  \thispagestyle{empty}
  \newpage%
  \if@twocolumn\hbox{}\newpage\fi\fi\fi}
\algnewcommand\algorithmicinput{\textbf{Input:}}
\algnewcommand\Input{\item[\algorithmicinput]}
\algnewcommand\algorithmicinputx{\textbf{Input}}
\algnewcommand\Inputx{\item[\algorithmicinputx]}
\xpatchcmd{\algorithmic}{\itemsep\z@}{\itemsep=0.7ex plus1pt}{}{}
\newcolumntype{Y}{>{\RaggedRight\arraybackslash}X}
\hfill\textcolor{gray!75}{{\fontsize{50}{60}\selectfont \thechapter}}}
\pgfplotsset{compat=newest}
\definecolor{py-3-1}{HTML}{1F77B4}
\definecolor{py-3-2}{HTML}{FF7F0E}
\definecolor{py-3-3}{HTML}{2CA02C}
\definecolor{py-4-1}{HTML}{D62728}
\definecolor{py-4-2}{HTML}{9467BD}
\definecolor{py-4-3}{HTML}{BCBD22}
\definecolor{py-4-4}{HTML}{17BECF}
\definecolor{DarkRed}{RGB}{165, 0, 38}
\renewcommand{\maketitle}{

\thispagestyle{empty}
\begin{singlespace}
\begin{center}

\begin{minipage}[c]{\textwidth}
  \centering \Huge \bfseries{Beyond traditional assumptions\texorpdfstring{\\}{} in fair machine learning}
\end{minipage}

\vspace*{2.5cm}

{\includegraphics[width=0.2\textwidth]{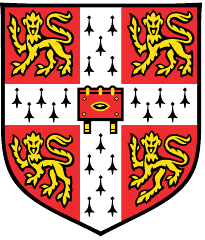} \par}

\vspace*{1.5cm}

\begin{minipage}[c]{\textwidth}
  \centering \Large \bfseries{Niki Kilbertus}
\end{minipage}

\vspace*{1cm}

\begin{minipage}[c]{\textwidth}
    \centering \Large Supervisor: Prof.~Dr.~Carl~E.~Rasmussen
\end{minipage}

\vspace*{0.5cm}

\begin{minipage}[c]{\textwidth}
    \centering \Large Advisor: Dr.~Adrian~Weller
\end{minipage}

\vspace*{1cm}

\begin{minipage}[c]{\textwidth}
  \centering 
  {\large Department of Engineering \par}
  {\large University of Cambridge \par}
\end{minipage}

\vspace{3cm}

\begin{minipage}[c]{\textwidth}
  \centering
  {\large This thesis is submitted for the degree of  \par}
  {\large \emph{Doctor of Philosophy} \par}
\end{minipage}

\vfill

\begin{minipage}[c]{\textwidth}
  \large
    \begin{minipage}[b]{0.49\textwidth}
      \flushleft Pembroke College
    \end{minipage}
    \begin{minipage}[b]{0.49\textwidth}
      \flushright October 2020
    \end{minipage}
\end{minipage}

\end{center}

\end{singlespace}
}
\newcommand\B[1]{{\bm{#1}}}
\newcommand{\share}[2]{\langle {#1}\rangle_{#2}}
\DeclareMathOperator*{\argmin}{arg\,min}
\DeclareMathOperator*{\diag}{diag}
\DeclareMathOperator*{\sign}{sign}
\DeclareMathOperator*{\maximize}{maximize}
\newcommand{\usub}[2]{\underset{#1}{\underbrace{#2}}}
\newcommand*{\given}{{\,|\,}}
\DeclareMathOperator{\indep}{\perp\!\!\!\perp}
\DeclareMathOperator{\dep}{\not\! \perp\!\!\!\perp}
\newcommand{\bN}{\mathbb{N}}
\newcommand{\bR}{\mathbb{R}}
\newcommand{\bZ}{\mathbb{Z}}
\DeclareMathOperator{\E}{\mathbb{E}}
\DeclareMathOperator{\dP}{P}
\DeclareMathOperator{\prob}{P}
\DeclareMathOperator{\distr}{P}
\DeclareMathOperator{\dens}{p}
\DeclareMathOperator{\var}{\mathrm{var}}
\DeclareMathOperator{\cov}{\mathrm{cov}}
\DeclareMathOperator{\pa}{\mathrm{pa}}
\newcommand{\cD}{\mathcal{D}}
\newcommand{\cG}{\mathcal{G}}
\newcommand{\cN}{\mathcal{N}}
\newcommand{\cX}{\mathcal{X}}
\newcommand{\cY}{\mathcal{Y}}
\newcommand{\cZ}{\mathcal{Z}}
\newcommand{\sspace}{\ensuremath \mathcal{Z}}
\newcommand{\xspace}{\ensuremath \mathcal{X}}
\newcommand{\outspace}{\ensuremath \mathcal{Y}}
\newcommand{\Qcal}{\mathcal{Q}}
\newcommand{\F}{\bm{F}}
\newcommand{\const}{\B{c}}
\newcommand{\avec}{\B{a}}
\newcommand{\x}{\B{x}}
\newcommand{\y}{\B{y}}
\newcommand{\z}{\B{z}}
\newcommand{\A}{\B{A}}
\newcommand{\X}{\B{X}}
\newcommand{\Z}{\B{Z}}
\newcommand{\bphi}{\ensuremath \B{\phi}}
\newcommand{\btheta}{{\B{\theta}}}
\newcommand{\blambda}{{\B{\lambda}}}
\newcommand{\tcG}{\tilde{\mathcal{G}}}
\newcommand{\hY}{\hat{Y}}
\newcommand{\hy}{\hat{y}}
\newcommand{\pitil}{\ensuremath \tilde{\pi}}
\DeclareMathOperator{\Ber}{\mathrm{Ber}}
\DeclareMathOperator{\CFU}{\ensuremath \mathrm{CFU}}
\newcommand{\mul}{\texttt{mul}}
\newcommand{\user}{{\textsc U}}
\newcommand{\dc}{\textsc{M}}
\newcommand{\reg}{\textsc{Reg}}
\def\eqp{\: .}
\def\eqc{\: ,}
\newcommand{\qtxtq}[1]{\quad\text{#1}\quad}
\newcommand{\for}{\quad \text{for }}
\newcommand{\fora}{\quad \text{for }}
\newcommand{\secref}[1]{Section~\ref{#1}}
\newcommand{\secsref}[1]{Sections~\ref{#1}}
\newcommand{\chapref}[1]{Chapter~\ref{#1}}
\newcommand{\chapsref}[1]{Chapters~\ref{#1}}
\newcommand{\Chapref}[1]{Chapter~\ref{#1}}
\newcommand\blfootnote[1]{%
  \begingroup
  \renewcommand\thefootnote{}\footnote{#1}%
  \addtocounter{footnote}{-1}%
  \endgroup
}
\newcommand{\xhdr}[1]{\noindent{\textbf{#1.}}}
\newcommand{\newdef}[1]{\textbf{#1}}
\newcommand{\directquote}[1]{``\emph{#1}''}
\newcommand{\match}[1]{{\color{py-3-1}\text{#1}} = {\color{py-3-2}\text{#1}'}}
\newtheorem{theorem}{Theorem}
\newtheorem*{theorem*}{Theorem}
\newtheorem*{lemma*}{Lemma}
\newtheorem{corollary}{Corollary}
\newtheorem*{corollary*}{Corollary}
\newtheorem{proposition}{Proposition}
\newtheorem*{proposition*}{Proposition}
\theoremstyle{definition}
\newtheorem{definition}{Definition}
\tikzstyle{obs} = [circle,fill=white,draw=black,inner sep=1pt,minimum size=20pt,font=\fontsize{10}{10}\selectfont,node distance=1,thick]
\tikzstyle{latent} = [obs,dotted]
\tikzstyle{protected} = [obs,text=Orange,draw=Orange]
\tikzstyle{unfair} = [obs,text=BrickRed,draw=BrickRed]
\tikzstyle{target} = [obs,text=MidnightBlue,draw=MidnightBlue]
\tikzstyle{feature} = [obs,text=ForestGreen,draw=ForestGreen]
\tikzstyle{simple} = [thick,circle,draw,minimum height=1.7em,inner sep=0pt,text centered]
\newcommand{\edge}[3][]{ %
  \foreach \x in {#2} { %
    \foreach \y in {#3} { %
      \path (\x) edge [->, >=latex, #1,thick] (\y) ;%
    } ;
  } ;
}
\pgfplotsset{hline/.style={%
  before end axis/.append code={%
    \draw[densely dashed, black, opacity=1]
    ({rel axis cs:0,0} |- {axis cs:0,#1}) -- ({rel axis cs:1,0} |- {axis cs:0,#1});
  }
}}
\pgfplotsset{vline/.style={%
  before end axis/.append code={%
    \draw[densely dashed, black, opacity=1]
    ({rel axis cs:0,0} |- {axis cs:#1,0}) -- ({rel axis cs:0,1} |- {axis cs:#1,0});
  }
}}
\newcommand{\pubitem}[7]{
  \ifthenelse{\equal{#6}{}}{}{$\triangleright\triangleright$} \ifthenelse{\equal{#4}{}}{\textsc{#1}}{%
  \href{#4}{\textsc{#1}}}\\
  {\small #2.}
  \ifthenelse{\equal{#5}{}}{}{\\{\small \href{#5}{[\url{#5}]}}}
  \\
  \emph{#3}%
}
\begin{document}

\frontmatter

\maketitle

\cleardoublepage
\chapter*{\centering \Large Acknowledgments}
\thispagestyle{empty}

The four years of my PhD have been filled with enriching and fun experiences.
I owe all of them to interactions with exceptional people.
Carl Rasmussen and Bernhard Schölkopf have put trust and confidence in me from day one.
They enabled me to grow as a researcher and as a human being.
They also taught me not to take things too seriously in moments of despair.
Thank you!
Adrian Weller's contagious positive energy gave me enthusiasm and drive.
He showed me how to be considerate and relentless at the same time.
I thank him for his constant support and sharing his extensive network.
Among others, he introduced me to Matt~J.~Kusner, Ricardo Silva, and Adri\`a Gasc\'on who have been amazing role models and collaborators.
I hope for many more joint projects with them!
Moritz Hardt continues to be an inexhaustible source of inspiration and I want to thank him for his mentorship and guidance during my first paper.
I was extremely fortunate to collaborate with outstanding people during my PhD beyond the ones already mentioned.
I have learned a lot from every single one of them, thank you:
Philip Ball,
Stefan Bauer,
Silvia Chiappa,
Elliot Creager,
Timothy Gebhard,
Manuel Gomez Rodriguez,
Anirudh Goyal,
Krishna~P.~Gummadi,
Ian Harry,
Dominik Janzing,
Francesco Locatello,
Krikamol Muandet,
Frederik Träuble,
Isabel Valera, and
Michael Veale.

I cannot begin to describe how grateful I am for all the hours I spent with the puppies:
GB woof (Giambattista Parascandolo),
Mateo (Chan) Rojas-Carulla,
Poru (Paul Rubenstein),
Alessandro Labongo,
John Bradshaw,
Axel Neitz, and
Ilya (Sensei) Tolstikhin.
Thank you for everything!
I always thoroughly enjoyed the two-person reading groups and collaborations with Jiri Hron as well as our Itsu dates with Matej Balog.
Timothy Gebhard showed me how productive and satisfying close collaboration can be, thank you for that.
The (rest of the) CamTue crew was a great source of joy and support as well:
Adam \'Scibior,
Shane Gu,
Matthias Bauer,
Chaochao Lu,
Julius von Kügelgen,
Vincent Stimper, and
Erik Daxberger.
There are countless people I have not collaborated with, who have contributed (knowingly or unknowingly) to  making my PhD such an amazing time.
At the risk of missing some (sorry!):
Borja Balle,
Sebastian Gomez-Gonzalez,
Dieter Büchler,
Diego Agudelo-Espa\~na,
Jonas Kübler,
Sebastian Weichwald,
Luigi Gresele,
Okan Koc,
Carl-Johann Simon-Gabriel,
Behzad Tabibian,
Justus Winkelmann,
Danilo Brajovic,
Tor Fjelde,
Karl Krauth,
Mark Rowland,
Amir Hossein Karimi,
Tameem Adel,
John Bronskill,
Jonathon Gordon,
Adri\`a Garriga-Alonso,
Robert Pinsler,
Siddarth Swaroop,
and many more.

I thank Mama, Papa, Hanna, Pauli and the rest of my family for their unconditional love and support.
Finally, none of this would have been possible without my extraordinary wife Jani, who put up with me every day of my PhD and still loves me.

\cleardoublepage
\chapter*{\centering \Large Abstract}
\thispagestyle{empty}

This thesis scrutinizes common assumptions underlying traditional machine learning approaches to fairness in consequential decision making.
After challenging the validity of these assumptions in real-world applications, we propose ways to move forward when they are violated.
First, we show that group fairness criteria purely based on statistical properties of observed data are fundamentally limited.
Revisiting this limitation from a causal viewpoint we develop a more versatile conceptual framework, causal fairness criteria, and first algorithms to achieve them.
We also provide tools to analyze how sensitive a believed-to-be causally fair algorithm is to misspecifications of the causal graph.
Second, we overcome the assumption that sensitive data is readily available in practice.
To this end we devise protocols based on secure multi-party computation to train, validate, and contest fair decision algorithms without requiring users to disclose their sensitive data or decision makers to disclose their models.
Finally, we also accommodate the fact that outcome labels are often only observed when a certain decision has been made.
We suggest a paradigm shift away from training predictive models towards directly learning decisions to relax the traditional assumption that labels can always be recorded.
The main contribution of this thesis is the development of theoretically substantiated and practically feasible methods to move research on fair machine learning closer to real-world applications.

\tableofcontents
\listoffigures
\listoftables

\mainmatter

\chapter{Introduction and overview}
\label{chap:intro}

\graphicspath{{figs/chap1/}}

\section{Motivation}
\label{sec:motivation}

As machine learning penetrates all aspects of our everyday lives and the push for automation moves beyond industrial settings to decisions about human beings, we are facing a multitude of new challenges.
While the performance goals for narrowly confined industrial automation tasks are often easy to express, automated decisions affecting people'{}s livelihood and well-being must meet more complex requirements.
We want to verify that these systems make ethically agreeable decisions and that they do so for the right reasons.
These considerations include a broad range of concerns about fairness, trust, accountability, transparency, and privacy.
Similar issues are also faced by human decision makers and have been the subject of various fields of research long before the advent of computers and large scale data analytics.
However, software is a human creation and thereby, in principle, we expect to have full control over its actions.
Hence, we may also hope to exert full control over the consequences of its deployment.
However, modern data-driven machine learning systems can be notoriously hard to interpret and their downstream impact unpredictable.
Moreover, because of their scalability, the potential for harm due to misspecifications of an automated system often by far exceeds what any individual human decision maker can incur.
Unsurprisingly, undesirable effects of machine learning systems have already been observed in practice.

What do we mean by ``undesirable effects''?
There is an ongoing debate whether morality is innate or a socially constructed ideal among ethicists and philosophers.
Arguably, the nuances of our present-day understanding still depend on subjective human judgment both on the level of individuals as well as societies and cultures.
Therefore, throughout this thesis, we will avoid attempts of an objective definition or even opinion of ``right or wrong'' and ``fair or unfair''.
This is not to say that there will not be definitions of fairness.
In fact, we will encounter whole families of formally defined fairness criteria.
However, we explicitly consider those to be mere candidates that may possibly cover some reasonable dimensions of what is considered fair by some people within a given context.
Even though our language will not always be as cautious as in the previous sentence, we ask the reader to consider what follows from this viewpoint.

Now, to illustrate the dangers of deploying machine learning systems in the social context, we briefly highlight a small collection of recent examples of harmful bias.
In many cases we do not know the specifics of the underlying systems, in particular, whether they were powered by machine learning.
They are meant to serve as clarifying examples that could have plausibly been generated by machine learning algorithms and share a vast scale of impact enabled by digital technologies.
The wide deployment of such systems as well as their opaqueness have also been identified by Cathy O'Neil as key characteristics of ``weapons of math destruction'' \citep{o2016weapons}.
For a popular introduction containing plentiful examples of how algorithms can scale and accelerate inequality and injustice, we also point the reader to \citet{noble2018algorithms,benjamin2019race,broussard2018artificial,eubanks2018automating,kearns2019ethical}.
For further reading on the risks and dangers of data-driven decision support systems and the sources of bias or discrimination from a more technical perspective, we refer the reader to the excellent online book by \citet{barocas-hardt-narayanan} as well as reports by \citet{Barocas2016,House2016}.

\paragraph{Criminal justice.}
A decision support tool called \emph{Correctional Offender Management Profiling for Alternative Sanction} (COMPAS) was used in U.S.\ courts to estimate the recidivism risk of defendants for decisions on pre-trial detention.
Reporters at ProPublica claimed that COMPAS is biased against blacks, because among convicts that did not go on to re-offend within two years of their release, blacks had received systematically worse scores than whites \citep{Angwin2016}.

\paragraph{Bias in search and recommendation.}
In web searches for names, Google'{}s advertisement delivery algorithm AdSense was found to deliver ads suggestive of arrest more often for names primarily assigned to black people than for those assigned to white people \citep{sweeney2013discrimination}.
Furthermore, stereotype exaggeration and systematic underrepresentation of women were found in image search results for a variety of occupations \citep{kay2015unequal}.

\paragraph{Exam grading.}
In a randomized study of discrimination in grading in India, it was found that teachers assigned worse scores to exams seemingly written by children from lower castes consistent with statistical discrimination \citep{hanna2012discrimination}.

\paragraph{Prime free same-day delivery.}
When Amazon rolled out free same-day delivery, the region of availability excluded predominantly black ZIP codes in some cities \citep{Ingold2016}.

\paragraph{Image recognition.}
After a public media outcry over Google Photos labeling pictures of black people as gorillas in 2015 \citep{Dougherty2015}, Amazon received bad publicity in 2018, because their commercial facial recognition system ``Rekognition'' falsely matched 28 members of congress with mugshots.
Black members were disproportionately affected \citep{Snow2018}.
An academic study of commercially available gender classification software from images confirms substantial disparities in accuracy for different demographic groups formed by gender and skin tone with darker-skinned females being the most misclassified group \citep{buolamwini2018gender}.

\paragraph{Bias in semantics derived from human language.}
Automatically deriving features and semantics from large corpora of human language is an important tool in natural language processing.
Such systems have been shown to contain human-like biases, often reflecting problematic associations regarding race and gender \citep{caliskan2017semantics}.
For example, popular word embeddings have been shown to establish a similar relation between ``man'' and ``computer programmer'' as between ``woman'' and ``homemaker'' among similar examples \citep{bolukbasi2016man}.
These imprints of historical bias can also be found in machine learning powered translation software such as Google translate, see Figure~\ref{fig:google_translate}.
As yet another example, a chatbot named ``Tay'' launched by Microsoft to engage in conversations with people on Twitter in 2016 soon posted wildly racist tweets and reprehensible images as it learned from its conversations \citep{Lee2016}.

\begin{figure}
\centering
\includegraphics[width=0.6\textwidth]{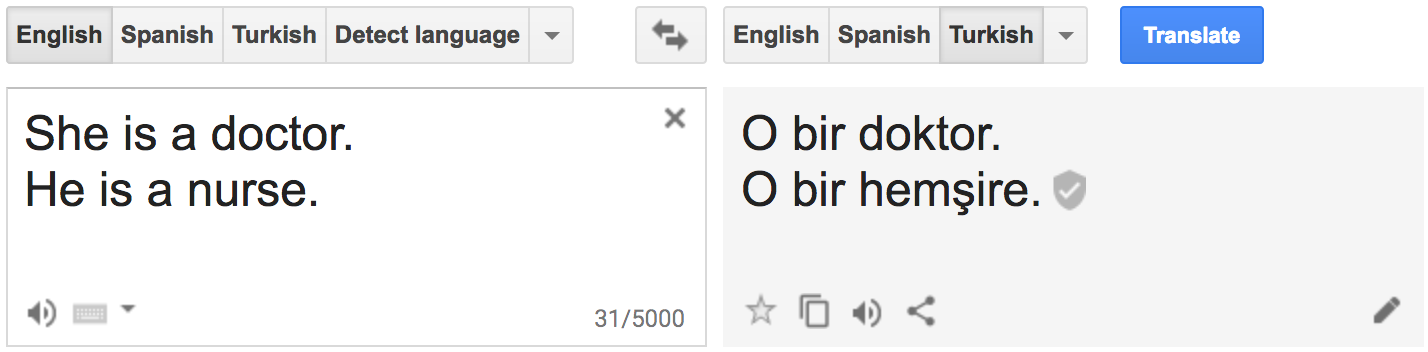}\\
\includegraphics[width=0.6\textwidth]{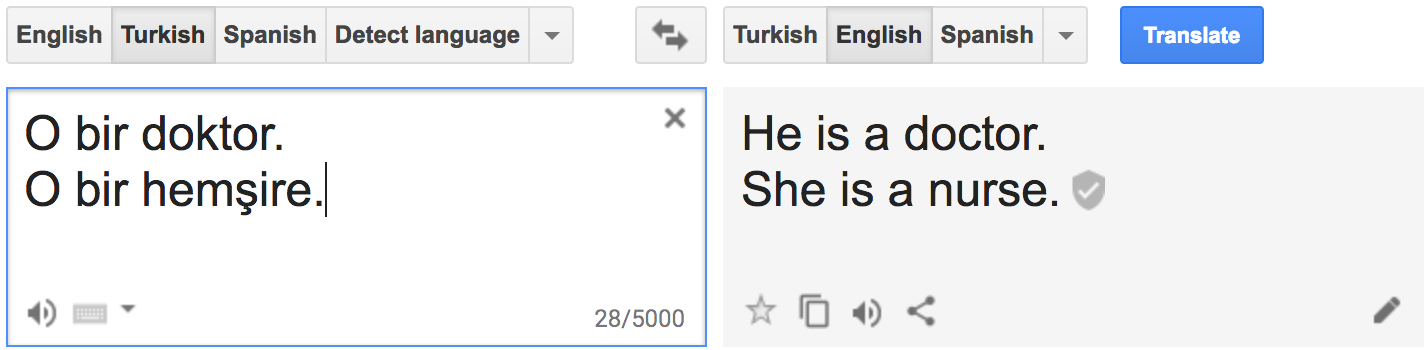}
\caption[An example of stereotypes in algorithmic language translation]{These translation results on \url{https://translate.google.com} were obtained on August 11, 2018 and show stereotypical gender association of certain occupations.}
\label{fig:google_translate}
\end{figure}

\paragraph{Focus of this thesis.}
These examples underline the breadth of the potential for algorithmic discrimination.
In this thesis we will focus on scenarios in which machine learning systems are involved in taking concrete decisions (typically one out of a finite set of possible actions) that directly affect individuals well-being and livelihoods.
The criminal justice and exam grading examples fit well into this category.
Other applications could be lending, hiring, or insurance decisions and treatment choices in health-care.
Furthermore, we typically assume the algorithm to directly trigger decisions deterministically.
In practice, the more likely scenario may be one in which the algorithm only supports or informs human decision makers in various ways.
Ultimately, the core issues we discuss remain regardless of whether the algorithm takes, supports, or informs decisions.

\section{Outline and contributions}
\label{sec:outline}

Albeit not immediately relevant for the technical contributions of this thesis, we will briefly dive a little bit deeper into fundamental questions of fairness and justice from a philosophical viewpoint in \chapref{chap:background}.
Readers only interested in the methodological contributions of the thesis may skip this chapter.

\Chapref{chap:existing} contains an introduction to existing approaches to fair machine learning for decision making.
After an overview of the general setting and some notation, we focus on notions of fairness for classification or decision tasks with a finite set of actions.
We specifically emphasize the assumptions required for the respective approaches and the resulting limitations.
The remaining chapters each propose solutions that allow us to relaxing some of these restrictive assumptions.

In \chapref{chap:causal}, we take a critical look at common statistical fairness criteria.
Through the lens of causality, we crisply articulate why and when specific group fairness notions fail.
These insights suggest to shift attention from the elusive question ``What is the right fairness criterion?'' to ``What do we want to assume about our model of the causal data generating process?''
We propose the language of causality as a useful technical tool to scrutinize and talk about such assumptions.
Within that framework, we further expose previously ignored subtleties fundamental to the problem, e.g., how to interpret protected attributes in a causal sense.
For example, in most of the real-world examples at the end of \secref{sec:motivation}, we use terms such as ``perceived'' or ``seemingly'' in conjunction with group membership indicators such as ``black'' or ``disadvantaged''.
We propose two complementary viewpoints for interpreting causal influences that enable practitioners to better navigate the conceptual fuzziness around these terms.
Finally, we put forward natural causal non-discrimination criteria and develop predictive algorithms that satisfy them.

Subsequently, in \chapref{chap:sensitivity}, we also scrutinize one of the major assumptions of causal reasoning, namely that the true causal graph is known.
Potential misspecifications of the assumed causal model can invalidate conclusions about the fairness of learned models.
One common way for misspecification to occur is via \emph{unmeasured confounding}: the true causal effect between variables is partially due to unobserved quantities.
Causally fair classifiers critically rely on assumptions about the causal structure.
In case the real world does not satisfy these assumptions, such a proposed classifier will actually be unfair during deployment.
\Chapref{chap:sensitivity} develops computationally efficient tools to assess the sensitivity of fairness measures to unobserved confounding.
These allow decision makers to measure how unfair their proposed classifier becomes in case the real world deviates from the causal assumptions within quantitatively measurable limits.
Hence, the main contribution of this chapter is to bring conceptually useful causal fairness notions closer to application by quantifying how they break, when the required assumptions do not hold perfectly and instead may be slightly violated.
We demonstrate how to perform sensitivity analysis with our tools on two real-world datasets.

\Chapref{chap:blindjustice} focuses on the pressing question of how training of fair machine learning systems may compromise user privacy.
Typically, sensitive data is required to train fair models.
However, users may be wary of providing this data to decision makers.
We suggest to tackle this predicament via cryptographic tools that allow us to train fair models with access only to encrypted sensitive data.
The main contribution of this chapter is to extend existing \emph{secure multi-party computation} protocols to also incorporate linear constraints in the optimization of machine learning models.
Thereby, we bring fair model training into the realm of computing on encrypted inputs---in our case encrypting sensitive features for the training data.
Moreover, we introduce a notion of accountability by allowing external entities to check the fairness properties of decision models and verify their outputs without revealing sensitive user data or the model itself, which may be considered secretive intellectual property.

Finally, in \chapref{chap:decisions}, we highlight that most traditional approaches to fair machine learning are based on predictive models learned from static datasets in a supervised fashion, implicitly assuming training data to be an i.i.d.\ sample from the distribution expected during deployment.
However, as we derive decisions from these predictions, the data distribution observed at test time may depend on these decisions.
Specifically, we consider a setting where labels only come into existence when a positive decision is made---if a loan is denied, there is not even an option for the individual to pay it back.
Our first contribution is to formally show that in such a setting, if an imperfect predictive model is used for data collection, the observed data will not suffice to train an optimal predictive model via constrained risk minimization---the most common technique in existing works on statistical fairness.
Instead we propose a paradigm shift away from minimizing a predictive loss with observed labels towards maximizing a utility that directly measures the quality of decisions in terms of accuracy and fairness.
We show that this requires non-deterministic decision rules that ``explore over all inputs'', i.e., give positive decisions with non-zero probability for any input.
Finally, we develop an algorithm to solve the utility maximization and demonstrate its efficacy in terms of accuracy and fairness on synthetic and real-world data.
Beyond enabling fair model training under more realistic circumstances, namely when labels only exist for certain decisions, this chapter also formalizes why it is crucial to distinguish between predictions and decisions in a social context.

\Chapref{chap:conclusion} first summarizes the contributions of the thesis.
We then draw our conclusions and highlight directions for future work before eventually circling back to the broader context of injustice and discrimination.

\section{Publications}
\label{sec:more_pubs}

This thesis is based on the following publications.

\Chapref{chap:causal} is based on \citet{kilbertus2017avoiding}:

\fbox{\parbox{\textwidth}{
  \pubitem{Avoiding Discrimination through Causal Reasoning}
  {Niki Kilbertus, Mateo Rojas-Carulla, Giambattista Parascandolo, Moritz Hardt, Dominik Janzing, Bernhard Schölkopf}
  {Neural Information Processing Systems (NeurIPS), 2017}
  {https://arxiv.org/abs/1706.02744}
  {}
  {}
}}

\Chapref{chap:sensitivity} is based on \citet{kilbertus19sensitivity}:

\fbox{\parbox{\textwidth}{
  \pubitem{The sensitivity of counterfactual fairness to unmeasured confounding}
  {Niki Kilbertus, Philip~J.~Ball, Matt Kusner, Adrian Weller, Ricardo Silva}
  {Uncertainty in Artificial Intelligence (UAI), 2019}
  {https://arxiv.org/abs/1907.01040}
  {https://github.com/nikikilbertus/cf-fairness-sensitivity}
  {}
}}

\Chapref{chap:blindjustice} is based on \citet{kilbertus18a}:

\fbox{\parbox{\textwidth}{
  \pubitem{Blind Justice: Fairness with Encrypted Sensitive Attributes}
  {Niki Kilbertus, Adri\`{a} Gasc\'{o}n, Matt Kusner, Michael Veale, Krishna~P.~Gummadi, Adrian Weller}
  {International Conference on Machine Learning (ICML), 2018}
  {https://arxiv.org/abs/1806.03281}
  {https://github.com/nikikilbertus/blind-justice}
  {}
}}

\Chapref{chap:decisions} is based on \citet{kilbertus2018fair}:

\fbox{\parbox{\textwidth}{
  \pubitem{Fair Decisions Despite Imperfect Predictions}
  {Niki Kilbertus, Manuel Gomez-Rodriguez, Bernhard Schölkopf, Krikamol Muandet, Isabel Valera}
  {International Conference on Artificial Intelligence and Statistics (AISTATS), 2020}
  {https://arxiv.org/abs/1902.02979}
  {https://github.com/nikikilbertus/fair-decisions}
  {}
}}

\newpage
During my PhD I also contributed to the following papers, which will not be described in this thesis \citep{hron2020exploration,kilbertus2020class,trauble2020independence,PhysRevD.100.063015,kilbertus2018generalization,parascandolo2018learning,GebKilParHarSch17}:\blfootnote{${}^*$ equal contribution}

\fbox{\parbox{\textwidth}{
\pubitem{Exploration in two-stage recommender systems}
{Jiri Hron${}^*$, Karl Krauth${}^*$, Michael~I.~Jordan, Niki Kilbertus}
{ACM RecSys {\normalfont workshop} REVEAL: Bandit and Reinforcement Learning from User Interactions, 2020}
{https://arxiv.org/abs/2009.08956}
{}
{}
}}

\fbox{\parbox{\textwidth}{
\pubitem{A class of algorithms for general instrumental variable models}
{Niki Kilbertus, Matt J. Kusner, Ricardo Silva}
{Neural Information Processing Systems (NeurIPS), 2020}
{https://arxiv.org/abs/2006.06366}
{}
{}
}}

\fbox{\parbox{\textwidth}{
\pubitem{Is Independence all you need? On the Generalization of Representations Learned from Correlated Data}
{Frederik Träuble, Elliot Creager, Niki Kilbertus, Anirudh Goyal, Francesco Locatello, Bernhard Schölkopf, Stefan Bauer}
{under review}
{https://arxiv.org/abs/2006.07886}
{}
{}
}}

\fbox{\parbox{\textwidth}{
\pubitem{Convolutional neural nets: a magic bullet for gravitational-wave detection?}
{Timothy Gebhard${}^*$, Niki Kilbertus${}^*$, Ian Harry, Bernhard Schölkopf}
{Physical Review D, 2019}
{https://arxiv.org/abs/1904.08693}
{https://github.com/timothygebhard/magic-bullet}
{}
}}

\fbox{\parbox{\textwidth}{
\pubitem{Generalization in anti-causal learning}
{Niki Kilbertus${}^*$, Giambattista Parascandolo${}^*$, Bernhard Schölkopf}
{Neural Information Processing Systems (NeurIPS) {\normalfont workshop on} Critiquing and Correcting Trends in Machine Learning, 2018}
{https://arxiv.org/abs/1812.00524}
{}
{}
}}

\fbox{\parbox{\textwidth}{
\pubitem{Learning Independent Causal Mechanisms}
{Giambattista Parascandolo, Niki Kilbertus, Mateo Rojas-Carulla, Bernhard Schölkopf}
{International Conference on Machine Learning (ICML), 2018}
{https://arxiv.org/abs/1712.00961}
{}
{}
}}

\fbox{\parbox{\textwidth}{
\pubitem{Searching for Gravitational Waves with Fully Convolutional Neural Nets}
{Timothy Gebhard${}^*$, Niki Kilbertus${}^*$, Giambattista Parascandolo, Ian Harry, Bernhard Schölkopf}
{Neural Information Processing Systems (NeurIPS) {\normalfont workshop on} Deep Learning for Physical Sciences, 2017}
{https://dl4physicalsciences.github.io/files/nips_dlps_2017_13.pdf}
{https://github.com/nikikilbertus/convwave}
{}
}}

\chapter{Background}
\label{chap:background}

In this chapter we provide some background about the fundamental challenges around fairness and justice from various viewpoints.
We take a top down approach, starting from abstract philosophical notions of justice, to concrete social challenges related to discrimination, to the specifics of machine learning.
While we continuously narrow down the scope, we hope that the big picture context provided in this chapter helps the reader not to lose sight of the core issues once we dive into technical details in the following chapters.
Readers only interested in the technical contributions can skip this chapter.
It starts with terminology and connections to other disciplines in \secref{sec:disciplines}, illuminating how terminology is used throughout the thesis and providing broader context for how to think about fairness in machine learning.
This discussion naturally leads to the question, whether we can reasonably expect to be able to practically mitigate unfairness in machine learning systems.
In \secref{sec:goals}, we explore on a high level which challenges we are facing when trying to achieve lasting positive impact on society via automated decisions.
There, we will also introduce the main theme of the thesis: \emph{scrutinizing common assumptions underlying traditional approaches to fairness in machine learning and proposing solutions for when these assumptions are violated.}
After a brief overview of where and how bias can enter the machine learning pipeline in \secref{sec:how}, we then summarize and conclude this introductory chapter in \secref{sec:takeaway}.

\section{Terminology and insights from other disciplines}
\label{sec:disciplines}

While some sort of immoral bias is present in all the examples in \secref{sec:motivation}, it can be hard to pinpoint---let alone quantify---what exactly is ``wrong''.
There are multiple dimensions along which one can characterize these difficulties to break down the field of algorithmic bias into more specific subproblems.

In this section we take a step back and assay the nature of such problems from a philosophical and ethical viewpoint.
Among other disciplines, scholars in ethics, philosophy, and law have been thinking about issues of justice, fairness, discrimination, and bias from different perspectives for much longer than the computer science community.
If we want to mitigate ethically objectionable impact of machine learning models, we need to acquire at least a basic understanding of why and what is considered ``unfair'' and to what extent we can hope this understanding to be universal.
We cannot provide a faithful account of centuries of research in philosophy, ethics, and law.
Thus this section is neither exhaustive nor unbiased.
It serves to illustrate some relevant viewpoints and provide useful terminology.
As a starting point for further reading on fairness from the perspective of political philosophy for computer scientists, we refer to \citet{pmlr-v81-binns18a}, who elucidates some connections between recent machine learning research and philosophy.
For a purely philosophical treatise, we suggest readers start with a modern overview of philosophical theories of justice by \citet{sandel2010justice} and then dive deeper into more focused works \citep{bentham1780introduction,young1995equity,roemer1998theories,moulin2004fair,roemer2009equality,rawls2009theory}.

\paragraph{Terminology and definitions.}
Within the machine learning community, \emph{fairness} (as in \emph{fairness in machine learning}, \emph{fair learning}, or \emph{algorithmic fairness}) developed as the prevailing umbrella term for a range of concerns about unethical behavior or impact of automated systems.
The word ``fairness'' is multi-faceted and can have different technical meanings across disciplines.
Just to name a few examples, \emph{fair value} in economics often refers to a rational estimate of the potential market price of a good.
In other fields of computer science, fairness is discussed both as a property of concurrency in the context of unbounded nondeterminism, as well as in the context of network engineering.
There, it describes how resources are shared and allocated among applications.
\emph{Fair division} in a game theoretic context typically describes the division of goods among players with subjective valuations subject to various constraints.
Within economics, the term \emph{fairness} was used to describe a market anomaly in which firms do not strictly maximize profits, because it may be considered unfair to raise prices to exploit shifts in demand \citep{kahneman1986fairness}.
This notion of fairness was an important concept in the emergence of behavioral economics.
Within each of these domains, fairness describes a rather narrow concept and there is typically no universally accepted definition.
Occasional confusion is expected when narrow mathematical constructs are named after real life problems.

Within machine learning, fairness has primarily discussed distributive aspects of social justice.
Since empirical evidence suggests that people do not reach a consensus in defining fairness intuitively \citep{yaari1984dividing}, distributive justice in the social context typically refers to \emph{subjective fairness} rather than \emph{objective fairness}.
\citet{yaari1984dividing} conclude that
\begin{quote}
\directquote{[\ldots] a satisfactory theory of distributive justice would have to be endowed with considerable detail and finesse.
Sweeping solutions and world-embracing theories are not likely to be adequate for dealing with the intricacies inherent in the problem of How to Distribute.}
\end{quote}
Note that the problem of \emph{how to distribute} considers a wide variety of goods and services, including decisions and outcomes in applications such as criminal justice, hiring, insurance, or lending.
This conclusion should remind us that---as a relatively young community---we may be prone to misunderstandings.
Currently, we do not expect a single technical definition to emerge as the ``right'' one.
Therefore, we must be transparent about our assumptions, be specific about context, and refrain from relying on an intuitive understanding of terminology used in our definitions.

We have already used the words \emph{fairness} and \emph{justice} almost synonymously without further explanation.
Together with \emph{discrimination} and \emph{bias} they are ubiquitous in the literature on fairness in machine learning.
There is no general agreement on their meaning, neither in a technical nor necessarily an intuitive sense.
One usually has to rely on the context for proper interpretation.
In the following, we provide clarifying remarks on the differences and connections between those terms, starting with popular dictionary definitions of fairness and justice.
\begin{description}
  \item[Fairness] is the quality of making judgments that are free from discrimination.
  \item[Justice] is an action that is morally right and fair.
\end{description}
From this perspective, fairness may be considered an idealized concept, whereas justice is about taking the right actions (potentially when unfairness has already occurred).

John Rawls even argues that fairness is more fundamental than justice \citep{rawls1958justice}:
\begin{quote}
\directquote{It might seem at first sight that the concepts of justice and fairness are the same, and that there is no reason to distinguish them, or to say that one is more fundamental than the other. I think that this impression is mistaken. In this paper I wish to show that the fundamental idea in the concept of justice is fairness;}
\end{quote}
He further considers fairness to imply mutual agreement among parties that no one is taken advantage of, or forced to give in to claims that anyone of them considers illegitimate.
Justice on the other hand, does not necessarily require such mutual agreement, as is usually the case in law enforcement.

From a more applied perspective, \citet{friedman1996bias} define \emph{bias} in one of the earliest works on fairness in computer systems as follows:
\begin{quote}
\directquote{In its most general sense, the term bias means simply ``slant.''
Given this undifferentiated usage, at times the term is applied with relatively neutral content.
[\ldots]
At other times, the term bias is applied with significant moral meaning.
[\ldots]
we use the term bias to refer to computer systems that systematically and unfairly discriminate against certain individuals or groups of individuals in favor of others.
A system discriminates unfairly if it denies an opportunity or a good or if it assigns an undesirable outcome to an individual or group of individuals on grounds that are unreasonable or inappropriate.}
\end{quote}

By now, the reader has certainly noted the inherent circularity or ambiguity of these definitions.
While there may be an innate mutual understanding of certain moral values \citep{turiel2002culture,blair1995cognitive}, other aspects of morality are inherently subjective.
This is reflected for example by considerably different legislation across nations.
Hence, the circularity of the provided definitions as well as their reliance on similarly fuzzy terms such as ``right'', ``unfairly'', ``unreasonable'', and ``inappropriate'' merely reflect the elusive nature of the subject.

As mentioned above, we will consider systems that take, trigger, or inform actions in the physical world.
Following the above dictionary definitions, we are less concerned with fairness (as an idealized concept), than with justice (taking the right action).
Even though \emph{non-discriminatory} or \emph{just} may often be more appropriate terms than \emph{fair}, the community has converged to the term \emph{fairness}.
It will be instructive for categorizing existing work on fairness, to borrow some more insights from philosophers' viewpoints on justice.
Instead of describing general frameworks such as consequentialist, deontological, contractualist, or virtue ethics, we will right away dive into distributive justice following the focus area of the thesis.

\paragraph{Philosophical theories of justice.}
Philosophical theories of distributive justice need to specify which goods are to be distributed among which entities, and what constitutes a proper distribution.
They do not answer questions about whether a particular distribution is objectively correct, or who should have the right to choose and enforce a certain distribution.
Opposed to this \emph{distributive} approach---defining a favorable distribution in a consequentialist fashion---we can also follow a certain procedure to assign outcomes to individuals, called \emph{procedural approach} to fairness, which is closer to a deontological approach.
By adhering to such a (defined to be) fair procedure, we consequently accept the arising distribution as fair.

We can further divide fairness notions along another dimension into \emph{normative} (or \emph{prescriptive}) and \emph{descriptive} (or \emph{comparative}) approaches.
Crudely, the former start from a fixed distribution or procedure, which is assumed to be fair by general agreement and thus ought to be reached or followed.
Contrary, \emph{descriptive} approaches seek to empirically crowd source society'{}s opinion on what is perceived---and thus accepted to be---fair.

Most democracies today deploy a somewhat procedural approach by accepting the distribution that arises naturally from adhering to the law and social norms.
Of course, there is substantial feedback in that the procedure, i.e., the legislation, is heavily influenced by critically reflecting on and reacting to the resulting distribution.
Similarly, while the legal system has a predominantly normative character, the prescription is largely determined by a comparative assessment of society'{}s opinion at large---at least in most democracies.
Consequently, we argue that our reality of justice hinges on a procedural approach that is normative in its execution, but heavily influenced by a comparative democratic process and regularized by theoretic distributive ideals.
A large body of work on fairness in machine learning seeks to derive technical fairness criteria from normative, distributive theories.
While this can be an instructive endeavor, one should not lose sight of the current procedural legal practices and descriptive influences \citep{green2018the}.

\section{Fundamental challenges of fair decision making}
\label{sec:goals}

Let us now dissect specific challenges of fair decision making and point out important concepts to factor into the process.

\paragraph{The role of historicity.}
In a fictitious world where our moral understanding (and thus our legal system) had been stationary over time, and all humans had always acted morally benign (lawfully), it is conceivable that nobody would voice concerns about unfairness or discrimination.
This fictitious world is the ideal we strive for when fighting discrimination.
The thought experiment also highlights that virtually all situations, in which fairness arises as an issue, are invariably tied to historicity.
Unfairness occurs when past or current practices conflict with today'{}s moral or legal standards.
Considering these historical events and practices to leave behind unfavorable disparities as an imprint on our society, we have to consider multiple goals for fair machine learning systems.

One may choose to accept existing disparities to be an unalterable truth about the state of our world that can fairly be exploited in a decision making process.
Such a viewpoint could be advocated for by not wanting to take blame for historic wrongdoing.
Following this idea of ``the data are what the data are'', or ``what you see is what you get'' \citep{Friedler2016}, the goal could be to not amplify, perpetuate, or create new harmful biases.
Decision makers may be drawn to such arguments when adhering to stricter correctional fairness measures would negatively impact their utility.
On the other hand, we may seek to correct for past misdoing by taking affirmative action to dynamically transition the current, unfair situation to a favorable state that is better aligned with today'{}s moral standards.
This often requires an understanding of how society evolves dynamically as a consequence of decisions, i.e., how such interventions feed back into the state of society and social dynamics.

Let us give two examples showing that both modes of reasoning are not entirely clear-cut.
Is it fair if a warehouse owner only employs individuals that can lift 40 pounds even though this may be heavily correlated with gender and age?
Or should they be required to work towards equalizing such differences, perhaps by providing free access to strength training?
Another example in which such questions become particularly relevant is retributive or corrective justice: to what extent does punishment contribute to a better society?
Is it fair to imprison people with a higher potential to commit more severe crimes even if that correlates with race?
Or should we instead focus more resources on supporting these communities to fight the root cause of their criminal activity?

While we have seen in the previous section how theories of justice may describe desirable states or actions, there are yet more choices to be made when setting out to operationalize these criteria.
Beyond the generic issue that there is no principled way to measure, evaluate, or compare different fairness notions---especially for normative approaches---we also need to discuss appropriate fairness goals as well as their time horizon in any given context.

\paragraph{Context sensitivity and long- versus short-term interventions.}
\citet{pmlr-v81-binns18a} provides three compelling examples to highlight the context dependence of fairness judgments.

\begin{enumerate}
  \item When it comes to \textbf{voting}, we tend to favor an egalitarian outcome distribution:
  every person has one and only one vote regardless of skill or effort.
  \item However, the same does not hold true for \textbf{job applications}.
  While we still generally seek to provide equal opportunity, it is morally acceptable to condition on skill and effort.
  In this scenario, we may find that existing disparities in the distribution of skill and effort can be explained by systematic historical discrimination and its perpetuation, leading to a debate whether we should take affirmative action or accept the status quo as a property of society.
  After all, the distributions simply differ and it may be hard to determine whether the root cause is past discrimination, or a legitimate alternative explanation.
  \citet{rawls2009theory} argues that
  \begin{quote}
  \directquote{The natural distribution is neither just nor unjust; nor is it unjust that persons are born into society at some particular position.
  These are simply natural facts. What is just and unjust is the way that
  institutions deal with these facts.}
  \end{quote}
  There is some ambiguity in whether the \emph{natural distribution} is the one observed today (potentially reflecting historic bias), or some idealistic, unobservable distribution.
  Rawls' statement represents the viewpoint that justice is about the actions we take in a given situation.
  However, it does not answer how precisely to deal with disparities in the natural or observed distribution.

  This begs naive questions such as ``How much of the past should we aim to correct for?'' or ``How far into the future should we aim to reach a desirable state?''
  Partially motivated by these questions, \citet{hucheng2018} study a simplified dynamic model of the labor market, and argue that \directquote{A dynamic model recognizes the powerful ripple effect of the past and calls for a fairness intervention that carries momentum into the future.}
  Similarly, \citet{liu18c} analyze a one-step feedback model of how decisions according to some fairness criteria affect the well-being of different groups.
  They show that enforcing common fairness criteria (see \chapref{chap:existing}) may actually cause harm in the long run.
  This indicates that there is a trade-off between optimizing for short-term (doing the right thing now) and long-term (achieving the right state in the future) fairness goals even when measured with the same criterion.
  \item In his third example, \citet{pmlr-v81-binns18a} considers \textbf{airport security screenings}.
  One may find it appropriate, perhaps due to social solidarity, to distribute scrutiny equally among all individuals in an egalitarian fashion.
  However, the data indicate real differences in the base rates, i.e., some visually identifiable subgroups of the population are statistically at higher risk of attempting a terroristic attack.
  Is it fair to use a predictive system to subject this group to more rigorous examination?
  After all, statistically speaking, focusing more attention to these groups can prevent unnecessary deaths.
\end{enumerate}

He concludes that \directquote{We therefore can’t assume that fairness metrics which are appropriate in one context will be appropriate in another.}
Thus, the appropriateness of different fairness criteria can only be judged in a narrow context and with a good understanding of the specific historical, sociological and legal perspective.

\paragraph{Subjectivity and non-stationarity.}
We have argued before that fairness is not only context dependent, but also subjective.
Moral values vary greatly across nations and cultures, which is also reflected in the respective legal systems.
Even within specific cultures, individuals often hold wildly different opinions on questions of justice and morality.
Again, this contributes to the difficulty of nailing down precise notions of fairness, even after fixing the domain and specific goals.
Arguably, what matters to the individual is the comparative hardship suffered by those at risk of being discriminated against and the mechanisms or circumstances leading to such disparate hardship.
Hence, beyond domain knowledge, a deep empathic understanding of experiences and emotions is crucial to devise effective fairness interventions.

Looking back, our understanding of fairness and egalitarianism has changed drastically over time.
Today, one may find the late onset and slow progress of worldwide emancipation for gender equality, continuing racial discrimination, or the idea of capital punishment shocking---if not repelling.
We speculate that past societies on average felt roughly as assertive about their contemporary moral values as we do now.
They may similarly have found preceding practices repugnant.
This indicates that even if we could agree on specific fair machine learning methods today, following generations may well despair over our current moral wrongdoing and struggle to deal with the biased imprints we have left behind.

\paragraph{Individual perception and the comparative nature of fairness.}
Specific personal complaints about unfairness are typically based on comparative arguments.
For example, when being denied a loan, a black person may complain that their white neighbor, who has an otherwise similar socio-economic background, received a loan.
On an individual level we tend to care less about being treated according to some abstract fairness maxim than about how we are treated relative to others.
In such comparisons, the similarity with respect to all relevant aspects except for group membership is crucial.
We regularly even ponder counterfactual questions like ``Would I have gotten the loan if I were white?'' or ``Would I get paid more if I were a man?''.
Such counterfactuals can be thought of as comparisons with a version of oneself, where only the group membership in question has been changed.
The contrast between this comparative nature and the distributive theories discussed in the previous sections leads to the distinction between \emph{group} and \emph{individual fairness} common in the machine learning literature.

Most of our discussion so far was concerned with fairness on a group level.
In contrast, individual fairness demands that, colloquially, ``similar people are treated similarly''.
This notion goes back to Aristotle'{}s principle that, paraphrased, ``equals should be treated equally''.
A formalization of this idea requires a measure of similarity for both individuals and outcomes \citep{Dwork2012}.
For now, we note that any concrete distributive ideal of what would constitute a perfectly fair state of society may still inevitably lead to perceived unfairness on an individual level, especially when trying to move from a non-ideal state to the ideal one.
We will come back to technical definitions of both group and individual fairness notions in \chapref{chap:existing}.

\paragraph{Moving to machine learning.}
From the picture we have drawn so far, it becomes apparent that fair decision making can only be successfully tackled on a systemic level, taking into account all factors from historic and current societal circumstances to design decisions by individual engineers.
Narrowing down the scope further, we now describe the specific context for the machine learning systems that we will later analyze in isolation.

We assume a fixed set of demographic groups, at least one of which face disadvantages when it comes to access to certain opportunities, goods, or services.
Such groups are often declared by the legal code.
For example, the Equality Act (2010) in the United Kingdom specifies the following \emph{protected characteristics}\footnote{\url{https://www.gov.uk/discrimination-your-rights}; \url{http://www.legislation.gov.uk/ukpga/2010/15/contents}}:
age,
gender reassignment,
being married or in a civil partnership,
being pregnant or on maternity leave,
disability,
race including color, nationality, ethnic or national origin,
religion or belief,
sex, and
sexual orientation.
Discrimination with respect to these characteristics is legally prohibited at work, in education, as a consumer, when using public services, when buying or renting property, and as a member or guest of a private club or association.
In Germany, according to the ``Allgemeines Gleichbehandlungsgesetz'',\footnote{\url{https://www.bmas.de/DE/Service/Gesetze/allgemeines-gleichbehandlungsgesetz.html}} it is  illegal to discriminate based on
age,
sex,
sexual identity,
disability,
race or ethnic origin, and
religion or belief
within a similar scope.
As a final example, in the United States legally recognized protected classes include
race,
color,
sex,
religion,
national origin,
citizenship, age,
pregnancy,
familial status,
disability status,
veteran status, and even
genetic information
by a number of legal texts such as the Civil Rights Act (1964, 1968), the Equal Pay Act (1963), the Immigration Reform and Control Act, the Age Discrimination in Employment Act (1967), the Pregnancy Discrimination Act, the Rehabilitation Act (1973), the Americans with Disabilities Act (1990), the Vietnam Era Veterans' Readjustment Assistance Act (1974), or the Genetic Information Nondiscrimination Act.\footnote{We thank Moritz Hardt for summarizing these legally protected classes and their corresponding legal texts in public talks.}

Against the backdrop of these legal regulations, we commonly refer to \emph{protected} or \emph{sensitive attributes} in the following chapters without further explanation.
Moreover, we assume that individuals can be unambiguously assigned to one of a finite set of such socially salient groups relevant to the given decision scenario.
We highlight that this is a crude simplification, which can by itself introduce new issues.
These challenges around group membership on an algorithmic level are touched upon in the next section and again briefly in \chapref{chap:causal}.
We note that legislation does not provide a moral framework, let alone concrete non-discrimination definitions that could be formalized mathematically or operationalized algorithmically.

Moreover, we restrict ourselves to scenarios in which machine learning systems are used to either fully take over or otherwise inform and support decisions that directly affect the well-being and livelihood of individuals.
We often refer to such situations as making \emph{consequential decisions}.
In this context, the goal of group fair machine learning is to make consequential decisions such that no protected group is disadvantaged or discriminated against.
Such discrimination could manifest itself in different ways.
For example, different groups could be treated differently---there are different decision processes in place for different groups---which may be unethical.
On the other hand, some group may experience undesirable downstream impacts even though---or even because---inputs are processed in the exact same way for everyone.
We will return to ways of quantifying specific dimensions of disadvantage or discrimination in \chapref{chap:existing}.

Since the focus of this thesis is on data-driven systems, data itself plays a crucial role.
What we have previously called ``the state of society'' or ``status quo'', to a machine learning system is typically represented by the data, which is also its main input.
The first essential step towards fairness, even before analyzing the algorithms themselves, is to understand whether data adequately represents the ``status quo'' in sufficient detail with respect to the given task and context.

Even more broadly, one must take into account the entire socioalgorithmic system they are embedded in, including data collection and downstream impact of decisions, which may alter the data we get to collect in the future.
In the next section, we will highlight some ways in which bias can creep into the different stages of what \citet{barocas-hardt-narayanan} call the \emph{machine learning loop}.
For more details and examples we refer the reader to the work of \citet[Chapter 1]{barocas-hardt-narayanan} as well as popular books on the topic \citep{o2016weapons,noble2018algorithms,broussard2018artificial,eubanks2018automating,benjamin2019race,kearns2019ethical}.

\section{From data to decisions and back}
\label{sec:how}

\paragraph{Measurement.}
Machine learning algorithms see the world through data.
While architecture choices, learning algorithms and inductive biases also play an important role, final decisions are most affected by the training data.
Hence, we must not assume that data are an unemotional and unopinionated mirror of the ``true state of the world'' that we can blindly trust in.
Instead they must be scrutinized, distrusted, and continuously analyzed as part of a larger data-driven system.
Unfortunately, machine learning engineers often seem to believe that their job only starts after the measurement phase.
Data are readily available and rarely questioned, but instead treated like mere facts about the state of the world.
However, measuring data about humans is a different process from, say, a physical measurement using a well-understood and calibrated measurement device.
Often, we seek to measure social constructs, such as the race or gender a person identifies with, for which we do not have well-defined scales.
Even the available options frequently change.
Which properties to measure as well as the scales on which to measure them are chosen by a small set of individuals.
In addition, the measurements themselves may be self-reported, introducing a new host of potential difficulties.

The lack of objective ground truth about gender or race assignments poses great conceptual challenges when we discuss causal notions of fairness.
Broadly speaking, in causal modeling we care about how certain quantities causally affect each other, i.e., for which there are invariable mechanisms that ensure that changing one variable will inevitably change another in a specific way.
In fairness, naturally arising questions are how a protected attribute, say race or gender, causally influenced a consequential decision.
For such statements to make sense, we need to clearly define what any given node, especially nodes marked as protected, references.
Most work on causal fairness notions consider protected attributes such as race or sex to take on a fixed value at birth.
Thus all other measurable features come later and are therefore causal descendants of the protected attribute.
We will challenge this viewpoint to some extent in \chapref{chap:causal}.
However, there are even more fundamental ontological and epistemic questions about the validity of causal models, which are described with great clarity in \citet[Chapter 4]{barocas-hardt-narayanan}.
Recently, scholars started to challenge the stability of the typically proposed ontologies of causal statements \citet{kohler2018eddie,hu2020whats,dembroff2020taylor}.
We believe this debate to become of crucial importance not only for causal modeling of fairness in the machine learning realm, but for a deeper understanding of quantifying discrimination and injustice in general.

Even when we do not collect data about humans, the measurement process may capture social disparities.
For example, there has been an effort to map out potholes in the city of Boston with a smartphone app that automatically reports the location of potholes recognized by its sensors.
\citet{crawford2013hidden} points out that the collected data reflect different levels of smartphone penetration for different demographics.
As a consequence, disproportionately fewer potholes have been recorded in lower-income areas and regions with predominantly elderly inhabitants.
\citet{barocas-hardt-narayanan} describe the inherent messiness of measurement as a \directquote{manifestation of the limitations of data-driven techniques}.

While data are important, they are far from the only issue.
In particular, we cannot expect to fix all concerns about discrimination by only working on datasets.
While one may hope to carefully design and collect an ``unbiased dataset'' that precludes any systematic bias downstream the machine learning pipeline, such efforts are futile.

\paragraph{Existing disparities.}
Most machine learning techniques in the social context still cling to the traditional objective of maximizing accuracy (or some related performance measure) with respect to some \emph{target variable} or \emph{label}.
We will see in \chapref{chap:existing} that many attempts at fair machine learning merely add constraints to this eager optimization paradigm to incorporate fairness.
As the addition of constraints can at most reduce the achieved accuracy, this gives rise to a tension between fairness and accuracy.
In the literature this has often been referred to as ``the cost of fair classification'' or the ``fairness accuracy trade-off''.

However, the constrained optimization approach to fairness appears to be inconsistent in the implicit assumptions about the recorded target labels.
On the one hand, suggesting fairness constraints acknowledges perpetuated stereotypes, a history of explicit discrimination, and other factors that lead to statistical disparities in today'{}s society, and thus in the recorded target labels.
On the other hand, accuracy with respect to these recorded target labels is still considered a meaningful goal to optimize for as the main objective.
Issues surrounding the validity of the recorded labels as an optimization target are referred to as forms of \emph{label bias}.
Recently \citet{wick2019unlocking} describe this dualism well in that \directquote{[\ldots] phenomena such as label bias are appropriately acknowledged as a source of unfairness when designing fair models, only to be tacitly abandoned when evaluating them.}
While making assumptions and especially also ethical judgments about the available data (and the distribution it comes from) explicit can help on a conceptual level \citep{Friedler2016}, there still exists no widely accepted way of incorporating and dealing with such assumptions algorithmically.
In other words, despite several attempts, ethical frameworks do not directly allow for operationalization in algorithmic terms.

As a consequence, the definition of the target variable plays a prominent role and its choice is heavily influenced by the parties involved in a given decision scenario.
Moreover, the chosen measurable target is often a proxy for the ``true goal'', an elusive, abstract (social) construct.
For instance, we use credit score as a proxy for creditworthiness, likelihood of recidivism for the influence of incarceration on character and behavior, or perhaps even generated revenue for the overall value of an employee.
Similar considerations hold true for the integrity of the features used as inputs to a machine learning pipeline.
Together with the messiness of measurement and the subjective choice of (proxies for) relevant variables, existing disparities become a complex source of bias that is hard to model and account for in observed datasets.

\paragraph{Dynamics, adaptivity and feedback.}
Just like the machine learning loop begins before the modeling part, namely with data collection, it does not end after a prediction has been made either.
Consequential actions affect individuals and thus the communities they are embedded in.
Again, this is in stark contrast with natural sciences, where the laws of physics do not react to our actions and we can justifiably consider ourselves passive observers of the outcomes of our experiments (barring measurements in quantum mechanics).
This interpretation breaks for algorithmic predictive systems, as soon as their predictions are revealed and influence actions.
In consequential decision settings, traditional assessment of machine learning on static datasets is rarely a good measure.
This observation is rooted in the difficulty of controlling deployment (or test-time) conditions for consequential decisions.
In particular, there is a host of plausible mechanisms through which the deployment conditions may depend on the choice of the decision policy.

For example, a traffic congestion prediction may cause drivers to congest routes shown as clear and clear up routes predicted to be congested \citep{barocas-hardt-narayanan}.
In general, widely accessible predictions by automated systems may influence people'{}s actions, thereby breaking the assumptions that went into the prediction, which consequently invalidates itself.
This is related to self fulfilling prophecies, where a prediction of a raise in stock prices can cause growing demand, which in turn actually leads to an increase in price.
Similar feedback loops have been observed in predictive policing \citep{lum2016to,ensign2017runaway}, as well as in the effects of pretrial detention on conviction, future crime and employment \citep{dobbie2018effects}.

In another form of adaptivity, individuals may strategically and deliberately invest effort to change their features (or disclosure thereof) adaptively to receive favorable decisions \citep{hardt2016strategic,dong2018strategic}.
These efforts can either be an attempt to \emph{game the system} or to \emph{legitimately self-improve}, warranting interpretations as either a nuisance to be countered by robust classification, or an opportunity for mechanism design \citep{miller2020strategic}.
Recently, it has been shown that the different costs of strategic behavior for different groups can give rise to further disparities \citep{milli2019social}.
Similarly, the mere act of forming a prediction or an immediate consequence thereof can influence the distribution of the prediction target, which is referred to as \emph{performativity} \citep{perdomo2020performative}.
Finally, the data we get to observe to train a decision system may depend on the decisions taken.
For example, in a \emph{selective labels} setting we only get to observe true outcomes when we take a positive decision (we only get to observe repayment of a loan if one was granted in the first place).
We will return to this specific setting in \chapref{chap:decisions}.
Differences in the ability to adapt and the environmental dynamics for members of different demographic groups can result in perpetuated or even amplified disparities.
Since anticipating and modeling such complex societal interactions is generically error prone, they pose a serious challenge for fair machine learning.

\paragraph{Closing the loop.}
We have highlighted some intricately interrelated challenges when moving discussions about justice and fairness from the philosophical to the technical realm of machine learning algorithms.
Our moral valuation of the current state of affairs is hard to formalize mathematically, measurement and data collection is a messy, error-prone process, and our decision systems affect both in non-obvious ways which are hard to model.
In short, our decision algorithms are part of an evolving socioalgorithmic system with numerous feedback loops and interactions.
Hence, their immediate and long-term consequences are hard to predict, let alone control.

\section{The role of algorithms and their developers}
\label{sec:takeaway}

To conclude this chapter, we raise the question to what extent machine learning is the right tool to tackle issues of fairness and discrimination.
One of the key goals is to automate decisions at scale.
At the same time, we argued in this chapter that fairness requires a case-by-case analysis of the context and goals.
Can machine learning still be a suitable tool to increase justice and diversity?
Which goals are we equipped and entitled to take on as machine learning researchers and engineers?
Should all algorithmically supported decisions be closely monitored by humans and judged by our current moral values, or should we even entrust them with shaping our society on their own terms, guiding the way to a more equitable future?
Let us exemplify the importance of these questions.\footnote{We thank Moritz Hardt for pointing us to this example.}

Pretrial risk assessment is usually framed as a predictive task in trying to estimate the risk of a person not appearing to court or recidivating in the meantime.
A high risk of not appearing to court or of recidivism is then converted into a decision to detain the defendant.
Once we accept this viewpoint, machine learning indeed has a lot to offer in terms of making accurate predictions and thus seemingly contributing to the larger social good.
However, in this example it may be more productive to challenge the basic assumption rather than specifics of the predictive algorithms and to recognize that people fail to appear because they do not have access to child care or transportation, cannot afford not to go to work, or even have other overlapping court appointments.
Hence, appropriate countermeasures would include enabling functional two-way communication and provide child care and transportation support to defendants.
Indeed some of these measures have been part of a Harris County lawsuit settlement in 2019 \citep{harris2019}.
However, whenever algorithms are used within a larger system, they may also be at fault.
To still make progress in those settings, we will typically require strict simplifying assumptions about which effects we consider in a given problem formulation.
We will clearly state these assumptions near the beginning of each of the \chapsref{chap:causal}-\ref{chap:decisions}.

We will not engage in a broader discussion of unequal power and access to machine learning and the resulting responsibilities.
We only note that while there are democratic mechanisms in place to prevent individual policy makers from instilling non-representative, subjective opinions into our laws, no such mechanisms currently exist for machine learning engineers, software developers, and data collectors in both industry and government.
To date, the demographic makeup of these professions considerably misrepresents society as a whole, which leaves us worrying about a lack of diversity of viewpoints when designing and building potentially impactful algorithms.
Harms in deploying algorithmic systems can only be anticipated and detected if a diverse team with a broad set of viewpoints, experiences and conceptual frameworks dedicates conscious effort into it.
While diversity and inclusion are perhaps more important for designing equitable systems than algorithmic considerations, a thorough discussion thereof goes beyond the scope of this thesis.
We refer the interested reader to recent work by \citet{mohamed2020decolonial}, who explore connections between society and machine learning from the viewpoint of critical science and decolonial theories.

After this rather pessimistic perspective of fair machine learning, we must not disguise that we are already facing very real concrete issues with automated decisions.
Even though we may not hope to solve them entirely, some are understood well enough to be readily improved.
Since large scale machine learning applications are still commonly deployed without any consideration for fairness throughout the process, even imperfect methods can have a substantial positive impact.
Moreover, algorithmic methods may also help to uncover and quantify existing discrimination in the first place \citep{kleinberg2018discrimination,abebe2020roles}.
At the least, machine learning and statistics provide valuable tools to quantify biases, potentially providing the smoking gun evidence required to trigger introspection and positive change.

\chapter{Existing approaches}
\label{chap:existing}

In this section we provide a survey of fair machine learning for consequential decision making.
We categorize the existing body of work by characteristics of the employed fairness criteria.
Other possible segmentations would be \emph{group} versus \emph{individual} fairness (see \secref{sec:individualfairness}), or according to implementation aspects such as the distinction of \emph{pre-processing}, \emph{in-processing}, and \emph{post-processing} methods.
We will adhere to ``fairness notion'' as the primary partition and mention such orthogonal properties for relevant works as we discuss them.
As a general warning, because fairness is a fairly young field within the machine learning community and for reasons we discussed in \secref{sec:disciplines}, there is no agreement on the naming of fairness criteria.
Some of them have been developed concurrently by different teams of researchers who gave them different names.
We try to use common terminology and clearly reference the corresponding works.

As becomes apparent from the examples in the motivation in \secref{sec:motivation}, there are plenty applications of machine learning outside of automated decision making for which fairness can be a major concern.
There is also an extensive literature on fairness in settings such as
ranking, 
recommender systems, 
computer vision,
natural language processing,
bandit learning, 
clustering,
dimensionality reduction, 
as well as many others which we will not discuss in this thesis.
\citet{chouldechova2020snapshot} provide a concise summary of the current research frontier including pointers to the relevant literature.

\section{Notation}
\label{sec:notation}

Most works on fairness in machine learning for consequential decisions have focused on what we call \emph{outcome-based} notions.
Outcome-based fairness encompasses scenarios in which a system determines (or supports the decision) whether to grant a loan to an applicant based on their financial history, to release a defendant for parole based on a questionnaire and their criminal history, or to invite a job applicant to an interview based on their written application.
The mentioned inputs are only examples among a great variety of possible features to take into account.

Throughout this thesis, we use the following notation.
Typically, we denote random variables as well as matrices by upper case letters and their domains (and other sets) as the corresponding calligraphic upper case letters.
The elements of such sets, i.e., also specific values of random variables, are denoted by lower case letters.
We sometimes highlight that values are multi-dimensional (i.e., vectors) using bold font.
Distributions of random variables are denoted by $\distr(\cdot)$ and we overload notation by also using $\prob(\cdot)$ for probabilities.
For example, if $X$ is a continuous real-valued random variable and $Y$ is a binary random variable taking values in $\{0,1\}$, we commonly denote their joint and marginal distributions by $\distr(X, Y)$, and $\distr(X), \distr(Y)$ respectively (instead of, e.g., $\distr_{X,Y}, \distr_X, \distr_Y$).
Similarly, we use $\prob(Y = 1 \given X=x)$ and $\prob(Y = 1 \given x)$ equivalently for the probability of $Y = 1$ given $X = x$.
Expectations are denoted by $\E[\cdot]$ where the source of randomness is either clear from the context, or explicitly added as a subscript of $\E$.
Unless explicitly stated otherwise, we assume densities exist for all continuous distributions and thus sometimes use $\distr$ even for densities whenever there is no ambiguity.
When explicitly highlighting that we are referring to densities, we use $\dens(\cdot)$.
As an example, for a random variable $X$, we write $\dens(x)$ for the density, dropping the subscript as in $\dens_X(x)$ for simplicity.
We denote statistical independence between two random variables by $\cdot \indep \cdot$ and conditional independence by $\cdot \indep \cdot \given \cdot$ accordingly.
Our notation, especially overloading the symbol $\distr$, glances over measure theoretic subtleties.
Since those will not be relevant for the content of this thesis, we opted for a notation that is easy to parse and understand.
We now describe common concepts and variable names that we will use throughout the thesis.

\begin{description}
  \item[$Z$] is (are) the \newdef{sensitive} or \newdef{protected attribute(s)} we want to protect for, e.g., gender, race, age, disability, see \secref{sec:goals}.
  \item[$X$] are the \newdef{non-sensitive features}, e.g., salary, SAT scores, etc.
  We will often refer to $X$ just as \newdef{features}, implying that they are non-sensitive.
  \item[$Y$] is the \newdef{true outcome}, i.e., the observed quantity of interest---for example whether a person defaulted on a loan or re-offended when let out on parole.
  Note that $Y$ is typically only available in historical data, i.e., a potential training set for supervised machine learning algorithms.
  Moreover, the term ``true'' does not imply that this outcome is fair or agreeable, merely that it has been observed or measured, see \secref{sec:how} for a more detailed discussion.
  \item[$\hY$] is the \newdef{predicted outcome} for $Y$ from an automated (machine learning) system.
  This notation implies that predicting the chosen target $Y$ has been accepted to be a worthwhile optimization goal.
  In a slight abuse of notation, we will consider $\hY$ as a function mapping $\hY: \cX \times \cZ \to \cY$, where $\hY(X, Z)$ can still be considered a random variable.
  The randomness can come from both, the randomness of $X$, $Z$ and because $\hY$ may be a randomized mapping.
\end{description}

Note that predicted outcomes may or may not be directly translated into decisions.
While most of the literature does not explicitly distinguish between predictions and decisions, we elaborate on the importance of this difference in \chapref{chap:decisions}.
Until then, we assume that predictions are closely linked to decisions and even use prediction, outcome, and decisions synonymously.

Outcome-based fairness is concerned with how we can define, measure, and mitigate (un)fairness of the predictions $\hY$ (taking values in $\cY$) with respect to the features $X$, protected attributes $Z$, and the true outcome $Y$ (taking values in given domains $\cX$, $\cY$, and $\cZ$ respectively).
The most common and highly relevant scenario considers $p \in \bN$ categorical (mostly binary) sensitive attributes $\cZ$, binary outcomes $\cY = \{0, 1\}$, and $d$-dimensional features (real-valued or categorical), $\cX \subset \bR^d$.
We typically assume $\hY = 1$ to be the more desirable decision or prediction and $Y = 1$ to be the more desirable true outcome or label.
Also, in settings with binary sensitive attributes we consider $Z = 1$ to indicate membership in a minority or disadvantaged group.

As described in \secref{sec:motivation}, this framework only captures a fraction of situations in which fairness may be a concern.
However, it entails common settings in which automated systems arguably have the most decisive direct impact on people'{}s lives.
Indeed, our excursion into philosophical aspects of fairness in \secref{sec:disciplines} almost exclusively dealt with notions of justice that are best described in the outcome-based framework.
In addition, it allows for the rather simple setup outlined above, thus facilitating in-depth technical discussion of concrete challenges, both theoretical and empirical.

\section{Fairness through unawareness}
\label{sec:fairness_through_unawareness}

The first idea to ensure fairness that comes to mind in the outcome-based framework is to simply omit the sensitive attributes $Z$ from the inputs to the algorithm.
This approach is called \newdef{fairness through unawareness} and can be formally written as $\hY: \cX \to \cY$, i.e., we omit $Z$ as an input for $\hY$.
Due to possible correlations between sensitive attributes and other features, which may readily be exploited by a machine learning algorithm, this approach cannot reliably exclude systematically biased outcomes.
For example, while the name of a person is typically not considered a protected attribute, it can correlate strongly with gender or ethnicity.
Thus an algorithm could internally ``recover'' the gender of an individual from their name and exploit existing correlations in the training data in its decisions \citep{Dwork2012}.
While fairness through unawareness is generally considered insufficient in most applications, in certain scenarios there have also been arguments in favor of this approach \citep{grgic2016case}.

Fairness through unawareness is often said to avoid \textbf{disparate treatment}, a US-centric legal term denoting decisions that are explicitly based on protected attributes, as well as intentional discrimination.
To first order, disparate treatment occurs whenever a model uses membership in a protected group as input and differentiates based on it.
Fore more details about the legal interpretation of disparate treatment in the context of data-driven decisions we refer the reader to work by \citet{barocas2016big}.

\section{Observational group matching criteria}
\label{sec:observational_group_matching}

\newdef{Observational criteria} are fairness measures that only depend on the joint distribution $\distr(X, Y, Z, \hY)$.
All probabilities and distributions in this section are with respect to this joint distribution and we only consider notions of group fairness, see \secref{sec:goals}.
For ease of notation, in the examples here we will assume $\cZ = \{0, 1\}$.
Most criteria easily extend to the case of multiple, categorical protected attributes.

\paragraph{Demographic parity and disparate impact.}
Proactively adapting to the shortcomings of fairness through unawareness, \newdef{demographic parity} (DP) (also called \newdef{statistical parity}) suggests to ensure statistical independence of the protected attribute and the predicted outcome: $\hY \indep Z$.
We can write this as
\begin{equation*}
  \distr(\hY \given Z) = \distr(\hY) \eqc
\end{equation*}
which for the binary setting is equivalent to
\begin{equation*}
  \prob(\hY = 1 \given Z = 0) = \prob(\hY = 1 \given Z = 1) \eqp
\end{equation*}
In words, the overall probability for a member of group $Z=0$ to receive outcome $\hY=1$ (or $\hY=0$), is the same as for a member of group $Z=1$.
While this equality is intended to describe a state with zero unfairness, in practice we often wish to work with a measure of various degrees of unfairness.
Two possibilities immediately come to mind:
the difference \citep{Calders2010}
\begin{equation}\label{eq:cv}
  \prob(\hY = 1 \given Z = 0) - \prob(\hY = 1 \given Z = 1) \in [-1, 1]
\end{equation}
(or its absolute value), and the ratio \citep{Feldman2015,Zafar2017}
\begin{equation}\label{eq:di}
  \frac{\prob(\hY = 1 \given Z = 0)}{\prob(\hY = 1 \given Z = 1)} \in [0, \infty) \eqp
\end{equation}
In eq.~\eqref{eq:cv} demographic parity is achieved for a value of~$0$, whereas in eq.~\eqref{eq:di} for the value~$1$.
The ratio definition in eq.~\eqref{eq:di} is inspired by the legal notion of \newdef{disparate impact} (DI) \citep{Barocas2016}.
Therefore, the term is also sometimes used for eq.~\eqref{eq:di} and due to the similarity even for eq.~\eqref{eq:cv}.
Note that due to different interpretations of the legal term \emph{disparate impact}, it is also used as an umbrella term for various kinds of outcome disparity that arise despite the decision not being actively based on the protected attribute.
In \chapref{chap:blindjustice}, we will use disparate impact as such an umbrella term.
In \citet{Zafar2017a}, disparate impact is used to mimic the p\%-rule introduced in the legal literature on employment \citep{biddle2006adverse} by formalizing it as
\begin{equation}\label{eq:ppercent_first}
  \min\left\{
  \frac{\prob(\hY = 1 \given Z=1)}{\prob(\hY = 1 \given Z=0)} \eqc
  \frac{\prob(\hY = 1 \given Z=0)}{\prob(\hY = 1 \given Z=1)}
  \right\} \ge \frac{p}{100} \eqp
\end{equation}

Demographic parity has been criticized for two specific reasons in particular \citep{Hardt2016}.
Both of them are due to the fact that it is based on too little information, caring only about group membership and the outcome.
First, it allows $\hY$ to choose equal fractions differently in the two groups.
In the hiring example, it could select the least qualified 10\% of applicants in the disadvantaged group and the top qualified 10\% of applicants in the privileged group for a phone interview based on the written application.
As long as the same fraction of each group is given the opportunity, demographic parity is satisfied.
Because it formally allows for such unfair practice, demographic parity can be too weak.

On the other hand, it completely disregards potentially legitimate correlation between $Z$ and $\hY$.
This limitation can be exemplified in the Berkeley college admission case \citep{Bickel1975}.
Bickel had shown that a lower college-wide admission rate for women than for men could be explained by the fact that women applied for more competitive departments.
When adjusted for department choice, women experienced a slightly higher acceptance rate compared with men in each individual department.
In this case, the department choice correlated with the applicant'{}s gender as well as with the chance of admission due to differences in competitiveness.
One may believe to understand the nature of both mechanisms reasonably well and tend to conclude that the subsequent correlation between gender (protected attribute) and the admission decision is legitimate and need not be corrected for by the admission committee.
In this case ($\cov(Y, A) \ne 0$), demographic parity does not allow the optimal and arguably fair predictor $\hY = Y$.
In this sense demographic parity is too strong.
We will revisit the Berkeley college admission example from a causal perspective in \secref{sec:causalcriteria}.

After reading \chapref{chap:intro}, we hope our interpretation of the situation made the reader feel uncomfortable or at least a bit skeptical.
The observed differences may be rooted much deeper.
Perhaps certain departments are known for a hostile and poisonous environment towards women or existing gender imbalance and a lack of (visibility of) role models further discourage women from applying to certain departments?
Again, real social change will require larger-scale systemic reforms and are unlikely to emerge from the specific choice of the admission decision system ignoring other aspects.
However, we will continue to try to isolate the effects of algorithms within such a system to try to better understand their role in it.

\paragraph{Equal odds and equal opportunity.}
We now describe attempts to remedy the shortcomings of demographic parity.
Ignoring the true outcome $Y$ seemed to leave too little information for a meaningful fairness criterion.
\newdef{Equalized odds} therefore requires the predicted outcome to be independent of the protected attribute conditioned on the true outcome: $\hY \indep Z \given Y$ \citep{Hardt2016}.
The intuition behind the criterion can be roughly described as: \emph{The predicted outcome should not be informed by group membership except for information that comes from the true outcome and thus legitimizes differences.}
Under our assumptions, it can be written equivalently as
\begin{equation*}
  \prob(\hY = 1 \given Y = y, Z = 0) = \prob(\hY = 1 \given Y = y, Z = 1) \for y \in \{0,1\} \eqp
\end{equation*}
Note that this already implies an analogous statement for $\hY = 0$.
This definition is also sometimes interpreted as: \emph{Equalized odds seeks to balance false positive rates (FPR) and false negative rates (FNR) between different demographic groups.}
The intuitive idea is that we should not make wrong predictions at different rates for the two groups.

As a more specific variant of equalized odds, \newdef{equal opportunity} recognizes that in many scenarios we care more about not falsely denying a desirable opportunity, than about providing an opportunity undeservedly \citep{Hardt2016}.
If the desired outcome is encoded by $\hY = 1$, equal opportunity amounts to balancing the FNR
\begin{equation*}
  \prob(\hY = 0 \given Y = 1, Z = 0) = \prob(\hY = 0 \given Y = 1, Z = 1) \eqc
\end{equation*}
whereas if the desired outcome is encoded by $\hY = 0$, it corresponds to balancing the FPR
\begin{equation*}
  \prob(\hY = 1 \given Y = 0, Z = 0) = \prob(\hY = 1 \given Y = 0, Z = 1) \eqp
\end{equation*}
A similar criterion was concurrently proposed by \citet{Zafar2017a}, who refer to it as avoiding \newdef{disparate mistreatment}.

\paragraph{Calibration and predictive parity.}
Calibration is a concept that was not originally associated with fairness.
It describes how well reality follows the predictions, i.e., an algorithm is well-calibrated if an event that it predicts to occur with some probability actually happens with this probability.
In other words, calibration is sometimes described as \emph{the predictions mean what the are supposed to mean}.
In the fairness literature, \newdef{calibration} amounts to $Y \indep Z \given \hY$ and can be written as \citep{Chouldechova2017,Kleinberg2016}
\begin{equation*}
  \prob(Y = \hY \given \hY = \hy, Z = 0) = \prob(Y = \hY \given \hY = \hy, Z = 1) \for \hy \in \cY \eqp
\end{equation*}
In words, a given prediction should actually come true with the same probability for members in both groups.
We note that this definition differs slightly from non-fairness related notions of calibrated score functions.
Perhaps more appropriately, it has also been called \newdef{predictive parity} (PP), and further specialized to \newdef{positive predictive parity} (PPP) and \newdef{negative predictive parity} (NPP) for
\begin{align*}
  \prob(Y = 1 \given \hY = 1, Z = 0) &= \prob(Y = 1 \given \hY = 1, Z = 1) \; \text{and}
  \\
  \prob(Y = 0 \given \hY = 0, Z = 0) &= \prob(Y = 0 \given \hY = 0, Z = 1)
\end{align*}
respectively.
Closely related criteria matching so called \newdef{false omission rates} (FOR) and \newdef{false discovery rates} (FDR) read \citep{Zafar2017a}
\begin{align*}
  \prob(Y = 0 \given \hY = 1, Z = 0) &= \prob(Y = 0 \given \hY = 1, Z = 1) \; \text{and}
  \\
  \prob(Y = 1 \given \hY = 0, Z = 0) &= \prob(Y = 1 \given \hY = 0, Z = 1) \eqp
\end{align*}

\paragraph{Generic matching of conditional probabilities.}
At this point a clear pattern emerges.
Let us draw the confusion tables of true outcomes $Y$ and predictors $\hY$ for two groups denoted by \textcolor{py-3-1}{blue ($Z=0$)} and \textcolor{py-3-2}{orange ($Z=1$)} as in Table~\ref{tab:confusiontable}.
One can then formulate a multitude of observational \newdef{matching criteria} by equating different quantities made up of $a$, $b$, $c$, and $d$ across protected groups \citep{Berk2017,Zafar2017a}.
We provide a structured overview of some criteria and the corresponding expressions to be matched in Table~\ref{tab:expressions} and Table~\ref{tab:criteria}.
Note that in the binary case, there is some redundancy.
For example, the matching criteria for calibration and predictive parity are identical.

\begin{table}
\centering
\caption[Confusion tables for two different demographic groups]{Confusion tables for two different demographic groups.}
\label{tab:confusiontable}
{\color{py-3-1}
\begin{tabular}{c|cc}
\multicolumn{3}{c}{group $Z=0$}\\
\toprule
& $\hY=0$ & $\hY=1$ \\
\midrule
 $Y=0$ & \multicolumn{1}{c}{$a$} & $b$ \\
 $Y=1$ & \multicolumn{1}{c}{$c$} & $d$ \\
\bottomrule
\end{tabular}
}
\hspace{2cm}
{\color{py-3-2}
\begin{tabular}{c|cc}
\multicolumn{3}{c}{group $Z=1$}\\
\toprule
& $\hY=0$ & $\hY=1$ \\
\midrule
 $Y=0$ & \multicolumn{1}{c}{$a'$} & $b'$ \\
 $Y=1$ & \multicolumn{1}{c}{$c'$} & $d'$ \\
\bottomrule
\end{tabular}
}
\end{table}

\def\expressiontableextraheight{5pt}
\begin{table}
\centering
\caption[Expressions that may be required to be matched in different fairness criteria in terms of the entries of the confusion table]{Expressions that may be required to be matched in different fairness criteria in terms of the entries of the confusion tables in Table~\ref{tab:confusiontable}.}
\label{tab:expressions}
\begin{tabular}{cl}
\toprule
\textbf{expression} & \textbf{name} \\
\midrule
$\frac{b+d}{a+b+c+d}$ & acceptance rate (AR) \\[\expressiontableextraheight]
$\frac{a+d}{a+b+c+d}$ & accuracy (AC) \\[\expressiontableextraheight]
$\frac{b+c}{a+b+c+d}$ & error rate (ER) \\[\expressiontableextraheight]
$\frac{a}{a+b}$ & true negative rate (TNR) \\[\expressiontableextraheight]
$\frac{d}{c+d}$ & true positive rate (TPR) \\[\expressiontableextraheight]
$\frac{b}{a+b}$ & false positive rate (FPR) \\[\expressiontableextraheight]
$\frac{c}{c+d}$ & false negative rate (FNR) \\[\expressiontableextraheight]
$\frac{a}{a+c}$ & negative predictive value (NPV) \\[\expressiontableextraheight]
$\frac{d}{b+d}$ & positive predictive value (PPV) \\[\expressiontableextraheight]
$\frac{c}{a+c}$ & false discovery rate (FDR) \\[\expressiontableextraheight]
$\frac{b}{b+d}$ & false omission rate (FOR) \\
\bottomrule
\end{tabular}
\end{table}

\def\criteriatableextraheight{0.5cm}
\begin{table}
\centering
\caption[The requirements of some common observational matching criteria in terms of the expressions in Table~\ref{tab:expressions}]{The requirements of some common observational matching criteria in terms of the expressions in Table~\ref{tab:expressions}.}
\label{tab:criteria}
\begin{tabularx}{\textwidth}{Yp{4cm}p{4.5cm}}
\toprule
\textbf{name(s)} & \textbf{matching criteria} & \textbf{references}\\
\midrule
demographic parity
& $\match{AR}$
& \citep{Calders2010}\newline \citep{kamishima2012fairness}\newline \citep{Zemel2013}\newline \citep{Edwards2015}\newline \citep{Feldman2015}\newline \citep{Zafar2017}
\\[\criteriatableextraheight]
balanced classification rate
& ${\color{py-3-1}\frac{\text{TPR} + \text{FPR}}{2}} = {\color{py-3-2}\frac{\text{TPR}' + \text{FPR}'}{2}}$
& \citep{friedler2018comparative}
\\[\criteriatableextraheight]
equalized odds
& $\match{FPR}$ and\newline $\match{FNR}$
& \citep{Hardt2016}\newline
\\[\criteriatableextraheight]
equality of opportunity
& $\match{FNR}$ \newline (if $\hY = 1$ is desirable)
& \citep{Hardt2016}\newline
\\[\criteriatableextraheight]
calibration
& $\match{FDR}$ and\newline $\match{FOR}$
& \citep{Chouldechova2017}\newline \citep{Kleinberg2016}
\\[\criteriatableextraheight]
predictive parity
& $\match{PPV}$ and\newline $\match{NPV}$
& \citep{Zafar2017a} \\
\bottomrule
\end{tabularx}
\end{table}

The fairness criteria outlined in Table~\ref{tab:criteria} can be categorized into three generic groups described by (conditional) independences between $Y, Z, \hY$.
\citet{barocas-hardt-narayanan} refer to
\begin{align*}
\text{\newdef{independence:}} \qquad & \hY \indep Z \eqc \\
\text{\newdef{separation:}} \qquad & \hY \indep Z \given Y \eqc \\ \text{\newdef{sufficiency:}} \qquad & Y \indep Z \given \hY \eqp
\end{align*}
All observational group matching criteria can be assigned in spirit to one of these three categories.

We highlight at this point, that all the criteria mentioned so far are observational, i.e., they can be formulated with reference only to the joint distribution $\distr(X, Y, \hY, Z)$ (where we have not yet made use of $X$).
Given a dataset for features, protected attributes, true outcomes, and predicted outcomes from some system, no further assumptions are needed to directly verify each notion on the empirical distribution.
Moreover, for each of them there is at least one scenario in which it can be framed as a desirable property in terms of fairness.
A principled comparison of their respective utility for fair decision making is virtually impossible without context and a specific application.
Unfortunately, one also can not have them all.
\paragraph{Impossibility results.}
Discussions about fairness often arise in situations, where the observed \newdef{base rates} differ across protected groups.
The base rate is the fraction of people with positive true outcome $Y = 1$.
In our notation, this means ${\color{py-3-1} p := \nicefrac{a+b}{a+b+c+d}} \ne {\color{py-3-2}\nicefrac{a'+b'}{a'+b'+c'+d'} =: p'}$, where we denote the base rates for the two groups by $p$ and $p'$.
From these expressions, one can directly verify that within each protected group \citep{Chouldechova2017}
\begin{equation}\label{eq:impossibility}
  {\color{py-3-1}\text{FPR}} = \frac{{\color{py-3-1}p}}{1 - {\color{py-3-1}p}} \, \frac{1 - {\color{py-3-1}\text{PPV}}}{{\color{py-3-1}\text{PPV}}} \, {\color{py-3-1}\text{FNR}}
  \qtxtq{and}
  {\color{py-3-2}\text{FPR}'} = \frac{{\color{py-3-2}p'}}{1 - {\color{py-3-2}p'}} \, \frac{1 - {\color{py-3-2}\text{PPV}'}}{{\color{py-3-2}\text{PPV}'}} \, {\color{py-3-2}\text{FNR}'}
  \eqp
\end{equation}
Therefore, if we wish to achieve equalized odds ($\match{FPR}$, $\match{FNR}$) and calibration (in particular $\match{PPV}$) at the same time, eq.~\eqref{eq:impossibility} implies that ${\color{py-3-1}p} = {\color{py-3-2}p'}$ or $\match{FPR} = \match{FNR} = 0$.

As a consequence, except for the degenerate cases of equal base rates or a perfect predictor, no system can satisfy calibration and equalized odds at the same time \citep{Kleinberg2016,Chouldechova2017}.
In general, unless the confusion matrices in Table~\ref{tab:confusiontable} are scalar multiples of each other, or all off-diagonal entries are $0$, many pairs of observational fairness criteria (e.g., Table~\ref{tab:criteria}) cannot be satisfied jointly.
For example, \citet{pleiss2017fairness} studied which fairness criteria are compatible with calibration.
Similarly, impossibility results can be phrased elegantly in terms of separation and sufficiency \citep{barocas-hardt-narayanan}.
\citet{kim2020model} provide a model agnostic characterization of the (in)compatibility of many combinations of criteria listed in Table~\ref{tab:criteria}.
Comparing different outcome-based (un)fair predictors to one another remains a challenge that has only recently seen some first advances \citep{speicher2018unified}.
The impossibility of ``playing it safe'', by satisfying all potentially desirable group fairness criteria simultaneously, highlights the necessity of comparing and discussing their usage in the context of a specific application.

Even before the first mention of these results \citep{Kleinberg2016,Chouldechova2017}, a closely related debate was carried out in the COMPAS example which we encountered in \secref{sec:motivation}.
\citet{Angwin2016} from ProPublica argued that COMPAS does not satisfy equality of opportunity, because among the people who did not go on to re-offend ($Y = 1$), black defendants ($Z = 1$) got worse scores than white defendants ($Z = 0$), i.e., $\prob(\hY = 1 \given Y=1, Z=0) \ne \prob(\hY = 1 \given Y=1, Z=1)$.
In a rebuttal to the ProPublica article, the company behind COMPAS (at the time it was called \emph{Northpointe}, in the meantime they have been renamed to \emph{Equivant}) argued for calibration as a more appropriate fairness criterion in their application and demonstrated that COMPAS indeed satisfies calibration \citep{Dieterich2016}.
This specific real-world example of incompatibility and the resulting trade-offs has also be studied in more detail from an algorithmic and ethical perspective \citep{corbett-davies2016,flores2016false,corbett2017algorithmic}.

\section{Sub-group and individual fairness}
\label{sec:individualfairness}

An interesting non-observational fairness definition is \newdef{individual fairness} \citep{Dwork2012}, which assumes the existence of a similarity measure for individuals, and requires that any two similar individuals should receive a similar distribution over outcomes.
Hence, we here consider a mapping $\hY: \cX \to \Delta(\cY)$, where $\Delta(\cY)$ denotes the set of all probability distributions on $\cY$.
Then the resulting criterion for individual fairness takes the form
\begin{equation}\label{eq:lipschitz}
  D(\hY(x), \hY(x')) \le d(x, x') \fora x, x' \in \cX \eqc
\end{equation}
where $D: \Delta(\cY) \times \Delta(\cY) \to [0,\infty)$ and $d: \cX \times \cX \to [0, \infty)$ are metrics.
More recent work lends additional conceptual and theoretical support to such a definition \citep{Friedler2016}.

However, \citet{Dwork2012} also acknowledge the main difficulty of the approach, namely choosing metrics $D$ and $d$ in practice.
This issue has been approached in an online learning setting, where a regulator ``knows unfairness when they see it'' to circumvent a normative definition of similarity \citep{gillen2018online}.
Along similar lines, \citet{kim2018fairness} assume that instead of having access to the metric in analytical form, we can query it a bounded number of times, where the queries also may be answered by experts.
Their fairness notion aligns with ideas from previous work aiming to relax individual fairness by attempting to interpolate between group and individual fairness \citep{hebert2017calibration,pmlr-v80-kearns18a}.
\citet{lahoti2019operationalizing} model side-information, such as fairness judgments from a variety of sources (including human judgments) as a fairness graph and then learn a unified presentation capturing pairwise fairness to tackle the issue of learning a similarity metric.
Learning such a metric from human annotated data has also been explored in detail specifically for the criminal recidivism prediction task on COMPAS data \citep{wang2019empirical}.

These attempts rely on ensuring parity not between group statistics, but between many (possibly overlapping) subgroups of the population with identical (or similar) features.
Besides operationalizing individual fairness as proposed by \citet{Dwork2012}, a key motivation for such approaches to subgroup fairness is to avoid \emph{fairness gerrymandering} \citep{pmlr-v80-kearns18a,hebert2017calibration}.
In concurrent work, \citet{yona2018probably} arrive at the conclusion that ensuring the strict definition eq.~\eqref{eq:lipschitz} exactly is generally computationally intractable.
However, these intractabilities can be overcome for a relaxed notion they call \emph{approximate metric-fairness}.
For the remainder of this thesis we will focus on group fairness notions.

\section{Causal fairness criteria}
\label{sec:causalcriteria}

In this section, we motivate the study of fairness through the lens of causality and describe some existing causal notions of fairness.
An in-depth discussion of our own contributions in going beyond observational criteria follows in \chapref{chap:causal}.

\paragraph{Why causality?}
To motivate the importance of understanding cause-effect relations in fairness, let us come back to the Berkeley college admission example \citep{Bickel1975}.
The potential allegation in this study was based on the observation that the overall acceptance rate for females was lower.
If one does not believe this to be a coincidence, the conclusion that the acceptance decisions are based on gender stands to reason.
After all, it appears unlikely that the admission decision influenced the gender.
However, this seemingly plausible explanation neglects possible alternatives due to confounding.
When presented with two correlated quantities, we all too commonly conclude that either the first influences the second or vice versa.
Thereby, we overlook the possibility of a third variable influencing both quantities in question.
In the Berkeley college admission case, only a closer analysis revealed the underlying Simpson'{}s paradox and opened up the possibility that gender may not have directly influenced the admission decision.
As soon as one conditions on department choice, the effect reverses and females experience higher acceptance rates in each department.

Crucially, when having access only to the joint distribution of admission decision $\hY$, gender $Z$, and some features $X$, e.g., SAT scores, we cannot tell, whether $Z$ had a direct influence on $\hY$.
If this were the case, it would raise serious concerns about blatantly direct gender discrimination.
However, the same joint distribution could arise under a different generative model, where gender has a direct causal effect only on the department choice, which in turn affects the outcome.
In short, different causal structures can entail the same joint distribution over all variables.
As a consequence, all observational criteria---based only on the joint distribution---cannot distinguish between causally different scenarios.
Since the example shows that whether a decision is perceived as fair or not depends heavily on the causal structure of the process, this is bad news for observational criteria.

The importance of \emph{unidentifiability} of causal structure from observational data for fairness can be traced back to \citet[Section 6]{Pearl2009}.
\citet{Hardt2016} explicitly mention that observational criteria are too weak to distinguish two intuitively different scenarios.
However, the work does not provide a formal mechanism to articulate why and how these scenarios should be considered different.
Proposing a causality-based approach to cope with this issue is one of our contributions described in \chapref{chap:causal}.

To further motivate a causal framework to analyze fairness, recall the intuition behind some of the observational criteria.
For example, demographic parity asks the outcome to be statistically independent of the protected attribute.
However, we conjecture that the way the general public intuitively thinks about demographic parity is that \emph{membership in a protected group should not have any effect on the predicted outcome}.
Similarly, equalized odds informally asks that the predicted outcome should not be informed by group membership except for information that comes from the true outcome.
This is likely understood as \emph{the predicted outcome should only be affected by the group membership to the extent to which this is justified or mediated by the true outcome}.
Both informal explanations hinge on causal statements rather than mere statistical dependence.
Thus, a causal framework may be more appropriate to formalize them.

The arguments in favor of causality so far were based on issues of group fairness criteria.
However, causality offers a similar conceptual benefit to dealing with individual fairness.
Initial work on individual fairness focused on distance measures of the features $X$ of individuals in the context of a given application to assess similarity.
As outlined in \secref{sec:goals}, the most immediate comparison conceptually is with a different version of ourselves, where everything is equal except for the protected attribute.
Causality allows us to formally reason about such counterfactuals, rendering it an appealing tool to investigate individual notions of fairness.
The idea of a similarity measure of individuals in observed data can then be addressed by the method of \emph{matching} used in counterfactual reasoning \citep{Rosenbaum1983,Qureshi2016}.
That is, evaluating approximate counterfactuals by comparing individuals with similar values of covariates $X$ but different protected attribute $Z$.

\paragraph{Structural equation models and causal graphs.}

We assume a basic understanding of the key ideas of causality, in particular \emph{interventions} and \emph{counterfactuals} and predominantly work with the graphical approach to causality and the \emph{do-calculus}, see \citet{Pearl2009,peters2017elements}.
We also borrow notation from these books.
Let us now recall the definition of structural equation models and provide a subjective opinion on how to interpret them.

For our purposes, a \newdef{causal graph} is a directed, acyclic graph whose nodes reference random variables.
The directed edges represent causal influences between those variables.
In that way, causal graphs are a convenient tool to organize assumptions about the data generating process.
More formally, we work with Structural Equation Models.
For a rigorous introduction of structural equation models with more attention to measure-theoretic considerations, we refer the reader to \citet{bongers2016foundations}.
A \newdef{Structural Equation Model (SEM)} is a set of equations of the form
\begin{equation*}
  V_i = f_i(\pa(V_i), N_i) \for i \in \{1, \dots, n\} \eqc
\end{equation*}
where
\begin{itemize}
  \item the $V_i$ are real-valued random variables,
  \item $\pa(V_i) \subset \{V_1, \ldots, V_n\}$ are called the \newdef{parents} of~$V_i$,
  \item the real-valued random variables $N_i$ are independently distributed according to $\distr(N_i)$,
  \item the \newdef{causal graph} induced by the structural equations $f_i$, i.e., the directed graph with nodes $\{V_i\}_{i=1}^n$ and edges $V_i \to V_j$ if and only if $V_i \in \pa(V_j)$, is acyclic.
\end{itemize}
We typically denote the resulting directed acyclic graph (DAG) by $\cG$ and call $\pa(V_i)$ the \newdef{direct causes} of $V_i$.
The random variables $N_i$ are also referred to as \newdef{noise variables} or \newdef{exogenous variables}.

The structural equations of an SEM are interpreted as assignments.
Because we assume acyclicity, following the nodes of the graph in topological order, we can recursively compute the values of all variables, given the noise variables.
In most applications, we assume the $\{f_i\}_{i=1}^n$ to be deterministic, i.e., all the randomness comes from the noise variables.
Hence, given specific values for the $N_i$, the values of all $V_i$ are fixed.
A structural equation model entails a unique joint distribution over all variables, leading us to view structural equation models as \newdef{data generating models}.
As mentioned above, the same joint distribution can usually be entailed by multiple structural equation models with distinct causal graphs.

Some graph-theoretical terminology will come in handy in the following two chapters.
A \newdef{path} in a DAG $\cG$ is a sequence of distinct nodes~$V_1, \dots, V_k$, for~$k \ge 2$, such that~$V_i \to V_{i+1}$ or $V_{i+1} \to V_i$ for all~$i \in \{1, \dots,k-1\}$.
A path is \newdef{directed}, if $V_i \to V_{i+1}$ for all~$i \in \{1, \dots,k-1\}$, i.e., all arrows ``point in the same direction''.
We say a path between $V_1$ and $V_k$ in a DAG $\cG$ is \newdef{blocked by a set of nodes~$S$}, where~$V_1, V_k \notin S$ when one of the following two conditions is met:
(a) There exists a $V_i \in S$ such that $V_{i-1} \to V_i \to V_{i+1}$ or $V_{i+1} \to V_i \to V_{i-1}$.
(b) Or there exists a $V_i$ such that $V_{i-1} \to V_i \gets V_{i+1}$ and neither $V_i$ nor any of its descendants is in $S$.
The notion of blocked paths finally allows us to define \newdef{d-separation}: 
Two disjoint sets of nodes $V$ and $W$ in a DAG $\cG$ are \newdef{d-separated} by a third pairwise disjoint set of nodes $S$ if every path between nodes in $V$ and $W$ is blocked by $S$.

Any structural equation model by definition entails a DAG $\cG$ and a unique distribution over all its variables.
This raises a general question about the compatibility of graphs with joint distributions over the same variables in terms of d-separation on the one hand and conditional independences on the other hand.
In particular, the following definitions will be relevant for \chapref{chap:causal}.
Let $\cG$ be a DAG and $\distr$ a joint distribution over all variables in $\cG$ that allows for a density with respect to the product measure.
Then we say $\distr$ \newdef{is Markov} with respect to $\cG$ if for all disjoint sets of nodes $V, W$ and $S$ we have that
\begin{equation*}
V \text{ and } W \text{ are d-separated by } S \Rightarrow V \indep W \given S \eqp
\end{equation*}
Two DAGs $\cG_1$ and $\cG_2$ are called \newdef{Markov equivalent} if the sets of all distributions (with a density) that are Markov with respect to a $\cG_1$ and $\cG_2$ respectively, are equal.
Finally, from this follows the \newdef{Markov equivalence class of a DAG $\cG$}, which we define as the set of all DAGs that are Markov equivalent to $\cG$.

Since two graphs are Markov equivalent if and only if they satisfy the same set of d-separations, by the Markov property, we can infer that they satisfy the exact same set of conditional independence relations.
As we have seen, the Markov property talks about when graphical d-separation criteria imply the corresponding conditional independence relation.
There is an analogous definition for the other direction:
A joint distribution $\distr$ is \newdef{faithful to the DAG $\cG$} if for all disjoint sets of nodes $V, W$ and $S$ we have that
\begin{equation*}
V \indep W \given S V \text{ and } \Rightarrow W \text{ are d-separated by } S \eqp
\end{equation*}
Faithfulness can be violated when different paths in a given causal graph cancel each other out such that there is an independence relation satisfied by the joint distribution, for which the corresponding d-separation is violated.
In \chapref{chap:causal} we will argue that such a situation may indeed occur in practice.

\paragraph{Interpretation.} We consider the causal influences captured by the $f_i$ to be mechanisms that are intrinsic to how the universe works.
Prototypical examples would be laws of physics, however, we will also assume that there exist invariant mechanisms between fuzzier concepts that can still be captured in a functional form.
For example, in causal inference for health-care, we may assume that one can meaningfully model concepts such as ``smoker'' and ``has lung cancer'' as random variables as well as the causal influence between the two as a mathematical function.
Similarly, economists may seek to compute the causal effect of the ``health of institutions'' on the gross domestic product.
None of these concepts are unambiguously defined properties of the physical world and---as somewhat constructed notions---are themselves not ``causing'' anything in the physical sense.
However, we still believe that there exist causal mechanisms relating them and that we can meaningfully represent those by mathematical equations.

In these examples, we essentially claim that the ontology of ``smoker'', ``lung cancer'', ``health of institutions'', and ``gross domestic product'' as well as the assumed interactions between them are stable enough.
Intuitively, this means that we can pinpoint and agree upon what these terms reference with a sufficient degree of certainty and precision.
The mass or electric charge of a physical object have a more stable ontology than the health of institutions.
However, we may still agree on what is referenced by the health of institutions and can agree on the methods that led us to believe in that ontology.
As we have discussed in \secref{sec:how}, moving to concepts such as race and gender, ontological stability becomes a serious concern.
How to practically address these challenges is the subject of ongoing research \citep{kohler2018eddie,hu2020whats}.

In the context of this thesis, the predictor~$\hY$ maps inputs, e.g., the features~$X$ and the protected attribute~$Z$, to a predicted outcome.
Hence, we model $\hY$ as a childless node, whose parents are its input variables.
This predictor node has a special role, because its structural equation does not correspond to universally true mechanism of the universe, but can be chosen by the decision-maker, for example by training a predictive model.
While we commonly display $\hY$ as part of a causal graph like all other variables, it is important to keep in mind that its incoming edges have a different meaning from the others.

We note that a key difference between causal SEMs and probabilistic graphical models is that in SEMs we only care about causal factorizations of the joint probability.
This means that by writing down a causal DAG, we explicitly formulate the assumption that the parents of any given variable are precisely the variables that actually have a direct causal influence on that variable.
In contrast, probabilistic graphical models allow for any kind of factorization of the joint distribution and thus typically do not require parents to be causes of their descendants.
Arguably the two most important tools the language of causal SEMs offers beyond the standard statistical toolkit of probabilistic graphical models are formal notions of interventions and counterfactuals.
Hence, it is little surprising that most existing causal fairness criteria are based on one or the other.
We now provide a short overview of works at the intersection of causality and fairness in machine learning.

\paragraph{Causal fairness criteria.}
\citet{Kusner2017} were among the first to put forward one possible causal definition, namely the notion of \newdef{counterfactual fairness}.
It ensures that the outcome any individual receives in the real world is the same as the one she would receive in a \emph{counterfactual world} in which only the protected attribute of the individual has changed, everything else remaining equal.
This can be formalized as
\begin{equation}\label{eq:counterfactual}
  \prob(\hY_{Z=z} = 1 \given X=x, Z=z) = \prob(\hY_{Z=z'} = 1 \given X=x, Z=z) \fora x \in \cX, \; z, z' \in \cZ \eqp
\end{equation}
We interpret the right hand side of this equation as \emph{the probability that $\hY$ predicts $1$ for a given individual for which we observe features $x$ and protected attribute $z$, had the protected attribute been $z'$ instead of $z$}.
An analogous interpretation of the left hand side of eq.~\eqref{eq:counterfactual} asserts that it is equal to $\prob(\hY = 1 \given X=x, Z=z)$.

Note that this definition requires modeling counterfactuals on a per individual level, which is a delicate task.
Even determining the effect of \emph{race} at the group level is difficult as discussed by \citet{VanderWeele2014,hu2020whats}.
Interventions on (often ill-defined) protected attributes such as gender or race are generically hard to conceive.
For example, imagine a pregnant woman'{}s job application gets rejected.
How should we conceive of the world in which she has been a man?
Should we think of her as being born male, or being perceived as male during the hiring procedure?
Is she a pregnant man now?
The counterfactual where somebody was ``born a different person'', may lead to the comparison of vastly different (fictitious) individuals, undermining the key motivation for counterfactual fairness.
Such challenges are aggravated when moving beyond inappropriately simplified binary gender assignments.
In \chapref{chap:causal} we propose a proxy-based approach to formalize a continuum of possible interventions that may be harder or easier to conceive and---perhaps even more important---to perform in practice.

In contrast to counterfactuals, for a pure intervention there is no abduction step, i.e., we do not update the distribution of the exogenous variables conditioned on the observations.
Instead, we simply replace the structural equation for the variable we intervene on by setting it to a constant (or a fixed distribution).
Intuitively, this notion is most often compared to randomized trials.
For example, before having recruiters screen written applications, we could intervene by randomizing the names---perhaps one of the strongest salient indicators for gender---of the applicants.
Note that in this case we only intervene on a proxy for gender.
The intervention does not account for potential gender discrimination applicants have experienced before this specific application.
However, it can easily be carried out in practice.
Indeed, such a randomized study has shown that applications with typical White-sounding names receive considerably higher callback rates for interviews than African-American-sounding names \citep{bertrand2004emily}.

Formally, \newdef{interventional fairness} can be stated in the form
\begin{equation*}
  \prob(\hY = 1 \given do(Z=0) {\color{gray}, X=x}) = \prob(\hY = 1 \given do(Z=1) {\color{gray}, X=x}) \eqc
\end{equation*}
where we may or may not condition on $x \in \cX$.
Note that even when we condition on the features, we still match probabilities within groups of potentially different individuals who happen to have the same features.
This notion is different from the counterfactual statement in eq.~\eqref{eq:counterfactual}, where we condition on having observed a specific individual to update the exogenous variables before intervening on the protected attribute.
We further remark that we still used the letter $Z$, even though in the example we spoke of a proxy of the protected attribute.
This distinction will be made clear in \chapref{chap:causal}.

\paragraph{Related work on causality-based fairness.}
More generally, causality has already been employed for the discovery of discrimination in existing datasets \citep{Bonchi2015, Qureshi2016}.
Causal graphical conditions to identify \emph{meaningful partitions} have been proposed for the discovery and prevention of certain types of discrimination by
preprocessing the data \citep{Zhang2017b}.
These conditions rely on the evaluation of \emph{path specific effects}, which relates closely to earlier work by \citet[Section 4.5.3]{Pearl2009}.
\citet{Nabi2017} picked up this notion and generalized Pearl'{}s approach by a constraint based prevention of discriminatory path specific effects arising from counterfactual reasoning.
The idea of path specific effects to distinguish fair and unfair causal pathways, and further to suppress influences along the unfair ones, have also been combined with deep learning and variational inference approaches to render it applicable in complex, non-linear scenarios \citep{chiappa2018path}.
Further, \citet{zhang2018fairness} propose three fine grained types of causal influence, based on which they attempt to aid the practitioner to jointly model the data generating mechanisms and an appropriate fairness criterion.
For more details, we refer the reader to a survey on causal reasoning for algorithmic fairness by \citet{loftus2018causal}.

\paragraph{Technical challenges of causal reasoning for fairness.}
At the beginning of this section, we argued that causality is conceptually useful to tackle some of the subtle issues when it comes to fairness criteria.
However, there is a price to pay.
First, most works start from the usual assumption that the causal graph is known.
Since causal discovery is generically difficult \citep{peters2017elements}, this is not an assumption to make lightly.
\citet{russell2017worlds} present some ideas on how to achieve approximate counterfactual fairness for multiple competing causal models simultaneously to account for some degree of uncertainty about the correct one.
Furthermore, causal inference often comes with a set of hard-to-verify or even untestable standard assumptions such as identifiability, no unobserved confounding, or faithfulness \citep{Pearl2009}.
Finally, in particular for notions based on counterfactuals, there is usually no data available for training and verification, because by their very definition, we never observe the counterfactual outcomes.
We develop tools to deal with some of these issues.
In \chapref{chap:causal}, we address potential misspecifications of the causal graph.
The issue of labels missing depending on our decisions are discussed in \chapref{chap:decisions}.

\section{Fair representations}
\label{sec:fairrepresentations}

Supervised representation learning offers a natural approach to demographic parity.
The basic idea is to transform the input features via a mapping $t: \cX \to \widetilde{\cX}, x \mapsto \widetilde{x}$ into a new space $\widetilde{\cX}$ such that the following conditions hold:
\begin{enumerate}
  \item We can train a predictor $\hY: \widetilde{\cX} \to \cY$ with high accuracy. Informally, the new representation $\widetilde{x}$ of the features $x$ should still contain all relevant information about the true outcome $y$.
  \item It is hard to learn a predictor $\hat{Z}: \widetilde{\cX} \to \cZ$, i.e., the new representation $\widetilde{x}$ of the features $x$ contains no information about the corresponding protected attribute $z$.
\end{enumerate}
The aspiration of learned representations is usually their usefulness for potentially unknown downstream tasks.
In the fairness context, the hope is that once we have learned such a fair transformation $t$, one can freely train any unconstrained predictor $\hY: \widetilde{\cX} \to \cY$ without having to worry about introducing unfairness.

\citet{Zemel2013,calmon2017optimized} formulate this pre-processing task as an optimization problem and directly learn $t$.
A large body of work in this direction is based on \emph{adversarial training}, i.e., learning $t$ jointly with $\hY$ and $\hat{Z}$ by optimizing for the two objectives---minimizing the loss for $\hY$ and maximizing the loss for $\hat{Z}$---simultaneously by alternating gradient descent \citep{Edwards2015,sokolic2017learning,madras18a}.
Another approach exploits variational auto-encoders \citep{kingma2013auto} with an additional loss terms to render the latent representation a bad predictor for the protected attribute \citep{louizos2015variational}.

What we described here as \emph{fair representation learning} is still narrowly concerned with outcome-based fairness.
Other issues with biased representations, some of which were mentioned in \secref{sec:motivation}, are beyond the scope of this thesis.

\section{Procedural approaches and human perception}
\label{sec:procedural_approaches}

Almost all existing work discussed in this chapter falls into the broad category of distributive fairness.
In a series of intriguing papers, \citet{grgic2016case,grgic2018,grgic2018human,grgic2020dimensions} study procedural approaches, or \emph{process fairness}, based on the collective moral judgment of humans.
For example, the surveys ask which features people find fair to use, e.g., in the COMPAS recidivism risk assessment setting, and further analyze how perceptions of fairness relate to demographics and personal experiences of individuals.
Such empirical approaches may help describe the procedural component of fairness and justice.
For example, in a related setting, the landmark study on moral decisions in trolley problems related to autonomous driving called \emph{the moral machine experiment} provided insights that may inform regulatory decisions regarding autonomous vehicles \citep{awad2018moral}.

\chapter{Causality and fairness}
\label{chap:causal}

In this chapter, we go beyond the assumption that only observational data without any additional information is available.
By framing the problem of discrimination based on protected attributes in the language of causal reasoning, we can circumvent the limitations of observational criteria.
This viewpoint shifts attention from ``What is the right fairness criterion?'' to ``What do we want to assume about our model of the causal data generating process?''
Through the lens of causality, we make several contributions.
First, we crisply articulate why and when observational criteria fail, thus formalizing what was before a matter of opinion.
Second, our approach exposes previously ignored subtleties and why they are fundamental to the problem.
For example, we do not assume we can evaluate individual causal effects and meaningfully conceive interventions on protected attributes.
Finally, we put forward natural causal non-discrimination criteria and develop algorithms that satisfy them.

The main content of this chapter has been published in the following paper:

\fbox{\parbox{\textwidth}{
\pubitem{Avoiding Discrimination through Causal Reasoning}
{Niki Kilbertus, Mateo Rojas-Carulla, Giambattista Parascandolo, Moritz Hardt, Dominik Janzing, Bernhard Schölkopf}
{Neural Information Processing Systems (NeurIPS), 2017}
{https://arxiv.org/abs/1706.02744}
{}
{}
}}

\section{Introduction}
\label{sec:neurips:intro}

We start from the fact that observational criteria are insufficient to conclusively capture fairness simply because there exist scenarios with \emph{intuitively} different social interpretations that admit identical joint distributions over $(\hY, Z, Y, X)$ \citep{Hardt2016}.
Thus, no observational criterion can distinguish them.
Inspired by Pearl'{}s causal interpretation of Simpson'{}s paradox \citep[Section 6]{Pearl2009}, we propose causality as a way of coping with this unidentifiability result.
Carefully using the language of causal reasoning supports several contributions:
\begin{itemize}
  \item Revisiting the two scenarios proposed by \citet{Hardt2016} that cannot be differentiated by observational criteria, we articulate a natural causal criterion that formally distinguishes them.
  \item We point out subtleties in fair decision making that arise naturally from a causal perspective, but have gone widely overlooked in the past.
  Specifically, we formally argue for the need to distinguish between the underlying concept behind a protected attribute, such as race or gender, and its \emph{proxies} available to the algorithm, such as visual features or name.
  \item We introduce and discuss two natural causal criteria centered around the notion of \emph{interventions} (relative to a causal graph) to formally describe specific forms of discrimination.
  \item Finally, we initiate the study of algorithms that avoid these forms of discrimination.
  Under certain linearity assumptions about the underlying causal model generating the data, an algorithm to remove a specific kind of discrimination leads to a simple and natural heuristic.
\end{itemize}

At a higher level, our work proposes a shift from trying to find a single statistical fairness criterion to arguing about properties of the data and which assumptions about the generating process are justified.
Causality provides a flexible framework for organizing such assumptions.
In particular, we will introduce two complementary thought frameworks, which we call the \emph{benevolent} and the \emph{skeptic} viewpoint.
The benevolent viewpoint is characterized by generally not assuming causal influences of the sensitive attribute on the decision to be unfair.
Modelers may then mark specific variables along such paths that they deem should be causally irrelevant for the decision as \emph{proxy variables}.
In this scenario our goal is to make decisions that are not causally influenced by such proxy variables.
In the skeptic viewpoint, we take the stance that all causal influences from the sensitive attribute on the decision are by default considered inappropriate.
However, as modelers and stakeholder we may declare some variables along such paths as \emph{resolving variables} if we believe that they can fairly be used for making decisions despite being influenced by the sensitive attribute.
For example, while an employer may be prohibited from discriminating applicants based on their country of origin, it may still be appropriate to demand certain language skills, which are often influenced by the country of origin.
This process of closely analysing an assumed causal model underlying the observed data and making deliberate judgment calls about the fairness of causal influences, i.e., marking variables as proxies or resolving variables, is at the heart of our proposal and calls for a more situational and context dependent assessment of fairness.

In addition to our introduction of structural equation models in \secref{sec:causalcriteria}, one more concept that will be useful in our exposition is that of \emph{terminal ancestors}.
We will refer to the \emph{terminal ancestors of a node~$V$ in a causal
graph~$\cG$}, denoted by~$ta^{\cG}(V)$, which are those ancestors of~$V$ that are also root nodes of~$\cG$.

\section{Unresolved discrimination and limitations of observational criteria}
\label{sec:neurips:limitations}

\begin{figure}
  \centering
  \begin{tikzpicture}
    \pgfmathsetmacro{\d}{1}
    \node[simple] (Z) at (0,0) {$Z$};
    \node[simple] (X) at (-0.6,-1) {$X$};
    \node[simple] (hY) at (0.6,-1) {$\hY$};
    \edge {Z} {X,hY};
    \edge {X} {hY}
  \end{tikzpicture}%
  \caption[A simple causal graph for a college admission setting]{The admission decision~$\hY$ does not only directly depend on gender~$Z$, but also on department choice~$X$, which in turn is also affected by gender~$Z$.}
  \label{fig:pearl}
\end{figure}
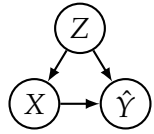

To bear out the limitations of observational criteria, we again turn to the commentary on claimed gender discrimination in Berkeley college admissions in Section 4.5.3 of \citet{Pearl2009}.
To recap, \citet{Bickel1975} had shown earlier that a lower college-wide admission rate for women than for men was explained by the fact that women applied in more competitive departments.
When adjusted for department choice, women experienced a slightly higher acceptance rate compared with men.
From the causal point of view, arguably what matters is the \emph{direct effect} of the protected attribute (here, gender~$Z$) on the decision (here, college admission~$\hY$) that cannot be ascribed to a mediator that has been judged to be admissible in the decision process without fairness concerns.
We shall use the term \emph{resolving variable} for any such variable in the causal graph that is influenced by~$Z$ in a manner that we accept as non-discriminatory, such as department choice~$X$ in our example, see Figure~\ref{fig:pearl}.\footnote{We remark again that disarming any discrimination concerns by only considering direct effects to be discriminatory is a strong normative statement.
The root cause may be much closer to different departments following discriminatory policies or nurturing a hostile environment towards women, see \secref{sec:causalcriteria}.}
With this convention, the criterion can be stated as follows.

\begin{definition}[Unresolved discrimination]
  A variable~$V$ in a causal graph exhibits \emph{unresolved discrimination} if there exists a directed path from~$Z$ to~$V$ that is not blocked by a resolving variable and~$V$ itself is non-resolving.
\end{definition}

Pearl'{}s commentary is consistent with what we call the \emph{skeptic viewpoint}.
All paths from the protected attribute~$Z$ to $\hY$ are problematic, unless they are justified by a resolving variable.
The presence of unresolved discrimination in the predictor~$\hY$ is worrisome and demands further scrutiny.
In practice,~$\hY$ is not a priori part of a given graph.
Instead it is our objective to construct it as a function of the features~$X$, some of which might be resolving.
Hence we should first look for unresolved discrimination in the features.
A canonical way to avoid unresolved discrimination in~$\hY$ is to only input the set of features that do not exhibit unresolved discrimination.
However, the remaining features might be affected by non-resolving \emph{and} resolving variables.
In \secref{sec:neurips:criteria} we investigate whether one can exclusively remove unresolved discrimination from such features.
A related notion of ``explanatory features'' in a non-causal setting was introduced by \citet{Kamiran2013}.

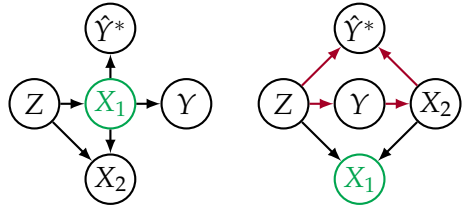
\begin{figure}
  \centering
\begin{tikzpicture}
  \pgfmathsetmacro{\d}{1}
  \node[simple] (Z) at (0,0) {$Z$};
  \node[simple, Green] (X1) at (\d,0) {$X_1$};
  \node[simple] (Y) at (2*\d,0) {$Y$};
  \node[simple] (X2) at (\d,-\d) {$X_2$};
  \node[simple] (hY) at (\d,\d) {$\hY^*$};
  \edge {Z} {X1,X2};
  \edge {X1} {Y,X2,hY};
\end{tikzpicture}%
  \hspace{0.5cm}
\begin{tikzpicture}
  \pgfmathsetmacro{\d}{1}
  \node[simple] (Z) at (0,0) {$Z$};
  \node[simple] (Y) at (\d,0) {$Y$};
  \node[simple, Green] (X1) at (\d,-\d) {$X_1$};
  \node[simple] (X2) at (2*\d,0) {$X_2$};
  \node[simple] (hY) at (\d,\d) {$\hY^*$};
  \edge[DarkRed] {Z} {Y,hY};
  \edge[DarkRed] {Y} {X2};
  \edge[DarkRed] {X2} {hY};
  \edge {Z} {X1};
  \edge {X2} {X1};
\end{tikzpicture}%
  \caption[Example for two graphs that generate the same joint distribution, but entail different fairness judgments]{Two graphs that may generate the same joint distribution for the Bayes optimal unconstrained predictor~$\hY^*$.
  If~$X_1$ is a resolving variable,~$\hY^*$ exhibits unresolved discrimination in the right graph (along the red paths), but not in the left one.}
  \label{fig:unidentifiability}
\end{figure}

The definition of unresolved discrimination in a predictor has some interesting special cases worth highlighting.
If we take the set of resolving variables to be empty, we intuitively get a causal analog of demographic parity.
No directed paths from~$Z$ to~$\hY$ are allowed, but~$Z$ and~$\hY$ can still be statistically dependent.
Similarly, if we choose the set of resolving variables to be the singleton set~$\{Y\}$ containing the true outcome, we obtain a causal analog of equalized odds where strict independence is not necessary.
The causal intuition implied by ``the protected attribute should not affect the prediction'', and ``the protected attribute can only affect the prediction when the information comes through the true label'', is neglected by (conditional) statistical independences~$Z \indep \hY$, and~$Z \indep \hY \given Y$, but well captured by only considering dependences mitigated along directed causal paths.

We will next show that observational criteria are fundamentally unable to determine whether a predictor exhibits unresolved discrimination or not.
This is true even if the predictor is \emph{Bayes optimal}.
In passing, we also note that fairness criteria such as equalized odds may or may not exhibit unresolved discrimination, but this is again something an observational criterion cannot determine.

\begin{theorem}\label{thm:unidentifiability}
  Given a joint distribution over the protected attribute~$Z$, the true label~$Y$, and some features~$X_1, \dots, X_n$, in which we have already specified the resolving variables, no observational criterion can generally determine whether the Bayes optimal unconstrained predictor or the Bayes optimal equal odds predictor exhibit unresolved discrimination.\footnote{The Bayes optimal equal odds predictor is the Bayes optimal predictor among all predictors that satisfy the equal odds fairness criterion.}
\end{theorem}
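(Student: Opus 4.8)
The plan is to prove the statement as an \emph{unidentifiability} result: an observational criterion is by definition a function of the joint distribution $\prob(Z, Y, X_1, \dots, X_n, \hY)$ alone, whereas unresolved discrimination is a property of the underlying causal graph, which the joint distribution does not pin down. So it suffices to exhibit a single joint distribution over $(Z, Y, X_1, \dots, X_n)$ that is Markov-compatible with \emph{two} distinct causal graphs $\cG_1$ and $\cG_2$, sharing the same specification of resolving variables, such that the Bayes optimal predictor $\hY^*$ it induces exhibits unresolved discrimination under $\cG_2$ but not under $\cG_1$. Since both scenarios present the identical joint distribution to any observational criterion, that criterion must assign them the same verdict and therefore cannot determine unresolved discrimination.

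For the construction I would instantiate the two graphs of Figure~\ref{fig:unidentifiability}, fixing $X_1$ as the single resolving variable in both. First I would write explicit structural equations for the left graph $\cG_1$ arranged so that $Y \indep (X_2, Z) \mid X_1$; then $\hY^* = \argmax_y \prob(Y = y \mid X_1, X_2, Z)$ reduces to a function of the resolving variable $X_1$ alone, and every directed path from $Z$ to $\hY^*$ runs through $X_1$, so $\hY^*$ exhibits no unresolved discrimination. The subtle step is then to engineer the joint, for instance through a (near-)deterministic relation that lets $X_1$ be recovered from $(Z, X_2)$, so that the \emph{same} Bayes optimal predictor admits a second, equivalent representation as a function of $(Z, X_2)$. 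The right graph $\cG_2$ realizes this representation, wiring $\hY^*$ to the parents $Z$ and $X_2$. Now the edge $Z \to \hY^*$ is a directed path to a non-resolving node, unblocked by any resolving variable, so $\hY^*$ exhibits unresolved discrimination, while both structural equation models induce exactly the same joint over $(Z, Y, X_1, X_2, \hY^*)$.

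Having settled the unconstrained case, I would rerun the same template with $\hY^*$ replaced by the Bayes optimal equalized-odds predictor, i.e. the loss-minimizer subject to $\hY \indep Z \mid Y$, verifying that this extra independence constraint is consistent with both factorizations and redoing the parameter-matching under it. I expect the main obstacle to be exactly this matching: one must choose the two sets of structural equations so that a single joint distribution is simultaneously Markov to $\cG_1$ and $\cG_2$ while admitting the two equivalent representations of the predictor. This forces the distribution to be non-generic, typically unfaithful or degenerate in a controlled way, so the hard part will be confirming that admissible parameter values genuinely exist and that $\hY^*$ really is (equalized-odds-)Bayes-optimal in both models rather than merely some suboptimal function with the convenient independence structure.
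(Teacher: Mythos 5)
Your proposal matches the paper's proof in both strategy and construction: the paper uses exactly the two graphs of Figure~\ref{fig:unidentifiability} with $X_1$ resolving, builds the left model so that $Y \indep \{Z, X_2\} \given X_1$ (making $\hY^* = X_1$ optimal and free of unresolved discrimination) and the right model with the deterministic relation $X_1 = Z + X_2$ so the same predictor is wired through the edge $Z \to \hY^*$, then matches the joints (necessarily violating faithfulness, as you anticipate) and handles equal odds by showing $\widetilde{\hY} = X_2$ in both. The only thing you defer—the explicit parameter choice (Bernoulli $Z$, a Gaussian mixture for $X_1$, $Y = \Ber(\sigma(2X_1))$ versus $Y = \Ber(\sigma(2Z))$ with $X_2 = \cN(Y,1)$) and the verification that the joints coincide—is precisely what the paper's proof carries out.
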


\begin{proof}
  Let us consider the two graphs in Figure~\ref{fig:unidentifiability}.
  First, we show that these graphs can generate the same joint distribution~$\distr(Z, Y, X_1, X_2, \hY^*)$ for the Bayes optimal unconstrained predictor~$\hY^*$.

  We choose the following structural equations for the graph on the left\footnote{$\sigma(x) = 1 / (1 + e^{-x})$}:
  $Z = \Ber(\nicefrac{1}{2})$,
  $X_1$ is a mixture of Gaussians $\cN(Z+1, 1)$
  with weight~$\sigma(2 Z)$ and
  $\cN(Z-1, 1)$ with weight~$\sigma(-2 Z)$,
  $Y = \Ber( \sigma(2 X_1) )$,
  $X_2 = X_1 - Z$,
  and $\hY^* = X_1$.
  Throughout this proof we assume that the Bernoulli distribution~$\Ber$ has support~$\{-1, 1\}$ instead of $\{0, 1\}$.

  For the graph on the right, we define the structural equations
  $Z = \Ber(\nicefrac{1}{2})$,
  $Y = \Ber(\sigma(2 Z))$,
  $X_2 = \cN(Y,1)$,
  $X_1 = Z + X_2$, and
  $\hY^* = X_1$.

  First we show that in both scenarios~$\hY^*$ is actually an optimal score.
  In the first scenario~$Y \indep Z \given X_1$ and~$Y \indep X_2 \given X_1$ thus the optimal predictor is only based on~$X_1$.
  We find
  \begin{equation}\label{eq:YgivenX1}
    \prob(Y = y \given X_1 = x_1) = \sigma(2 x_1 y) \eqc
  \end{equation}
  which is monotonic in~$x_1$.
  Hence optimal classification is obtained by thresholding a score based only on~$\hY^* = X_1$.

  In the second scenario, because~$Y \indep X_1 \given \{Z, X2\}$ the optimal predictor only depends on~$Z, X_2$.
  We compute for the densities (which all exist and are positive)
  \begin{subequations}\label{eq:YgivenX2A}
    \begin{align}
      \distr(Y \given X_2, Z) & = \frac{\distr(Y, X_2, Z)}{\distr(X_2, Z)} \\
      &= \frac{\distr(X_2, Z \given Y) \distr(Y)}{\distr(X_2, Z)} \\
      &= \frac{\distr(X_2 \given Y) \distr(Z \given Y)
        \distr(Y)}{\distr(X_2, Z)} \\
      &= \frac{\distr(X_2 \given Y) \frac{\distr(Y \given Z)
        \distr(Z)}{\distr(Y)} \distr(Y)}{\distr(X_2, Z)} \\
      &= \frac{\distr(X_2 \given Y) \distr(Y \given Z) \distr(Z)}
        {\distr(X_2, Z)} \eqc
    \end{align}
  \end{subequations}
  where for the third equal sign we use~$Z \indep X_2 \given Y$.
  In the numerator we have
  \begin{equation}\label{eq:density}
    \dens(x_2 \given Y=y) \dens(y \given Z=z) \dens(z)
    = f_{\cN(y,1)}(x_2) f_{\Ber(\sigma(2 z))}(y) f_{\Ber(\nicefrac{1}{2})}(z)\eqc
  \end{equation}
  where~$f_{D}$ is the probability density function of the distribution~$D$.
  The denominator can be computed by summing up eq.~\eqref{eq:density} for~$y\in \{-1,1\}$.
  Overall this results in
  \begin{equation*}
    \prob(Y = y \given X_2 = x_2, Z=z) = \sigma(2 y (z + x_2)) \eqp
  \end{equation*}

  Since by construction~$X_1 = Z + X_2$, the optimal predictor is again~$\hY^* = X_1$.
  If the joint distribution~$\distr(Z, Y, \hY^*)$ is identical in the two scenarios, so are the joint distributions~$\distr(Z, Y, X_1, X_2, \hY^*)$, because of~$X_1 = \hY^*$ and~$X_2 = X_1 - Z$.
  To show that the joint distributions~$\distr(Z, Y, \hY^*) = \distr(Y \given Z, \hY^*) \distr(\hY^* \given Z) \distr(Z)$ are the same, we compare the conditional distributions in the factorization.
  Let us start with~$\distr(Y \given Z, \hY^*)$.
  Since~$\hY^* = X_1$ and in the first graph~$Y \indep Z \given X_1$, we already found the distribution in eq.~\eqref{eq:YgivenX1}.
  In the right graph,~$\distr(Y \given \hY^*, Z) = \distr(Y \given X_2 + Z, Z) = \distr(Y \given X_2, Z)$ which we have found in eq.~\eqref{eq:YgivenX2A} and coincides with the conditional in the left graph because of~$X_1 = Z + X_2$.

  Now consider~$\hY^* \given Z$. In the left graph we have~$\distr(\hY^* \given Z) = \distr(X_1 \given Z)$ and the distribution~$\distr(X_1 \given Z)$ is just the mixture of Gaussians defined in the structural equation model.
  In the right graph~$\hY^* = Z + X_2 = Y + \cN(Z,1)$ and thus~$\distr(\hY^* \given Z) = \cN(Z \pm 1)$ for~$Y = \pm 1$.
  Because of the definition of~$Y$ in the structural equations of the right graph, following a Bernoulli distribution with probability~$\sigma(2 Z)$, this is the same mixture of Gaussians as the one we found for the left graph.
  The distribution of~$Z$ is identical in both cases.
  Consequently the joint distributions agree.
  When~$X_1$ is a resolving variable, the optimal predictor in the left graph does not exhibit unresolved discrimination, whereas the graph on the right does.

  The proof for the equal odds predictor~$\widetilde{\hY}$ is immediate once we show~$\widetilde{\hY} = X_2$.
  This can be seen from the graph on the right, because here~$X_2 \indep Z \given Y$ and both using~$Z$ or~$X_1$ would violate the equal odds condition.
  Because the joint distribution in the left graph is the same,~$\widetilde{\hY} = X_2$ is also the optimal equal odds score.\qedhere
\end{proof}

The two graphs in Figure~\ref{fig:unidentifiability} are taken from \citet{Hardt2016}, which we here reinterpret in the causal context to prove Theorem~\ref{thm:unidentifiability}.
We point out that there is an established set of conditions under which unresolved discrimination can, in fact, be determined from observational data.
Note that the two graphs are not Markov equivalent.
Therefore, to obtain the same joint distribution we must violate faithfulness.\footnote{See \secref{sec:causalcriteria} for definitions of \emph{faithfulness}, \emph{Markov property}, and \emph{Markov equivalence class}.}
If we do assume the Markov condition and faithfulness, then conditional independences determine the graph up to its Markov equivalence class.
We later argue that violation of faithfulness is by no means pathological, but emerges naturally when designing predictors~$\hY$.
In any case, interpreting conditional dependences can be difficult in practice \citep{Cornia2014}.

\section{Proxy discrimination and interventions}
\label{sec:neurips:proxies}

We now turn to an important aspect of our framework.
Determining causal effects in general requires modeling interventions.
Interventions on deeply rooted individual properties such as \emph{gender} or \emph{race} are notoriously difficult to conceptualize---especially at an individual level, and impossible to perform in a randomized trial.
\citet{VanderWeele2014} discuss the problem comprehensively in an epidemiological setting and \citet{hu2020whats} analyze it in the context of fair machine learning using the example of what constitutes sex as a sensitive attribute.
From a machine learning perspective, it thus makes sense to separate the protected attribute~$Z$ from its potential \emph{proxies}, such as name, visual features, languages spoken at home, etc.
Intervention based on proxy variables poses a more manageable problem.
By deciding on a suitable proxy we can find an adequate mounting point for determining and removing its influence on the prediction.
Moreover, in practice we are often limited to imperfect measurements of~$Z$ in any case, making the distinction between root concept and proxy prudent.

As was the case with resolving variables, a \emph{proxy} is a priori nothing more than a descendant of~$Z$ in the causal graph that we choose to label as a proxy.
Nevertheless in reality we envision the proxy to be a clearly defined observable quantity that is significantly correlated with~$Z,$ yet in our view should not affect the prediction.

\begin{definition}[Potential proxy discrimination]\label{def:proxy}
  A variable~$V$ in a causal graph exhibits \emph{potential proxy discrimination}, if there exists a directed path from~$Z$ to~$V$ that is blocked by a proxy variable and~$V$ itself is not a proxy.
\end{definition}

Potential proxy discrimination articulates a causal criterion that is in a sense dual to unresolved discrimination.
From the \emph{benevolent viewpoint}, we \emph{allow} any path from~$Z$ to~$\hY$ unless it passes through a proxy variable, which we consider worrisome.
This viewpoint acknowledges the fact that the influence of~$Z$ on other variables may be complex and it can be too restraining to rule out all but a few designated features.
In practice, as with unresolved discrimination, we can naively build an unconstrained predictor based only on those features that do not exhibit potential proxy discrimination.
Then we must not provide proxy~$P$ as input to~$\hY;$\footnote{We note that the symbols $\prob$ (used for the distributions of random variables) and $P$ (denoting proxy variables) may be hard to distinguish.
However, the respective meaning will usually be clear from the context.}
unawareness, i.e., excluding~$P$ from the inputs of~$\hY$, suffices.
However, by granting~$\hY$ access to~$P$, we can carefully tune the function~$\hY(P, X)$ to cancel the implicit influence of~$P$ on features~$X$ that exhibit potential proxy discrimination by the explicit dependence on $P$.
Due to this possible cancellation of paths, we called the path based criterion \emph{potential} proxy discrimination.
When building predictors that exhibit no \emph{overall proxy discrimination}, we precisely aim for such a cancellation.

Fortunately, this idea can be conveniently expressed by an \emph{intervention} on~$P$, which is denoted by~$do(P=p)$ \citep{Pearl2009}.
Visually, intervening on~$P$ amounts to removing all incoming arrows of~$P$ in the graph;
algebraically, it consists of replacing the structural equation of $P$ by $P=p$, i.e., we put point mass on the value~$p$.

\begin{definition}[Proxy discrimination]\label{def:overalldisc}
  A predictor~$\hY$ exhibits no \emph{proxy discrimination} based on a proxy~$P$ if for all~$p, p'$
  \begin{equation}\label{eq:proxy-eq}
    \distr(\hY \given do(P=p) ) = \distr(\hY \given do(P=p') ) \eqp
  \end{equation}
\end{definition}

The interventional characterization of proxy discrimination leads to a simple procedure to remove it in causal graphs that we will turn to in the next section.
It also leads to several natural variants of the definition that we discuss in \secref{sec:neurips:variants}.
We remark that Equation~\eqref{eq:proxy-eq} is an equality of probabilities in the ``do-calculus'' that cannot in general be inferred by an observational method, because it depends on an underlying causal graph \citep{Pearl2009}.
However, in some cases, we do not need to resort to interventions to avoid proxy discrimination.

\begin{proposition}\label{pro:unawareness}
  If there is no directed path from a proxy to a feature, unawareness as in \secref{sec:fairness_through_unawareness} avoids proxy discrimination.
\end{proposition}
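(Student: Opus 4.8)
The plan is to reduce the claim to the standard fact that an intervention on a node leaves the joint law of its non-descendants unchanged. Recall that under unawareness (\secref{sec:fairness_through_unawareness}) the predictor is a map $\hY : \cX \to \cY$, so in the causal graph the node $\hY$ receives only the features $X$ as inputs and does \emph{not} take the proxy $P$ as a parent. By hypothesis there is no directed path from $P$ to any feature, i.e.\ no feature is a descendant of $P$. Since every parent of $\hY$ is such a feature, and a node all of whose parents are non-descendants of $P$ is itself a non-descendant of $P$, the node $\hY$ is not a descendant of $P$. This is the structural observation that does all the work.

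First I would make precise that intervening on $P$ cannot alter the distribution of the non-descendants of $P$. Using the truncated factorization of the SEM, the interventional distribution under $do(P=p)$ replaces the structural equation of $P$ by a point mass at $p$ while leaving every other mechanism $\distr(V_i \given \pa(V_i))$ intact. Let $A$ be the set of non-descendants of $P$, so that $X \subseteq A$ and $\hY \in A$. Every parent of an $A$-variable again lies in $A$ — otherwise that variable would inherit $P$ as an ancestor — so the factors attached to $A$-variables depend only on $A$-variables and are literally identical in the observational and the interventional factorizations. Marginalizing out $P$ together with its proper descendants, the remaining product of conditionals collapses to one when summed (or integrated) in reverse topological order, in both the observational and the interventional case. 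This yields $\distr(A \given do(P=p)) = \distr(A)$ for every value $p$.

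Finally, since $\hY \in A$, I would conclude $\distr(\hY \given do(P=p)) = \distr(\hY) = \distr(\hY \given do(P=p'))$ for all $p, p'$, which is exactly the no-proxy-discrimination condition of Definition~\ref{def:overalldisc}. The conceptual content is entirely in the structural step above; the only place requiring care is the marginalization, where one must check that the degenerate factor for the intervened node $P$ and the conditionals of its descendants integrate to one without disturbing the $A$-factors. I therefore expect this bookkeeping in the truncated-factorization argument — rather than any genuine difficulty — to be the main thing to state cleanly, especially if one wants to avoid the measure-theoretic subtleties the thesis otherwise suppresses.
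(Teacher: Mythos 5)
Your proof is correct. The structural observation you isolate---that under unawareness $\hY$ has only features as parents, and by hypothesis no feature (hence not $\hY$) is a descendant of $P$---is exactly the content of the paper's argument, but you reach the conclusion by a different route. The paper works functionally: it writes the unaware predictor as $\hY = r(X)$, iteratively substitutes structural equations to express $\hY$ as a function of the terminal ancestors $ta^{\cG}(X)$, notes that $P \notin ta^{\cG}(X)$ when there is no directed path from $P$ to $X$, and concludes that $\hY$ is literally the same random variable under $do(P=p)$ for every $p$. You instead invoke the truncated-factorization form of the intervention and the standard lemma that an intervention leaves the joint law of the intervened node's non-descendants unchanged, then place $\hY$ among those non-descendants. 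Your route is more general-purpose (it does not require unrolling the SEM to root nodes and would survive relaxations of the ANM-style functional setup) but imports the marginalization bookkeeping you flag at the end; the paper's substitution argument sidesteps that entirely because equality holds at the level of random variables, not just distributions, making the three-line proof possible. Both are sound; the paper's is the more elementary in this setting, yours the more portable.
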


\begin{proof}
  An unaware predictor~${\hY}$ is given by~${\hY} = r (X)$ for some function~$r$ and features~$X$.
  If there is no directed path from proxies~$P$ to~$X$, i.e.,~$P \notin ta^{\cG}(X)$, then~${\hY} = r(X) = r(ta^{\cG}(X)) = r(ta^{\cG}(X) \setminus \{P\})$.
  Thus~$\distr({\hY} \given do(P=p)) = \distr({\hY})$ for all~$p$, which avoids proxy discrimination.\qedhere
\end{proof}

\section{Procedures for avoiding discrimination}
\label{sec:neurips:criteria}

Having motivated the two types of discrimination that we distinguish, we now turn to building predictors that avoid them in a given causal model.
First, we remark that a more comprehensive treatment requires individual judgment of not only variables, but the legitimacy of every existing path that ends in~$\hY$, i.e., evaluation of \emph{path-specific effects} \citep{Zhang2017b, Nabi2017}, which is tedious in practice.
The natural concept of proxies and resolving variables covers most relevant scenarios and allows for natural removal procedures.

\begin{figure}
\begin{center}
  \begin{minipage}{0.5\textwidth}
    \begin{tikzpicture}
      \pgfmathsetmacro{\dy}{0.9}
      \node[simple] (NP) at (-1,0) {$N_P$};
      \node[simple, Green] (Z) at (0,0) {$Z$};
      \node[simple] (NX) at (1,0) {$N_X$};
      \node[simple, DarkRed] (P) at (-0.5,-\dy) {$P$};
      \node[simple] (X) at (0.5,-\dy) {$X$};
      \node[simple] (hY) at (0,-2*\dy) {$\hY$};
      \node[draw=none] at (-0.7,-2*\dy) {$\tcG$};
      \edge {NP} {P};
      \edge {Z} {P,X};
      \edge {NX} {X};
      \edge {P} {X,hY};
      \edge {X} {hY};
    \end{tikzpicture}%
    \hspace{0.5cm}
    \begin{tikzpicture}
      \pgfmathsetmacro{\dy}{0.9}
      \node[simple] (NP) at (-1,0) {$N_P$};
      \node[simple, Green] (Z) at (0,0) {$Z$};
      \node[simple] (NX) at (1,0) {$N_X$};
      \node[simple, DarkRed, double] (P) at (-0.5,-\dy) {$P$};
      \node[simple] (X) at (0.5,-\dy) {$X$};
      \node[simple] (hY) at (0,-2*\dy) {$\hY$};
      \node[draw=none] at (-0.7,-2*\dy) {$\cG$};
      \edge[DarkRed] {P} {X,hY};
      \edge[DarkRed] {X} {hY};
      \edge {Z} {X};
      \edge {NX} {X};
    \end{tikzpicture}%
    \caption[A template graph to reason about proxy discrimination and its intervened version]{A template graph~$\tcG$ for proxy discrimination (left) with its intervened version~$\cG$ (right).
    While from the benevolent viewpoint we do not generically prohibit any influence from~$Z$ on~$\hY$, we want to guarantee that the proxy~$P$ has no overall influence on the prediction, by adjusting~$P \to \hY$ to cancel the influence along~$P \to X \to \hY$ in the intervened graph.}
  \label{fig:proxytemplate}
  \end{minipage}%
  \hfill
  \begin{minipage}{0.45\textwidth}
    \begin{tikzpicture}
      \pgfmathsetmacro{\dy}{0.9}
      \node[simple] (NE) at (-1,0) {$N_E$};
      \node[simple, DarkRed] (Z) at (0,0) {$Z$};
      \node[simple] (NX) at (1,0) {$N_X$};
      \node[simple, Green] (E) at (-0.5,-\dy) {$E$};
      \node[simple] (X) at (0.5,-\dy) {$X$};
      \node[simple] (hY) at (0,-2*\dy) {$\hY$};
      \node[draw=none] at (-0.7,-2*\dy) {$\tcG$};
      \edge {NE} {E};
      \edge {Z} {E,X};
      \edge {NX} {X};
      \edge {E} {X,hY};
      \edge {X} {hY};
    \end{tikzpicture}%
    \hspace{0.5cm}
    \begin{tikzpicture}
      \pgfmathsetmacro{\dy}{0.9}
      \node[simple] (NE) at (-1,0) {$N_E$};
      \node[simple, DarkRed] (Z) at (0,0) {$Z$};
      \node[simple] (NX) at (1,0) {$N_X$};
      \node[simple, Green, double] (E) at (-0.5,-\dy) {$E$};
      \node[simple] (X) at (0.5,-\dy) {$X$};
      \node[simple] (hY) at (0,-2*\dy) {$\hY$};
      \node[draw=none] at (-0.7,-2*\dy) {$\cG$};
      \edge[DarkRed] {Z} {X};
      \edge[DarkRed] {X} {hY};
      \edge {NX} {X};
      \edge {E} {X,hY};
    \end{tikzpicture}%
    \caption[A template graph to reason about unresolved discrimination and its intervened version]{A template graph~$\tcG$ for unresolved discrimination (left) with its intervened version~$\cG$ (right).
    While from the skeptical viewpoint we generically do not want~$Z$ to influence~$\hY$, we first intervene on~$E$ interrupting all paths through~$E$ and only cancel the remaining influence from~$Z$ to~$\hY$.}
    \label{fig:explanatorytemplate}
  \end{minipage}
\end{center}
\end{figure}

\paragraph{Avoiding proxy discrimination.} \label{subsec:neurips:proxydiscrimination}
While presenting the general procedure, we illustrate each step in the example shown in Figure~\ref{fig:proxytemplate}.
A protected attribute~$Z$ affects a proxy~$P$ as well as a feature~$X$.
Both~$P$ and~$X$ have additional unobserved causes~$N_P$ and~$N_X$, where~$N_P, N_X, Z$ are pairwise independent.
Finally, the proxy also has an effect on the features~$X$ and the predictor~$\hY$ is a function of~$P$ and~$X$.
Given labeled training data, our task is to find a good predictor that exhibits no proxy discrimination within a hypothesis class of functions~$\hY_{\theta}(P, X)$ parameterized by a real valued vector~$\theta$.

We now work out a formal procedure to solve this task under specific assumptions.
While the general procedure in principle works for arbitrarily large graphs and potentially non-linear structural equations, we will simultaneously illustrate the algorithm in a small fully linear example for clarity.
\textcolor{gray}{We will use gray font for the specific example, in which the structural equations are given by
\begin{equation*}
  P = \alpha_P Z + N_P, \qquad
  X = \alpha_X Z + \beta P + N_X, \qquad
  \hY_{\theta} = \lambda_P P + \lambda_X X \eqp
\end{equation*}
Note that we choose linear functions parameterized by~$\theta = (\lambda_P, \lambda_X)$ as the hypothesis class for~$\hY_{\theta}(P, X)$ in this example.}

In the procedure we clarify the notion of \emph{expressibility}, which is an assumption about the relation of the given structural equations and the hypothesis class we choose for~$\hY_{\theta}$.

\begin{proposition}\label{pro:proxydisc}
  If there is a choice of parameters~$\theta_0$ such that~$\hY_{\theta_0}(P,X)$ is constant with respect to its first argument and the structural equations are \emph{expressible}, the following procedure returns a predictor from the given hypothesis class that exhibits no proxy discrimination and is non-trivial in the sense that it can make use of features that exhibit potential proxy discrimination.
\end{proposition}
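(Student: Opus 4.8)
The plan is to translate the no-proxy-discrimination requirement of eq.~\eqref{eq:proxy-eq} into an algebraic constraint on~$\theta$ via the structural equations, and then to exhibit a procedure meeting this constraint while retaining a non-zero dependence on~$X$. First I would compute the interventional law $\distr(\hY_\theta \given do(P=p))$ explicitly. Intervening on~$P$ removes its incoming edges and fixes $P=p$, so the distributions of~$Z$ and of the exogenous noises are left untouched; substituting the intervened structural equations for all descendants of~$P$ then expresses $\hY_\theta$ as a deterministic function of~$p$, of~$Z$, and of the independent noise variables. The key observation is that $p$ now enters $\hY_\theta$ through exactly two channels: directly, as the first argument of $\hY_\theta(P,X)$, and indirectly, along the path $P \to X \to \hY_\theta$ carried by the structural equation for~$X$.

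With this decomposition in hand, eq.~\eqref{eq:proxy-eq} is equivalent to demanding that the two channels cancel, so that the residual law over $(Z, N_X, \ldots)$ no longer sees~$p$. This is precisely where I would invoke \emph{expressibility}: it is the assumption that the indirect influence of~$P$ transmitted through~$X$ can be represented inside the hypothesis class by a suitable choice of the first-argument dependence of $\hY_\theta$. The baseline $\theta_0$, for which $\hY_{\theta_0}(P,X)$ is constant in its first argument, pins down the regime of \emph{zero} direct influence and hence isolates the pure indirect effect; the procedure then switches on exactly the compensating first-argument term. Concretely, it takes a desired (e.g.\ accuracy-optimal) parameter, keeps its dependence on~$X$, and recalibrates only the $P$-slot so that the direct channel equals the negative of the indirect one.

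To finish, I would verify the two claimed properties. No proxy discrimination holds by construction: once the channels cancel, $\distr(\hY_\theta \given do(P=p))$ is the law of a quantity in which $p$ has dropped out entirely, hence it is identical for all $p, p'$. Non-triviality holds because the recalibration only alters how $\hY_\theta$ consumes~$P$ and leaves its dependence on~$X$ intact, so the returned predictor may still use features exhibiting potential proxy discrimination. I would make this concrete in the running linear example of Figure~\ref{fig:proxytemplate}: there $\hY_\theta = \lambda_P P + \lambda_X X$, and under $do(P=p)$ the equation $X = \alpha_X Z + \beta P + N_X$ yields
\begin{equation*}
  \hY_\theta = (\lambda_P + \lambda_X \beta)\, p + \lambda_X \alpha_X Z + \lambda_X N_X \eqc
\end{equation*}
so the $p$-dependence vanishes exactly when $\lambda_P = -\lambda_X \beta$, and any $\lambda_X \neq 0$ gives a non-trivial predictor.

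The hard part will be pinning down expressibility so that the cancellation is exact at the level of whole distributions rather than merely of means. In the linear-Gaussian instance cancellation reduces to forcing the single scalar coefficient $\lambda_P + \lambda_X \beta$ to zero, but in a general, possibly non-linear model the propagated functional influence of~$P$ through~$X$ must be reproducible pointwise by the first argument of~$\hY_\theta$. Stating expressibility as precisely this representability condition, and checking that the constructed~$\theta$ still lies in the hypothesis class, is the crux of the argument.
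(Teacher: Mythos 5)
Your proposal is correct and follows essentially the same route as the paper: intervene on $P$, substitute down to the roots of the intervened graph so that the total $p$-dependence splits into the direct (first-argument) and indirect (through $X$) channels, use expressibility to pull the aggregate $p$-dependence into the first argument, and impose the constraint that this reparameterized predictor equals $\hY_{\theta_0}$ (i.e., the direct term cancels the indirect one), which gives $\lambda_P=-\lambda_X\beta$ in the linear example while leaving $\lambda_X$ free. The paper likewise concludes that the proposition "holds by construction of the procedure," so your cancellation framing and your remark that expressibility must be a distribution-level representability condition match its argument exactly.
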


\begin{enumerate}
  \item Intervene on~$P$ by removing all incoming arrows and replacing the structural equation for~$P$ by~$P=p$.
    \textcolor{gray}{%
    For the example in Figure~\ref{fig:proxytemplate},
    \begin{equation} \label{eq:proxyR}
      P = p, \qquad
      X = \alpha_X Z + \beta P + N_X, \qquad
      \hY_{\theta} = \lambda_P P + \lambda_X X \eqp
    \end{equation}
    }
  \item Iteratively substitute variables in the equation for~$\hY_{\theta}$ from their structural equations until only root nodes of the intervened graph are left, i.e., write~$\hY_{\theta}(P, X)$ as~$\hY_{\theta}(P, g(ta^{\cG}(X)))$ for some function~$g$.
    Since the causal graph is acyclic, we can write any variable as a function of only root nodes by iteratively substituting parent nodes with their structural equations.
    \textcolor{gray}{%
    In the example,~$ta(X) = \{Z, P, N_X\}$ and
    \begin{equation}\label{eq:proxyRroots}
      \hY_{\theta} = g(ta(X)) = (\lambda_P + \lambda_X \beta) p + \lambda_X (\alpha_X Z + N_X) \eqp
    \end{equation}
    }
  \item We now require the distribution of~$\hY_{\theta}$ in eq.~\eqref{eq:proxyRroots} to be independent of~$p$.
    \textcolor{gray}{%
    In the example, we require for all~$p, p'$ that
    \begin{equation}\label{eq:proxyconstraint}
      \distr((\lambda_P + \lambda_X \beta) p + \lambda_X (\alpha_X Z + N_X)) =
      \distr((\lambda_P + \lambda_X \beta) p' + \lambda_X (\alpha_X Z + N_X)) \eqp
    \end{equation}
    }
    In general, the goal is to write the predictor as a function of~$P$ and all the other roots of~$\cG$ separately.
    If our hypothesis class is such that there exists a parameter vector~$\tilde{\theta}$ and a function $\tilde{g}$ such that~$\hY_{\theta}(P, g(ta(X))) = \hY_{\tilde{\theta}}(P, \tilde{g}(ta(X) \setminus \{P\}))$, we call the structural equation model and hypothesis class specified in eq.~\eqref{eq:proxyR} \emph{expressible}.
    Equation~\eqref{eq:proxyconstraint} then yields the \emph{non-discrimination constraint}~$\tilde{\theta} = \theta_0$.
    \textcolor{gray}{%
    Our example model is expressible with~$\tilde{\theta} = (\lambda_P + \lambda_X \beta, \lambda_X)$ and~$\tilde{g} = \alpha_X Z + N_X$.
    A possible~$\theta_0$ is~$\theta_0 = (0, \lambda_X)$, which simply yields~$\lambda_P = -\lambda_X \beta$.
    }
  \item Given labeled training data, we can optimize the predictor~$\hY_{\theta}$ within the hypothesis class as given in eq.~\eqref{eq:proxyR}, subject to the non-discrimination constraint.
    \textcolor{gray}{%
    In our linear example, we obtain
    \begin{equation*}
      \hY_{\theta} = -\lambda_X \beta P + \lambda_X X = \lambda_X (X - \beta P) \eqc
    \end{equation*}
    with the free parameter~$\lambda_X \in \bR$ that we can optimize for accuracy.}
\end{enumerate}

In general, the non-discrimination constraint eq.~\eqref{eq:proxyconstraint} is by construction just~$\distr(\hY \given do(P=p)) = \distr(\hY \given do(P=p'))$, coinciding with Definition~\ref{def:overalldisc}.
Thus Proposition~\ref{pro:proxydisc} holds by construction of the procedure.
The choice of~$\theta_0$ strongly influences the non-discrimination constraint.
However, as the example shows, it allows~$\hY_{\theta}$ to exploit features that exhibit potential proxy discrimination.

\paragraph{Avoiding unresolved discrimination.}\label{subsec:neurips:unexplaineddiscrimination}
We proceed analogously to the previous section using the example graph in Figure~\ref{fig:explanatorytemplate}.
Instead of the proxy, we consider a resolving variable~$E$.
\textcolor{gray}{%
For the running example, the causal dependences are equivalent to the ones in Figure~\ref{fig:proxytemplate} and we again assume linear structural equations for a running example
\begin{equation*}
  E = \alpha_E Z + N_E, \qquad
  X = \alpha_X Z + \beta E + N_X, \qquad
  \hY_{\theta} = \lambda_E E + \lambda_X X \eqp
\end{equation*}}

Let us now try to adjust the previous procedure to the context of avoiding unresolved discrimination.

\begin{enumerate}
  \item Intervene on~$E$ by fixing it to a random variable~$\eta$ with~$\distr(\eta) = \distr(E)$, the marginal distribution of~$E$ in~$\tcG$, see Figure~\ref{fig:explanatorytemplate}.
    \textcolor{gray}{%
    In the example we find
    \begin{align} \label{eq:explanatoryR}
      E = \eta, \qquad
      X = \alpha_X Z + \beta E + N_X, \qquad
      \hY_{\theta} = \lambda_E E + \lambda_X X \eqp
    \end{align}
    }
  \item By iterative substitution of structural equations, write $\hY_{\theta}(E, X)$ as~$\hY_{\theta}(E, g(ta^{\cG}(X)))$ for some function~$g$.
    \textcolor{gray}{%
    In the example
    \begin{equation}\label{eq:explanatoryRroots}
      \hY_{\theta} = g(ta^{\cG}(X)) = (\lambda_E + \lambda_X \beta) \eta + \lambda_X \alpha_X Z + \lambda_X N_X \eqp
    \end{equation}
    }
  \item We now demand the distribution of~$\hY_{\theta}$ in eq.~\eqref{eq:explanatoryRroots} be invariant under interventions on~$Z$, which coincides with conditioning on~$Z$ whenever~$Z$ is a root of~$\tcG$.
    \textcolor{gray}{%
    Hence, in the example, for all~$z, z'$
    \begin{equation}\label{eq:explanatoryconstraint}
      \distr((\lambda_E + \lambda_X \beta) \eta + \lambda_X \alpha_X z + \lambda_X N_X) =
      \distr((\lambda_E + \lambda_X \beta) \eta + \lambda_X \alpha_X z' + \lambda_X N_X) \eqp
    \end{equation}
    }
\end{enumerate}

Here, the subtle asymmetry between proxy discrimination and unresolved discrimination becomes apparent.
Our approach relies on ``bundling'' directed paths from the sensitive variable $Z$ to the predictor $\hY_{\theta}$ into the ones that pass through $P$ and $E$ respectively.
In the benevolent setting, when talking about proxy discrimination, the goal is to correct for the influence of all paths passing through $P$.
Since our predictor $\hY_{\theta}$ can use the proxy $P$ directly as input, under the expressibility assumption, we can tune it such that it precisely counteracts the aggregate influence $P$ has on $\hY_{\theta}$ mediated by all other possible features.

On the other hand, in the skeptic setting, when talking about resolved discrimination, the goal is to only allow for the influences mediated by $E$.
Hence we would similarly seek to cancel out all paths from $Z$ on $\hY_{\theta}$ that do not pass through $E$.
However, in this setting $\hY_{\theta}$ is not explicitly a function of~$Z$.
Therefore, we cannot tune the predictor to cancel implicit influences of~$Z$ through~$X$.
There might still be a~$\theta_0$ such that~$\hY_{\theta_0}$ indeed fulfills eq.~\eqref{eq:explanatoryconstraint}, but without being able to use $Z$ as a direct input for the predictor there is no principled way for us to construct it.

In the example, eq.~\eqref{eq:explanatoryconstraint} suggests the obvious \emph{non-discrimination constraint}~$\lambda_X = 0$.
We can then proceed as before and, given labeled training data, optimize~$\hY_{\theta} =
\lambda_E E$ by varying~$\lambda_E$.
However, by setting~$\lambda_X = 0$, we also cancel the path~$Z \to E \to X \to \hY$, even though it is blocked by a
resolving variable.
In general, if~$\hY_{\theta}$ does not have access to~$Z$, we can not adjust for unresolved discrimination without also removing resolved influences from~$Z$ on~$\hY_{\theta}$.
If, however,~$\hY_{\theta}$ is a function of~$Z$, i.e., we add the term~$\lambda_Z Z$ to~$\hY_{\theta}$ in eq.~\eqref{eq:explanatoryR}, the non-discrimination constraint is~$\lambda_Z = - \lambda_X \alpha_X$ and we can proceed analogously to the procedure for avoiding proxy discrimination.

We want to end on a final remark about the intuition underlying the expressibility assumption.
The key point in removing proxy discrimination is to ``separate out'' the effect of $P$ on $\hY_{\theta}$ via features $X$ such that it can be ``neutralized'' by using $P$ as a separate input.
For example, in a fully linear system, $P$ can only enter as a single additive linear term via features $X$.
The influence of $P$ can thus easily be eliminated or neutralized by subtracting precisely that term, which we can do via direct access to $P$.
A non-linear example of functional forms with a similarly meaningful notion of ``neutralization'' is when $P$ enters as an overall multiplicative term that only depends on $P$ and is non-zero.
In this case, we can divide the entire expression by that term.
The expressibility assumption captures all the scenarios in which the influence of $P$ can be meaningfully eliminated.
We note that there are also non-linear scenarios in which this is not possible.
For example, if $P$ enters $\hY_{\theta}$ through $X$ in a functional form such as $\sin(e^{X_1 \cdot P})$ (where $X_1$ is one of the observed non-sensitive features), it is not clear how one should ``neutralize'' the effect of $P$ by providing it as a separate argument to $\hY_{\theta}$.

\paragraph{Relating proxy discrimination to other notions of fairness.}\label{sec:neurips:variants}
Motivated by the algorithm to avoid proxy discrimination, we discuss some natural variants of the notion that connect our interventional approach to individual fairness and other proposed criteria.
Given the multitude of proposed fairness criteria outlined in \chapref{chap:existing} together with their incompatibilities and inconsistent naming conventions, it is useful to compare criteria and find specific assumptions under which some of them may become mathematically equivalent.
In this part, we explore some such connections for a generic graph structure as shown on the left in Figure~\ref{fig:generalgraph}.
The proxy~$P$ and the features~$X$ could be multidimensional.
The empty circle in the middle represents any number of variables forming a DAG that respects the drawn arrows.
This general DAG may also be empty.
That means that, e.g., arrows between $X$ and $P$ directly are also allowed.
Figure~\ref{fig:proxytemplate} is an example thereof.
All dashed arrows are optional depending on the specifics of the situation.

\begin{figure}
  \centering
  \begin{tikzpicture}
  \pgfmathsetmacro{\d}{1.5}
  \pgfmathsetmacro{\dy}{1.2}
    \node[simple, minimum height=1cm] (C) at (0,0) {DAG};
    \node[simple] (Z) at (-1.5*\d, \dy) {$Z$};
    \node[simple] (P) at (-0.5*\d, \dy) {$P$};
    \node[simple] (hY) at ( 0.5*\d, \dy) {$\hY$};
    \node[simple] (X) at ( 1.5*\d, \dy) {$X$};
    \edge {Z} {P};
    \edge[dashed] {Z} {C};
    \edge[<->, dashed] {P} {C};
    \edge[<->, dashed] {X} {C};
    \edge[dashed] {P} {hY};
    \edge {X} {hY};
    \node[draw=none] (L) at (-\d,0) {$\tcG$};
  \end{tikzpicture}%
  \hspace{2cm}
  \begin{tikzpicture}
  \pgfmathsetmacro{\d}{1.5}
  \pgfmathsetmacro{\dy}{1.2}
    \node[simple, minimum height=1cm] (C) at (0,0) {DAG};
    \node[simple] (Z) at (-1.5*\d, \dy) {$Z$};
    \node[simple] (P) at (-0.5*\d, \dy) {$P$};
    \node[simple] (hY) at ( 0.5*\d, \dy) {$\hY$};
    \node[simple] (X) at ( 1.5*\d, \dy) {$X$};
    \edge[dashed] {Z} {C};
    \edge[dashed] {P} {C};
    \edge[<->, dashed] {X} {C};
    \edge[dashed] {P} {hY};
    \edge {X} {hY};
    \node[draw=none] (L) at (-\d,0) {$\cG$};
  \end{tikzpicture}%
  \caption[A generic graph to reason about proxy discrimination and its intervened version]{{\em Left:} A generic graph~$\tcG$ to describe proxy discrimination.
  {\em Right:} The graph corresponding to an intervention on~$P$.
  The circle labeled ``DAG'' represents any sub-DAG of~$\tcG$ and~$\cG$ containing an arbitrary number of variables that is compatible with the shown arrows.
  Dashed arrows can, but do not have to be present in a given scenario.}
\label{fig:generalgraph}
\end{figure}
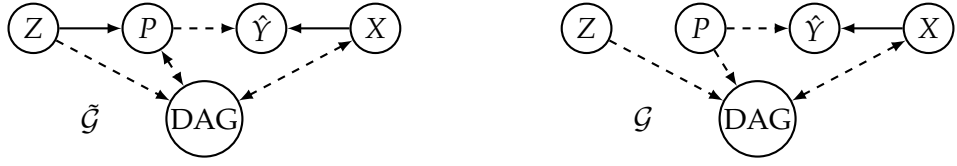

\begin{definition}\label{def:nondisc}
  A predictor~$\hY$ exhibits no \emph{individual proxy discrimination}, if for all~$x$ and all~$p, p'$
  \begin{equation*}
    \distr(\hY \given do(P=p), X=x) = \distr(\hY \given do(P=p'), X=x) \eqp
  \end{equation*}
  A predictor~$\hY$ exhibits no \emph{proxy discrimination in expectation}, if for all~$p, p'$
  \begin{equation*}
    \E[\hY \given do(P=p)] = \E[\hY \given do(P=p')] \eqp \label{eq:inexp}
  \end{equation*}
\end{definition}

Individual proxy discrimination aims at comparing examples with the same features~$X$, for different values of~$P$. Note that this can be individuals with different values for the unobserved non-feature variables.
A true individual-level comparison of the form ``What would have happened to me, if I had always belonged to another group'' is captured by counterfactual fairness \citep{Kusner2017, Nabi2017}.

For an analysis of proxy discrimination, we need the structural equations for~$P, X, \hY$ in Figure~\ref{fig:generalgraph}
\begin{subequations}
  \begin{align*}
    P &= \hat{f}_P (\pa(P)) \eqc \\
    X &= \hat{f}_X (\pa(X)) = f_X (P, ta^{\cG} (X) \setminus \{P\}) \eqc \\
    \hY &= \hat{f}_{\hY} (P, X) = f_{\hY} (P, ta^{\cG} (\hY) \setminus \{P\}) \eqp
  \end{align*}
\end{subequations}
For convenience, we will use the notation~$ta^{\cG}_{P}(X) := ta^{\cG} (X) \setminus \{P\}$.
We can find~$f_X, f_{\hY}$ from~$\hat{f}_X, \hat{f}_{\hY}$ by first rewriting the functions in terms of root nodes of the \emph{intervened graph}, shown on the right side of Figure~\ref{fig:generalgraph}, and then assigning the \emph{overall} dependence on $P$ to the first argument.

We now compare proxy discrimination to other existing notions.

\begin{theorem}\label{thm:rvtodistr}
  Let the influence of~$P$ on~$X$ be additive and linear, i.e.,
  \begin{equation*}
    X = f_X (P, ta^{\cG}_P(X)) = g_X(ta^{\cG}_P(X)) + \mu_X P
  \end{equation*}
  for some function~$g_X$ and~$\mu_X \in \bR$. Then any predictor of the form
  \begin{equation*}
    \hY = r(X - \E [X \given do(P)])
  \end{equation*}
  for some function~$r$ exhibits no proxy discrimination.
\end{theorem}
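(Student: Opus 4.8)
The plan is to prove the claim essentially \emph{by construction}: I will show that the argument of $r$ has exactly the same distribution under every intervention $do(P=p)$, so that $\distr(\hY \given do(P=p))$ is independent of $p$, which is precisely the statement of no proxy discrimination in Definition~\ref{def:overalldisc}. The whole argument rests on the additive-linear form of the influence of $P$ on $X$: it lets the explicit $P$-dependence in $X$ cancel against the centering term $\E[X \given do(P)]$, which by definition subtracts off exactly this additive contribution.

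First I would compute the interventional mean. Note that the predictor is really the map $\hY(P,X) = r\bigl(X - \E[X \given do(P)]\bigr)$, where $\E[X \given do(P)]$ is the deterministic function of the observed proxy obtained from the $do$-calculus. Under $do(P=p)$ the structural equation for $X$ becomes $X = g_X(ta^{\cG}_P(X)) + \mu_X p$. The variables collected in $ta^{\cG}_P(X) = ta^{\cG}(X)\setminus\{P\}$ are root nodes of the intervened graph $\cG$ (the right panel of Figure~\ref{fig:generalgraph}); they are exogenous and independent of $P$, and removing the incoming arrows of $P$ does not alter their joint law. Hence $\E[X \given do(P=p)] = \E[g_X(ta^{\cG}_P(X))] + \mu_X p =: c + \mu_X p$, where the constant $c$ does not depend on $p$.

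Next I would substitute this into the predictor. Still under $do(P=p)$ — so that both the realized $X$ and the centering term are evaluated at the intervened value $p$ — the argument of $r$ is
\[
X - \E[X \given do(P=p)]
= \bigl(g_X(ta^{\cG}_P(X)) + \mu_X p\bigr) - (c + \mu_X p)
= g_X(ta^{\cG}_P(X)) - c,
\]
so the two copies of $\mu_X p$ cancel and the argument no longer depends on $p$. Therefore $\hY = r\bigl(g_X(ta^{\cG}_P(X)) - c\bigr)$ under $do(P=p)$, and its distribution is governed entirely by the law of the root nodes $ta^{\cG}_P(X)$, which, being unaffected by the intervention, is identical for every $p$. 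This yields $\distr(\hY \given do(P=p)) = \distr(\hY \given do(P=p'))$ for all $p,p'$, i.e., Definition~\ref{def:overalldisc}.

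The step I expect to be the crux is the decomposition $\E[X \given do(P=p)] = c + \mu_X p$ with $c$ constant. Making this rigorous requires arguing carefully that intervening on $P$ leaves the joint distribution of the remaining terminal ancestors of $X$ untouched, so that all of the $p$-dependence of the interventional mean is carried by the single explicit term $\mu_X p$. This is exactly where both hypotheses are needed: that $P$ enters $X$ \emph{additively and linearly}, and that $ta^{\cG}_P(X)$ consists of root nodes independent of $P$. Without additivity, the contribution of $P$ to $X$ would not be a single term that the centering can subtract off, and the cancellation in the display above would break down — mirroring the non-neutralizable functional forms discussed after the expressibility assumption.
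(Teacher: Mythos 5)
Your proof is correct and follows essentially the same route as the paper's: compute $\E[X\given do(P=p)]$, observe that all $p$-dependence sits in the single term $\mu_X p$, and cancel it against the explicit $\mu_X p$ in $X$ so that the argument of $r$ reduces to $g_X(ta^{\cG}_P(X))$ (up to a constant), which is unaffected by the intervention. If anything you are slightly more careful than the paper, which silently takes $\E[g_X(ta^{\cG}_P(X))\given do(P)]=0$ where you correctly carry it along as a $p$-independent constant $c$.
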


\begin{proof}
  It suffices to show that the argument of~$r$ is constant with respect to~$P$, because then~$\hY$ and thus~$\distr(\hY)$ are invariant under changes of~$P$.
  We compute
  \begin{align*}
    \E[X \given do(P)] &= \E[ g_X(ta^{\cG}_P (X)) + \mu_X P \given do(P)] \\
    &= \usub{=0}{\E[ g_X(ta^{\cG}_P (X)) \given do(P)]} + \E[\mu_X P \given do(P)]\\
    &= \mu_X P \eqp
  \end{align*}
  Hence,
  \begin{equation*}
    X - \E[X \given do(P)] = g_X(ta^{\cG}_P (X))
  \end{equation*}
  is constant with respect to~$P$.\qedhere
\end{proof}

Note that in general~$\E[X \given do(P)] \neq \E[X \given P]$.
Since in practice we only have observational data from~$\tcG$, one cannot simply build a predictor based on the ``regressed out features''~$\tilde{X} := X - \E[X \given P]$ to avoid proxy discrimination.
In the scenario of Figure~\ref{fig:proxytemplate}, the direct effect of~$P$ on~$X$ along the arrow~$P \to X$ in the left graph cannot be estimated by~$\E[X \given P]$, because of the common confounder~$Z$.
The desired interventional expectation~$\E[X \given do(P)]$ coincides with~$\E[X \given P]$ only if one of the arrows~$Z \to P$ or~$Z \to X$ is not present.
Estimating direct causal effects is a hard problem, well studied by the causality community and often involves instrumental variables \citep{Angrist2001}.

In \secref{sec:fairrepresentations} we provided a short overview of learning fair representation.
A common theme in this field is to learn a representation $\tilde{X}$ that is statistically independent of the sensitive data, but still contains information to predict $Y$.
A natural idea is to use $X - \E[X \given Z]$ as input to a downstream classifier instead of $X$ to ``regress out'' linear correlations.
Within our setting, where we try to avoid proxy discrimination, we typically want to remove the effect of proxies and not the protected attribute directly.
Therefore, we can ask the question when using the representation $\tilde{X} := X - \E[X \given P]$ suffices to remove proxy discrimination, i.e., when the difference between $\E[X \given P]$ and $\E[X \given do(P)]$ becomes irrelevant for our purposes.

\begin{corollary}\label{cor:rvtodistr}
  Under the assumptions of Theorem~\ref{thm:rvtodistr}, if all directed paths from any ancestor of~$P$ to~$X$ in the graph~$\cG$ are blocked by~$P$, then any predictor based on the \emph{adjusted features}~$\tilde{X} := X - \E[X \given P]$ exhibits no proxy discrimination and can be learned from the observational distribution~$\distr(P, X, Y)$ when target labels~$Y$ are available.
\end{corollary}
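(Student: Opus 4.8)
The plan is to reduce the corollary to Theorem~\ref{thm:rvtodistr} by showing that, under the stated blocking hypothesis, the observational conditional expectation coincides with the interventional one, i.e.\ $\E[X \given P] = \E[X \given do(P)]$. Once this equality is in place, the adjusted features $\tilde{X} = X - \E[X \given P]$ are literally the same object as the features $X - \E[X \given do(P)]$ appearing in Theorem~\ref{thm:rvtodistr}, so any predictor $\hY = r(\tilde{X})$ inherits the no-proxy-discrimination guarantee from that theorem verbatim. The only additional content is then that $\E[X \given P]$ is an \emph{observational} quantity --- the regression of $X$ on $P$ --- hence estimable from samples of $\distr(P, X, Y)$, whereas $\E[X \given do(P)]$ would need the causal structure; this gives the learnability claim immediately.

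The heart of the argument is therefore the equality $\E[X \given P] = \E[X \given do(P)]$, which I would establish using the additive-linear form assumed in Theorem~\ref{thm:rvtodistr}, namely $X = g_X(ta^{\cG}_P(X)) + \mu_X P$. Under $do(P=p)$ the intervention fixes $P=p$ and leaves the joint law of the remaining roots $ta^{\cG}_P(X)$ untouched and independent of $p$, so exactly as in the proof of Theorem~\ref{thm:rvtodistr} one gets $\E[X \given do(P=p)] = \E[g_X(ta^{\cG}_P(X))] + \mu_X p$. On the observational side $\E[X \given P=p] = \E[g_X(ta^{\cG}_P(X)) \given P=p] + \mu_X p$, so it suffices to prove the mean-independence $\E[g_X(ta^{\cG}_P(X)) \given P] = \E[g_X(ta^{\cG}_P(X))]$, for which the stronger statement $P \indep ta^{\cG}_P(X)$ is amply sufficient.

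To obtain this independence I would exploit that the root variables of the SEM (the exogenous noises together with $Z$) are mutually independent. Then $P$, which is a function of its terminal ancestors $ta^{\tcG}(P)$, is independent of $g_X(ta^{\cG}_P(X))$, a function of the roots $ta^{\cG}_P(X)$, precisely when these two sets of roots are disjoint --- and both are genuinely sets of roots of $\tcG$, since intervening on $P$ removes only edges into $P$. The key step is to show the blocking hypothesis forces $ta^{\tcG}(P) \cap ta^{\cG}_P(X) = \emptyset$: if some root $R$ lay in both, then $R$ would be an ancestor of $P$, and being a terminal ancestor of $X$ in the intervened graph $\cG$ --- where $P$ has no incoming edges --- there would be a directed path $R \to \cdots \to X$ avoiding $P$; that path uses no edge into $P$ and hence survives in $\tcG$, giving a directed path from an ancestor of $P$ to $X$ not blocked by $P$, contradicting the hypothesis. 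I expect this graphical step --- correctly relating terminal ancestors in $\tcG$ and in $\cG$ and reading off the disjointness from the blocking condition --- to be the main obstacle, since it requires care about what \emph{ancestor}, \emph{terminal ancestor}, and \emph{blocked by $P$} mean once the edges into $P$ have been deleted; everything else is routine given Theorem~\ref{thm:rvtodistr}.
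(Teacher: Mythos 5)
Your proposal is correct and follows essentially the same route as the paper's proof: both reduce the claim to Theorem~\ref{thm:rvtodistr} by showing $\E[X \given P] = \E[X \given do(P)]$, which is obtained from the graphical observation that, under the blocking hypothesis, the ancestors of~$P$ are disjoint from $ta^{\cG}(X)\setminus\{P\}$ so that conditioning on~$P$ coincides with intervening on it. Your write-up is in fact somewhat more explicit than the paper's (which states the disjointness $A \cap ta^{\cG}(X) = \emptyset$ directly rather than via your contrapositive path-survival argument), but the underlying idea is identical.
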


\begin{proof}
  Let~$A$ denote the set of ancestors of~$P$. Under the given assumptions~$A \cap ta^{\cG} (X) = \emptyset$, because in~$\cG$ all arrows into~$P$ are removed, which breaks all directed paths from any variable in~$A$ to~$X$ by assumption.
  Hence the distribution of~$X$ under an intervention on~$P$ in~$\tcG$, where the influence of potential ancestors of~$P$ on~$X$ that does not go through~$P$ would not be affected, is the same as simply conditioning on~$P$.
  Therefore~$\E[X \given do(P)] = \E[X \given P]$, which can be computed from the joint observational distribution, since we observe~$X$ and~$P$ as generated by~$\tcG$.\qedhere
\end{proof}

Our definition of proxy discrimination in expectation in eq.~\eqref{eq:inexp} is motivated by a weaker notion proposed by \citet{Calders2010}.
It asks for the expected outcome to be the same across the different populations~$\E[\hY \given P=p] = \E[\hY \given P=p'].$
Again, when talking about proxies, we must be careful to distinguish conditional and interventional expectations, which is captured by the following proposition and its corollary.

\begin{proposition}\label{pro:inexpectation}
  Any predictor of the form~$\hY = \lambda (X - \E[X \given do(P)]) + c$ for~$\lambda, c \in \bR$ exhibits no proxy discrimination in expectation.
\end{proposition}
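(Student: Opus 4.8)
The plan is to verify Definition~\ref{def:nondisc} directly by exploiting linearity of expectation; this is essentially the ``in expectation'' counterpart of Theorem~\ref{thm:rvtodistr}, and unlike that theorem it requires no additive-linearity assumption on the influence of~$P$ on~$X$, so the argument should be short and purely algebraic.

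First I would fix arbitrary values~$p, p'$ and compute the interventional mean of the proposed predictor~$\hY = \lambda(X - \E[X \given do(P)]) + c$. By linearity of expectation,
\begin{equation*}
  \E[\hY \given do(P=p)] = \lambda\,\bigl(\E[X \given do(P=p)] - \E[\,\E[X \given do(P)] \given do(P=p)]\bigr) + c \eqp
\end{equation*}

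The crucial step---and really the only place where care is needed---is the interpretation of the centering term~$\E[X \given do(P)]$. This quantity is a deterministic function of the value of~$P$; under the intervention~$do(P=p)$ the variable~$P$ is set to the constant~$p$, so this function evaluates to the constant~$\E[X \given do(P=p)]$. Consequently the inner expectation is trivial and
\begin{equation*}
  \E[\,\E[X \given do(P)] \given do(P=p)] = \E[X \given do(P=p)] \eqp
\end{equation*}

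Substituting back, the two interventional expectations cancel exactly, leaving~$\E[\hY \given do(P=p)] = c$, which does not depend on~$p$. Since~$p$ was arbitrary, the identical constant~$c$ is obtained under~$do(P=p')$, giving~$\E[\hY \given do(P=p)] = \E[\hY \given do(P=p')]$ and hence no proxy discrimination in expectation. I do not anticipate a genuine obstacle: the subtlety to get right is conceptual rather than technical, namely that the subtracted term is the \emph{interventional} (not the observational) mean~$\E[X \given do(P)]$, and that fixing~$P=p$ forces it to match the first term precisely---which is exactly why no assumption on the functional form of the dependence of~$X$ on~$P$ is needed, in contrast to Theorem~\ref{thm:rvtodistr}.
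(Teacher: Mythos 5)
Your proof is correct and takes essentially the same route as the paper: both verify the definition directly via linearity of expectation and the observation that $\E[X \given do(P)]$ collapses to the constant $\E[X \given do(P=p)]$ under the intervention $do(P=p)$, so the two terms cancel and leave the constant~$c$. Your version merely spells out the step $\E[\,\E[X \given do(P)] \given do(P=p)] = \E[X \given do(P=p)]$ that the paper's proof performs implicitly, and your remark that no additive-linearity assumption on the influence of~$P$ on~$X$ is needed is accurate.
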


\begin{proof}
  We directly test the definition of proxy discrimination in expectation using linearity of expectation
  \begin{align*}
    \E[\hY \given do(P=p)] &= \E[\lambda (X - \E[X \given do(P)])
      + c \given do(P=p)] \\
    &= \lambda (\E[ X \given do(P=p)] - \E[X \given do(P=p)]) + c \\
    &= c \eqp
  \end{align*}
  This holds for any~$p$, hence proxy discrimination in expectation is achieved.\qedhere
\end{proof}

From this and the proof of Corollary~\ref{cor:rvtodistr} we conclude the following Corollary.

\begin{corollary}
  If all directed paths from any ancestor of~$P$ to~$X$ are blocked by~$P$, any predictor of the form~$\hY = r(X - \E[X \given P])$ for linear~$r$ exhibits no proxy discrimination in expectation and can be learned from the observational distribution~$\distr(P, X, Y)$ when target labels~$Y$ are available.
\end{corollary}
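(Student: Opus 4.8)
The plan is to reduce this statement to Proposition~\ref{pro:inexpectation} by invoking exactly the graphical observation already established inside the proof of Corollary~\ref{cor:rvtodistr}. Since $r$ is assumed linear, I would first write $r(u) = \lambda u + c$ for some $\lambda, c \in \bR$, so that the proposed predictor takes the form $\hY = \lambda\bigl(X - \E[X \given P]\bigr) + c$. The only gap between this expression and the hypothesis of Proposition~\ref{pro:inexpectation} is that the latter subtracts the \emph{interventional} mean $\E[X \given do(P)]$, whereas the adjusted features here use the \emph{conditional} mean $\E[X \given P]$.

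That gap is closed purely by the blocking assumption. I would recall the argument from the proof of Corollary~\ref{cor:rvtodistr}: letting $A$ denote the ancestors of $P$, the hypothesis that every directed path from a member of $A$ to $X$ is blocked by $P$ forces $A \cap ta^{\cG}(X) = \emptyset$ once the incoming arrows of $P$ are severed by the intervention. Consequently the distribution of $X$ under $do(P)$ coincides with its distribution under conditioning on $P$, and in particular $\E[X \given do(P)] = \E[X \given P]$. Substituting this identity back, the predictor becomes $\hY = \lambda\bigl(X - \E[X \given do(P)]\bigr) + c$, which is literally of the form covered by Proposition~\ref{pro:inexpectation}; applying that proposition then yields no proxy discrimination in expectation.

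For the learnability claim, I would note that $\E[X \given P]$ is an ordinary observational conditional expectation involving only the jointly observed variables $P$ and $X$, hence estimable from samples of $\distr(P, X, Y)$ without any interventional data. Given the target labels $Y$, the parameters $\lambda$ and $c$ defining the affine map $r$ can then be fit by standard regression of $Y$ on the adjusted feature $\tilde{X} := X - \E[X \given P]$, so the entire predictor is recoverable from the observational distribution alone.

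I do not expect a genuine obstacle here: the corollary is essentially a bookkeeping combination of Proposition~\ref{pro:inexpectation} with the $do$-versus-conditioning identity supplied by the blocking assumption. The one point that warrants care is making ``linear $r$'' precise---in the multidimensional case $\lambda$ should be read as a matrix and $c$ as a vector, and one should confirm that linearity of expectation still lets the two interventional terms cancel exactly as in Proposition~\ref{pro:inexpectation}, which it does. I would also make explicit that, unlike Corollary~\ref{cor:rvtodistr}, \emph{this} conclusion needs no additive-linear structural assumption on $X$ as in Theorem~\ref{thm:rvtodistr}, since Proposition~\ref{pro:inexpectation} rests only on linearity of expectation together with the identity $\E[X \given do(P)] = \E[X \given P]$.
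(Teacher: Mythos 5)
Your proposal is correct and matches the paper's own (very terse) justification, which simply combines Proposition~\ref{pro:inexpectation} with the identity $\E[X \given do(P)] = \E[X \given P]$ established in the proof of Corollary~\ref{cor:rvtodistr}; your fleshed-out version, including the remark that no additive-linear structural assumption on $X$ is needed here, is accurate.
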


Finally, we provide an additional statement that is a first step towards the ``opposite direction'' of Theorem~\ref{thm:rvtodistr}, i.e., whether we can infer information about the structural equations, when we are given a predictor of a special form that does not exhibit proxy discrimination.

\begin{theorem}
  Let the influence of~$P$ on~$X$ be additive and linear and let the influence of~$P$ on the argument of~$\hY$ be additive linear, i.e.,
  \begin{align*}
    f_X (ta^{\cG}(X)) &= g_X(ta^{\cG}_P(X)) + \mu_X P \\
    f_{\hY} (P, ta^{\cG}_P(X)) &= h(g_{\hY}(ta^{\cG}_P(X)) + \mu_{\hY} P)
  \end{align*}
  for some functions~$g_X, g_{\hY}$, real numbers~$\mu_X, \mu_{\hY} \in \bR$ and a smooth, strictly monotonic function~$h$.
  Then any predictor that avoids proxy discrimination is of the form
  \begin{equation*}
    {\hY} = r(X - \E[X \given do(P)])
  \end{equation*}
  for some function~$r$.
\end{theorem}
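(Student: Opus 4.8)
The plan is to exploit the additive--linear structure so that, once proxy discrimination is imposed, the direct contribution of $P$ to $\hY$ is forced to vanish, after which the predictor collapses to a function of the $P$-free part of $X$. First I would record the computation already carried out in Theorem~\ref{thm:rvtodistr}: since the terminal ancestors $ta^{\cG}_P(X)$ are roots of $\cG$ and hence unaffected by intervening on $P$, we have $\E[X \given do(P)] = \mu_X P$, so that $X - \E[X \given do(P)] = g_X(ta^{\cG}_P(X))$, a quantity that depends only on the non-$P$ roots and is therefore constant in $P$. The whole task then reduces to showing that any predictor avoiding proxy discrimination is a function of $g_X(ta^{\cG}_P(X))$ alone.

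Next I would use the fact that $\hY$ accesses the terminal ancestors only through its input $X = g_X(ta^{\cG}_P(X)) + \mu_X P$. Writing the assumed root form $\hY = h\bigl(g_{\hY}(ta^{\cG}_P(X)) + \mu_{\hY} P\bigr) = \hat f_{\hY}\bigl(P,\, g_X(ta^{\cG}_P(X)) + \mu_X P\bigr)$ and fixing $P = p$, the right-hand side depends on the terminal ancestors only through the value of $g_X$. Since $h$ is strictly monotonic, hence injective, this forces $g_{\hY}$ to be constant on the level sets of $g_X$: concretely, $g_X(a_1) = g_X(a_2)$ implies $g_{\hY}(a_1) = g_{\hY}(a_2)$, so there is a well-defined function $\tilde g$ on the range of $g_X$ with $g_{\hY}(ta^{\cG}_P(X)) = \tilde g\bigl(g_X(ta^{\cG}_P(X))\bigr)$. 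This rewrites the predictor as $\hY = h\bigl(\tilde g(g_X(ta^{\cG}_P(X))) + \mu_{\hY} P\bigr)$.

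Then I would invoke the no-proxy-discrimination condition to kill $\mu_{\hY}$. Under $do(P=p)$ the law of the roots $ta^{\cG}_P(X)$ is unchanged, so setting $S := \tilde g\bigl(g_X(ta^{\cG}_P(X))\bigr)$ we obtain $\hY \given do(P=p) \overset{d}{=} h(S + \mu_{\hY} p)$. Requiring this law to be independent of $p$ and peeling off the bijection $h$ shows that the law of $S + \mu_{\hY} p$ is independent of $p$; since a probability measure on $\bR$ cannot be invariant under a nonzero translation (by a characteristic-function argument), we conclude $\mu_{\hY} = 0$. Hence $\hY = h\bigl(\tilde g(g_X(ta^{\cG}_P(X)))\bigr) = h\bigl(\tilde g(X - \E[X \given do(P)])\bigr)$, and setting $r := h \circ \tilde g$ yields the claimed form $\hY = r\bigl(X - \E[X \given do(P)]\bigr)$.

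The main obstacle I expect is the factorization step: arguing rigorously that $g_{\hY}$ descends to a function of $g_X$ rests on the fact that $\hY$'s dependence on the non-$P$ terminal ancestors is mediated \emph{entirely} by the observed feature $X$, combined with injectivity of $h$, and one must verify that $\tilde g$ is well defined on the range of $g_X$ rather than merely postulating it. The translation-invariance argument eliminating $\mu_{\hY}$ is the second delicate point, but it is clean once phrased via characteristic functions; everything else is bookkeeping with the additive--linear forms, and note that only the injectivity of $h$ (not its smoothness) is actually used.
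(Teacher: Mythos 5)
Your proof is correct and follows essentially the same route as the paper: rewrite the predictor in terms of the root variables, use the no-proxy-discrimination condition together with injectivity of~$h$ to force the coefficient of~$P$ to vanish, and identify $X - \E[X \given do(P)]$ with $g_X(ta^{\cG}_P(X))$. You are in fact slightly more careful than the paper at two points---factoring $g_{\hY}$ through $g_X$ via a well-defined $\tilde g$ where the paper simply asserts $g_X = g_{\hY}$, and ruling out a nonzero shift via characteristic functions where the paper argues informally that a random variable cannot share its law with a translated copy of itself.
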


\begin{proof}
  From the linearity assumptions we conclude that
  \begin{equation*}
    \hat{f}_{\hY}(P, X) = h(g_X(ta^{\cG}_P(X)) + \mu_X P + \hat{\mu}_{\hY} P) \eqc
  \end{equation*}
  with~$\hat{\mu}_{\hY} = \mu_{\hY} - \mu_X$ and thus~$g_X = g_{\hY}$.
  That means that both the dependence of~$X$ on~$P$ along the path~$P \to \dots \to X$ as well as the direct dependence of~${\hY}$ on~$P$ along~$P \to {\hY}$ are additive and linear.

  To avoid proxy discrimination, we need
  \begin{subequations}\label{eq:distreq}
  \begin{align}
    \distr({\hY} \given do(P=p))
    &= \distr(h(g_{\hY}(ta^{\cG}_P(X)) + \mu_{\hY} p)) \\
    &\overset{!}{=} \distr(h(g_{\hY}(ta^{\cG}_P(X)) + \mu_{\hY} p'))
    = \distr({\hY} \given do(P=p')) \eqp
  \end{align}
  \end{subequations}

  Because~$h$ is smooth an strictly monotonic, we can conclude that already the distributions of the argument of~$h$ must be equal, otherwise the transformation of random variables could not result in equal distributions, i.e.,
  \begin{equation*}
    \distr(g_{\hY}(ta^{\cG}_P(X)) + \mu_{\hY} p)
    \overset{!}{=} \distr(g_{\hY}(ta^{\cG}_P(X)) + \mu_{\hY} p') \eqp
  \end{equation*}
  Since, up to an additive constant, we are comparing the distributions of the \emph{same} random variable~$g_{\hY}(ta^{\cG}_P(X))$ and not merely identically distributed ones, the following condition is not only sufficient, but also necessary for eq.~\eqref{eq:distreq}
  \begin{equation*}
    g_{\hY}(ta^{\cG}_P(X)) + \mu_{\hY} p
    \overset{!}{=} g_{\hY}(ta^{\cG}_P(X)) + \mu_{\hY} p' \eqp
  \end{equation*}
  This holds true for all~$p, p'$ only if~$\mu_{\hY} = 0$, which is equivalent to~$\hat{\mu}_{\hY} = - \mu_X$.

  Because as in the proof of Theorem~\ref{thm:rvtodistr}
  \begin{equation*}
    \E[X \given do(P)] = \mu_X P \eqc
  \end{equation*}
  under the given assumptions any predictor that avoids proxy discrimination is simply
  \begin{equation*}
    {\hY} = X + \hat{\mu}_{\hY} P = X - \E[X \given do(P)] \eqp
  \end{equation*}\qedhere
\end{proof}

\section{Conclusion}
\label{sec:neurips:conclusion}

The perspective developed in this chapter so far naturally addresses shortcomings of earlier statistical approaches.
Causal fairness criteria are suitable whenever we are willing to make assumptions about the (causal) generating process governing the data.
Whilst not always feasible, the causal approach naturally creates an incentive to scrutinize the data more closely and work out plausible assumptions to be discussed alongside any conclusions regarding fairness.
Key concepts of our conceptual framework are \emph{resolving variables} and \emph{proxy variables} that play a dual role in defining causal discrimination criteria.
We develop a practical procedure to remove proxy discrimination given the structural equation model and analyze a similar approach for unresolved discrimination.
In the case of proxy discrimination for linear structural equations, the procedure has an intuitive form that is similar to heuristics already used in the regression literature.
Our framework is limited by the assumption that we can construct a valid causal graph.
The removal of proxy discrimination moreover depends on the functional form of the causal dependencies.
This dependence is captured by the notion of expressibility.
The causal perspective suggests a number of interesting new directions at the technical, empirical, and conceptual level.

\chapter{Sensitivity of causal fairness}
\label{chap:sensitivity}

\graphicspath{{figs/chap5/}}

In this chapter, we go one step beyond our findings in \chapref{chap:causal} and also scrutinize one of the primary assumptions in causal modeling, namely that the causal graph is known.
Potential misspecifications of the causal model introduce new opportunities for bias.
One common way for misspecification to occur is via \emph{unmeasured confounding}: the true causal effect between variables is partially described by unobserved quantities.
We develop tools to assess the sensitivity of fairness measures to this confounding for the popular class of non-linear additive noise models (ANMs).
Specifically, we give a procedure for computing the maximum difference between two counterfactually fair predictors, where one has become biased due to confounding.
For the case of bivariate confounding our technique can be swiftly computed via a sequence of closed-form updates.
For multivariate confounding we give an algorithm that can be efficiently solved via automatic differentiation.
We demonstrate our new sensitivity analysis tools in real-world fairness scenarios to assess the bias arising from confounding.

The main content of this chapter has been published in the following paper:

\fbox{\parbox{\textwidth}{
\pubitem{The sensitivity of counterfactual fairness to unmeasured confounding}
{Niki Kilbertus, Philip~J.~Ball, Matt Kusner, Adrian Weller, Ricardo Silva}
{Uncertainty in Artificial Intelligence (UAI), 2019}
{https://arxiv.org/abs/1907.01040}
{https://github.com/nikikilbertus/cf-fairness-sensitivity}
{}
}}

\section{Introduction}
\label{sec:uai:intro}

In this section, we will focus on \emph{counterfactual fairness} (CF) as introduced by \citet{Kusner2017}, an individual-specific criterion aimed at answering the counterfactual question: ``What would have been my prediction if---all else held causally equal---I was a member of another protected group?''.
Despite the utility of such causal criteria, they are often contested, because they are based on strong assumptions that are hard to verify in practice.
First and foremost, all causal fairness criteria proposed in the literature assume that the causal structure of the problem is known.
Typically, one relies on domain experts and methods for causal discovery from data to construct a plausible causal graph.
While it is often possible with few variables to get the causal graph approximately right, one often needs untestable assumptions to construct the full graph.
The most common untestable assumption is that there is no unmeasured confounding between some variables in the causal graph.
Because we cannot measure it, this confounding can introduce bias that is unaccounted for by causal fairness criteria.

As a solution to this problem, we will introduce tools to measure the sensitivity of the popular \emph{counterfactual fairness} criterion to unmeasured confounding.
Our tools are designed for the commonly used class of non-linear additive noise models \citep[ANMs,][]{hoyer2009nonlinear}.
Specifically, they describe how counterfactual fairness changes under a given amount of confounding.
The core ideas here described can be adapted for sensitivity analysis of other measures of causal effect, such as the average treatment effect (ATE), itself a topic not commonly approached in the context of graphical causal models.
Note that counterfactual fairness poses extra challenges compared to the ATE, as it requires the computation of counterfactuals in the sense of \citet{Pearl2009}.
Concretely, in the remainder of this chapter we will develop the following tools:
\begin{itemize}
  \item For confounding between two variables, we design a fast procedure for estimating the worst-case change in counterfactual fairness due to confounding.
  It consists of a series of closed-form updates assuming linear models with non-linear basis functions.
  This family of models is particularly useful in graphical causal models where any given node has only few parents.
  \item For more than two variables, we fashion an efficient procedure that leverages automatic differentiation to estimate worst-case counterfactual fairness.
  In particular, compared to standard sensitivity analysis \citep[typically applied to ATE problems, see e.g.][]{dorie2016flexible}, we formulate the problem in a multivariate setting as opposed to the typical bivariate case.
  The presence of other modeling constraints brings new challenges not found in the standard literature.
  \item We demonstrate that our method allows us to understand how fairness guarantees degrade based on different confounding levels.
  We also show that even under high levels of confounding, learning counterfactually fair predictors has lower fairness degradation than standard predictors using all features or using all features save for the protected attributes.
\end{itemize}

\section{Background}
\label{sec:uai:background}

We focus on a subclass of SEMs called \emph{additive noise models} (ANMs) \citep{hoyer2009nonlinear}.
This means that the structural equation for a node $X$ of the causal graph is given by $X = f_X(\pa_{\cG}(X)) + \epsilon$ for a non-linear function $f_X$.
Here, we use $\epsilon$ for independent noise variables instead of $N$ for notational convenience.
To make model fitting efficient, we will consider (a) functions $f_X$ that derive all their non-linearity from an embedding function $\B{\phi}$ of their direct parents, and are linear in this embedding;
and (b) Gaussian noise (error) $\epsilon$ so that:
\begin{align*}
X = \B{\phi}(\pa_{\cG}(X))^{\top} \B{w}_X + \epsilon \qtxtq{with} \epsilon \sim \mathcal{N}(0, \sigma_X) \eqc
\end{align*}
where $\B{w}_X$ are weights.
Later on, we will consider ANMs over observed variables, where the noises may be correlated.
Note that this class of ANMs is not closed under marginalization.
For a more detailed analysis of the testable implications of the ANM assumption, see \citep{peters2017elements}.
Neither of our choices (a) and (b) are a fundamental limitation of our framework: the framework can easily be extended to general non-linear, or even non-parametric functions $f_X$, as well as non-Gaussian noises.
We make this choice to balance flexibility and computational cost.

For convenience, we restate the definition of counterfactual fairness, where we allow for continuous target values $y \in \mathcal{Y} \subset \bR$:
\begin{equation}\label{eq:cf}
    \distr(\hY_{Z = z'} = y \given X= \B{x}, Z=z) = \distr(\hY_{Z = z} = y \given X=\B{x}, Z=z) \eqc
\end{equation}
where $\hY_{Z = z'}$ is the counterfactual prediction, imagining $Z=z'$ (note that, because in reality $Z=z$, we have that $\hY_{Z = z} = \hY$), and $\B{x}$ is a realization of other variables in the causal system.
In ANMs $\hY_{Z = z'}$ can be computed in four steps:
\begin{enumerate}
  \item Fit the parameters of the assumed causal model using the observed data: $\mathcal{D}=\{\B{x}_i, z_i\}_{i=1}^n$;
  \item Using the fitted model and data $\mathcal{D}$, estimate all noise variables $\B{\epsilon}$;
  \item Replace $Z$ with counterfactual value $z'$ in all causal model equations;
  \item Using the fitted parameters, estimated noise variables, and $z'$, recompute all variables affected (directly or indirectly) by $Z$, and recompute the prediction $\hY$.
\end{enumerate}
To learn a CF predictor satisfying eq.~\eqref{eq:cf} it is sufficient to use any variables that are non-descendants of $Z$, such as the noise variables $\B{\epsilon}$ \citep{Kusner2017}.
It may appear as if any predictor using only the noise variables as input is going to have low accuracy.
however, we note that in our setting the so called noise variables are not to be thought of as mere measurement noise that contains little information, but summarizes all other influences on their corresponding variables that have not been explicitly observed.
In this sense, the noise variables may be highly predictive of their respective variables and other quantities in the causal graph.
Thus it is not unreasonable to build counterfactually fair predictors using only the noise variables as inputs, which are non-descendants of the sensitive attribute by assumption.

One key assumption on which CF relies is that there is no \emph{unmeasured confounding} relationship missing in the causal model.
For our purposes, we formalize unmeasured confounding as non-zero correlations between any two noise variables in $\B{\epsilon}$ which are assumed to follow a multivariate Gaussian distribution.
Without accounting for this, the above counterfactual procedure will compute noise variables that are not guaranteed to be independent of $Z$.
Thus any predictor trained on these exogenous variables is not guaranteed to satisfy counterfactual fairness eq.~\eqref{eq:cf}.
This setup captures the idea that often we have a decent understanding of the causal structure, but might overlook confounding effects, here in the form of pairwise correlations of noise variables.
At the same time, such confounding is often unidentifiable (save for specific parameterizations).
Thus assessing confounding is not a model selection problem but a sensitivity analysis problem.
To perform such analysis we propose tools to measure the worst-case deviation in CF due to unmeasured confounding.
Before describing these tools, we first place them in the context of the long tradition of sensitivity analysis in causal modeling.

\paragraph{Traditional sensitivity analysis}
Sensitivity analysis for quantities such as the average treatment effect can be traced back at least to the work by Jerome Cornfield on the General Surgeon study concerning the smoking and lung cancer link \citep{rosenbaum:02}.
Rosenbaum cast the problem in a more explicit statistical framework, addressing the question on how the ATE would vary if some degree of association between a treatment and a outcome was due to unmeasured confounding.
The logic of sensitivity analysis can be described in a simplified way as follows:
i) choose a level of ``strength'' for the contribution of a latent variable to the structural equation(s) of the treatment and/or outcome;
ii) by fixing this confounder contribution, estimate the corresponding ATE;
iii) vary steps i) and ii) through a range of ``confounding effects'' to report the level of unmeasured confounding required to make the estimate ATE be statistically indistinguishable from zero;
iv) consult an expert to declare whether the level of confounding required for that to happen is too strong to be plausible, and if so, conclude that the effect is real to the best of one'{}s knowledge.
This basic idea has led to a large literature, see \citep{dorie2016flexible,Robins2000} for two noteworthy examples.

Note the crucial difference between sensitivity analysis and just fitting a latent variable model: we are not learning a latent variable distribution, as the confounding effect for a single cause-effect pair is \emph{unidentifiable}.
By holding the contribution of the confounder as constant and known, the remaining parameters become identifiable.
We can vary the sensitivity parameter without assuming a probability measure on the confounding effect.
The hypothesis test mentioned in the example above can be substituted by other criteria of practical significance.

Much of the work in the statistics literature on sensitivity analysis addresses pairs of cause-effects as opposed to a causal system with intermediate outcomes, and focuses on the binary question on when an effect is non-zero.
The \emph{grid search} idea of attempting different levels of the confounding level does not necessarily translate well to a full SEM: grid search grows exponentially with the number of pairs of variables.
In our problem formulation described in the sequel, we are interested in bounding the maximum magnitude $p_{\max}$ of the noise correlation matrix entries, while maximizing a measure of counterfactual unfairness to understand how it varies by the presence of unmeasured confounding.
The solution is not always to set all entries to $p_{\max}$, since among other things we may be interested in keeping a subset of noise correlations to be zero.
In this case, a sparse correlation matrix with all off-diagonal values set to either 0 or $p_{\max}$ is not necessarily positive-definite.
A multidimensional search for the entries of the confounding correlation matrix is then necessary, which we will do in \secref{sec:uai:multivariate} by encoding everything as a fully differentiable and unconstrained optimization problem.

Finally, we note that unmeasured confounding is but one of many possible misspecifications of causal assumptions. the true causal effect between variables is partially described by unobserved quantities.
For example, we still assume that all edges between observed variables are correctly identified.
\citet{russell2017worlds} previously addressed how to enforce counterfactual fairness across a small enumeration of different competing---but identifiable---models.
Instead, we only focus on misspecification in the form of unmeasured confounding.

\section{Tool \#1: grid-based analysis}
\label{sec:uai:bivariate}

The notion of sensitivity analysis in an SEM can be complex, particularly when the estimated quantity involves counterfactuals.
Therefore, we first describe a tool that estimates the effect of confounding on counterfactual fairness, when the confounding is limited to two variables (i.e., \emph{bivariate confounding}).
This procedure is computationally efficient for this setting.
For the general setting of confounding between any number of variables (\emph{multivariate confounding}) we will introduce a separate tool in \secref{sec:uai:multivariate}.
We now describe our fast two-variable tool using a real-world example.

\paragraph{A motivating example.}
To motivate our approach, let us revisit the example about law school success analyzed by \citet{Kusner2017}.
In this task, we want to predict the first year average grade ($Y$) of incoming law school students from their grade-point average ($G$) before entering law school and their law school admission test scores ($L$).
In the original work, the goal was to train a predictor $\hY$ that is counterfactually fair with respect to race.

To evaluate any causal notion of fairness, we need to first specify the causal graph.
Here we assume $G \to L$ with noises $\epsilon_G, \epsilon_L$, where $G$ and $L$ are both influenced by the sensitive attribute $Z$, see \textbf{Model A} in Figure~\ref{fig:bivariate}.
Given this specification, the standard way to train a counterfactually fair classifier is using $\epsilon_G, \epsilon_L$---the non-descendants of $Z$.
To do so, we first learn them from data as the residuals in predicting $G$ and $L$ from their parents.

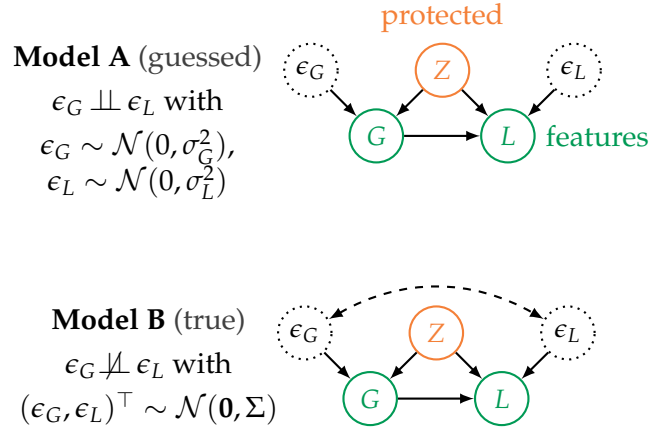
\begin{figure}
  \centering
    \begin{tikzpicture}
    \node[protected, label=above:{\color{Orange}protected}] (Z) {$Z$};
    \node[latent, left=of Z] (EG) {$\epsilon_G$};
    \node[latent, right=of Z] (EL) {$\epsilon_L$};
    \node[feature, below left=0.5 of Z] (G) {$G$};
    \node[feature, right=of G, label=right:{\color{ForestGreen}features}] (L) {$L$};
    \edge {Z} {G,L};
    \edge {G,EL} {L};
    \edge {EG} {G};
    \node[left=of G, align=center] (label) {\textbf{Model A} {\color{black!75}(guessed)}\\[3pt]
    $\epsilon_G \indep \epsilon_L$ with\\[3pt]
        $\epsilon_G \sim \mathcal{N}(0,\sigma_G^2)$,\\ $\epsilon_L \sim \mathcal{N}(0,\sigma_L^2)$\\%
    };
    \end{tikzpicture}\\[0.5cm]
    \begin{tikzpicture}[thick]
    \node[protected] (Z) {$Z$};
    \node[latent, left=of Z] (EG) {$\epsilon_G$};
    \node[latent, right=of Z] (EL) {$\epsilon_L$};
    \node[feature, below left=0.5 of Z] (G) {$G$};
    \node[feature, right=of G, label=right:{\color{white}features}] (L) {$L$};
    \edge {Z} {G,L};
    \edge {G,EL} {L};
    \edge {EG} {G};
    \draw[dashed, <->, >=latex, thick] (EG) to[bend left] (EL);
    \node[left=0.7 of G, align=center] (label) {\textbf{Model B} {\color{black!75}(true)}\\[3pt]
    $\epsilon_G \dep \epsilon_L$ with\\[3pt]
    $(\epsilon_G, \epsilon_L)^{\top} \sim \mathcal{N}(\B{0}, \Sigma)$\\\mbox{ }\\%
    };
    \end{tikzpicture}
    \caption[Causal graphs for the law school example with and without unobserved confounding]{Causal models for the law school example.
    Model A is the guessed model that has no unobserved confounding.
    Model B includes confounding via the covariance matrix $\Sigma$, which is captured by a bidirected edge using the standard acyclic directed mixed graph notation \citep[ADMG,][]{richardson:03}.
    Our techniques will estimate the worst case difference in the estimation of counterfactual fairness due to such confounding (we will consider a more complicated setup in \secref{sec:uai:experiments}).}
    \label{fig:bivariate}
\end{figure}

The validity of causal estimates rely on the assumption that the constructed causal model and its respective graph (here Model A) captures the true data-generating mechanism.
While previous work addressed how to enforce counterfactual fairness across a small enumeration of identifiable competing models \citep{russell2017worlds}, in this work we consider misspecification in the lack of \emph{unidentifiable} unmeasured confounding.
In our example, this means violation of the assumed independence of the noise variables $\epsilon_G$ and $\epsilon_L$.

To capture such confounding, we introduce \textbf{Model B} in Figure~\ref{fig:bivariate}.
Here the noise variables are not independent, they co-vary: $(\epsilon_G, \epsilon_L)^{\top}\sim\mathcal{N}(\B{0}, \Sigma)$ where,
\begin{equation*}
    \Sigma =
    \begin{pmatrix}
        \sigma_G^2 & p\, \sigma_G\, \sigma_L \\
        p\, \sigma_G \, \sigma_L & \sigma_L^2
    \end{pmatrix} \eqp
\end{equation*}
Here, $\sigma_{\bullet}$ is the standard deviation of $\bullet$ and $p \in [-1, 1]$ is the correlation, such that the overall covariance matrix $\Sigma$ is positive semidefinite.
Before going into the detailed procedure of our sensitivity analysis, let us give a general description of what we mean by Model A and Model B throughout this work.

\textbf{Model A} is the ``guessed'' causal graph model used to build a counterfactually fair predictor. The assumption we will scrutinize is that this guess at the true underlying causal model is perfectly correct.

\textbf{Model B} is a version of Model A that allows for further unobserved confounding between pairs of noise variables not originally featured in A.
Model B will play the role of a hypothetical ground truth that simulates ``true'' counterfactual versions of the predictions made within Model A.

Our tool allows us to answer the following question: how does a predictor that is counterfactually fair under Model A perform in terms of counterfactual unfairness under the confounded Model B?
Our goal is to quantify how sensitive counterfactual unfairness is to misspecifications of the causal model, in particular to unobserved confounding.
To do so, we will introduce a measure which we will call \emph{counterfactual unfairness} (CFU).
Given this, we describe how to compute the worst-case violation of counterfactual fairness within a certain confounding budget, which we characterize by the correlation $-1 \leq p_{\max} \leq 1$ in Model B.
By varying the confounding budget, we can assess how robust Model A is to different degrees of model misspecification.
Like in classical sensitivity analysis, we can alternatively start from a level of unacceptable CFU, search for the minimum $p_{\max}$ whose worst-case CFU reaches this level, and leave it to domain experts to judge the plausibility of such a degree of unmeasured confounding $p_{\max}$.

\paragraph{Notation and problem setup.}
For both Model A and B the structural equations are:
\begin{equation}\label{eq:law}
    G = \B{\phi}_G(Z)^{\top} \B{w}_{G} + \epsilon_G \eqc \quad
    L = \B{\phi}_L(Z,G)^{\top} \B{w}_{L} + \epsilon_L \eqc
\end{equation}
where $\B{\phi}_G: \mathcal{Z} \to \bR^{d_G}$ and $\B{\phi}_L: \mathcal{Z} \times \bR \to \bR^{d_L}$ denote \emph{fixed} embedding functions for $Z$ and $Z, G$ respectively, $Z \in \mathcal{Z}$ indicates the membership in a protected group (where $\mathcal{Z}$ is the set of possible groups), and $\B{w}_{G} \in \bR^{d_G}$, $\B{w}_{L} \in \bR^{d_L}$ are the weights of the model.

In order to simplify notation, for observed data $\{(z_i, g_i, l_i)\}_{i=1}^n$, we define
\begin{equation}\label{eq:notationbivariate}
    \B{x}_i =
    \begin{pmatrix}
        g_i\\
        l_i
    \end{pmatrix} \in \bR^{2}
    \eqc \quad
    \B{w} =
    \begin{pmatrix}
        \B{w}_{G} \\
        \B{w}_{L}
    \end{pmatrix} \in \bR^{d_G + d_L}
    \eqc \quad
    \Phi_i =
    \begin{pmatrix}
        \B{\phi}_{G_i}^{\top} &\B{0}^{\top} \\
        \B{0}^{\top} & \B{\phi}_{L_i}^{\top}
    \end{pmatrix} \in \bR^{2 \times (d_G + d_L)} \eqc
\end{equation}
where we write $\B{\phi}_{G_i} = \B{\phi}_G(z_i)$ and $\B{\phi}_{L_i} = \B{\phi}_L(z_i, g_i)$ for brevity.
In eq.~\eqref{eq:notationbivariate} as well as the remainder of this chapter, equations and assignments with subscripts $i$ on both sides hold for all $i \in \{1, \ldots, n\}$.\footnote{Note that $Z$ need not be exogenous.
Since we would need to include additional---standard but occluding---steps in the algorithm to handle discrete variables, this assumption is solely to simplify the presentation.}

\paragraph{Model A: fit a counterfactually fair predictor.}\label{subsec:uai:model_a}
First, we build a counterfactually fair predictor with our guessed unconfounded Model A via the following steps.
\begin{enumerate}[label=\textbf{\arabic*.},wide,labelindent=0pt]
    \item Fit Model A via regularized maximum likelihood:
    \begin{equation}\label{eq:modela_optimization}
        \min_{\B{w}, \sigma_G, \sigma_L} \sum_{i=1}^n (\B{x}_i - \Phi_i \B{w})^{\top} \Sigma^{-1} (\B{x}_i - \Phi_i \B{w})
        + \lambda \|\B{w}\|_2^2 + n \log \det (\Sigma) \eqc
    \end{equation}
    where
    \begin{equation*}
        \Sigma = \begin{pmatrix} \sigma_G^2 & 0 \\ 0 & \sigma_L^2 \end{pmatrix} \eqp
    \end{equation*}
    Note that we can alternately solve for $\B{w}$ and $\sigma_G, \sigma_L$ as follows. First fix $\sigma_G=\sigma_L=1$ and compute
    \begin{equation*}
        \tilde{\B{w}}^{\dagger} = \left(\sum_{i=1}^n \Phi_i^{\top} \Phi_i + \lambda\, \B{I} \right)^{-1} \left(\sum_{i=1}^n \Phi_i^{\top} \B{x}_i\right) \eqp
    \end{equation*}
    The optimal standard deviations $\sigma_G, \sigma_L$ are then simply given by the empirical standard deviations of the residuals under $\tilde{\B{w}}^{\dagger}$.
    Thus, the optimum of eq.~\eqref{eq:modela_optimization} is
    \begin{equation*}
        \B{w}^{\dagger} = \left(\sum_{i=1}^n \Phi_i^{\top} \Sigma^{-1} \Phi_i + \lambda\, \B{I} \right)^{-1} \left(\sum_{i=1}^n \Phi_i^{\top} \Sigma^{-1} \B{x}_i\right) \eqc
    \end{equation*}
    where $\Sigma = \diag(\sigma_G^2, \sigma_L^2)$.
    \item Given fitted weights $\B{w}^{\dagger}$, estimate the noises $\epsilon_G$, $\epsilon_L$,
    \begin{equation*}
        \hat{\B{\epsilon}}_i \equiv (\hat{\epsilon}_{g_i}, \hat{\epsilon}_{l_i})^{\top} \equiv \B{x}_i - \Phi_i \B{w}^{\dagger} \eqp
    \end{equation*}
    \item Fit a counterfactually fair predictor $\hy_i \equiv f_{\B{\theta}}(\hat{\B{\epsilon}_i})$ with parameters $\B{\theta}$ to predict outcomes $y_i$ via
    \begin{equation*}
        \B{\theta}^{\dagger} = \argmin_{\B{\theta}} \sum_{i=1}^n \mathcal{L}(f_{\B{\theta}}(\hat{\B{\epsilon}}_i), y_i) \eqc
    \end{equation*}
    for some loss function $\mathcal{L}$.
    Note that because we are only using non-descendants of $Z$, namely the noise variables, as input, the predictor is counterfactually fair by design.
    While virtually any predictive model can be used in the two-variable case, in the general case we require the counterfactually fair predictor to be differentiable, such that it is amenable to gradient-based optimization.
    The definition of counterfactual fairness constrains the optimization for any loss function.
    Here, we use the sufficient condition for counterfactual fairness that the predictor $\hY$ depends only on the noise terms, which are non-descendants of $Z$ \citep{Kusner2017}.
\end{enumerate}

\paragraph{Model B: evaluate counterfactual unfairness.}
Next, we evaluate how the predictor $f_{\B{\theta}^{\dagger}}$ obtained in the previous section breaks down in the presence of unobserved confounding, i.e., in Model B.
To do so, we fit Model B and generate ``true'' counterfactuals $\B{x}'$.
If we were handed these counterfactuals and we wanted to make predictions using $f_{\B{\theta}^{\dagger}}$ we would compute their noise terms $\hat{\B{\epsilon}}'$ using step 2 above.
If Model A was in fact the model that generated the counterfactuals $\B{x}'$ then the predictions on the noise terms for the real data and the counterfactuals would be \emph{identical}: $f_{\B{\theta}^{\dagger}}(\hat{\B{\epsilon}}) = f_{\B{\theta}^{\dagger}}(\hat{\B{\epsilon}}')$.

However, because the counterfactuals were generated by the ``true'' weights $\B{w}^{*}$ of Model B, not the weights $\B{w}^{\dagger}$ of Model A, there will be a difference between the real data and counterfactual predictions $f_{\B{\theta}^{\dagger}}(\hat{\B{\epsilon}}) \neq f_{\B{\theta}^{\dagger}}(\hat{\B{\epsilon}}')$.
It is this discrepancy we will quantify with our measure of counterfactual unfairness (CFU).
Here is how we compute it for a given confounding budget $p_{\max}$.
\begin{enumerate}[label=\textbf{\arabic*.},wide,labelindent=0pt]
    \item Fit Model B via regularized maximum likelihood:
    \begin{equation}\label{eq:modelb_optimization}
        \min_{\B{w}, \sigma_G, \sigma_L}
        \sum_{i=1}^{n}
        (\B{x}_i - \Phi_i \B{w})^{\top} \Sigma^{-1} (\B{x}_i - \Phi_i \B{w})
        + \lambda^{\dagger} \|\B{w} \|_2^2 + n\,\log \det(\Sigma) \eqc
    \end{equation}
    where
    \begin{align*}
        \Sigma &\equiv \begin{pmatrix} \sigma_G & 0 \\ 0 & \sigma_L \end{pmatrix}
        \underbrace{\begin{pmatrix} 1 & p_{\max} \\ p_{\max} & 1 \end{pmatrix}}_{=:P}
        \begin{pmatrix} \sigma_G & 0 \\ 0 & \sigma_L \end{pmatrix} \eqp
    \end{align*}
    As before, we can alternately solve for $\B{w}$ (closed-form) and $\sigma_G, \sigma_L$ (via coordinate descent).\footnote{In fact we optimize $\log(\sigma_G), \log(\sigma_L)$ to ensure the standard deviations are positive.} Let $\B{w}^*$ be the final weights after optimization.
    \item Given weights $\B{w}^*$, estimate the noises of Model B,
    \begin{equation*}
        \hat{\B{\delta}}_i = (\hat{\delta}_{g_i}, \hat{\delta}_{l_i})^{\top}
        = \B{x}_i - \Phi_i \B{w}^* \eqp
    \end{equation*}
    \item For a fixed counterfactual value $z' \in \mathcal{Z}$, compute the Model B counterfactuals of $G$ and $L$ for all $i \in \{1, \ldots, n\}$,
    \begin{equation*}
        g'_i = \B{\phi}_{G}(z'_i)^{\top} \B{w}^*_{G} + \hat{\delta}_{g_i}\eqc \quad
        l'_i = \B{\phi}_{L}(z'_i, g'_i)^{\top} \B{w}^*_{L} + \hat{\delta}_{l_i} \eqc
    \end{equation*}
    where $\B{w}^* = (\B{w}^*_{G}, \B{w}^*_{L})^{\top}$.
    If $\B{x}'_i \equiv (g_i', l_i')^{\top}$, we can write the above equation as
    \begin{equation*}
       \B{x}'_i = \Phi'_i \B{w}^* + \hat{\B{\delta}}_i \eqc
    \end{equation*}
    and $\Phi'_i \equiv \mbox{diag}(\B{\phi}_{G}(z'_i), \B{\phi}_{L}(z'_i, g'_i))$ is defined in general by sequential propagation of counterfactual values according to the ancestral ordering of the SEM.
    \item Compute the ``incorrect'' noise terms of the counterfactuals using the same procedure as in step 2 of \secref{subsec:uai:model_a}, using weights $\B{w}^{\dagger}$ of Model A:
    \begin{equation*}
        \hat{\B{\epsilon}}'_i = (\hat{\epsilon}_{g'_i}, \hat{\epsilon}_{l'_i})^{\top} = \B{x}'_i - \Phi'_i \B{w}^{\dagger} \eqp
    \end{equation*}
    Again, the predictions on the above quantity $f_{\theta^{\dagger}}(\hat{\B{\epsilon}}'_i)$ will differ from those made on the real-data noise terms $f_{\theta^{\dagger}}(\hat{\B{\epsilon}}_i)$ (unless the counterfactuals were also generated according to Model A).
    \item To measure the discrepancy, we propose to quantify counterfactual unfairness as the squared difference between the above two quantities:
    \begin{equation*}
        \CFU_i = (
        f_{\B{\theta}^{\dagger}}(\hat{\B{\epsilon}}_i) -
        f_{\B{\theta}^{\dagger}}(\hat{\B{\epsilon}}'_i)
        )^2 \eqp
    \end{equation*}
    Ultimately, to summarize the aggregate unfairness, we will compute the average counterfactual unfairness:
    \begin{align}
        \CFU = \frac{1}{n} \sum_{i=1}^n \CFU_i \label{eq:cfu} \eqp
    \end{align}
\end{enumerate}

A quick note: in the two-variable setting, given a confounding budget $p_{\max}$, the worst-case CFU occurs precisely at $p_{\max}$, which need not be the case for multivariate confounding as we show at the end of the current section.

Thus, the above procedure computes the maximum CFU with bivariate confounding budget equal to $p_{\max}$.
CFU measures how the counterfactual responses $\hY(z)$ and $\hY(z')$, defined using Model A, would differ ``in reality'', i.e., if Model B were ``true''.
What qualifies as bad CFU is problem dependent and requires interaction with domain experts, who can make judgment calls about the plausibility of the misspecification $p_{\max}$ that is required to reach a breaking point.
Here, a breaking point could be the CFU of a predictor that completely ignores the causal graph.

To summarize: we learn $\hY \equiv f_{\B{\theta}^{\dagger}}$ as a function of $X$ and $Z$, where $X$ and $Z$ are implicit in the expression of the (estimated) noise terms $\B{\epsilon}$ that are computed using the assumptions of the working Model A.
We assess how ``unfair'' $\hY$ is by comparing for each data point the two counterfactual values $\hY(z) \equiv f_{\B{\theta}^{\dagger}}(\hat{\B{\epsilon}}_i)$ and $\hY(z') \equiv f_{\B{\theta}^{\dagger}}(\hat{\B{\epsilon}}'_i)$ where the ``true'' counterfactual is generated according to the world assumed by Model B.
The space of models to which Model B belongs is a continuum indexed by $p_{\max}$, which will allow us to visualize the sensitivity of Model A by a one-dimensional curve.
We will do this by finding the best fitting model (in terms of structural equation coefficients and noise variances) at different values of $p_{\max}$, so that the corresponding CFU measure is determined by $p_{\max}$ only (results on the above law school model are shown in \secref{sec:uai:experiments}).
We assume that the free confounding parameter is not identifiable from data (as it would be the case if the model was linear and the edge $Z \rightarrow L$ was missing, the standard instrumental variable scenario).

As we have mentioned previously, in the multivariate setting, the worst-case counterfactual unfairness with a confounding budget of $p_{\max}$ is not necessarily obtained when all non-zero entries of the correlation matrix are set to $p_{\max}$.
Even though intuitively that would lead to the largest allowed correlation between any pair of variables and thus in some sense to the largest deviation from the assumed Model A, we will now show that this setting does not necessarily lead to the biggest deviation in predictions.
In particular, we will show that such a matrix with all non-zero correlation entries set to $p_{\max}$ is not a valid correlation matrix.
To this end, it suffices to find a symmetric matrix $P$ with $1$s on the diagonal that is not positive semidefinite when all its non-zero off-diagonal entries are set to the same value, which we define to be the considered confounding budget $p_{\max}$.
Since each valid correlation matrix must be positive semidefinite, the correlation matrix for the worst-case counterfactual unfairness must be different from $P$ (while maintaining the zero entries).
Because all off-diagonal entries are upper bounded by $p_{\max}$, at least one of them must be smaller than the corresponding value in $P$.

For example, consider
\begin{equation*}
P = \begin{pmatrix}
1 & p_{\max} & p_{\max} \\
p_{\max} & 1 & 0 \\
p_{\max} & 0 & 1
\end{pmatrix} \eqp
\end{equation*}
Since the eigenvalues of $P$ are $1$, $1 - \sqrt{2}\, p_{\max}$, and $1 + \sqrt{2}\, p_{\max}$, we see that $P$ is not positive semidefinite for $p_{\max} > 1/\sqrt{2}$.

In general, for $m \in \bN$ with $m > 2$, the matrix $P \in \bR^{m \times m}$ with $P_{ii} = 1$ for $i \in \{1, \ldots, m \}$, $P_{1i} = P_{i1} = p_{\max}$ for $i \in \{2, \ldots, m\}$ and $P_{ij} = 0$ for all remaining entries, has the eigenvalues (without multiplicity) $1$, $1 - \sqrt{m-1}\, p$, and $1 + \sqrt{m-1}\, p$.
Therefore, $P$ is not positive semidefinite for $p_{\max} > 1 / \sqrt{m-1}$.
We conclude that as the dimensionality of the problem increases, we may encounter such situations for ever smaller confounding budget.
This leads us to the second tool for sensitivity analysis in the multi-variate setting.

\section{Tool \#2: optimization-based analysis}
\label{sec:uai:multivariate}

In this section, we generalize the procedure outlined for the two-variable case in \secref{sec:uai:bivariate} to the general multivariate case.

\paragraph{Notation and problem setup.}
Besides the protected attribute $Z$ and the target variable $Y$, let there be $m$ additional observed feature variables $X_j$ in the causal graph $\cG$ each of which comes with an unobserved noise variable $\epsilon_j$.

As before, we express the assignment of the structural equations for a specific realization of observed features $\B{x} = (x_1,\ldots,x_m)^{\top}$ and noise terms $\B{\epsilon} = (\epsilon_1,\ldots,\epsilon_m)^{\top}$ as the following operation, i.e., $\B{x} = \Phi \B{w} + \B{\epsilon}$.
Here $\Phi$ has $m$ rows and $d = \sum_{V \in \text{has-parents}(\cG)} d_V$ columns, where $d_V$ is the dimensionality of embedding $\phi_{V}:\bR^{|\pa_{\cG}(V)|} \to \bR^{d_V}$ for each node $V \in \cG$ that has parent nodes.
Without loss of generality, we assume the nodes $\{Z\} \cup \{X_j\}_{j=1}^m$ to be topologically sorted with respect to $\cG$ with $Z$ always being first.
We combine the individual weights as
$\B{w} = (\B{w}_{X_1}, \B{w}_{X_2}, \ldots, \B{w}_{X_m}) \in \bR^d$
and represent $\Phi$ once evaluated on a specific sample $z, \B{x}$ of the variables $Z, X_1,\ldots,X_m$ as
\begin{equation*}
    \Phi =
    \begin{pmatrix}
        \B{\phi}_{X_1}^{\top} & & \B{0}^{\top}\\
        & \ddots & \\
        \B{0}^{\top} & & \B{\phi}_{X_m}^{\top}
    \end{pmatrix} \eqc
\end{equation*}
where $\B{\phi}_{X_j}$ is based on the parents of $X_j$.
The covariance matrix of the noise terms is given by
\begin{equation*}
\Sigma \equiv \diag(\sigma_1, \dots, \sigma_m)\, P\, \diag(\sigma_1, \dots, \sigma_m) \eqc
\end{equation*}
\noindent where $\sigma_1, \dots, \sigma_m$ are the standard deviations of each variable and $P$ is a correlation matrix.

\paragraph{The optimization problem.}
In the general case our goal is to find a correlation matrix $P$ that satisfies a ``confounding budget'' $p_{\max}$.
In particular we would like to constrain the correlation $P_{jk}$ between any two different variables $X_j$ and $X_k$ for $j \neq k$, while allowing $P_{jj} = 1$ for all $j$.
Additionally, we want to take into account any prior knowledge that certain variable pairs should have no correlation, if available.
The most intuitive way to budget the amount of confounding is to limit the absolute size of any correlation by $p_{\max}$ as: $|P_{jk}| \leq p_{\max}$ for all $j \ne k$.
This captures a notion of ``restricted unobserved confounding'' and leads
to the following optimization problem
\begin{align}
\begin{split}
    \maximize_P &\quad\sum_{i=1}^n \CFU_i \label{eq:multiopt_orig} \\
    \text{subject to}
    &\quad P_{jj} = 1 \quad\text{for } j \in \{1, \ldots, m\} \eqc \\
    &\quad |P_{jk}| \le p_{\max} < 1 \quad\text{for all } (j, k)
      \in \mathcal{C} \eqc \\
    &\quad P_{jk}= 0 \quad\text{for all } (j, k)
      \not\in \mathcal{C} \text{ with } j \neq k \eqp
\end{split}
\end{align}
$\mathcal{C}$ is the set of correlations that are allowed to be non-zero.
We remark that the setting where there are zero correlations can be captured using the standard acyclic directed mixed graph notation \citep[ADMG,][]{richardson:03}.
Specifically, this can be represented with ADMGs by removing bidirected edges between any two noise terms whose correlation is fixed to zero.

As in the bivariate case, CFU is a direct function of $P$ only: all other parameters will be determined given the choice of correlation matrix by maximizing likelihood.
Note that eq.~\eqref{eq:multiopt_orig} contains multiple nested optimization problems:
The outer optimization over correlation matrices $P$ and the inner optimization for the counterfactually fair model weights $\B{\theta}^{\dagger}$ as well as the weights and standard deviations of Models A and B.
To solve it efficiently, we will parameterize $P$ in a way that facilitates optimization via off-the-shelf, unconstrained, automatic differentiation tools.

\paragraph{Algorithm.}
We use the following approach to accommodate the constraints in eq.~\eqref{eq:multiopt_orig} in a way such that our algorithm does not require a constrained optimization subroutine.
Assume first that $P$ has no correlations that should be zero.
We compute $L L^{\top}$ for a matrix $L \in \bR^{m \times m}$, whose entries are the parameters we eventually optimize.
To constrain the off-diagonals to a given range and ensure that $P$ has $1$s on the diagonal, we define $P$ as,
\begin{equation*}
    P := \tanh_{p_{\max}}(L L^{\top}) := \B{I} + p_{\max} \, (\B{J} - \B{I}) \odot \tanh(L L^{\top}) \eqc
\end{equation*}
where $\odot$ denotes element-wise multiplication of matrices, $\B{J}$ is a matrix of all ones, and $\B{I}$ is the identity matrix.
This way $P$ is symmetric, differentiable with respect to the entries of $L$, has $1$s on the diagonal, and its off-diagonal values are squashed to lie within $(-p_{\max}, p_{\max})$.
While it is natural to directly mask and clamp the diagonal, there are various ways to squash the off-diagonals to a fixed range in a smooth way, which bears close resemblance to barrier methods in optimization.
We choose $\tanh()$ because of its abundance in the machine learning literature and availability in computational frameworks (including gradient implementations), but other forms of $P$ may work better for specific applications.
Note that this formulation does not guarantee $P$ to be positive semidefinite.

\begin{algorithm}[t!]
\caption{\textsc{MaxCFU}: Maximize counterfactual unfairness under a certain confounding budget constraint.}\label{algo:mult_cfu}
\begin{algorithmic}[1]
    \Input
    data $\{\B{x}_i, y_i, z_i, \}_{i=1}^n$,
    confounding budget $p_{\max}$, learning rate $\alpha$, minibatch size $B$
    \Statex
  \State $\{\hat{\B{\epsilon}}_i\}_{i=1}^n, \B{w}^{\dagger}, \B{\theta}^{\dagger} \gets \Call{FitModelA}{\{\B{x}_i, y_i, z_i, \}_{i=1}^n}$
  \State $\cD \gets \{\B{x}_i, y_i, z_i, \Phi_i, \hat{\B{\epsilon}}_i \}_{i=1}^n$\Comment{full dataset}
  \State $L \gets \Call{InitializeParameters}{{}}$
  \For{$t = 1 \ldots T$}\Comment{iterations}
  \State $\cD^{(t)} \gets \Call{SampleMinibatch}{\cD, B}$
  \State $\Delta \gets \nabla_L \Call{CFU}{\cD^{(t)}, \B{w}^{\dagger}, \frac{B}{n}\lambda^{\dagger}, \B{\theta}^{\dagger}, L}$ \Comment{autodiff}
  \State $L \gets L + \alpha \Delta$ \Comment{gradient ascent step}
  \EndFor
  \State \Return $\Call{CFU}{\cD, \B{w}^{\dagger}, \lambda^{\dagger}, \B{\theta}^{\dagger}, L}$
  \Statex
  \Function{FitModelA}{$\{\B{x}_i, y_i, z_i, \}_{i=1}^n$}
      \State $\B{w}^{+} \gets \Big(\sum_{i=1}^n \Phi_i^{\top} \Phi_i + \lambda^{\dagger} \B{I} \Big)^{-1} \Big(\sum_{i=1}^n \Phi_i^{\top} \B{x}_i\Big)$
      \State $\Sigma \gets \diag(\var(\{\B{x}_i - \Phi_i \B{w}^{+}\}_{i=1}^n))$
      \State $\B{w}^{\dagger} \gets \Bigl(\sum_{i=1}^n \Phi_i^{\top} \Sigma^{-1} \Phi_i + \lambda^{\dagger} \B{I} \Bigr)^{-1}
      \Bigl(\sum_{i=1}^n \Phi_i^{\top} \Sigma^{-1} \B{x}_i\Bigr)$
      \State $\hat{\B{\epsilon}}_i \gets \B{x}_i - \Phi_i \B{w}^{\dagger}$
      \State $\B{\theta}^{\dagger} \gets \argmin_{\B{\theta}} \sum_{i=1}^n \mathcal{L}(f_{\B{\theta}}(\hat{\B{\epsilon}}_i), y_i)$
      \State \Return $\{\hat{\B{\epsilon}}_i\}_{i=1}^n, \B{w}^{\dagger}, \B{\theta}^{\dagger}$
  \EndFunction
  \Statex
  \Function{CFU}{{$\cD$, $\B{w}^{\dagger}$, $\lambda^{\dagger}$, $\B{\theta}^{\dagger}$, $L$}}
    \State $\B{w}^*, \B{\sigma}^* \gets
      \min_{\B{w}, \B{\sigma}}
      \sum_{i=1}^{n}
      (\B{x}_i - \Phi_i \B{w})^{\top} \Sigma^{-1}
      (\B{x}_i - \Phi_i \B{w})
      + \lambda^{\dagger} \|\B{w} \|_2^2 + n\,\log \det(\Sigma)$
    \Statex \hspace*{0.5cm}where
      $\Sigma = \diag(\B{\sigma}) \tanh_{p_{\max}}(L L^{\top}) \diag(\B{\sigma})$
    \State $\hat{\B{\delta}}_i \gets \B{x}_i - \Phi_i \B{w}^*$
    \State $z'_i \gets 1 - z_i$ and $\B{x}'_i \gets \Phi'_i \B{w}^* +\hat{\B{\delta}}_i$
    \Statex \hspace*{0.5cm}where $\Phi'_i$ is computed via iterative assignment
    \State $\hat{\B{\epsilon}}'_i \gets \B{x}'_i - \Phi'_i \B{w}^{\dagger}$
    \State $\CFU \gets \frac{1}{n} \sum_{i=1}^n (
          f_{\B{\theta}^{\dagger}}(\hat{\B{\epsilon}}_i) -
          f_{\B{\theta}^{\dagger}}(\hat{\B{\epsilon}}'_i)
          )^2$
    \State \Return $\CFU$
  \EndFunction
\end{algorithmic}
\end{algorithm}

In Algorithm~\ref{algo:mult_cfu}, we describe our procedure to maximize counterfactual unfairness given a confounding budget $p_{\max}$ and observational data $\{\B{x}_i, y_i, z_i, \}_{i=1}^n \subset \bR^m \times \{0,1\}^2$.
The algorithm closely follows the procedure described in \secref{sec:uai:bivariate} for the bivariate case.
Since we use automatic differentiation provided by PyTorch \citep{paszke2017automatic} to obtain gradients, we only show the forward pass in Algorithm~\ref{algo:mult_cfu}.
For the initialization \textsc{InitializeParameters()}, we simply populate $L$ as a lower triangular matrix with small random values for the off-diagonals and $1$s on the diagonal.

The main place where code optimization can take place is step 17 of Algorithm~\ref{algo:mult_cfu}.
Alternatives to the (local) penalized maximum likelihood taking place there could be suggested (perhaps using spectral methods).
It is hard though to say much in general about Step 20, as counterfactual fairness allows for a large variety of loss functions usable in supervised learning.
In the case of linear predictors, it is still a non-convex problem due to the complex structure of the correlation matrix, and for now we leave as an open problem whether non-gradient based optimization may find better local minima.

If $\mathcal{C}$ indicates some correlations should be zero, we suggest the following standard ``clique parameterization'': $L$ is a $m \times c$ matrix where $c$ is the number of cliques in $\mathcal{C}$, with $L_{ik}$ being a non-zero parameter if and only if vertex $i$ is in clique $k$.
$L_{ik} \equiv 0$ otherwise.
It follows that such a matrix will have zeros at precisely the locations not in $\mathcal{C}$.\footnote{Barring unstable parameter cancellations that have measure zero under continuous measures on $\{L_{ik}\}$.}
See \citet{silva_nips:07,barber:09} for examples of applications of this idea.
For large cliques, further refinements are necessary to avoid unnecessary constraints, such as creating more than one row per clique of size four or larger.
Our experiments will not make use of sparse $P$ (note that this parameterization also assumes that the number of cliques is tractable).
Note that individual parameters $L_{ik}$ may not be identifiable.
However, identifiability is not necessary here, all we care about is the objective function: CFU.
As a matter of fact, multiple globally optimal solutions are to be expected even in the space of $P$ transformations.
A more direct parameterization of sparse $P$, with exactly one parameter per non-zero entry of the upper covariance matrix, is discussed by \citet{drton:04}.
Computationally, this minimal parameterization does not easily lead to unconstrained gradient-based algorithms for optimizing sparse correlation matrices with bounded entries.
We suggest the clique parameterization as a pragmatic alternative.
Special cases may be treated with more efficient specialized approaches.
\citet{cinelli19a} provide a thorough discussion of fully linear models.

In \secref{sec:uai:experiments} we will demonstrate this approach on a real-world 3-variable-confounding scenario to showcase our approach.
Before moving to empirical results and the implementation of our tools, we briefly describe a methodological extension of our approach to path-specific effects \citep{shpitser:13}.
Specifically, we describe an example that illustrates how notions of path-specific effects can be easily pipelined with our sensitivity analysis framework.

Consider Figure~\ref{fig:path}, where the path from $Z \rightarrow U$ is considered unfair and $Z \rightarrow F$ is considered fair, in the sense that we do not want a non-zero path-specific effect of $Z$ on $\hY$ that is comprised by a possible path $Z \rightarrow U \rightarrow \hY$ in the causal graph implied by the chosen construction of $\hY$. Then a path-specific counterfactually fair predictor is one that uses $\{\epsilon_U,F\}$ as input. Note that the only difference this makes in our grid-based tool is that we only estimate the noise $\epsilon_U$ for the unfair path in Model A (step 2) and fit a predictor on $\{\epsilon_U,F\}$ (step 3). Additionally, we only compute the incorrect noise terms of the counterfactuals in Model B, using the weights of Model A (step 4). For the optimization-based tool we would change lines 13, 14, and 20 in the same way.

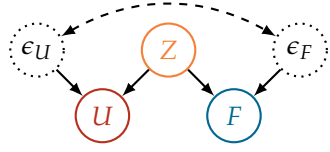
\begin{figure}[t]
    \centering
       \begin{tikzpicture}
    \node[protected] (Z) {$Z$};
    \node[latent, left=of Z] (EG) {$\epsilon_U$};
    \node[latent, right=of Z] (EL) {$\epsilon_F$};
    \node[unfair, below left=0.5 of Z] (G) {$U$};
    \node[target, right=of G, label=right:{\color{white}features}] (L) {$F$};
  \edge {Z} {G,L};
    \edge {EL} {L};
    \edge {EG} {G};
   \draw[dashed, <->, >=latex, thick] (EG) to[bend left] (EL);
    \end{tikzpicture}
    \caption[Example for a path-specific fairness setting]{A path-specific model where the path from protected attribute $Z$ to feature $U$ is unfair and the path from $Z$ to feature $F$ is fair.}
    \label{fig:path}
\end{figure}

\section{Experiments}
\label{sec:uai:experiments}

We compare the grid-based and the optimization-based tools introduced in \secsref{sec:uai:bivariate} and~\ref{sec:uai:multivariate} on two real datasets.

In all experiments our embedding $\B{\phi}$ is a polynomial basis up to a fixed degree.
The degree is determined via cross validation (5-fold) jointly with the regularization parameter $\lambda^{\dagger}$.
Our counterfactually fair predictor is regularized linear regression on the noise terms $\B{\epsilon}$:
\begin{equation*}
    \min_{\B{\theta}} \sum_{i=1}^n (y_i - \B{\phi}(\hat{\B{\epsilon}}_i)^{\top} \B{\theta})^2 + \lambda \|\B{\theta} \|_2^2 \eqp
\end{equation*}
For this model, counterfactual unfairness is:
\begin{equation*}
    \CFU_i = \Bigl(\bigl(\B{\phi}(\hat{\B{\epsilon}}_i) - \B{\phi}(\hat{\B{\epsilon}}'_i)\bigr)^{\top} \B{\theta}^{\dagger} \Bigr)^2 \eqp
\end{equation*}

For comparison, we train two baselines that also use regularized ridge regression (degree and regularization are again selected by 5-fold cross-validation):
\begin{description}[leftmargin=*,topsep=0pt,noitemsep,labelwidth=*]
    \item[unconstrained:] an unconstrained predictor using all observed variables as input $f_{\mathrm{uc}}: (Z, X_1, \ldots, X_m) \mapsto Y$.
    \item[blind unconstrained:] an unconstrained predictor using all features, but not the protected attribute, as input $f_{\mathrm{buc}}:(X_1, \ldots, X_m) \mapsto Y$.
\end{description}
Analogous to our definition of $\CFU$ in eq.~\eqref{eq:cfu}, we compute the unfairness of these baselines as the mean squared difference between their predictions on the observed data and the predictions of the counterfactually fair predictor on the observed data: $\frac{1}{n}\sum_{i=1}^n (f_{\B{\theta}^{\dagger}}(\hat{\B{\epsilon}}_i) - \hy_i^{\mathrm{(b)uc}})^2$, where $\hy_i^{\mathrm{uc}} = f_{\mathrm{uc}}(z_i, \B{x}_i)$ and $\hy_i^{\mathrm{buc}} = f_{\mathrm{buc}}(\B{x}_i)$.
This choice is motivated by the fact that in practice we care about how much potential predictions deviate from predictions satisfying a fairness measure. For our grid-based approach we repeatedly fix $p_{\max} \in (-1, 1)$ to a particular value and then use the procedure in \secref{sec:uai:bivariate} to compute $\CFU$.
For the optimization approach we similarly fix $p_{\max} \in [0, 1)$ in the constraint of eq.~\eqref{eq:multiopt_orig}.
For efficiency we use the previously found correlation matrix $P$ as initialization for the next setting of $p_{\max}$.

\begin{figure}
    \centering
    \hspace{1cm}\textbf{Law School}\\
    \includegraphics[width=0.6\textwidth]{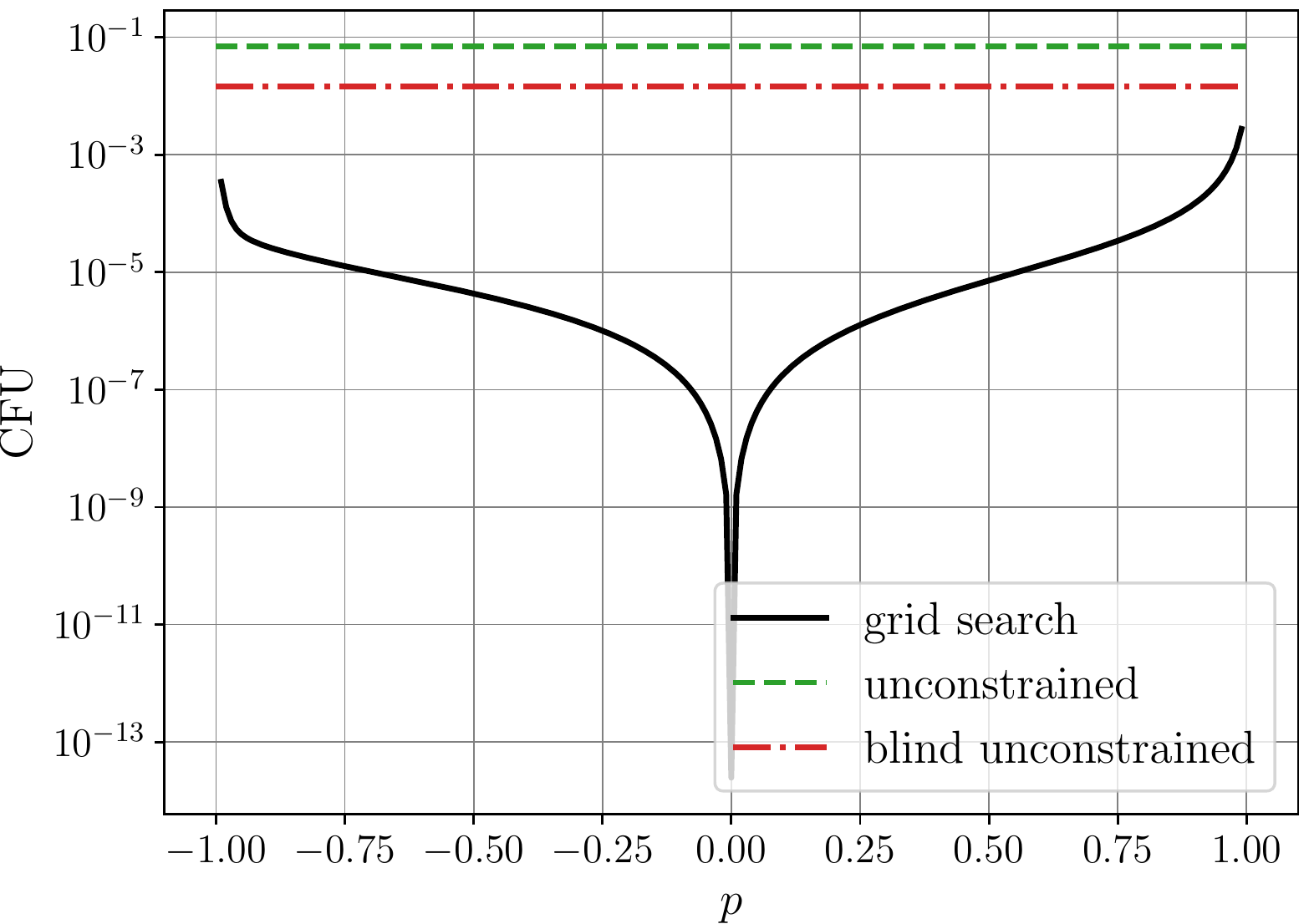}
    \caption[Results for the sensitivity of counterfactual unfairness in the law school example]{Counterfactual unfairness for the law school dataset. See text for details.}
    \label{fig:results_bivariate}
\end{figure}

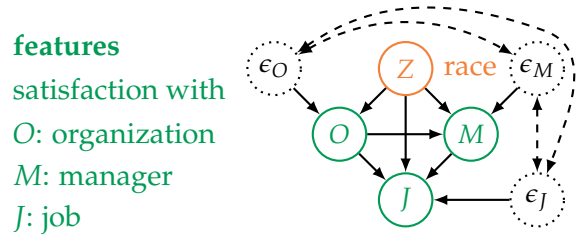
\begin{figure}[t]
    \centering
    \begin{tikzpicture}
    \node[protected, label=right:{\color{Orange}race}] (Z) {$Z$};
    \node[latent, left=of Z] (EO) {$\epsilon_O$};
    \node[latent, right=of Z] (EM) {$\epsilon_M$};
    \node[feature, below left=0.5 of Z] (O) {$O$};
    \node[feature, right=of O] (M) {$M$};
    \node[feature, below right=0.5 of O] (J) {$J$};
    \node[latent, right=of J] (EJ) {$\epsilon_J$};
    \edge {Z} {O,M,J};
    \edge {O,EM} {M};
    \edge {O,M} {J};
    \edge {EO} {O};
    \edge {EJ} {J};
    \draw[dashed, <->, >=latex, thick] (EO) to[bend left] (EM);
    \draw[dashed, <->, >=latex, thick] (EO) to[out=35,in=160] ($(EM) + (0.1,0.5)$) to[out=340,in=70] (EJ);
    \draw[dashed, <->, >=latex, thick] (EM) to (EJ);
    \node[left=1.0 of O, ForestGreen, align=left] (label) {%
    \textbf{features}\\[3pt]
    satisfaction with\\[3pt]
    $O$: organization\\[2pt]
    $M$: manager\\[2pt]
    $J$: job};
    \end{tikzpicture}%
    \caption[The assumed to be true causal graph for the NHS Staff Survey data]{The assumed to be true causal graph for the NHS Staff Survey dataset.}
    \label{fig:nhs}
\end{figure}

\begin{figure}
    \centering
    \hspace{1cm}\textbf{NHS}\\
    \includegraphics[width=0.6\textwidth]{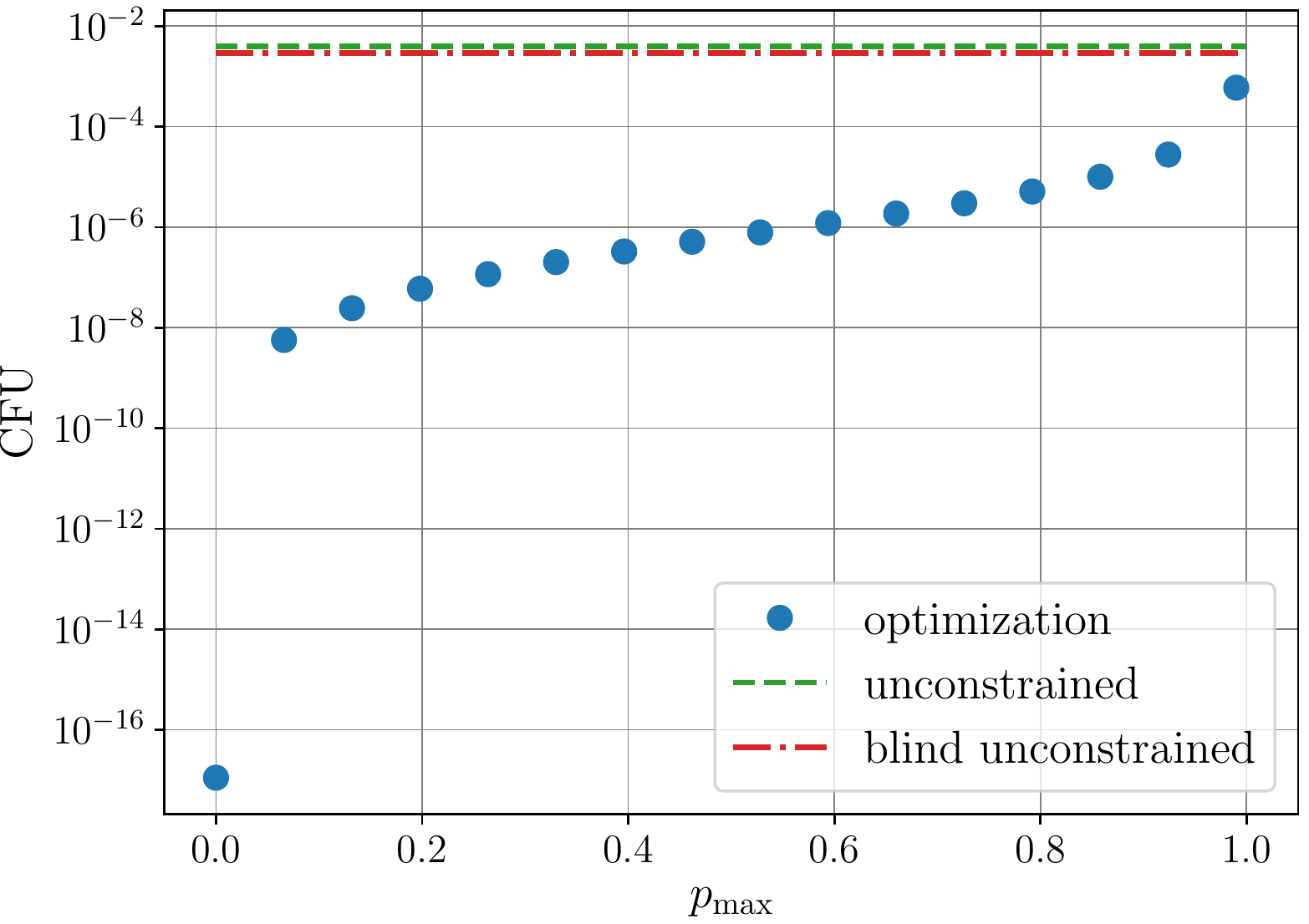}
    \caption[Results for the sensitivity of counterfactual unfairness in the NHS example]{Counterfactual unfairness as a function of $p_{\max}$ for the multivariate NHS dataset.}
    \label{fig:results_multivariate}
\end{figure}

\paragraph{Law School data.}
As our first experiment we consider the motivating example introduced in \secref{sec:uai:bivariate} on law school success (recall eq.~\eqref{eq:law} and Figure~\ref{fig:bivariate} for details on the causal models). Our data comes from law school students in the US \citep{wightman1998lsac}. As our causal model investigates confounding between two variables, we will use the grid-based approach introduced in \secref{sec:uai:bivariate} to calculate the maximum $\CFU$. Recall that for the bivariate approach we fix a confounding level $p = p_{\max}$ and then compare predictions between real data based on Model A versus counterfactuals generated from Model B. Figure~\ref{fig:results_bivariate} shows the $\CFU$ for the grid-based approach (black), alongside the baselines (green/red), as the correlation $p$ varies. We first note that the confounding is not symmetric around $p = 0$. For the law school data, negative correlations have smaller $\CFU$. In general, this is a data-specific property.

Additionally, we notice that as $p_{\max}$ moves away from $0$ (where we expect $\CFU = 0$ up to numerical errors) it increases noticeably, then plateaus in roughly $[0.1, 0.9]$ and finally increases again.
For large $p_{\max}\geq 0.9$ we cannot exclude numeric instability as the covariance matrix becomes nearly negative definite.
Finally, we note that both baseline approaches have higher $\CFU$ than found with any grid-based setting.

\paragraph{NHS Staff Survey.}
Our second experiment is based on the 2014 UK National Health Service (NHS) Survey \citep{nhs2014staffsurvey}. The goal of the survey was to \directquote{gather information that will help to improve the working lives of staff in the NHS}. Answers to the survey questions ranged from `strongly disagree' (1) to `strongly agree' (5). We averaged survey answers for related questions to create a dataset of continuous indices for: \emph{job satisfaction} $(J)$, \emph{manager satisfaction} $(M)$, \emph{organization satisfaction} $(O)$, and \emph{overall health} $(Y)$. The goal is to predict health $Y$ based on the remaining information. Additionally, we collected the race $(Z)$ of the survey respondents. Using this data, we formulate a ground-truth causal graph shown in Figure~\ref{fig:nhs} (equivalent to Model B in Figure~\ref{fig:bivariate}). This causal graph includes correlations between all noise terms $\epsilon_J, \epsilon_M, \epsilon_O$ and we assume the following structural equations
\begin{align}
    O &= \B{\phi}_O(Z)^{\top} \B{w}_{O} + \epsilon_O \label{eq:nhs} \\
    M &= \B{\phi}_M(Z,O)^{\top} \B{w}_{M} + \epsilon_M \nonumber \\
    J &= \B{\phi}_J(Z,O,M)^{\top} \B{w}_{J} + \epsilon_J \eqp \nonumber
\end{align}

Just as in the law school example, we measure the impact of unobserved confounding by comparing this model to the unconfounded model (i.e., all noise terms are jointly independent). As there is no general efficient way to grid-search for positive definite matrices that maximize $\CFU$ for a given $p_{\max}$, we make use of our optimization-based procedure for calculating maximum $\CFU$, as described in Algorithm~\ref{algo:mult_cfu}. Figure~\ref{fig:results_multivariate} shows the results of our method on the NHS dataset. Note that we only show positive $p_{\max}$ because our optimization problem eq.~\eqref{eq:multiopt_orig} only constrains the absolute value of the off-diagonal correlations. This allows the procedure to learn whether positive or negative correlations result in greater $\CFU$. As in the law school dataset we see an initial increase in CFU for small $p_{\max}$, followed by a plateau, ending with another small increase. Just like before, all settings result in lower $\CFU$ than the two baseline techniques.

\section{Conclusion}
\label{sec:uai:conclusion}

We presented two techniques to assess the impact of unmeasured confounding in causal additive noise models. We formulated unmeasured confounding as covariance between noise terms. We then introduced a grid-based approach for confounding between two variables, and an optimization-based approach for confounding in the general case. We demonstrated our approach on two real-world fairness datasets.
Our techniques can also be applied for sensitivity analysis of other causal queries, which is an interesting direction for future research.
Currently, our approach is limited to a specific type of unobserved confounding in ANMs assuming a certain functional form of the structural equations.
However, we believe the developed tools are an important step towards making causal models suitable to address discrimination in real-world prediction problems.

\chapter{Fairness and privacy}
\label{chap:blindjustice}

\graphicspath{{figs/chap6/}}

In this chapter, we go beyond the assumption that sensitive attributes are available in the clear---i.e., as clear text without encryption---to train or evaluate fair models.
To avoid disparate treatment, sensitive attributes should not be considered.
On the other hand, in order to avoid disparate impact, sensitive attributes must be examined---e.g., in order to learn a fair model, or to check if a given model is fair.
We introduce methods from secure multi-party computation which allow us to avoid both.
By encrypting sensitive attributes, we show how an outcome-based fair model may be learned, checked, or have its outputs verified and held to account, \emph{without users revealing their sensitive attributes and without modelers having to disclose their models}.

The main content of this chapter has been published in the following paper:

\fbox{\parbox{\textwidth}{
\pubitem{Blind Justice: Fairness with Encrypted Sensitive Attributes}
{Niki Kilbertus, Adri\`{a} Gasc\'{o}n, Matt Kusner, Michael Veale, Krishna~P.~Gummadi, Adrian Weller}
{International Conference on Machine Learning (ICML), 2018}
{https://arxiv.org/abs/1806.03281}
{https://github.com/nikikilbertus/blind-justice}
{}
}}

\section{Introduction}
\label{sec:icml:intro}

To motivate the main challenge in this chapter, we recall two notions of discrimination briefly mentioned in \chapref{chap:existing}.
The first type, \emph{disparate treatment} (or \emph{direct discrimination}), occurs if individuals are treated differently according to their sensitive attributes (with all others equal).
Intuitively, to avoid disparate treatment, one should not inquire about individuals' sensitive attributes, i.e., apply fairness through unawareness.
\emph{Disparate impact} (or \emph{indirect discrimination}) occurs when the \emph{outcomes} of decisions disproportionately benefit or harm individuals from subgroups with particular sensitive attribute settings without appropriate justification.
For example, firms deploying car insurance telematics devices \citep{handel2014insurance} build up high dimensional pictures of driving behavior which might easily proxy for sensitive attributes even when they are omitted.
As we have discussed in \chapref{chap:existing}, most work on observational group matching criteria has thus focused on avoiding notions of disparate impact.

In order to check and enforce such requirements, the modeler is usually assumed to have access to the sensitive attributes for individuals in the training data---however, this may be undesirable for several reasons \citep{Zliobaite2016}.
First, individuals are unlikely to want to entrust sensitive attributes to modelers in all application domains.
Where applications have clear discriminatory potential, it is understandable that individuals may be wary of providing sensitive attributes to modelers who might exploit them to negative effect, especially with no guarantee that a fair model will indeed be learned and deployed.
Even if certain modelers themselves were trusted, the wide provision of sensitive data creates heightened privacy risks in the event of a data breach.
Further, legal barriers may limit collection and processing of sensitive personal data.
A timely example is the EU'{}s General Data Protection Regulation (GDPR), which contains heightened prerequisites for the collection and processing of some sensitive attributes or the California Consumer Privacy Act. 
Unlike other data, modelers cannot justify using sensitive characteristics in fair learning with their ``legitimate interests''---and instead will often need explicit, freely given consent \citep{vealeedwardsa29}.

One way to address these concerns was recently proposed by \citet{VealeBinns2017}.
The idea is to involve a highly trusted third party, and may work well in some cases.
However, there are significant potential difficulties: individuals must disclose their sensitive attributes to the third party (even if an individual trusts the party, she may have concerns that the data may somehow be obtained or hacked by others, e.g., \citealp{Graham17});
and the modeler must disclose their model to the third party, which may be incompatible with their intellectual property or other business concerns.

To overcome these seemingly conflicting interests, we propose an approach to detect and mitigate disparate impact without disclosing readable access to sensitive attributes.
This reflects the notion that decisions should be blind to an individual'{}s status---depicted in courtrooms by a blindfolded Lady Justice holding balanced scales \citep{capers2012blind}.
We assume the existence of a regulator with fairness aims (such as a data protection authority or anti-discrimination agency).
With recent methods from \emph{secure multi-party computation} (MPC), we enable auditable fair learning while ensuring that both individuals' sensitive attributes and the modeler'{}s model remain private from all other parties---including the regulator.
Desirable fairness and accountability applications we enable include:
\begin{enumerate}
\item \textbf{Fairness certification.} Given a model and a dataset of individuals, check that the model satisfies a given observational group fairness constraint; if yes, generate a certificate.
\item \textbf{Fair model training.} Given a dataset of individuals, learn a model guaranteed and certified to be fair.
\item \textbf{Decision verification.} A malicious modeler might go through fair model training, but then use a different model in practice.
To address such accountability concerns \citep{kroll2016accountable}, we efficiently provide for an individual to challenge a received outcome, verifying that it matches the outcome from the previously certified model.
\end{enumerate}
We rely on recent theoretical developments in MPC, specifically the protocols for training linear and logistic models in secure MPC developed by \citet{mohassel2017secureml}.
In this work, we further extend these protocols to also admit linear constraints in order to enforce fairness requirements.
These extensions may be of independent interest.
Finally, we demonstrate the efficacy of our methods by evaluating them on synthetic and real-world datasets.

We note that the privacy or secrecy constraints considered in our approach are separate from other theorized, setup-dependent attacks, e.g., model extraction \citep{tramer2016stealing} or inversion \citep{fredrikson2015model}. If relevant, modelers may need to consider these separately.
In particular, our approach aims at fixing the issue of providing, transmitting, and storing sensitive data ``in the clear''.
The notion of privacy that is achieved by encrypting these data is distinct from, e.g., the guarantees provided by differential privacy.
In principle, our protocols do not prevent a malicious modeler from statistically inferring information about the protected attribute given non-sensitive data.
The challenge of training fair models in a differentially private fashion and potential tensions between fairness and differential privacy have been explored \citep{pmlr-v97-jagielski19a,cummings2019compatibility,bagdasaryan2019differential,ding2020differentially,xu2020removing}.

Besides work on differentially private fair model training, our original work has also been followed up by a line of research on fair machine learning without or only limited demographic information \citep{pmlr-v80-hashimoto18a,lahoti2020fairness,coston2019fair,chen2019fairness,rastegarpanah2020fair}.

\section{Fairness and privacy requirements}
\label{sec:icml:setting}

\paragraph{Assumptions and incentives.}
We assume three categories of participants: a \emph{modeler}~$\dc$, a {\em regulator}~$\reg$, and \emph{users} $\user_1, \ldots, \user_n$.
For each user $\user_i$, we consider a vector of sensitive features $\z_i\in\cZ = \{0, 1\}^p$ (e.g., ethnicity or gender) which might be a source of discrimination, and a vector of non-sensitive features $\x_i\in\cX = \bR^d$, again assumed to be discrete or real.
Here, we deviate from the previous restriction of only dealing with a single binary sensitive attributes and instead allow for $p \in \bN$ binary sensitive features.
Additionally, for each user there is a non-sensitive label or target $y_i\in\cY = \{0, 1\}$ which the modeler~$\dc$ would like to predict. Again, we assume binary for simplicity labels though our MPC approach could be extended to multi-label settings.

Modeler~$\dc$ wishes to train a parametric model $f_{\btheta}: \cX \to \cY$, which accurately maps features~$\x_i$ to labels~$y_i$, in a supervised fashion.
We assume $\dc$ needs to keep the model private for intellectual property or other business reasons.
The model $f_\btheta$ does not use sensitive information $\z_i$ as input to prevent disparate treatment (direct discrimination).

For each user $\user_i$, the modeler $\dc$ observes or is provided $\x_i, y_i$.
The sensitive information in~$\z_i$ is required to ensure~$f_{\btheta}$ meets a given observational group fairness condition~$\F$.
While each user~$\user_i$ wants~$f_{\btheta}$ to meet~$\F$, they also wish to keep~$\z_i$ private from all other parties. The regulator~$\reg$ aims to ensure that~$\dc$ deploys only models that meet fairness condition~$\F$. It has no incentive to collude with~$\dc$.
If collusion were a concern, more sophisticated cryptographic protocols would be required, which we briefly touch upon in \secref{sec:icml:setting}.
Further, the modeler~$\dc$ might be legally obliged to demonstrate to the regulator~$\reg$ that their model meets fairness condition~$\F$ before it can be publicly deployed.
As part of this,~$\reg$ also has a positive duty to enable the training of fair models.

Later in this section, we define and address three fundamental problems in our setup: certification, training, and verification.
For each problem, we present its functional goal and its privacy requirements.
We refer to $\B{D} = \{(\x_i, y_i)\}_{i=1}^{n}$ and $\Z = (\z_i)_{i=1}^{n} \in \bR^{n \times p}$ as the non-sensitive and sensitive data, respectively.
We will reference the overview Figure~\ref{figure.MPC} throughout the description of various components of the protocols.
A concise description of all three protocols is then given right before \secref{sec:icml:tc}.

\paragraph{Fairness considerations.}
In this chapter we focus on the p\%-rule from eq.~\eqref{eq:ppercent_first} as the fairness criterion $\F$, which we restate here as
\begin{equation}\label{eq:p_percent}
\frac{\prob(\hY = 1 \given Z=z)}{\prob(\hY = 1 \given Z=z')}
\ge \frac{p}{100} \fora z,z' \in \cZ \eqp
\end{equation}
A similar MPC approach could also be used for other observational group matching criteria mentioned in Table~\ref{tab:criteria}, which have been addressed with efficient standard (non-private) methods.

\begin{figure}[t!]
  \centerline{\includegraphics[width=\textwidth]{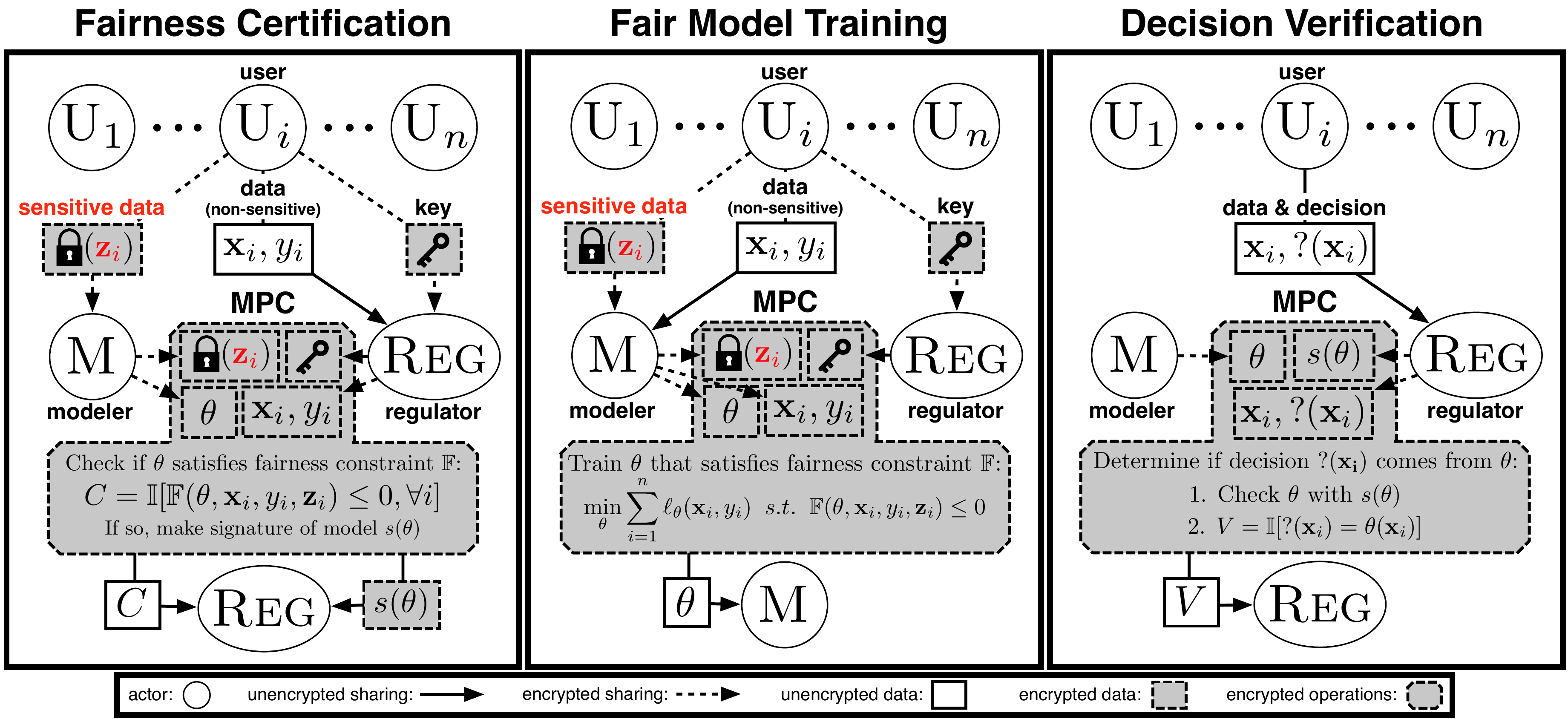}}
  \caption[Protocols for fairness certification, fair model training, and decision verification]{Our setup for \emph{Fairness certification} (\emph{Left}), \emph{Fair model training} (\emph{Center}), and \emph{Decision verification} (\emph{Right}).}
  \label{figure.MPC}
\end{figure}

\paragraph{Fairness certification.}
Given a notion of fairness $\F$, the modeler $\dc$ would like to work with the regulator $\reg$ to obtain a certificate that model $f_{\btheta}$ is fair.
To do so, we propose that users send their non-sensitive data $\B{D}$ to $\reg$;
and send \emph{encrypted} versions of their sensitive data $\Z$ to both $\dc$ and $\reg$.
Neither $\dc$ nor $\reg$ can read the sensitive data.
However, we can design a secure protocol between $\dc$ and $\reg$ (described in \secref{sec:icml:mpc}) to certify if the model is fair.
This setup is shown in Figure~\ref{figure.MPC} (\emph{Left}).

While both~$\reg$ and~$\dc$ learn the outcome of the certification, we require the following \emph{privacy constraints}:
(C1) \emph{privacy of sensitive user data}: no one other than $\user_i$ ever learns $\z_i$ in the clear,
(C2) \emph{model secrecy}: only $\dc$ learns $f_{\btheta}$ in the clear, and
(C3) \emph{minimal disclosure of $\B{D}$ to} $\reg$: only $\reg$ learns $\B{D}$ in the clear.

\paragraph{Fair model training.}
How can a modeler $\dc$ learn a fair model without access to users' sensitive data $\Z$? We propose to solve this by having users send their non-sensitive data $\B{D}$ to $\dc$ and to distribute encryptions of their sensitive data to $\dc$ and $\reg$ as in certification.
We shall describe a secure MPC protocol between $\dc$ and $\reg$ to train a fair model $f_{\btheta}$ privately. This setup is shown in Figure~\ref{figure.MPC} (\emph{Center}).

\emph{Privacy constraints}:
(C1) privacy of sensitive user data,
(C2) model secrecy, and
(C3) minimal disclosure of $\B{D}$ to $\dc$.

\paragraph{Decision verification.}
Assume that a malicious $\dc$ has had model~$f_{\btheta}$ successfully certified by $\reg$ as above.
It then swaps $f_{\btheta}$ for another potentially unfair model $f_{\btheta'}$ in the real world.
When a user receives a decision $\hy$, e.g., her mortgage is denied, she can then challenge that decision by asking~$\reg$ for a verification $V$.
The verification involves $\dc$ and $\reg$, and consists of verifying that $f_{\btheta'}(\x) = f_{\btheta}(\x)$, where $\x$ is the user'{}s non-sensitive data.
This ensures that the user would have been subject to the same result with the certified model $f_{\btheta}$, even if $f_{\btheta'} \neq f_{\btheta}$ and $f_{\btheta'}$ is not fair.
Hence, while there is no simple technical way to prevent a malicious $\dc$ from deploying an unfair model, it will get caught if a user challenges a decision that would differ under~$f_{\btheta}$. This setup is shown in Figure~\ref{figure.MPC} (\emph{Right}).

\emph{Privacy constraint}: While $\reg$ and the user learn the outcome of the verification, we require
(C1) privacy of sensitive user data, and
(C2) model secrecy.

\paragraph{Design choices.}
We include a regulator party in our setup for several reasons. Given fair learning is of most benefit to vulnerable individuals, we do not wish to deter adoption with high individual burdens.
While MPC could be carried out without the involvement of a regulator, using all users as parties, this comes at a greater computational cost.
With current methods, taking that approach would be unrealistic given the size of the user-base in many domains of concern, and would furthermore require all users to be online simultaneously.
Introducing a regulator removes these barriers and leaves users' computational burden at a minimum level, with envisaged applications practical with only their web browsers.

In cases where users are uncomfortable sharing $\B{D}$ with either $\reg$ or $\dc$, it is trivial to extend all three tasks such that all of $\x_i, y_i,\z_i$ remain private throughout, with the computation cost increasing only by a factor of 2.
This extension would sometimes be desirable as it restricts the view of $\dc$ to the final model, prohibiting inferences about $\Z$ when $\B{D}$ is known.
However, this setup hinders exploratory data analysis by the modeler which might promote robust model-building, and, in the case of verification, validation by the regulator that user-provided data is correct.

\section{Our solution}
\label{sec:icml:mpc}

Our proposed solution to these three problems is to use secure Multi-Party Computation (MPC).
Before we describe how it can be applied to fair learning, we first present the basic principles of MPC, as well as its limitations particularly in the context of machine learning applications.

\paragraph{MPC for machine learning.}
Multi-party computation protocols allow two parties~$P_1$ and~$P_2$ holding secret values~$x_1$ and~$x_2$ to evaluate an agreed-upon function $f$, via $y = f(x_1, x_2)$ in a way in which the parties (either both or one of them) learn \emph{only}~$y$.
For example, if $f(x_1, x_2) = \B{1}[x_1 < x_2]$, then the parties would learn which of their values is bigger, but nothing else.
Here, the indicator $\B{1}[\cdot]$ is $1$ if its argument is true and $0$ otherwise.
This corresponds to the well-known \emph{Yao'{}s millionaires problem}: two millionaires want to conclude who is richer without disclosing their wealth to each other.
The problem was introduced by Andrew Yao in 1982, and kicked off the area of multi-party computation in cryptography.

In our setting---instead of a simple comparison as in the millionaires problem---$f$ will be either
(i) a procedure to check the fairness of a model and certify it,
(ii) a machine learning training procedure with fairness constraints, or
(iii) a model evaluation to verify a decision.
The two parties involved in our computation are the modeler $\dc$ and the regulator $\reg$.
The inputs depend on the case (see Figure~\ref{figure.MPC}).

As generic solutions do not yet scale to real-world data analysis tasks, one typically has to tailor custom protocols to the desired functionality.
This approach has been followed successfully for a variety of machine learning tasks such as logistic and linear regression \citep{nikolaenko_privacy-preserving_2013, gascon_privacy-preserving_2017, mohassel2017secureml}, neural network training \citep{mohassel2017secureml} and evaluation \citep{gazelle, minionn}, matrix factorization \citep{nikolaenko_matrix-factorisation}, and principal component analysis \citep{pca}.
In the next section we review challenges beyond scalability issues that arise when implementing machine learning algorithms in MPC.

\paragraph{Challenges in multi-party machine learning.}
MPC protocols can be classified into two groups depending on whether the target function is represented as either a Boolean or arithmetic circuit.
All protocols proceed by having the parties jointly evaluate the circuit, processing it gate by gate while keeping intermediate values hidden from both parties by means of a secret sharing scheme.
While representing functions as circuits can be done without losing expressiveness, it means certain operations are impractical.
In particular, algorithms that execute different branches depending on the input data will explode in size when implemented as circuits, and in some cases lose their run time guarantees (e.g., consider binary search).

Crucially, this applies to \emph{floating-point arithmetic}.
While this is work in progress, state-of-the-art MPC floating-point arithmetic implementations take more than~$15$ milliseconds to multiply two $64$ bit numbers \citep[Table~4]{DDKSSZ15}, which is prohibitive for our applications.
Hence, machine learning MPC protocols are limited to \emph{fixed-point} arithmetic.
Overcoming this limitation is a key challenge for the field.
Another necessity for the feasibility of MPC is to approximate non-linear functions such as the sigmoid, ideally by (piecewise) linear functions.

\paragraph{Input sharing.}
To implement the functionality from Figure~\ref{figure.MPC}, we first need a secure procedure for the users to \emph{secret share} a sensitive value, for example her race, with the modeler~$\dc$ and the regulator~$\reg$.
We use \emph{additive secret sharing}.
A value~$z$ is represented in a finite domain~$\bZ_q$---we use~$q=2^{64}$.
To share~$z$, the user samples a value~$r$ from~$\bZ_q$ uniformly at random, and sends $z - r$ to~$\dc$ and~$r$ to~$\reg$.
While $z$ can be reconstructed (and subsequently operated on) inside the MPC computation by means of a simple addition, each share on its own does not reveal anything about~$z$ (other than that it is in~$\bZ_q$).
One can think of arithmetic sharing as a ``distributed one-time pad''.

In Figure~\ref{figure.MPC}, we now reinterpret the key held by $\reg$ and the encrypted $z$ by $\dc$, as their corresponding shares of the sensitive attributes and denote them by $\share{z}{1}$ and $\share{z}{2}$ respectively.
The idea of privately outsourcing computation to two non-colluding parties in this way is recurrent in MPC, and often referred to as the two-server model \citep{mohassel2017secureml, gascon_privacy-preserving_2017, nikolaenko_privacy-preserving_2013, pca}.

\paragraph{Signing and checking a model.}
We will see that \emph{certification} and \emph{verification} partly correspond to sub-procedures of the \emph{fair training} task:
during training we check the fairness constraint $\F$, and repeatedly evaluate partial models on the training dataset (using gradient descent).
Hence, \emph{certification} and \emph{verification} do not add technical difficulties over training, which is described in detail in \secref{sec:icml:tc}.
However, for verification, we still need to ``sign'' the model, i.e., $\reg$ should obtain a signature $s(\btheta)$ as a result of model certification, see Figure~\ref{figure.MPC} (\emph{Left}).
This signature is used to check in the verification phase, whether a given model $\btheta'$ from $\dc$ satisfies $s(\btheta') = s(\btheta)$ for a certified fair model $\btheta$ (in which case $\btheta = \btheta'$
with high probability).
Moreover, we need to preserve the secrecy of the model, i.e., $\reg$ should not be able to recover $\btheta$ from $s(\btheta)$.
These properties, given that the space of models is large, calls for a cryptographic hash function, such as SHA-256.

Additionally, in our functionality, the hash of $\btheta$ should be computed inside MPC, to hide $\btheta$ from $\reg$.
Fortunately, cryptographic hashes such as SHA-256 are a common benchmark functionality in MPC, and their execution is highly optimized.
More concretely, the overhead of computing $s(\btheta)$, which needs to be done both for certification and verification is of the order of fractions of a second \citep[Figure~14]{sha-mpc}.
While cryptographic hash functions have various applications in MPC, we believe the application to machine learning model certification is novel.

Ultimately, certification is implemented in MPC as a check that $\btheta$ satisfies the criterion $\F$, followed by the computation of $s(\btheta)$.
The regulator $\reg$ keeps the signature $s(\btheta)$ of the fair model parameters for later verification.
On the other hand, for verification, the MPC protocol first computes the signature of the model provided by $\dc$, and then proceeds with a prediction as long as the computed signature matches the one obtained by $\reg$ in the verification phase.
An alternative solution is possible based on symmetric encryption under a shared key, as highly efficient MPC implementations of block ciphers such as AES are available \citep{KellerORSSV17}.

\paragraph{Fair training.}
To realize the \emph{fair training} functionality, we closely follow the techniques recently introduced by \citet{mohassel2017secureml}.
Specifically, we extend their custom MPC protocol for logistic regression to additionally handle linear constraints.
This extension may be of independent interest and has applications for privacy-preserving machine learning beyond fairness.
The concrete technical difficulties in achieving this goal, and how to overcome them, are presented in the next section.
The formal privacy guarantees of our fair training protocol are stated in the following proposition.

\begin{proposition}
For non-colluding~$\dc$ and~$\reg$, our protocol implements the fair model training functionality satisfying constraints (C1)-(C3) in \secref{sec:icml:setting} in the presence of a semi-honest adversary.
\end{proposition}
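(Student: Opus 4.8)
The plan is to follow the standard simulation-based paradigm for security against semi-honest adversaries. Recall that in this paradigm one fixes an ideal functionality $\mathcal{F}$ that, on inputs $(\B{D}, \share{z}{2})$ from $\dc$ and $\share{z}{1}$ from $\reg$, returns the fairness-constrained model $\btheta$ to $\dc$ and nothing in the clear to $\reg$; one then shows that for each party the adversary may corrupt, its \emph{view} in a real execution can be reproduced by a polynomial-time \emph{simulator} given only that party's input and prescribed output. Because $\dc$ and $\reg$ are assumed non-colluding, it suffices to treat at most one corrupted party at a time, so I would construct two simulators $\mathrm{Sim}_{\dc}$ and $\mathrm{Sim}_{\reg}$ and argue that each produces a transcript computationally indistinguishable from the corresponding real view.

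First I would invoke the sequential modular composition theorem for semi-honest protocols, which lets us analyze the protocol in a hybrid world where each cryptographic sub-routine is replaced by a call to its ideal functionality. Our protocol decomposes into three stages: (i) additive secret sharing of $\Z$ over $\bZ_q$; (ii) the constrained gradient-descent training, obtained by augmenting the SecureML logistic-regression protocol of \citet{mohassel2017secureml} with the linear fairness constraint induced by the p\%-rule eq.~\eqref{eq:p_percent}; and (iii) reconstruction of $\btheta$ toward $\dc$ only. For stage (i) I would use that additive sharing is \emph{perfectly} hiding: each of $\share{z}{1}, \share{z}{2}$ is, in isolation, uniform on $\bZ_q$ and independent of $z$, so the simulator samples it uniformly at random, immediately yielding constraint (C1). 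For stage (iii), since only $\dc$ receives the opened model, $\mathrm{Sim}_{\dc}$ can hand $\btheta$ (its prescribed output) to the corrupted party while $\mathrm{Sim}_{\reg}$ never opens it, giving constraint (C2); constraint (C3) holds because $\B{D}$ enters only as $\dc$'s private input and is never shared with $\reg$.

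The substance of the argument is stage (ii). Here I would assume—as stated in the excerpt—that the base SecureML building blocks (secure multiplication with fixed-point truncation, and the piecewise-linear approximation of the sigmoid) are secure in the two-server semi-honest model, so in the hybrid world each gradient step is an ideal-functionality call on secret-shared values returning fresh shares. The new element is enforcement of the linear constraint: I would verify that every additional operation it introduces—multiplications by public constants, share additions, and the secure comparison/projection that keeps $\btheta$ inside the feasible p\%-region—takes shares to shares and never reveals an intermediate value in the clear to either party. Consequently each simulator can replace all such intermediate messages by uniformly random shares of the correct domain, and indistinguishability follows from the hiding of the sharing scheme together with the assumed simulators of the SecureML sub-protocols, composed via the hybrid lemma.

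I expect the main obstacle to be exactly this constraint-handling step: one must check that augmenting the optimizer with the fairness constraint does not open any quantity depending jointly on $\Z$ and $\btheta$ in the clear, and in particular that the probabilistic truncation inherited from the fixed-point multiplication of \citet{mohassel2017secureml} only introduces a negligible statistical deviation—rather than a leak—when applied to the extra constraint-related products. A secondary care point is that the model reconstructed to $\dc$ must be shown to be the \emph{only} information about $\Z$ a corrupted $\dc$ can extract, i.e. that observing $\btheta$ together with the uniform share $\share{z}{2}$ and $\B{D}$ reveals nothing beyond what the functionality already allows; this is precisely what the simulation establishes, and it is also why the proposition claims nothing about statistical inference of $\Z$ from $\btheta$ and $\B{D}$, consistent with the earlier remark distinguishing this guarantee from differential privacy.
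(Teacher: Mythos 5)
Your proposal is correct and follows essentially the same route as the paper, which only states that the result follows by ``a standard simulation argument combining several MPC primitives'' leveraging the semi-honest security of arithmetic sharing, garbled circuits, and oblivious transfer; you have simply spelled out that argument in full (two simulators under non-collusion, hybrid composition over the SecureML sub-protocols, perfect hiding of additive shares for (C1), one-sided reconstruction for (C2)--(C3), and the check that the added fairness-constraint operations map shares to shares). The only detail the paper mentions that you omit is that the argument is stated in the random oracle model, but this does not change the structure of the proof.
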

The proof holds in the random oracle model, as a standard simulation argument combining several MPC primitives \citep{mohassel2017secureml,gascon_privacy-preserving_2017}.
It leverages security of arithmetic sharing, garbled circuits, and oblivious transfer protocols in the semi-honest model \citep{DBLP:conf/stoc/GoldreichMW87}.
Before going into details about the specific technical challenges of fair model training, we now provide a short, high-level introduction to MPC, including a description of the relevant techniques from \citet{mohassel2017secureml} used in our protocol.

\paragraph{Secret sharing.}
A secret sharing scheme allows one to split a value $x$ (the secret) among two parties, so that no party has unilateral access to $x$.
In our setting, a user Alice will secret share a sensitive value, for example her race, among a modeler~$\dc$ and a regulator~$\reg$. Among prominent secret sharing schemes are \emph{Shamir secret sharing}, \emph{xor sharing}, \emph{Yao sharing}, or \emph{arithmetic multiplicative/additive sharing}.
In our protocols we alternate between Yao sharing and additive sharing for efficiency. We have already describe the latter. To recap: the value~$x$ is represented in a finite domain~$\bZ_q$.
To share her race, Alice samples a value $r$ from~$\bZ_q$ uniformly at random, and sends $x - r$ to~$\dc$ and~$r$ to $\reg$.
We call each of $x-r$ and~$r$ a \emph{share}, and denote them as~$\share{x}{1}$ and~$\share{x}{2}$.
Now~$\dc$ and~$\reg$ can recover~$x$ by adding their shares, but each share on its own does not reveal anything about the value of~$x$ (other than that it is smaller than~$q$). The case where $q=2$ corresponds to xor sharing.

\paragraph{Function evaluation.}
MPC can be classified in two groups depending on how~$f$ is represented: either as a Boolean or arithmetic circuit.
All protocols proceed by having the parties jointly evaluate the circuit, processing it gate by gate.
For each gate~$g$ for which the value for the input wires~$x, y$ is shared among the parties, the parties run a subprotocol to produce the value $z = g(x,y)$ of the output wire, again shared, without revealing any information in the process.
In the setting where we use arithmetic additive sharing, the two parties $\dc$ and $\reg$ hold shares, $\share{x}{1}$,$\share{y}{1}$ and $\share{x}{2}$,$\share{y}{2}$, respectively.
In this case, $f$ is represented as an arithmetic circuit, and hence each gate $g$ in the circuit is either an addition or a multiplication.
Note that if $g$ is an addition gate, then a sharing of $z = g(x, y)$ can be obtained by having each party simply compute locally, i.e., without any interaction, $\share{z}{i} = \share{x}{i} + \share{y}{i}$, for $i\in\{1, 2\}$.
If $g$ is a multiplication, the subprotocol to compute shares of $z$ is much more costly.
Fortunately, it can be divided into an offline and an online phase.

\paragraph{The preprocessing model in MPC.}
In this model, two parties $P_1, P_2$ engage in an offline phase, which is data independent,
and compute (and store) \emph{shared multiplication triples} of the form $(a, b, c)$, with $c = ab$.
Here, $a,b \in \bZ_q$ are drawn uniformly at random, and each value $a, b, c$ is shared
among the parties as explained above. In the online phase, a multiplication gate
$z = \mul(x, y)$ on shared values $x, y$ can be evaluated as follows:
(1) each $P_i$ sets $\share{e}{i} = \share{x}{i} - \share{a}{i}$ and
$\share{f}{i} = \share{y}{i} - \share{b}{i}$,
(2) the parties exchange their shares of $e$ and $f$ and reconstruct these values locally by simply adding the shares, and
(3) each $P_i$ computes $\share{z}{i} = ef + f\share{a}{i} + e\share{b}{i} + \share{c}{i}$.
The correctness of this protocol can be checked by multiplying out all terms. Privacy relies on the uniform randomness of
$a, b$, and hence $\share{e}{i}$ and $\share{f}{i}$ completely
mask the values of $\share{x}{i}$ and $\share{y}{i}$, respectively.
For a formal proof see \citet{demmler_aby_2015}.

Hence, for each multiplication in the function to be evaluated, the
parties need to jointly generate a multiplication triple in advance.
For computations with many multiplications (like in our case) this can be a costly
process. However, this constraint is easy to accommodate in our architecture
for private fair model training, as $\dc$ and $\reg$ can run the offline phase
once ``overnight''.
Arithmetic multiplication via precomputed triples is a common technique exploited by
popular MPC frameworks \citep{demmler_aby_2015, damgard_multiparty_2012}.
In this setting, a number of protocols for triple generation (which we did not describe)
are available \citep{DBLP:conf/eurocrypt/KellerPR18} and under continuous improvement.
These protocols are often based on either Oblivious Transfer or Homomorphic Encryption.

\paragraph{The two-server model for multi-party learning.}
Due to a sequence of theoretical breakthroughs mentioned above as well as the general speedup enabled by faster hardware,
in the last three decades MPC has gone from being a mathematical curiosity
to a technology of practical interest with commercial applications.
There exist a number of openly available implementations \citep{demmler_aby_2015,zahur2015obliv} for generic MPC protocols such as the
ones based on arithmetic sharing \citep{damgard_multiparty_2012},
garbled circuits \citep{yao_how_1986}, or GMW \citep{DBLP:conf/stoc/GoldreichMW87}.
These protocols have different trade-offs in terms
of the number of parties they support, network requirements,
and scalability for different kinds of computations.
In our work, we focus on the $2$-party case, as the MPC computation is done by
$\dc$ and $\reg$.

Since generic protocols do not yet scale to input sizes typically encountered in machine learning applications like ours,
we extend the SGD protocol from \citet{mohassel2017secureml}, in which the following useful accelerating techniques are presented.
\begin{itemize}
\item Efficient rescaling: As our arithmetic shares represent fixed-point numbers, we need to rescale by the precision $p$ after every multiplication. This involves dividing by $2^p$, an expensive operation to do in MPC, and in particular in arithmetic sharing. \citet{mohassel2017secureml} show an elegant solution to this problem: the parties can rescale locally by dropping $p$ bits of their shares. It is not hard to see that this might produce the wrong result. However, the parameters of the arithmetic secret sharing scheme can be set such that with a tunable arbitrarily large probability the error is at most $\pm 1$. This trick can be used for any division by a power of two.
\item Alternating sharing types: As already pointed out in previous work \citep{demmler_aby_2015}, alternating between secret sharing schemes can provide significant acceleration for some applications. Intuitively, arithmetic operations are fast in arithmetic shares, while comparisons are fast in schemes that represent functions as Boolean circuits. Examples of the latter are the GMW protocol and Yao'{}s garbled circuits. In our implementation, we follow this recipe and implement matrix-vector multiplication using arithmetic sharing, while for evaluating our variant of sigmoid, we rely on the protocol from \citet{mohassel2017secureml} implemented with garbled circuits using the Obliv-C framework \citep{zahur2015obliv}.
\item Matrix multiplication triples: Another observation is that the idea described above for preprocessing multiplications over arithmetic shares can be reinterpreted at the level of matrices. This results in a faster online and offline phases, see \citet{mohassel2017secureml} for details.
\end{itemize}

\paragraph{How to prove that a protocol is secure.}
We did not provide a formal definition of security so far and instead referred the reader to \citet{mohassel2017secureml}. In MPC, privacy in the case of semi-honest adversaries is argued in the simulation paradigm, see \citet{goldreichbook} or \citet{lindell_how_2016} for formal definitions and detailed proofs. Intuitively, in this paradigm one proves that every inference that a party---in our case either $\reg$ or $\dc$---could draw from observing the execution trace of the protocol could also be drawn from the output of the execution and the party'{}s input. This is done by proving the existence of a {\em simulator} that can produce an execution trace that is indistinguishable from the actual execution trace of the protocol. A crucial point is that the simulator only has access to the input and output of the party being simulated.

\paragraph{Step-by-step description of protocols.}
Before moving on to specific technical implementation challenges of the building blocks above, we now put the all together into a concise description of the protocols for fairness certification, fair model training, and decision verification.

\begin{description}[wide,labelindent=0pt]
  \item[Fairness certification] \phantom{}
  \begin{enumerate}
  	\item Users send their non-sensitive data to $\reg$ in the clear and send one share of their sensitive data to $\dc$ and $\reg$ respectively.
  	\item $\dc$ and $\reg$ engage in a secure two-party computation to which $\dc$ provides its shares of sensitive attributes $\B{\share{Z}{1}}$ as well as model parameters $\btheta$ and $\reg$ provides its shares of sensitive attributes $\B{\share{Z}{2}}$ as well as all non-sensitive data $\X, \B{y}$ as inputs.
  	\item Within the secure computation, the agreed upon protocol reconstructs $\Z$ and evaluates the (violation of) the fairness criterion $\F(\btheta, \X, \B{y}, \Z)$.
  	\item If the model $\btheta$ satisfies the fairness criterion $\F$ on the provided dataset, still within the joint computation a hash of the model parameters $s(\btheta)$ is computed.
  	\item As a result of the joint computation, $\reg$ receives and indicator whether the fairness criterion is satisfied and if so, the hash of the provided model parameters. $\dc$ receives no output from the computation.
  \end{enumerate}
  \item[Fair model training] \phantom{}
  \begin{enumerate}
  	\item Users send their non-sensitive data to $\dc$ in the clear and send one share of their sensitive data to $\dc$ and $\reg$ respectively.
  	\item $\dc$ and $\reg$ engage in a secure two-party computation to which $\dc$ provides its shares of sensitive attributes $\B{\share{Z}{1}}$ as well as all non-sensitive data $\X, \B{y}$ and $\reg$ provides its shares of sensitive attributes $\B{\share{Z}{2}}$ as inputs.
  	\item Within the secure computation, the agreed upon protocol reconstructs $\Z$ and runs through a fixed number of stochastic gradient descent epochs to minimize the following constrained optimization problem
  	\begin{equation*}
  	  \min_{\btheta} \sum_{i=1}^n \ell_{\btheta} (\x_i, y_i) \quad \text{s.t.} \quad \F(\btheta, \x_i, y_i, \z_i) \le 0 \eqp
  	\end{equation*}
  	\item $\dc$ receives the resulting model parameters $\btheta$ of this optimization as output of the joint computation.
  	$\reg$ receives no output from the computation.
  \end{enumerate}
  \item[Decision verification] \phantom{}
  \begin{enumerate}
  	\item A specific user who has used the service of $\dc$ with inputs $\x_i$ has received $y$ as an outcome from $\dc$.
  	$\dc$ may have arrived at decision $y$ by some unknown model denoted by ?, i.e., $y = ?(\x_i)$.
  	\item The user sends $\x_i, y$ to $\reg$.
  	\item $\dc$ and $\reg$ engage in a secure two-party computation to which $\dc$ provides model parameters $\btheta$ and $\reg$ provides $\x_i, y$ as well as the signature $s$ of model parameters that have previously been certified as fair.
  	\item Within the secure computation, the agreed upon protocol computes the hash of the provided model parameters $\btheta$ and checks whether it matches the signature $s$. If so, it verifies that applying the provided model to the provided inputs $\x_i$ results in the provided outcome $y$.
  	\item $\reg$ receives indicators for whether both checks passed the verification as output of the joint computation.
  	$\dc$ receives no output from the computation.
  \end{enumerate}
\end{description}

\section{Technical challenges of fair training}
\label{sec:icml:tc}

We now present our custom tailored approaches for learning and evaluating fair models with encrypted sensitive attributes.
We do this via the following contributions:
\begin{itemize}
\item We argue that current optimization techniques for fair learning algorithms are unstable for fixed-point data, which is required by our MPC techniques.
\item We describe optimization schemes that are well-suited for learning over fixed-point number representations.
\item We combine tricks to approximate non-linear functions with specialized operations to make fixed-point arithmetic feasible and avoid over- and under-flows.
\end{itemize}

The optimization problem at hand is to learn a classifier $\btheta$ subject to a (convex relaxation of a) fairness constraint~$\F(\btheta)$:
\begin{equation}\label{eq:optim}
\min_\btheta\;\; \sum_{i=1}^{n}\ell_{\btheta}(\x_i, y_i)\qquad
\text{subject to}\;\; \F(\btheta) \le \B{0} \eqc
\end{equation}
where $\ell_{\btheta}$ is a loss term (the logistic loss in our applications).
We collect user data from~$\user_1, \ldots, \user_n$ into matrices~$\X \in \bR^{n\times d}, \Z\in\{0,1\}^{n\times p}$, and a label vector~$\y \in \{0,1\}^n$.

We will focus on a convex approximation of the $p\%$-rule, see eq.~\eqref{eq:p_percent}, for linear classifiers following the derivation of \citet{Zafar2017}.
To this end, we measure unfairness as the covariance between $\Z$ and the signed distance of the feature vectors $\X$ to the decision boundary implied by a logistic  model with parameters $\btheta$, i.e., these distances are given by $\X \btheta$.
We then compute the covariance as
\begin{equation*}
 \cov(\z, \x^{\top} \btheta) = \E[ (\z - \bar{\z})\, (\x^{\top} \btheta - \bar{\x}^{\top} \btheta) ] = \E[(\z - \bar{\z}) \x^{\top} \btheta] - \usub{= 0}{\E[(\z - \bar{\z})]} \bar{\x}^{\top} \btheta = \E[(\z - \bar{\z}) \x^{\top} \btheta] \eqc
\end{equation*}
where $\bar{\z}$ and $\bar{\x}$ are the means of all inputs $\z_i$ and $\x_i$ respectively.
By defining $\hat{\Z}$ to be the matrix of all $\hat{\z}_i : = \z_i - \bar{\z}$, we can now write the approximate fairness constraint as
\begin{equation}\label{eq:ppercent}
\F(\btheta) = \frac{1}{n} |\hat{\Z}^{\top} \X \btheta| - \const \eqc
\end{equation}
where $\const \in \bR^d$ is a constant vector with non-negative entries corresponding to the tightness of the fairness constraint.
Note that the entries of $\const$ take the role of slack variables---the smaller the entries of $\const$, the tighter the constraint.
Unlike the original $p\%$-rule, eq.~\eqref{eq:ppercent} is convex in $\btheta$.
The correspondence to the $p\%$-rule can be understood as follows:
Because of $\prob(y = 1 \given \z) = \prob(\sign(\x^{\top} \btheta) = 1 \given \z)$, whenever the $p\%$-rule is satisfied for $p=100$, we have that $\prob(\x^{\top} \btheta > 0 \given \z) = \prob(\x^{\top} \btheta > 0 \given \z')$ for all $\z, \z'$.
In that situation the covariance will also be approximately zero at least asymptotically in the large sample limit.
Hence, a sensible relaxation of the 100\%-rule, which corresponds to the covariance being exactly zero, we can allow the covariance to deviate from zero in a controlled fashion.
In this sense, the slack or tightness variable $\const$ roughly corresponds to the $p$ value in the $p\%$-rule.
With $\A := \nicefrac{1}{n}\, \hat{\Z}^{\top} \X$, our final \emph{$p\%$ constraint} thus reads~$\F(\btheta) = |\A \btheta | - \const$, where the absolute value is taken element-wise.

\paragraph{Current techniques.}
To solve the optimization problem in eq.~\eqref{eq:optim}, with the fairness function~$\F{}$ in eq.~\eqref{eq:ppercent}, \citet{Zafar2017} use Sequential Least Squares Programming (SLSQP).
This technique works by reformulating eq.~\eqref{eq:optim} as a sequence of Quadratic Programs (QPs).
After solving each QP, their algorithm uses the Han-Powell method, a quasi-Newton method that iteratively approximates the Hessian $\B{H}$ of the objective function via the update
\begin{equation*}
\B{H}_{t+1} = \B{H}_t + \frac{\B{l}_{\Delta} \B{l}_{\Delta}^\top}{\btheta_{\Delta}^\top \B{l}_{\Delta}} - \frac{\B{H}_t \btheta_{\Delta} \btheta_{\Delta}^\top \B{H}_t}{\btheta_{\Delta}^\top \B{H}_t \btheta_{\Delta}} \eqc
\end{equation*}
where
\begin{equation*}
\B{l}_{\Delta} = \B{l}(\btheta_{t+1},\blambda_{t+1}) - \B{l}(\btheta_{t},\blambda_{t}) \qtxtq{and} \btheta_{\Delta} = \btheta_{t+1} - \btheta_t \eqp
\end{equation*}
The Lagrangian is given by
\begin{equation*}
\B{l}(\btheta_t,\blambda_t) = \sum_{i=1}^n \ell_{\btheta_t}(\x_i,y_i) + \blambda_t^\top \F(\btheta_t) \eqp
\end{equation*}

There are two issues with this approach from an MPC perspective.
First, solving a sequence of QPs is prohibitively time-consuming in MPC.
Second, while the above Han-Powell update performs well on floating-point data, the two divisions by non-constant, non-integer numbers easily underflow or overflow with fixed-point numbers.

\paragraph{Fixed-point-friendly optimization techniques.}
Instead, to solve the optimization problem in eq.~\eqref{eq:optim}, we perform stochastic gradient descent and experiment with the following techniques to incorporate the constraints.

\begin{itemize}
\item \emph{Lagrangian multipliers.}
Here we minimize
\begin{equation*}
\mathcal{L} := \frac{1}{n} \sum_{i=1}^{n} \ell_{\btheta}^{\mathrm{BCE}}(\x_i, y_i) + \blambda^{\top} \max\{ \F(\btheta), \B{0} \} \eqc
\end{equation*}
using stochastic gradient descent, i.e., alternating updates
\begin{equation*}
\btheta \gets \btheta - \eta_{\btheta} \nabla_{\btheta} \mathcal{L} \qtxtq{and} \blambda \gets \max\{\blambda + \eta_{\blambda} \nabla_{\blambda} \mathcal{L}, \B{0} \}\eqc
\end{equation*}
where $\eta_{\btheta}, \eta_{\blambda}$ are the learning rates.

\item \emph{Projected gradient descent.}
For this method, consider specifically the~$p\%$-rule based notion~$\F(\btheta) = |\A \btheta | - \const$. We first define~$\hat{\A}$ as the matrix consisting of the rows of~$\A$ for which~$\F(\btheta) > \B{0}$, i.e., where the constraint is active.
In each step, we project the computed gradient of the binary-cross-entropy loss~$\mathcal{L}^{\mathrm{BCE}}$---either of a single example or averaged over a minibatch---back into the constraint set, i.e.,
\begin{equation}\label{eq:projected}
\btheta \gets \btheta - \eta_{\btheta} (\mathrm{Id}_d - \hat{\A}^{\top} (\hat{\A} \hat{\A}^{\top})^{-1} \hat{\A}) \nabla_{\btheta} \ell_{\btheta}^{\mathrm{BCE}} \eqp
\end{equation}

\item \emph{Interior point log barrier \citep{boydsbook}.}
We can approximate eq.~\eqref{eq:optim} for the~$p\%$-rule constraint~$\F(\btheta) = |\A \btheta | - \const$ by:
\begin{equation*}
\mathrm{minimize}\; \sum_{i=1}^{n} \ell_{\btheta}^{\mathrm{BCE}}(\x_i, y_i)
- \frac{1}{t} \sum_{j=1}^{p}
\bigl(\log(\avec_j^{\top} \btheta + c_j) + \log(-\avec_j^{\top} \btheta + c_j)\bigr)\eqc
\end{equation*}
where~$\avec_j$ is the~$j$th row of~$\A$.
The parameter~$t$ trades off the approximation of the true objective and the smoothness of the objective function.
Throughout training,~$t$ is increased, allowing the solution to move closer to the boundary.
As the gradient of the objective has a simple closed form representation, we can perform regular (stochastic) gradient descent.
\end{itemize}

After extensive experiments, described in detail in \secref{sec:icml:experiments}, we found the Lagrangian multipliers technique to work best.
It yields high accuracies, reliably stays within the constraints and is robust to hyperparameter changes such as learning rates or the batch size.
For a proof of concept, in \secref{sec:icml:experiments} we focus on the $p\%$-rule, i.e.,~eq.~\eqref{eq:ppercent}.
We note that the gradients for equalized odds or equal opportunity criteria take a similarly simple form, i.e., balancing the true positive or true negative rates is simple to implement for the Lagrangian multiplier technique, but harder for projected gradient descent.
However, these fairness notions are more expensive as we have to compute~$\Z^{\top} \X$ for each update step, instead of pre-computing it once at the beginning of training, see Algorithm~\ref{algo:lagrange}.
We could speed up the computation again by evaluating the constraint only on the current minibatch for each update, in which case we risk violating the fairness constraint.

\begin{figure}
\centering
\includegraphics{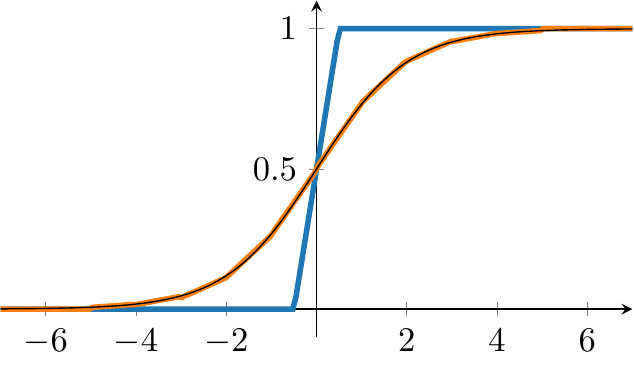}
\caption[Piecewise linear approximations for the non-linear sigmoid function]{Piecewise linear approximations for the non-linear sigmoid function (in black) from \citet{mohassel2017secureml} in blue and from \citet{sigmoid-approx} in orange.}
\label{fig:approximations}
\end{figure}

\paragraph{MPC-friendliness.}
For eq.~\eqref{eq:ppercent}, we can compute the gradient updates in all three methods with elementary linear algebra operations (matrix multiplications) and a single evaluation of the logistic function.
While MPC is well suited for linear operations, most nonlinear functions are prohibitively expensive to evaluate in an MPC framework.
Hence we tried two piecewise linear approximations for~$\sigma(x)$.
The first was recently suggested for machine learning in an MPC context \citep{mohassel2017secureml} and is simply constant~$0$ and~$1$ for $x< -0.5$ and $x>0.5$ respectively, and linear in between.
The second uses the optimal first order Chebychev polynomial on each interval~$[x, x+1]$ for~$x \in \{-5, -4, \ldots, 4\}$, and is constant~$0$ or~$1$ outside of~$[-5,5]$ \citep{sigmoid-approx}.
While it is more accurate, we only report results for the simpler first approximation, as it yielded equal or better results in all our experiments.
Both approximations are shown in Figure~\ref{fig:approximations}.

As the largest number that can be represented in fixed-point format with~$m$ integer and~$m$ fractional bits is roughly~$2^m + 1$, overflow becomes a common problem.
Since we whiten the features~$\X$ column-wise, we need to be careful whenever we add roughly~$2^m$ numbers or more, because we cannot even represent numbers greater than $2^m$.
In particular, the minibatch size has to be smaller than this limit.
For large~$n$, the multiplication~$\Z^{\top} \X$ in the fairness function~$\F$ for the $p\%$-rule is particularly problematic.

Hence, we split both factors into blocks of size $b\times b$ with~$b < 2^m$ and normalize the result of each blocked matrix multiplication by~$b$ before adding up the blocks.
We then multiply the sum by~$\nicefrac{b}{n} > 2^{-m}$.
As long as~$b$, $\nicefrac{b}{n}$ (and thus also~$\nicefrac{n}{b}$) can be represented with sufficient precision, which is the case in all our experiments, this procedure avoids under- and overflow.
Note that we require the sample size~$n$ to be a multiple of~$b$.
In practice, we have to either discard or duplicate part of the data.
Since the latter may introduce bias, we recommend subsampling.
Once we have (an approximation of) $\A \in \bR^{p \times d}$, we resort to normal matrix multiplication, as typically~$p, d \lesssim 100$, see Table~\ref{table.timing}.

Division is prohibitively expensive in MPC. Because there are no simple protocols such as the ones described for addition and multiplication, divisions have to approximated by a sequence of more basic computations that are faster to perform in MPC.
Hence, we set the minibatch size to a power of two, which allows us to use fast bit shifts for divisions when averaging over minibatches.
Unlike general division, bit shifts can be implemented efficiently in MPC.
To exploit the same trick when averaging over/across blocks in the blocked matrix multiplication, we choose~$n$ as the largest possible power of two, see Table~\ref{table.timing}.

\begin{algorithm}[htb]
  \caption{Fair model training with private sensitive values using Lagrangian multipliers for~$\F(\btheta) = \nicefrac{1}{n} |\Z^{\top} X| - \const$ with two parties: the modeler $\dc$ and the regulator $\reg$.}
  \begin{algorithmic}[1]
    \Inputx \textbf{from $\dc$:} $\B{\share{Z}{1}} \in \bZ_q^{n\times p}$
    \Inputx \textbf{from $\reg$:} $\X \in \bZ_q^{n\times d}$, $\B{y} \in \bZ_q^n$, $\B{\share{Z}{2}} \in \bZ_q^{n\times p}$
    \Inputx \textbf{publicly known:} learning rates $\eta_{\btheta}, \eta_{\blambda}$, number of training examples~$n$, minibatch size $2^s$, constraints $\B{c}\in \bZ_q^p$, and number of epochs $N_e$

    \State $\btheta \gets \B{0}$, $\blambda \gets \B{0}$

    \State $\A \gets \textsc{BlockedMultShiftAvg}(\Z^{\top}, \X)$

    \For{$j$ from $1$ to $N_e$}

    \For{$i$ from $1$ to $\nicefrac{n}{2^s}$}

    \State $(\X_{i},\y_{i}) \gets \textsc{SampleMinibatch}(\X, \y)$

    \State $\F \gets |\A \btheta| - c$

    \State $\nabla_{\blambda} \gets \max\{\F, \B{0} \}$

    \State $\sigma \gets \textsc{SigmoidApprox}(\X_i \btheta)$

    \State $\nabla_{\btheta}^{\mathrm{BCE}} \gets \textsc{ShiftDivide}(\X_i^{\top}(\sigma - \y_i), 2^s)$

    \State $\nabla_{\btheta}^{\mathrm{CON}} \gets
      \begin{cases}
      \A^{\top} \blambda, &\text{if } \A > \B{0} \wedge \F > 0 \\
      - \A^{\top} \blambda, &\text{if } \A < \B{0} \wedge \F > 0\\
      \B{0}, &\text{if } \F \le 0
      \end{cases}$

    \State $\btheta \gets \btheta - \eta_{\btheta} (\xi^{\mathrm{BCE}}_j \nabla_{\btheta}^{\mathrm{BCE}} + \xi^{\mathrm{CON}}_j \nabla_{\btheta}^{\mathrm{CON}})$

    \State $\blambda \gets \max\{\blambda + \eta_{\blambda} \nabla_{\blambda}, \B{0}\}$
    \EndFor
    \EndFor
    \State \Return Parameters~$\btheta$
  \end{algorithmic}
  \label{algo:lagrange}
\end{algorithm}

\paragraph{The fair training algorithm.}
Algorithm~\ref{algo:lagrange} describes the computations $\dc$ and $\reg$ have to perform for fair model training using the Lagrangian multiplier technique and the $p\%$-rule from eq.~\eqref{eq:ppercent}.
The hyperparameters introduced in the algorithm, such as $\xi^{\mathrm{BCE}}, \xi^{\mathrm{CON}}$, will be explained in the next section.
We implicitly assume all computations are performed jointly on additively shared secrets by $\dc$ and $\reg$ as described in \secref{sec:icml:mpc}.
This means that $\dc$ and $\reg$ each receive a secret share of the protected attributes~$\Z$.
Following the protocols outlined in \secref{sec:icml:mpc}, they can then jointly evaluate the steps in Algorithm~\ref{algo:lagrange}.
This allows them to operate on the sensitive values within the MPC computation, while preventing unilateral access to them by $\dc$ and $\reg$.
The result of these computations is the same as evaluating the algorithm as described with data in the clear.

\textsc{BlockedMultShiftAvg} stands for the blocked matrix multiplication to avoid overflow for fixed-point numbers described towards the end of \secref{sec:icml:tc}.
Note that it already contains the division by~$n$.
The averaging within the blocked matrix multiplications as well as over the results thereof are done by fast bit shifts instead of slow MPC division circuits.
This is possible, because we chose all parameters such that divisions are always by powers of two.

We found the piecewise linear approximation of the sigmoid function introduced by \citet{mohassel2017secureml}
\begin{equation*}
\textsc{SigmoidApprox}(x) :=
\begin{cases}
0 &\text{if } x \le -\frac{1}{2} \eqc \\
x + \frac{1}{2} &\text{if } -\frac{1}{2} < x < \frac{1}{2} \eqc \\
1 &\text{if } x \ge \frac{1}{2} \eqp
\end{cases}
\end{equation*}
to work well in practice.

\section{Experiments}
\label{sec:icml:experiments}

\begin{table}
\centering
\caption[Dataset sizes and online timing results of MPC certification and training over 10 epochs with batch size~64]{Dataset sizes and online timing results of MPC certification and training over 10 epochs with batch size~64.}
\label{table.timing}

\begin{tabular}{lrrrrr}
\toprule
 & Adult & Bank & COMPAS & German & SQF \\
  \midrule
$n$ training examples & $2^{14}$ & $2^{15}$ & $2^{12}$ & $2^9$ & $2^{16}$
\\
$d$ features & 51 & 62 & 7 & 24 & 23
\\
$p$ sensitive attr.& 1 & 1 & 7 & 1 & 1
\\
certification & 802~ms & 827~ms & 288~ms & 250~ms & 765~ms
\\
training & 43~min & 51~min & 7~min & 1~min & 111~min \\
\bottomrule
\end{tabular}
\end{table}

The root cause for most technical difficulties pointed out in the previous section is the necessity to work with fixed-point numbers and the high computational cost of MPC.
Hence, major concerns are loss of precision and infeasible running times.
In this section, we show how to overcome both doubts and that fair training, certification and verification are feasible for realistic datasets.

\paragraph{Experimental setup and datasets.}
We work with two separate code bases.
Our Python code does not implement MPC, but allows to flexibly switch between floating and fixed-point numbers as well as exact non-linear functions and their approximations.
We used it mostly for validation and empirical guidance in our design choices.
The full MPC protocol is implemented in C++ on top of the Obliv-C garbled circuits framework \citep{zahur2015obliv} and the Absentminded Crypto Kit \citep{liback}. This is done as described in \secref{sec:icml:mpc} for the Lagrangian multiplier technique.
It accurately mirrors the computations performed by the first implementation on encrypted data.
Except for the timing results in Table~\ref{table.timing}, all comparisons with floating-point numbers or non-linearities were done with the versatile Python implementation.

All our experiments use a batch size of~64, a fixed number of epochs scaling inversely with dataset size~$n$ (such that we always perform roughly 15,000 gradient updates), fixed learning rates of~$\eta_{\btheta} = 10^{-4}, \eta_{\blambda} = 0.05$, and an annealing schedule for~$\nicefrac{1}{t}$ in the interior point logarithmic barrier method as described by \citet{boydsbook}.
The weights for the gradients of the regular binary cross entropy loss (BCE) and the loss from the constraint terms (CON) follow the schedules
\begin{equation*}
\xi^{\mathrm{BCE}}_j = \frac{N_e}{N_e + j} \eqc
\qquad
\xi^{\mathrm{CON}}_j = \frac{N_e + 10 j}{N_e} \eqp
\end{equation*}
Weight decay, adaptive learning rate schedules, and momentum neither consistently improved nor impaired training.
Therefore, all reported numbers were achieved with vanilla SGD, for fixed learning rates, and without any regularization.
After extensive testing on all datasets, we converged to a fixed-point representation with 16 bits for the integer and fractional part respectively.
The smaller the number of bits, the faster the MPC implementation and the higher the risk of loss of precision or over- and underflows.
We found~16 bits to be the minimally needed precision for all our experiments to work.

We consider~5 real world datasets, namely the adult (\emph{Adult}), German credit (\emph{German}), and bank market (\emph{Bank}) datasets from the UCI machine learning repository \citep{uci}, the stop, question and frisk 2012 dataset (\emph{SQF}),\footnote{\url{https://perma.cc/6CSM-N7AQ}} and the COMPAS dataset \citep{Angwin2016} (\emph{COMPAS}).
For practical purposes (see \secref{sec:icml:tc}), we subsample~$2^i$ examples from each dataset with the largest possible~$i$, see
Table~\ref{table.timing}.
Moreover, we also run on synthetic data, generated as described by \citet[Section 4.1]{Zafar2017}, as it allows us to control the correlation between the sensitive attributes and the class labels.
It is thus well suited to observe how different optimization techniques handle the fairness accuracy trade-off.
For comparison we use the SLSQP approach described in \secref{sec:icml:tc} as a baseline.
We run all methods for a range of constraint values in~$[10^{-4}, 10^0]$ and a corresponding range for SLSQP.

In the plots in this section, discontinuations of lines indicate failed experiments.
The most common reasons are overflow and underflow for fixed-point numbers, and instability due to exploding gradients.
Plots and analyses for the remaining datasets can be found in Appendix~\ref{chap:icml:appendix}.

\paragraph{Comparing optimization techniques.}
First we evaluate which of the three optimization techniques works best in practice.
Figure~\ref{fig:ppercent_acc} shows the test set accuracy over the constraint value.
By design, the synthetic dataset exhibits a clear trade-off between accuracy and fairness.
The {\color{py-3-3}Lagrange} technique closely follows the (dotted) baseline from \citet{Zafar2017}, whereas {\color{py-3-1}iplb} performs slightly worse (and fails for small $c$).
Even though the {\color{py-3-2}projected} gradient method formally satisfies the proxy constraint for the $p\%$ rule, it does so by merely shrinking the parameter vector $\btheta$, which is why it also fails for small $c$.
We analyze this behavior in more detail in Appendix~\ref{chap:icml:appendix}.

The COMPAS dataset is the most challenging as it contains 7 sensitive attributes, one of which has only 10 positive instances in the training set.
Since we enforce the fairness constraint individually for each sensitive attribute (we randomly picked one for visualization), the classifier tends to collapse to negative predictions.
All three methods maintain close to optimal accuracy in the unconstrained region, but collapse more quickly than SLSQP.
This example shows that the $p\%$-rule proxy itself needs careful interpretation when applied to multiple sensitive attributes simultaneously and that our SGD based approach seems particularly prone to collapse in such a scenario.
On the Bank dataset, accuracy increases for {\color{py-3-1}iplb} and {\color{py-3-3}Lagrange} when the constraint becomes active as $c$ decreases until they match the baseline.
Determining the cause of this---perhaps unintuitive---behavior requires further investigation.
We currently suspect the constraint to act as a regularizer.
The {\color{py-3-2}projected} gradient method is unreliable on the Bank dataset.

\begin{figure}
\centering
\includegraphics[width=\textwidth]{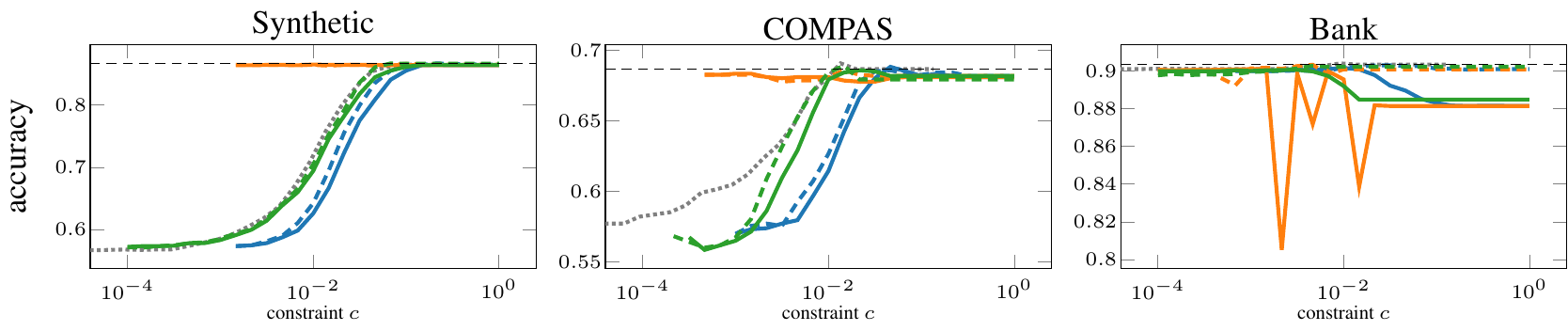}
\caption[Test set accuracy over the $p\%$ value for different optimization methods and datasets]{Test set accuracy over the $p\%$ value for different optimization methods ({\color{py-3-1} blue: iplb}, {\color{py-3-2} orange: projected}, {\color{py-3-3} green: Lagrange}) and either no approximation (\emph{continuous}) or a piecewise linear approximation (\emph{dashed}) of the sigmoid using floating-point numbers. The gray dotted line is the baseline (see \secref{sec:icml:tc}) and the black dashed line is unconstrained logistic regression (from scikit-learn).}
\label{fig:ppercent_acc}
\end{figure}

\begin{figure}
\centering
\includegraphics[width=\textwidth]{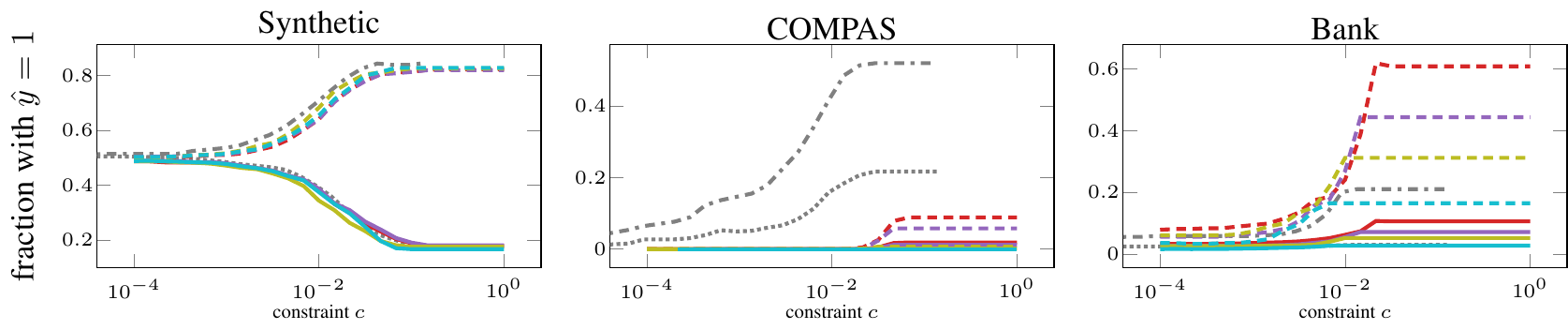}
\caption[The fractions of individuals in both groups receiving positive outcomes for different optimization settings]{The fraction of people with $z=0$ (\emph{continuous/dotted}) and $z=1$ (\emph{dashed/dash-dotted}) who get assigned positive outcomes
({\color{py-4-1}red: no approx. + float},
{\color{py-4-2}purple: no approx. + fixed},
{\color{py-4-3}yellow: pw linear + float},
{\color{py-4-4}turquoise: pw linear + fixed},
{gray: baseline}).
As the slack $c$ decreases, the fairness constraint is tightened and we observe dashed and continuous lines of the same color moving closer together.
This indicates that as the fairness constraint is enforced, equal fractions of both populations receive positive outcomes.}
\label{fig:ppercent_fair}
\end{figure}

Empirically, the Lagrangian multiplier technique is most robust with maximal deviations of accuracy from SLSQP of $<4$\% across the 6 datasets and all constraint values.
We substantiate this claim in Appendix~\ref{chap:icml:appendix}.
For the rest of this section we only report results for Lagrangian multipliers.
Figure~\ref{fig:ppercent_acc} also shows that using a piecewise linear approximation as described in \secref{sec:icml:tc} for the logistic function does not spoil performance.

\paragraph{Fair training, certification and verification.}
Figure~\ref{fig:ppercent_fair} shows how the fractions of users with positive outcomes in the two groups ($z=0$ is continuous and $z=1$ is dashed) are gradually balanced as we decrease the fairness constraint~$c$.
These plots can be interpreted as the degree to which disparate impact is mitigated as the constraint is tightened.
The effect is most pronounced for the synthetic dataset by construction.
As discussed above, the collapse for the COMPAS dataset occurs faster than for SLSQP due to the constraints from multiple sensitive attributes.
In the Bank dataset, for large $c$---before the constraint becomes active---the fractions of positive outcomes for $z=1$ differ, which is related to the slightly suboptimal accuracy at large $c$ that needs further investigation.
However, as the constraint becomes active, the fractions are balanced at a similar rate as the baseline.
Overall, our Lagrangian multiplier technique with fixed point numbers and piecewise linear approximations of non-linearities robustly manages to satisfy the p\%-rule proxy at similar rates as the baseline with only minor losses in accuracy on all but the challenging COMPAS dataset.

In Table~\ref{table.timing} we show the online running times of 10 training epochs on a laptop computer.
While training takes orders of magnitudes longer than a non-MPC implementation, our approach still remains feasible and realistic.
We use the one time offline precomputation of multiplication triples described and timed in \citet[Table 2]{mohassel2017secureml}.
As pointed out in \secref{sec:icml:mpc}, certification of a trained model requires checking whether $\F(\btheta) > 0$. We already perform this check at least once for each gradient update during training.
It only takes a negligible fraction of the computation time, see Table~\ref{table.timing}.
Similarly, the operations required for certification stay well below one second.

\section{Conclusion}
\label{sec:icml:conclusion}

Real world fair learning has suffered from a dilemma:
in order to enforce fairness, sensitive attributes must be examined;
yet in many situations, users may feel uncomfortable in revealing these attributes, or modelers may be legally restricted in collecting and utilizing them.
By introducing recent methods from MPC, and extending them to handle linear constraints as required for various notions of fairness, we have demonstrated that it is practical on real-world datasets to:
\begin{enumerate*}[label=\alph*)]
\item certify and sign a model as fair;
\item learn a fair model; and
\item verify that a fair-certified model has indeed been used;
\end{enumerate*}
all while maintaining cryptographic privacy of all users' sensitive attributes.
Connecting concerns in privacy, algorithmic fairness and accountability, our proposal empowers regulators to provide better oversight, modelers to develop fair and private models, and users to retain control over data they consider highly sensitive.

We have demonstrated the practicability of private and fair model training, certification and verification using MPC as described in Figure~\ref{figure.MPC}.
Using the methods and tricks introduced in \secref{sec:icml:tc}, we can overcome accuracy as well as over- and underflow concerns due to fixed-point numbers.
Offline precomputation combined with a fast C++ implementation yield viable running times for reasonably large datasets on a laptop computer even though we acknowledge that these are still considerably slower than times obtained without protection via secure MPC.
Such trade-offs and as well as potential tensions in the interplay of different notions of fairness, privacy and security pose important challenges for real-world applications, opening up  new fields of research that go beyond the traditional assumptions of fair machine learning.

\chapter{Decisions versus predictions}
\label{chap:decisions}

\graphicspath{{figs/chap7/}}

In this chapter, we go beyond the assumption that labels are always available in the training data and show that naive predictive modeling using only the data at hand is therefore not the optimal way to arrive at decisions.
To consistently learn accurate predictive models, one needs access to ground truth labels.
Unfortunately, in practice, labels may only exist conditional on certain decisions---if a loan is denied, there is not even an option for the individual to pay back the loan.
Hence, the observed data distribution depends on how decisions are being made.
In this chapter, we show that in this \emph{selective labels} setting, learning a predictor directly only from available labeled data is suboptimal in terms of both fairness and utility.
To avoid this undesirable behavior, we propose to directly learn decision policies that maximize utility under fairness constraints and thereby take into account how decisions affect which data is observed in the future.
Our results suggest the need for a paradigm shift in the context of fair machine learning from the currently prevalent idea of simply building predictive models from a single static dataset via risk minimization, to a more interactive notion of ``\emph{learning to decide}''.
In particular, such policies should not entirely neglect part of the input space, drawing connections to explore/exploit tradeoffs in reinforcement learning, data missingness, and potential outcomes in causal inference.
Experiments on synthetic and real-world data illustrate the favorable properties of learning to decide in terms of utility and fairness.

The main content of this chapter has been published in the following paper:

\fbox{\parbox{\textwidth}{
\pubitem{Fair Decisions Despite Imperfect Predictions}
{Niki Kilbertus, Manuel Gomez-Rodriguez, Bernhard Schölkopf, Krikamol Muandet, Isabel Valera}
{International Conference on Artificial Intelligence and Statistics (AISTATS), 2020}
{https://arxiv.org/abs/1902.02979}
{https://github.com/nikikilbertus/fair-decisions}
{}
}}

\section{Introduction}
\label{sec:intro}

We start by revisiting some common settings of consequential decisions that may be (partially) automated.
In pretrial release decisions, a judge may consult a learned model of the probability of recidivism to decide whether to grant bail or not.
In loan decisions, a bank may decide whether or not to offer a loan based on learned estimates of the credit default probability.
In fraud detection, an insurance company may flag suspicious claims based on a machine learning model'{}s predicted probability that the claim is fraudulent.
In all these scenarios, the goal of the decision maker (bank, law court, or insurance company) may be to take decisions that maximize a given utility function.
Since such a utility is a function of the entire policy, rather than an individual loss term for each prediction in isolation, it may encode preferences that go beyond merely ``correct or incorrect'' for a predictive classification task.
For example, the utility may accommodate various fairness and diversity considerations, or enforce actions that could improve the wellbeing of certain individuals and groups in the long run---even when such actions are not warranted by the directly observed outcomes.
In contrast, the goal of a supervised predictive machine learning model is solely to provide accurate predictions given the available training set, typically under the implicit assumption that the training data is an i.i.d.\ sample from the distribution encountered during test time.
Such a simplistic approach, which we refer to as \emph{learning to predict}, ignores that once decisions are based on these predictions, they may interact with the data collection or have direct impact on the underlying distribution relevant during deployment \citep{perdomo2020performative}.

Nevertheless, most work on fair machine learning does not distinguish between decisions and label predictions, which also leads to a perceived trade-off between fairness and accuracy or performance.
This stems from the fact that viewing ``incorrect'' predictions as positively desirable, e.g., because they may promote fairness, is incompatible with the standard goal of minimizing a predictive loss in supervised learning.
From that viewpoint, an accurate prediction is equivalent---or at least directly translates---to a good decision.
Only recently has the distinction been made explicit, typically emphasizing that \emph{not predicting historically recorded labels correctly} can often be part of the goal when it comes to fairness \citep{corbett2017algorithmic,kleinberg2017human,mitchell2018prediction,valera2018enhancing}.
We also remind the reader of the discussion of some fundamental challenges, in particular long- versus short-term goals, in \secref{sec:goals}.
This recent line of work has shown that if a predictive model achieves perfect prediction accuracy, \emph{deterministic threshold rules}, which derive decisions deterministically from the predictive model by thresholding, indeed achieve maximum utility under various fairness constraints.
At first, this lends support to focusing on deterministic threshold rules and seemingly justifies using predictions and decisions interchangeably.

However, in many practical scenarios including the ones described in the beginning of this section, the decision determines whether a label is realized or not---if bail (a loan) is denied, there is not even an option for the individual to reoffend (pay back the loan).
This problem has been referred to by \citet{lakkaraju2017selective} as \emph{selective labels}.
As a consequence, the labeled data used to train predictive models often depend on the decisions taken, which likely leads to suboptimal performance.
For example, a racist initial policy may categorically reject applicants from a certain demographic group.
Therefore, no data about these individuals is collected and there are no guarantees for how a predictive model may extrapolate into this unseen region of the input space.
Indeed, deterministic threshold rules using even slightly imperfect predictive models can be far from optimal \citep{woodworth2017learning}.
This negative result raises the following question: \textit{Can we do better if we learn directly to decide rather than to predict?}\footnote{We remark that our notion of learning to decide is not immediately related to (Bayesian) decision theory. Moreover, for this thesis, we restrict the notion of learning to predict to simple point estimates (categorical predictions) from risk minimization based on available data. Crafting ``predictions'' more carefully, for example by regarding a data-missingness model or proper uncertainty estimates, may not be prone to the same issues. However, most works on fairness in machine learning have been staged within our simplistic ``learning to predict'' framework.}
Here, by ``learning to decide'' we mean learning to maximize a utility of a decision policy (rather than predictions) when deployed under test time conditions (rather than merely trained on observed data).

In the present chapter, we first articulate how the ``learning to predict'' approach fails in a utility maximization setting (with fairness constraints) that accommodates a variety of real-world applications, including those mentioned previously.
We show that label data gathered under deterministic rules (e.g., prediction based threshold rules) are neither sufficient to improve the accuracy of the underlying predictive model, nor the utility of the decision making process.
We then demonstrate how to overcome this undesirable behavior using a particular family of stochastic decision rules and introduce a simple gradient-based algorithm to learn them from data.
Experiments on synthetic and real-world data illustrate our theoretical results and show that, under imperfect predictions, \emph{learning to predict} is inferior to \emph{learning to decide}.

\xhdr{Related work}
The work most closely related to ours analyzes the long-term effects of consequential decisions informed by data-driven predictive models on underrepresented groups \citep{hucheng2018,liu18c,mouzannar2019fair}.
However, this line of work focuses mainly on the evolution of several measures of well-being under a perfect predictive model and neglecting the data collection phase.
In contrast, we focus on analyzing how to improve a suboptimal decision process when labels exist only for positive decisions.
Potential issues arising from neglecting the data collection process have also been highlighted in a survey of what machine learning practitioners in industry actually need to enforce fairness \citep{holstein2018improving}.
\citet{dimitrakakis2019bayesian} similarly point out how attempts of fair machine learning may fail when there is uncertainty about the underlying probabilistic model of the world.
They achieve fairness in such settings by deploying a Bayesian approach that aims at enforcing fairness in all possible models weighted by their probability given the current information.
In contrast, we focus on analyzing how to improve a suboptimal decision process when labels exist only for positive decisions.
We also build on previous work on counterfactual inference and policy learning \citep{athey2017efficient,ensign2017decision,gillen2018online,heidari2018preventing,joseph2016fairness,jung2018algorithmic,kallus2018balanced,kallus2018residual,lakkaraju2017learning}.
In these settings, the decision typically determines which of the potential outcomes is observed and the focus is on confounders that affect both the decision and the outcome \citep{Rubin05:PO}.
In contrast, in our approach the decision determines whether there will be an outcome at all, but there is no unobserved confounding.
Two notable exceptions are by \citet{kallus2018residual} and \citet{ensign2017decision}, which also consider limited feedback.
However, \citet{kallus2018residual} focus on designing unbiased estimates for fairness measures, rather than learning how to decide.
\citet{ensign2017decision} assume a deterministic mapping between features and labels, which allows them to reduce the problem to the apple tasting problem \citep{helmbold2000apple}.
Remarkably, in their deterministic setting, they also conclude that the optimal decisions should be stochastic.

Unlike in the fairness literature, where deterministic policies dominate \citep{corbett2017algorithmic,valera2018enhancing,Meyer2018objecting}, stochastic policies are often necessary to ensure adequate exploration \citep{Silver14:DPG} in contextual bandits \citep{Dudik11:DoublyRobust,Langford08:ES,Agarwalb14:Monster} and reinforcement learning \citep{jabbari2016fairness,Sutton98:RL}.
However, the typical problem setting there differs fundamentally from ours and typically neither fairness constraints nor selective labels are taken into account.
A recent notable exception is \citet{joseph2016fairness}, initiating the study of fairness in multi-armed bandits, however, using a fairness notion orthogonal to the observational group matching criteria we consider in our work, and ignoring the selective labels problem.

\section{Decisions from imperfect predictive models}
\label{sec:formulation}

First, we briefly recap notation.
Let $\xspace \subseteq \bR^d$ be the feature domain, $\sspace = \{0,1\}$ the range of sensitive attributes, and $\outspace = \{0,1\}$ the set of ground truth labels.
A \emph{decision rule} or \emph{policy}\footnote{We use the terms \emph{decision rule}, \emph{decision making process} and \emph{policy} interchangeably in this chapter.}
is a mapping $\pi: \xspace \times \sspace \to \mathcal{P}(\{0,1\})$ that maps an individual'{}s feature vector and sensitive attribute to a probability distribution over \emph{decisions} $d \in \{0,1\}$.
Note that instead of a deterministic predictor $\hY$ we are now considering a stochastic policy for decisions instead of predictions.
We sample $x, z$ and $y$ from a ground truth distribution $\dP(X,Z,Y) = \dP(Y\given X,Z) \dP(X,Z)$.
Decisions $d$ are sampled from a policy $d \sim \pi(D\given x, z)$, where we often write $\pi(x,z)$ for $\pi(D\given x, z)$ and $\pi(D=1 \given x, z)$ for the probability of a positive decision given features~$x, z$.
The decision determines whether the label $y \sim \dP(Y\given X,Z)$ comes into existence.
In loan decisions, the feature vector $x$ may include salary, education, or credit history;
the sensitive attribute $z$ may indicate sex;
a loan can be granted ($d = 1$) or denied ($d = 0$);
and the label $y$ indicates repayment ($y = 1$) or default ($y=0$) \emph{upon receiving a loan}.

Inspired by \citet{corbett2017algorithmic}, we measure the \emph{utility} as the expected overall profit provided by the policy with respect to the distribution $\dP$, i.e.,
\begin{equation}\label{eq:utility}
  u_{\dP}(\pi) := \E_{x,z,y \sim \dP,\, d\sim \pi(x, z)} \left[y\, d - c\, d \right]
  = \E_{x,z \sim \dP} \left[\pi(D = 1 \given x, z) (\prob(Y = 1 \given x, z) - c) \right] \eqc
\end{equation}
where $c \in (0, 1)$ reflects economic considerations of the decision maker.
For example, in a loan scenario, the utility gain is $(1-c)$ if a loan is granted and repaid, $-c$ if a loan is granted but the individual defaults, and zero if the loan is not granted.
One could think of adding a term for negative decisions of the form $g(y) (1-d)$ for some given definition of $g$, however, we would not be able to compute such a term due to the selective labels, except for constant $g$.
Therefore, without loss of generality, we assume that $g(y)= 0$ for all $y$, because any non-zero constant $g$ can easily be absorbed in our framework.

For fairness considerations, we define the \emph{$f$-benefit for group $z \in \{0,1\}$} with respect to the distribution $\dP$ by
\begin{equation*}
  b_{\dP}^z(\pi) := \E_{x, y \sim \dP(X, Y \given z),\, d\sim \pi(x, z)} [f(d, y)] \eqc
\end{equation*}
with $f: \{0, 1\} \times \{0, 1\} \to \bR$.
Note that common observational group matching fairness criteria can be expressed as $b_{\dP}^0(\pi) = b_{\dP}^1(\pi)$ for different choices of $f$ (and perhaps conditioning on specific values of $y$ or $d$ corresponding to criteria of separability or sufficiency respectively).
For simplicity, we will focus on demographic parity (or no disparate impact), which simply amounts to $f(d, y) = d$.

Under perfect knowledge of $\dP(Y \given x, z)$, the policy maximizing the above utility subject to the group benefit fairness constraint $b_{\dP}^0(\pi) = b_{\dP}^1(\pi)$ is a deterministic threshold rule \citep{corbett2017algorithmic}\footnote{Here, $\B{1}[\bullet]$ is $1$ if the predicate $\bullet$ is true and $0$ otherwise.}
\begin{equation}\label{eq:detthresh}
  \pi^*(D = 1 \given x, z) = \B{1}[\prob(Y = 1 \given x, z) \geq c_z] \eqc
\end{equation}
where we allow for group specific cost factors $c_0, c_1$ such that $b_{\dP}^0(\pi) = b_{\dP}^1(\pi)$.
Without fairness constraints, we simply have $c_0 = c_1 = c$.
However, as discussed by \citet{woodworth2017learning}, in practice we typically do not have access to the true conditional distribution $\dP(Y \given x, z)$, but instead to an imperfect predictive model $Q(Y \given x, z)$ trained on a finite training set.
Such a predictive model can similarly be used to implement a deterministic threshold rule as
\begin{equation}\label{eq:detthreshQ}
  {\pi}_{Q}(D = 1 \given x, z) = \B{1}[Q(Y = 1 \given x, z) \geq c] \eqp
\end{equation}
Here, the predictor $Q(Y = 1 \given x, z) \approx \dP(Y = 1 \given x, z) - \delta_z$, with $\delta_z = c_z - c$, directly incorporates the fairness constraint, i.e., it is trained to maximize predictive power subject to the fairness constraint.
In this context, \citet{woodworth2017learning} have shown that this approach often leads to better performance than post-processing a potentially unfair predictor as proposed by \citet{Hardt2016}.
Unfortunately, they have also shown that, because of the mismatch between $Q(Y = 1 \given x, z)$ and $\dP(Y = 1 \given x, z) - \delta_z$, the resulting policy $\pi_Q$ will usually still be suboptimal in terms of both utility and fairness.
To make things worse, due to the selective labeling, the data points $x, z, y$ observed under a given policy $\pi_0$ are not i.i.d.\ samples from the ground truth distribution $\dP(X, Z, Y)$, but instead from the weighted distribution
\begin{equation}\label{eq:imperfect-p}
  \dP_{\pi_0}(X,Z,Y) \propto \dP(Y \given X,Z)\, {\pi_0}(D=1\given X,Z)\, \dP(X,Z) \eqp
\end{equation}
Consequently, if $\pi_0$ is not optimal, i.e., $\pi_0 \ne \pi^*$, the necessary i.i.d.\ assumption for consistency results of empirical risk minimization is violated, which may also be one reason for a common observation in fairness, namely that predictive errors are often systematically larger for minority groups \citep{Angwin2016}.
In the remainder, we will say that the distributions $\dP_{\pi_0}(X,Z,Y)$ and $\dP_{\pi_0}(X,Z)$ are \emph{induced} by the policy $\pi_0$.
In the next section, we study how to learn the optimal policy, potentially subject to fairness constraints, if the data is collected from an initial faulty policy $\pi_0$.

\section{From deterministic to stochastic policies}
\label{sec:sequential}

Consider a class of policies $\Pi$, within which we want to maximize utility, as defined in eq.~\eqref{eq:utility} subject to the group benefit fairness constraint $b_{\dP}^0(\pi) = b_{\dP}^1(\pi)$.
We formulate this as an unconstrained optimization with an additional penalty term, namely to maximize
\begin{equation}\label{eq:policy_learning}
  v_{\dP}(\pi) := u_{\dP}(\pi) - \frac{\lambda}{2} (b_{\dP}^0(\pi) - b_{\dP}^1(\pi))^2
\end{equation}
over $\pi \in \Pi$ under the assumption that we do not have access to samples from the ground truth distribution $\dP(X, Z, Y)$, which $u_{\dP}(\pi)$ and $b_{\dP}^z(\pi)$ depend on.
Instead, we only have access to samples from a distribution $\dP_{\pi_0}(X, Z, Y)$ induced by a given initial policy $\pi_0$ as in eq.~\eqref{eq:imperfect-p}.
We first analyze this problem for deterministic threshold rules, before considering general deterministic policies, and finally also general stochastic policies.

\subsection{Deterministic policies}

Assume the initial policy $\pi_0$ is a given deterministic threshold rule and $\Pi$ is the set of all deterministic threshold rules, which means that each $\pi \in \Pi$ (and $\pi_0$) is of the form eq.~\eqref{eq:detthreshQ} for some predictive model $Q(Y \given x, z)$.
Given a hypothesis class of predictive models $\Qcal$, we reformulate eq.~\eqref{eq:policy_learning} to maximize
\begin{equation}\label{eq:policy_learning-p}
  v_{\dP}(\pi_Q) := u_{\dP}(\pi_Q) - \frac{\lambda}{2} (b_{\dP}^0(\pi_Q) - b_{\dP}^1(\pi_Q))^2
\end{equation}
over $Q \in \Qcal$, where the utility and the benefits for $z\in \{0,1\}$ are simply $u_{\dP}(\pi_Q) = \E_{x, z, y \sim \dP} [ \B{1}[Q(Y = 1\given x, z) \ge c] (y - c)]$ and $b_{\dP}^z(\pi_Q) = \E_{x, z, y \sim \dP} [f(\B{1}[Q(Y = 1\given x, z) \ge c], y)]$.
Note that eq.~\eqref{eq:policy_learning} has a unique optimum $\pi^*$ (up to differences on sets of measure zero).
Therefore, if $\pi^* \in \Pi$ (the set of all deterministic threshold rules), eq.~\eqref{eq:policy_learning-p} will also reach this optimum if $\Qcal$ is rich enough.
However, the optimal predictor $Q^*$ may not be unique, because the utility and the benefits are not sensitive to the precise values of $Q(Y = 1 \given x, z)$ above or below the threshold $c$.

If we only have access to samples from the distribution $\dP_{\pi_0}$ induced by some $\pi_0 \ne \pi^*$, we may choose to simply learn a predictive model $Q_0^* \in \Qcal$ that empirically maximizes the objective $v_{\dP_{\pi_0}}(\pi_Q)$, where the utility and the benefits are computed with respect to the induced distribution $\dP_{\pi_0}$.
However, the following negative result shows that, under mild conditions, $Q^*_0$ leads to a suboptimal deterministic threshold rule.
\begin{proposition}\label{prop:limitation}
If there exists a subset $\mathcal{V} \subset \xspace \times \sspace$ of positive measure under $\dP$ such that $\dP(Y=1 \given \mathcal{V}) \ge c$ and $\dP_{\pi_0} (Y = 1\given \mathcal{V}) < c$, then there exists a maximum $Q_0^* \in \mathcal{Q}$ of $v_{\dP_{\pi_0}}$ such that $v_{\dP}(\pi_{Q_0^*}) < v_{\dP}(\pi_{Q^*})$.
\end{proposition}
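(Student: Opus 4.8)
The plan is to exploit the fact that a \emph{deterministic} initial policy never explores the region it rejects, so the observed objective $v_{\dP_{\pi_0}}$ carries no information about $\pi_Q$ there and admits maximizers that behave arbitrarily on it. Write $E := \{(x,z)\in\xspace\times\sspace : \pi_0(D=1\given x,z)=1\}$ for the accept region of $\pi_0$. Because $\pi_0$ is deterministic, $\pi_0(D=1\given x,z)\in\{0,1\}$, so by eq.~\eqref{eq:imperfect-p} the induced marginal $\dP_{\pi_0}(X,Z)$ is supported on $E$. Each of $u_{\dP_{\pi_0}}(\pi_Q)$ and $b^z_{\dP_{\pi_0}}(\pi_Q)$ is an expectation with respect to $\dP_{\pi_0}$ and therefore depends only on the restriction of $\pi_Q$ to $E$; hence so does $v_{\dP_{\pi_0}}(\pi_Q)$. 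In particular, modifying any predictor $Q$ outside $E$ leaves $v_{\dP_{\pi_0}}$ unchanged.

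Next I would turn the hypotheses into a statement that the \emph{unexplored} part of $\mathcal{V}$ is non-negligible. Splitting $\mathcal{V}=\mathcal{V}_1\cup\mathcal{V}_0$ with $\mathcal{V}_1:=\mathcal{V}\cap E$ and $\mathcal{V}_0:=\mathcal{V}\setminus E$, a direct computation from eq.~\eqref{eq:imperfect-p} using $\pi_0(D=1\given x,z)\in\{0,1\}$ gives
\begin{equation*}
  \dP_{\pi_0}(Y=1\given \mathcal{V}) = \frac{\int_{\mathcal{V}}\dP(Y=1\given x,z)\,\pi_0(D=1\given x,z)\,\dP(x,z)}{\int_{\mathcal{V}}\pi_0(D=1\given x,z)\,\dP(x,z)} = \dP(Y=1\given \mathcal{V}_1) \eqp
\end{equation*}
If $\mathcal{V}_0$ had zero measure we would get $\dP(Y=1\given\mathcal{V})=\dP(Y=1\given\mathcal{V}_1)=\dP_{\pi_0}(Y=1\given\mathcal{V})<c$, contradicting $\dP(Y=1\given\mathcal{V})\ge c$. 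Hence $\dP(\mathcal{V}_0)>0$: there is a positive-measure subset of $\mathcal{V}$ that $\pi_0$ rejects, on which $v_{\dP_{\pi_0}}$ is blind.

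Finally I would construct the bad maximizer and conclude by uniqueness. Let $\tilde Q\in\Qcal$ be a predictor whose induced policy maximizes $v_{\dP_{\pi_0}}$ (existing since $\Qcal$ is rich enough). Using richness, build $Q_A,Q_B\in\Qcal$ that agree with $\tilde Q$ on $E$ but satisfy $Q_A(Y=1\given x,z)\ge c$ and $Q_B(Y=1\given x,z)<c$ for $(x,z)\in\mathcal{V}_0$. By the first paragraph both still maximize $v_{\dP_{\pi_0}}$, yet $\pi_{Q_A}$ and $\pi_{Q_B}$ differ on $\mathcal{V}_0$, a set of positive $\dP$-measure. Since eq.~\eqref{eq:policy_learning} has an optimum $\pi_{Q^*}$ that is unique up to measure-zero differences, $\pi_{Q_A}$ and $\pi_{Q_B}$ cannot both equal $\pi_{Q^*}$ almost everywhere; whichever one differs from $\pi_{Q^*}$ on a positive-measure set fails to maximize $v_{\dP}$ and hence has value strictly below $v_{\dP}(\pi_{Q^*})$. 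Taking $Q_0^*$ to be that predictor proves the claim.

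I expect the main obstacle to be the non-separability introduced by the quadratic fairness penalty $\tfrac{\lambda}{2}(b^0_{\dP}-b^1_{\dP})^2$: unlike the bare utility, $v_{\dP}$ does not decompose into independent pointwise contributions, so one cannot argue suboptimality simply by summing a local loss incurred on $\mathcal{V}_0$. Routing the argument through the (stated) uniqueness of $\pi_{Q^*}$ is precisely what sidesteps this, at the cost of also relying on $\Qcal$ being rich enough to realize $Q_A$ and $Q_B$. A more explicit alternative would single out $\mathcal{W}\subseteq\mathcal{V}_0$ with $\dP(Y=1\given x,z)>c$ pointwise---which is nonempty of positive measure because the same averaging identity forces $\dP(Y=1\given\mathcal{V}_0)>c$---and compare utilities directly, but that route must additionally reconcile the single threshold $c$ with the group-specific thresholds $c_z$ determining $\pi_{Q^*}$.
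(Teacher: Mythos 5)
Your proof is correct and shares the paper's skeleton: both arguments rest on the observation that $v_{\dP_{\pi_0}}$ only sees $Q$ on the accept region $W_1(\pi_0)$ of the deterministic initial policy, and both close via the stated uniqueness (up to measure-zero differences) of the optimal policy. The execution of the two key steps differs, however, and in both places your route is the more careful one. Where the paper infers $\mathcal{V}\subset W_1(Q^*)$ from $\dP(Y=1\given\mathcal{V})\ge c$ and $\mathcal{V}\cap W_1(\pi_0)=\emptyset$ from $\dP_{\pi_0}(Y=1\given\mathcal{V})<c$ --- inferences that require a pointwise reading of what are naturally set-averaged conditionals --- you extract from the identity $\dP_{\pi_0}(Y=1\given\mathcal{V})=\dP(Y=1\given\mathcal{V}\cap E)$ only what is actually needed, namely that the unexplored part $\mathcal{V}_0=\mathcal{V}\setminus E$ has positive $\dP$-measure. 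And where the paper exhibits a single bad maximizer $Q_0^*$ with $W_1(Q_0^*)\subset W_1(\pi_0)$ and argues it disagrees with $Q^*$ on all of $\mathcal{V}$, you produce two maximizers $Q_A,Q_B$ that disagree with \emph{each other} on $\mathcal{V}_0$ and let uniqueness eliminate at least one; this spares you from having to determine on which side of the threshold $Q^*$ sits over $\mathcal{V}_0$, at the cost of a marginally stronger richness requirement on $\Qcal$ (two prescribed extensions off $E$ rather than the single truncation $Q\cdot\chi_{W_1(\pi_0)}$ used in the paper). Both versions lean on the same implicit ingredients --- that a maximizer of $v_{\dP_{\pi_0}}$ exists and that the maximizer of $v_{\dP}(\pi_Q)$ over $\Qcal$ realizes the essentially unique optimum $\pi^*$, so that disagreement on a positive-measure set forces a strict gap --- and your closing remark correctly identifies this uniqueness step as what absorbs the non-separability introduced by the quadratic fairness penalty.
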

\begin{proof}
First, note that any deterministic policy $\pi$ is fully characterized (up to differences of measure zero) by the sets $W_d(\pi) = \{(x,z) \given \pi(D = 1 \given x,z) = d\}$ for $d \in \{0, 1\}$.
For a deterministic threshold rule $\pi_Q$, we write $W_d(Q) = \{(x,z) \given \B{1}[Q(Y = 1\given x,z) > c] = d\} = W_d(\pi_Q)$.
By definition, we have that $v(\pi_{Q}) \le v(\pi_{Q^*})$.
We note that whenever the symmetric difference between the sets $W_d(Q)$ and $W_d(Q^*$), $W_d(Q) \Delta W_d(Q^*)$, has positive inner measure (induced by $\dP$) for $d \in \{0,1\}$ and a $Q \in \Qcal$, we have $v(\pi_{Q}) \ne v(\pi_{Q^*})$ and thus $v(\pi_{Q}) < v(\pi_{Q^*})$.
Thus it only remains to show that $W_d(Q^*) \Delta W_d(Q_0^*)$ has positive inner measure for $d \in \{0,1\}$.
Since $\dP(Y = 1 \given \mathcal{V}) \ge c$ by assumption, we have $\mathcal{V} \subset W_1(Q^*)$.
At the same time, because of $\dP_{\pi_0}(Y = 1 \given \mathcal{V}) < c$ by assumption, we have $\mathcal{V} \cap W_1(\pi_0) = \emptyset$.
Finally, we note that for any $Q\in \mathcal{Q}$, we have that $v_{\dP_{\pi_0}}(Q) = v_{\dP_{\pi_0}}(Q \cdot \chi_{W_1(\pi_0)})$, where $\chi_{\bullet}$ is the indicator function on the set $\bullet$.
Therefore, we can choose a $Q^*_0$ maximizing $v_{\dP_{\pi_0}}$ such that $W_1(Q^*_0) \subset W_1(\pi_0)$ and thus $\mathcal{V} \cap W_1(Q^*_0) = \emptyset$.
Therefore $\mathcal{V} \subset W_1(Q_0^*) \Delta W_1(Q^*)$ and $\mathcal{V}$ has positive measure under $\dP$ by assumption.
Thus $W_d(Q_0^*) \Delta W_d(Q^*)$ has positive inner measure and we conclude $v_{\dP}(\pi_{Q_0^*}) < v_{\dP}(\pi_{Q^*})$.
\end{proof}
\begin{figure}
\centering
\includegraphics[width=0.32\columnwidth]{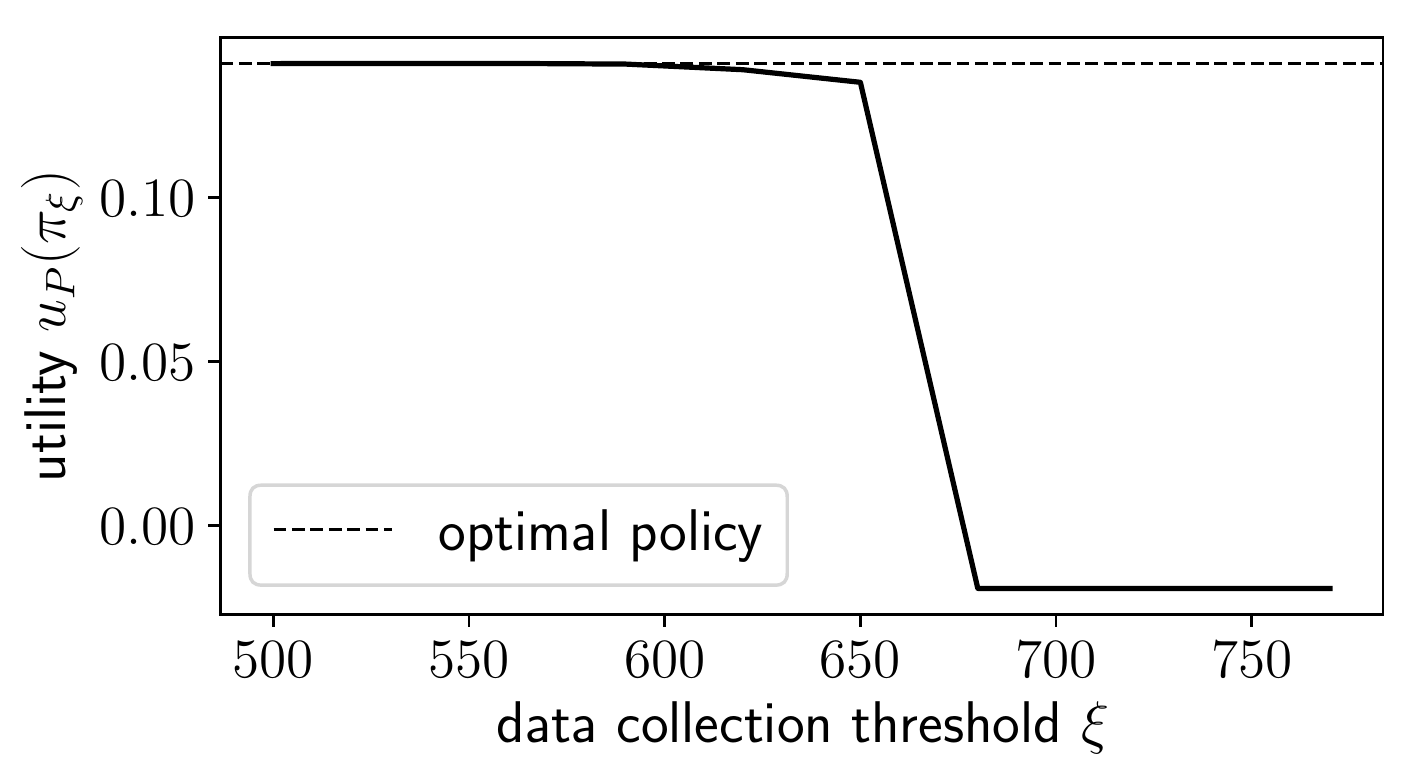}%
\includegraphics[width=0.32\columnwidth]{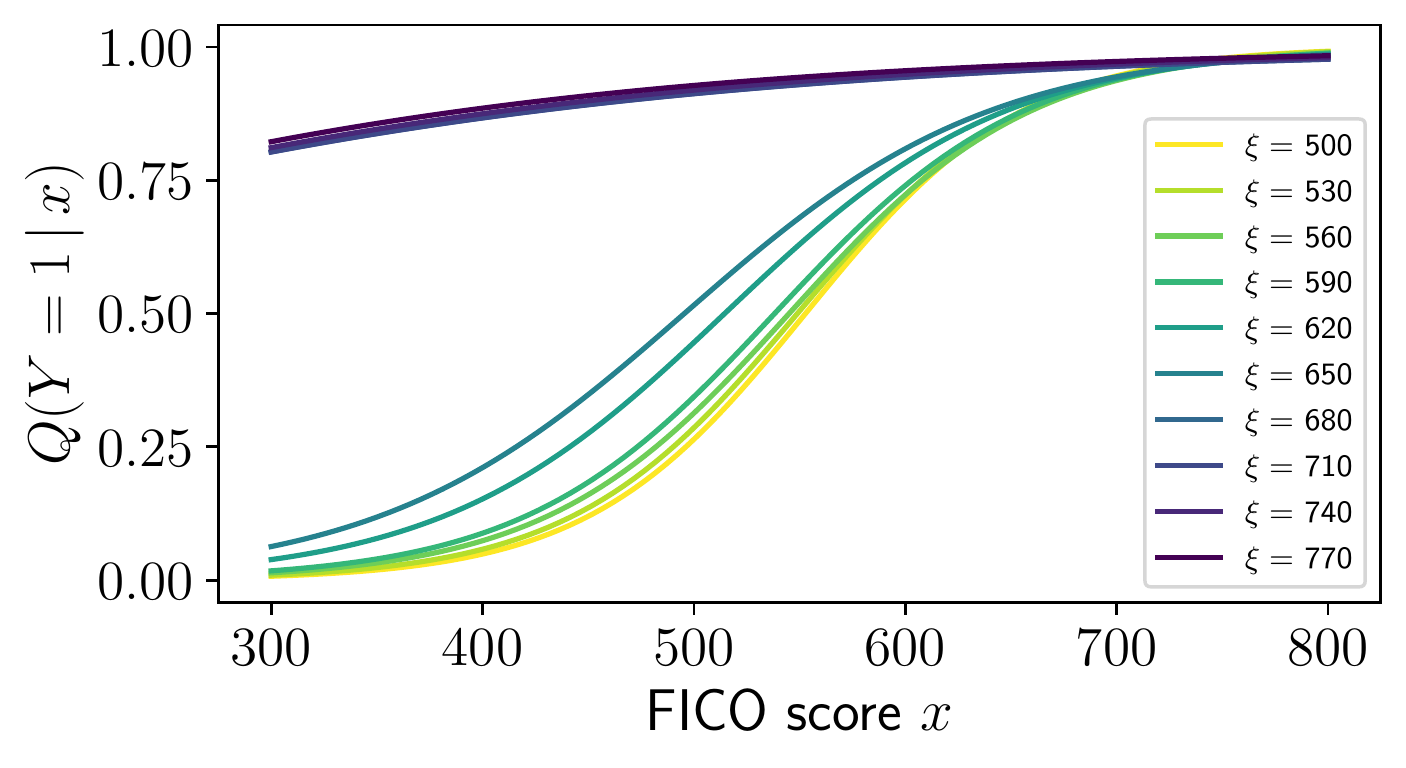}%
\includegraphics[width=0.32\columnwidth]{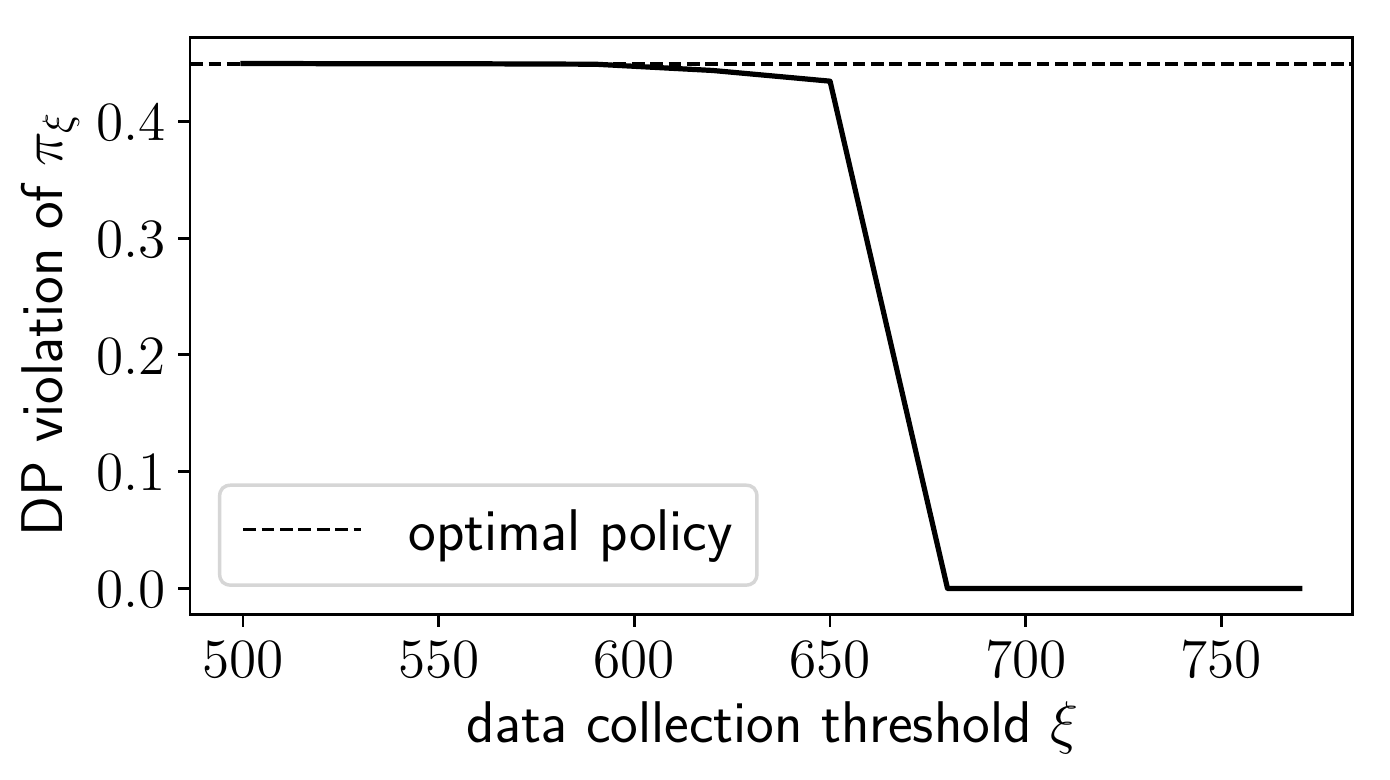}%
\caption[Empirical results of utility and fairness of the introductory lending example in a selective labels setting]{%
We show the utility (left) and the predictive models $Q_{\xi}$ learned from data collected with an initial threshold of $\xi$ (middle).
Finally, we present the violation of demographic parity (right) of threshold decision rules $\pi_{\xi}$ learned from data collected with an initial threshold of $\xi$.
Harsh data collection policies (i.e., large $\xi$)---while achieving demographic parity---render the learned policies useless in terms of utility.
}
\label{fig:fico-example}
\end{figure}
\textbf{Lending example.} We briefly illustrate this result in a lending example based on FICO credit score data as described in \citet{Hardt2016}.
Such single feature scenarios are highly relevant for score-based decision support systems where full training data and the functional form of the score are often not available (e.g., also for pretrial risk assessment).
For any score that is strictly monotonic in the true success rate, the optimal policy is simply to threshold the score.
This lends additional support to score-based systems.

Here, we can generate new scores for a given group via inverse transform sampling from the known cumulative distribution functions.
We consider $80$\% white and $20$\% black applicants.
A hypothetical new bank that has access to FICO scores $x \in \xspace := \{300, \ldots, 820\}$, but not to the corresponding repayment probabilities may expect to be profitable if at least $70$\% of granted loans are repaid, i.e., $c = 0.7$.
A risk-averse lender may initially choose a high score threshold $\xi \in \xspace$ and employ the decision rule $\B{1}[x > \xi]$.
After collecting repayment data $\cD^{(\xi)} := \{(x_i, y_i)\}_{i = 1}^n$ with this initial threshold, they learn a model $Q_{\xi}(Y = 1 \given x)$ and then decide based on $\pi_{\xi}(D = 1 \given x) = \B{1}[Q_{\xi}(Y = 1 \given x) > c]$.

For a range of initial data collection score thresholds $\xi \in [500, 800]$, we sample 10,000 scores from the specified population ($80$\% white, $20$\% black) via inverse transform sampling given the cumulative distributions functions over scores of the two groups.
The relatively large number of examples is chosen to illustrate that the negative result is not a consequence of insufficient data.
We then fit an L2 regularized logistic regression model to each of these datasets using 5-fold cross validation to select the regularization parameter.
This results in a predictive model $Q_{\xi}$ for each initial data collection threshold $\xi$.
For each of these models we construct the decision rule $\pi_{\xi}(D = 1 \given x) = \B{1}[Q_{\xi}(Y = 1 \given x) > c]$, with $c=0.7$.
We then estimate utility and fairness violation of demographic parity on a large sample from the entire population (one million examples).

In Figure~\ref{fig:fico-example} we show how the initial data collection threshold $\xi$ affects utility and fairness of the resulting predictive model-based decision rule.
Conservatively high initial thresholds of $\xi \ge 650$ lead to essentially useless decisions $\pi_{\xi}$, because of imperfect prediction models regardless of how much data was collected.
More lenient initial policies can result in near optimal decisions with improved fairness compared to the maximum utility policy for the given cost $c$ (dashed).

A simple fix seems to present itself: Do not start with high thresholds.
However, Proposition~\ref{prop:limitation} tells us that it does not matter how low we set the initial threshold, if we use it to derive deterministic decisions.
By deterministically thresholding the score, we inevitable reject a subset of the population (with positive measure) and thus can never learn whether some of them may actually repay.
This will be highlighted in the next paragraphs, showing practical impossibility results for recovering from a bad initial policy even in a sequential training setting when using deterministic decision rules.
The only way to overcome this issue in our example is not to draw a hard threshold, bat to accept applicants with some non-zero probability for \emph{every possible score}.
We will formalize this idea and its advantages in \secref{sec:exploring}.

\textbf{Impossibility results.} Supplementing the result in Proposition~\ref{prop:limitation}, we will now prove that---in certain situations---a sequence of deterministic threshold rules, fails to recover the optimal policy despite it being in the hypothesis class.
We assume that each threshold rule is of the form of eq.~\eqref{eq:detthreshQ} and its associated predictive model is trained using the data gathered through the deployment of previous threshold rules.
To this end, we consider a \emph{sequential policy learning task}, which is given by a tuple $(\pi_0, \Pi', \mathcal{A})$, where:
\begin{enumerate}[label=\alph*)]
  \item $\Pi' \subset \Pi$ is the hypothesis class of policies,
  \item $\pi_0 \in \Pi'$ is the initial policy, and
  \item $\mathcal{A}: \Pi' \times \bigcup_{i=1}^{\infty} (\xspace \times \sspace \times \outspace)^i \to \Pi'$ is an update rule.
\end{enumerate}
The update rule $\mathcal{A}$ takes an existing policy $\pi_t$ and a dataset $\cD \in (\xspace \times \sspace \times \outspace)^n$ and produces an updated policy $\pi_{t+1}$, which typically aims to improve the policy in terms of the objective function $v_{\dP}(\pi)$ in eq.~\eqref{eq:policy_learning}.
In our setting, the dataset $\cD$ is collected by deploying previous policies, i.e., from a mixture of the distributions $\dP_{\pi_{\tau}}(X, Z, Y)$ with $\tau \le t$.

Recall that for deterministic threshold policies we can partition the space $\xspace \times \sspace = W_0(\pi)\cup W_1(\pi)$ into regions of negative and positive decisions.
Then, we say an update rule is \emph{non-exploring on $\cD$} if and only if $W_0(\mathcal{A}(\pi, \cD)) \subset W_0(\pi)$.
Intuitively, this means that no individual who has received a negative decision under the old policy $\pi$ would receive a positive decision under the new policy $\mathcal{A}(\pi, \cD)$.
Common learning algorithms for classification, such as gradient boosted trees are \emph{error based}, i.e., they only change the decision function when they make errors on the training set.
As a result, they lead to non-exploring update rules on $\cD$ whenever they achieve zero error.
\begin{proposition}\label{prop:limits}
Let $(\pi_0, \Pi', \mathcal{A})$ be a sequential policy learning task, where $\Pi' \subset \Pi$ are deterministic threshold policies based on a class of predictive models, and let the initial policy be more strict than the optimal one, i.e., $W_0(\pi_0) \supsetneq W_0(\pi^*)$.
If $\mathcal{A}$ is non-exploring on any i.i.d.\ sample $\cD \sim \dP_{\pi_t}(X,Z,Y)$ with probability at least $1 - \delta_t$ for all $t \in \bN$, then
\begin{equation}
\Pr[\pi_T \neq \pi^{*}] > 1 - \sum_{t=0}^T \delta_t \quad \text{for any } T \in \bN \eqp
\end{equation}
\end{proposition}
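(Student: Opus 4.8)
The plan is to exploit the monotonicity that the non-exploring property imposes on the rejection regions $W_0(\pi_t)$, combined with a union bound over the update steps. First I would fix the reading of \emph{non-exploring} dictated by the stated intuition: no individual rejected under $\pi$ is accepted under $\mathcal{A}(\pi,\cD)$, i.e.\ $W_0(\pi) \subseteq W_0(\mathcal{A}(\pi,\cD))$. This is precisely what one expects in the selective-labels regime: since labels are observed only on the positive region $W_1(\pi)$, the collected data lie in $W_1(\pi) \times \outspace$, so an error-based learner that fits them perfectly has neither data nor incentive to flip any point of $W_0(\pi)$ to a positive decision, and the rejection region can only grow.

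Next I would set up the deterministic backbone. For each $t \in \{0, \ldots, T-1\}$ define the failure event $F_t = \{\text{the update } \pi_{t+1} = \mathcal{A}(\pi_t, \cD_t) \text{ is exploring}\}$. On the complementary good event $\bigcap_{t=0}^{T-1} F_t^c$ every update is non-exploring, so chaining the monotonicity yields $W_0(\pi_T) \supseteq W_0(\pi_{T-1}) \supseteq \cdots \supseteq W_0(\pi_0)$. Combining this with the hypothesis $W_0(\pi_0) \supsetneq W_0(\pi^*)$ gives $W_0(\pi_T) \setminus W_0(\pi^*) \supseteq W_0(\pi_0) \setminus W_0(\pi^*)$, a set of positive $\dP$-measure. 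Since $\pi^*$ is the essentially unique maximiser of $v_{\dP}$ and $\pi_T$ disagrees with it on a positive-measure set, $\pi_T \neq \pi^*$. Contrapositively, $\{\pi_T = \pi^*\} \subseteq \bigcup_{t=0}^{T-1} F_t$.

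It then remains to bound $\Pr[\bigcup_{t=0}^{T-1} F_t]$, and this is the step I expect to be the main obstacle, because the events $F_t$ are \emph{not} independent: the data $\cD_t$ is drawn from $\dP_{\pi_t}$, while $\pi_t$ is itself random, determined by the earlier draws. To handle this I would read the hypothesis as a uniform per-step guarantee, namely that for \emph{every} realizable policy $\pi \in \Pi'$ and $\cD \sim \dP_{\pi}$ the update explores with probability at most $\delta_t$. Conditioning on the history $\mathcal{H}_t = (\cD_0, \ldots, \cD_{t-1})$ (equivalently on the realized $\pi_t$) then gives $\Pr[F_t \mid \mathcal{H}_t] \leq \delta_t$ pointwise, whence $\Pr[F_t] = \E[\Pr[F_t \mid \mathcal{H}_t]] \leq \delta_t$ by the tower property, irrespective of the dependence. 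A union bound delivers $\Pr[\bigcup_{t=0}^{T-1} F_t] \leq \sum_{t=0}^{T-1} \delta_t$, and therefore $\Pr[\pi_T \neq \pi^*] \geq 1 - \sum_{t=0}^{T-1} \delta_t$.

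Finally, to upgrade this to the strict inequality $> 1 - \sum_{t=0}^{T} \delta_t$ claimed in the statement, I would note that the union bound runs over only $T$ steps and hence already discards the non-negative term $\delta_T$, giving $1 - \sum_{t=0}^{T-1}\delta_t \geq 1 - \sum_{t=0}^{T}\delta_t$; strictness can then be pinned down either from $\delta_T \geq 0$ together with the slack in the union bound, or from the observation that an exploring step need not actually recover $\pi^*$, so the inclusion $\{\pi_T = \pi^*\} \subseteq \bigcup_t F_t$ is strict in probability. I would spell out the cleaner of these two to close the argument.
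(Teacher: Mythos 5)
Your proposal is correct and follows essentially the same route as the paper: on the event that every update is non-exploring the rejection regions chain as $W_0(\pi_T)\supseteq\cdots\supseteq W_0(\pi_0)\supsetneq W_0(\pi^*)$, so reaching $\pi^*$ requires at least one exploring step, and a union bound over those steps gives the claim (the paper packages this as the one-step recursion $\Pr[\pi_t=\pi^*]\le\delta_t+\Pr[\pi_{t-1}=\pi^*]$ rather than an explicit union of failure events). Your reading of ``non-exploring'' as $W_0(\pi)\subseteq W_0(\mathcal{A}(\pi,\cD))$ is the one the paper's intuition and proof actually use, and your explicit tower-property conditioning on the history is a more careful treatment of the dependence that the paper's terser write-up glosses over.
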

\begin{proof}
  At each step we have
  \begin{equation*}
  \Pr[\pi_t = \pi^*]
    = \Pr[W_0(\pi_t) = W_0(\pi^*)]
   \le \Pr[W_0(\pi_t) \supset W_0(\pi^*)]
   \le \delta_t + \Pr[\pi_{t-1} = \pi^*] \eqp
 \end{equation*}
 By the assumption that $\pi_0 \ne \pi^*$, we recursively get $\prob[\pi_t = \pi^*] \le \sum_{i=0}^t \delta_i$ which concludes the proof.
\end{proof}
We can thus conclude that, for error based learning algorithms under no fairness constraints, learning within deterministic threshold policies is guaranteed to fail.
Even though the optimal policy lies within the set of deterministic threshold policies, it cannot easily be approximated within this set starting from a suboptimal predictive model.
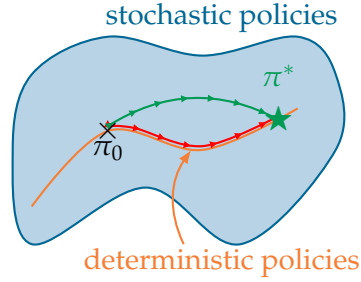
\begin{figure}
\centering
\begin{tikzpicture}[scale=0.5,decoration={triangles, shape size=0.5mm}]
\draw[thick, MidnightBlue, fill=MidnightBlue!20] plot[smooth cycle, tension=.6] coordinates {(-3.5,0.5) (-3,2.5) (-1,3.5) (1.5,3) (4,3.5) (5,2.5) (5,0.5) (2.5,-2) (0,-0.5) (-3,-2)};
\draw[thick, Orange] plot[smooth, tension=.6] coordinates {(-3, -1) (-1, 1) (1.5, 0.5) (4, 1.6)};
\draw[thick, red, postaction={draw,decorate}] plot[smooth, tension=.6] coordinates {(-1, 1.13) (-0.3, 1.08) (1.5, 0.6) (3.5, 1.4)};
\draw[thick, ForestGreen, postaction={draw,decorate}] (-1, 1.13) to[bend left] (3.5, 1.3);
\node[label={\color{ForestGreen}$\pi^*$}] at (3.5, 1.3) {{\color{ForestGreen}$\bigstar$}};
\node at (2, 4) {\color{MidnightBlue}stochastic policies};
\node at (2, -2.5) {\color{Orange}deterministic policies};
\draw[thick, Orange, ->, >=latex] (1, -2) to[bend left] (1.2, 0.5);
\node[label={[yshift=-0.8cm]$\pi_0$}] at (-1, 1) {{\large $\times$}};
\end{tikzpicture}
\caption[Illustration of the impossibility of finding the optimal policy when restricted to deterministic decision rules]{This figure illustrates how it can be impossible to find the optimal policy when the allowed set of policies is restricted to deterministic decision rules.}
\label{fig:impossibility}
\end{figure}

Figure~\ref{fig:impossibility} illustrates that, even though the optimal policy $\pi^*$ is deterministic, when starting from a deterministic initial policy $\pi_0$ (black cross), we cannot iteratively reach $\pi^*$ (green star) when updating solely within deterministic policies (red line following the orange line of deterministic policies).
It is necessary to deploy stochastic policies (blue area) along the way to then be able to converge to the optimal policy (along the green line).
We will introduce such ``exploring policies'' after our final impossibility result for error based learning algorithms.

\begin{corollary}\label{cor:errorbased}
A deterministic threshold policy $\pi \ne \pi^*$ with $\Pr[\pi(x, z) \ne y] = 0$ under $\dP$ will fail to converge to $\pi^*$ under an error based learning algorithm for the underlying predictive model with probability $1$.
\end{corollary}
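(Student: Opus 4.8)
The plan is to treat Corollary~\ref{cor:errorbased} as a specialization of Proposition~\ref{prop:limits} to the initial policy $\pi_0 = \pi$, with vanishing failure probabilities $\delta_t = 0$. Concretely, writing the sequential policy learning task as $(\pi, \Pi', \mathcal{A})$ with $\mathcal{A}$ the error based update rule, I would establish two facts and then invoke the earlier proposition: (a) that $\mathcal{A}$ is non-exploring with probability one on any sample $\cD \sim \dP_{\pi_t}(X,Z,Y)$ for every $t$, so that one may take $\delta_t = 0$; and (b) that $\pi$ is strictly more conservative than the optimum, i.e.\ $W_0(\pi) \supsetneq W_0(\pi^*)$.

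For (a), the operative reading of the hypothesis $\Pr[\pi(x,z) \neq y] = 0$ is that $\pi$ incurs no error on the selectively labeled data it actually generates. Since a threshold policy only produces labels on its acceptance region $W_1(\pi)$, where the decision is constantly $1$, zero error means every observed label equals $1$ (no accepted individual defaults). By the defining property of an error based learner — it modifies the decision function only on examples it misclassifies — a sample with zero empirical error leaves the policy unchanged, and in particular the update is non-exploring (no previously rejected individual is subsequently accepted). The one point that needs care is persistence across rounds: because a non-exploring update can only (weakly) shrink the acceptance region, $W_1(\pi_{t+1}) \subseteq W_1(\pi_t) \subseteq W_1(\pi)$, every individual accepted at a later round was already accepted at the start and therefore still repays with probability one. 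Hence the zero-error, and thus the non-exploring, property is preserved for all $t$ by induction, giving $\delta_t = 0$ throughout.

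For (b), zero error on $W_1(\pi)$ forces $\dP(Y = 1 \given x,z) = 1 \geq c_z$ for almost every $(x,z) \in W_1(\pi)$, so $W_1(\pi) \subseteq W_1(\pi^*)$, equivalently $W_0(\pi) \supseteq W_0(\pi^*)$; the assumption $\pi \neq \pi^*$ upgrades this to the strict containment $W_0(\pi) \supsetneq W_0(\pi^*)$ required by Proposition~\ref{prop:limits}. Since the non-exploring updates established in (a) hold with probability one and can only enlarge the rejection region, $W_0(\pi_T) \supseteq W_0(\pi_0) \supsetneq W_0(\pi^*)$ for every $T$ almost surely, so $\pi_T \neq \pi^*$ with probability one — exactly the degenerate ($\delta_t = 0$) instance of Proposition~\ref{prop:limits}, and the situation depicted in Figure~\ref{fig:impossibility}.

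The main obstacle I anticipate is purely at the level of formalization rather than computation: the notion of an ``error based'' algorithm is stated informally, so the crux is to phrase it precisely enough that ``zero empirical error'' provably implies ``non-exploring'', and to reconcile the hypothesis ``$\Pr[\pi(x,z)\neq y]=0$ under $\dP$'' with the fact that, under selective labels, the learner only ever observes data from the induced distribution $\dP_{\pi_t}$ supported on $W_1(\pi_t)$. Once the non-exploring property is secured at every round, the remainder is an immediate appeal to Proposition~\ref{prop:limits}.
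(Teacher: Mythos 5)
Your proof is correct and follows essentially the same route as the paper's: an error-based learner is non-exploring on any zero-error sample, so Proposition~\ref{prop:limits} applies with $\delta_t = 0$ for all $t$. You are in fact more careful than the paper's two-line proof, which silently omits both the verification of the hypothesis $W_0(\pi_0) \supsetneq W_0(\pi^*)$ required by Proposition~\ref{prop:limits} (you derive it by noting that zero error forces $\dP(Y=1 \given x,z) \ge c_z$ almost everywhere on $W_1(\pi)$, with strictness from $\pi \ne \pi^*$) and the inductive argument that the zero-error property persists across rounds because a non-exploring update can only shrink the acceptance region.
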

\begin{proof}
Since error based learning algorithms lead to non-exploring policies whenever
\begin{equation*}
\sum_{(x,z,y) \in \cD} \B{1}[\pi(x,z) \ne y] = 0 \eqc
\end{equation*}
using the assumption $\Pr[\pi(x,z) \ne y] = 0$, we can use Proposition~\ref{prop:limits}
with $\delta_t = 0$ for all $t \in \bN$.
\end{proof}
While we have focused on deterministic threshold rules, our results readily generalize to \emph{all} deterministic policies.
An arbitrary deterministic policy $\pi$ can always be written as a threshold rule $\pi_Q$ as in eq.~\eqref{eq:detthreshQ} with $Q(Y = 1 \given x, z) = \B{1}[\pi(D = 1 \given x, z) = 1]$.
To conclude, if we can only observe the outcomes of previous decisions taken by a deterministic initial policy $\pi_0$, these outcomes may be insufficient to find the (fair) deterministic decision rule that maximizes utility.

\subsection{Stochastic policies}
\label{sec:exploring}

A naive but instructive way to overcome the undesirable behavior exhibited by deterministic policies discussed in the previous section, is to fully randomize initial decisions, i.e., ${\pi_0}(D=1 \given x,z) = \nicefrac{1}{2}$ for all $x,z$.
It readily follows from eq.~\eqref{eq:imperfect-p} that then $\dP_{\pi_0} = \dP$.
Hence, if the hypothesis class of predictive models $\Qcal$ is rich enough, we could learn the optimal policy $\pi^{*}$ from data gathered under $\pi_0$.
In practice, fully randomized initial policies are unacceptable in terms of utility or unethical---it would entail releasing defendants by a coin flip.
Fortunately, we will show next that full randomization is not required to learn the optimal policy.
We only need to choose an initial policy $\pi_0$ such that $\pi_0(D = 1 \given x, z) > 0$ on any measurable subset of $\xspace \times \sspace$ with positive probability under $\dP$, a requirement that is more acceptable for the decision maker in terms of initial utility.
We refer to any policy with this property as an \emph{exploring} policy.
A policy $\pi$ is exploring, if and only if the true distribution $\dP$ is absolutely continuous with respect to the induced distribution $\dP_{\pi}$.
This means the data collection distribution must not ignore regions where the true distribution puts mass.
We note that this condition does not strictly require randomness, but could be achieved by a pre-determined process, e.g., ``$d=1$ for every $n$-th decision''.
For an exploring policy $\pi_0$, we can compute the utility in eq.~\eqref{eq:utility} and the group benefits for $z\in \{0,1\}$ via inverse propensity score weighting
\begin{align}
\begin{split}\label{eq:UtilityStoch}
  u_{\dP_{\pi_0}}(\pi, \pi_0)
  &:= \E_{\substack{x, z, y \sim \dP_{\pi_0} \\ d \sim \pi(x, z)}}
  \Bigl[\frac{d (y-c)}{\pi_0(D = 1 \given x, z)}\Bigr] \eqc \\
  b_{\dP_{\pi_0}}^z(\pi, \pi_0)
  &:= \E_{\substack{x, z, y \sim \dP_{\pi_0} \\ d \sim \pi(x, z)}}
  \Bigl[ \frac{f(d, y)}{\pi_0(D = 1 \given x, z)} \Bigr] \eqp
\end{split}
\end{align}
Crucially, even though $u_{\dP}(\pi) = u_{\dP_{\pi_0}}(\pi, \pi_0)$ and $b_{\dP}^z(\pi) = b_{\dP_{\pi_0}}^z(\pi, \pi_0)$, the expectations are with respect to the induced distribution $\dP_{\pi_0}(X, Z, Y)$, yielding the following positive result.
\begin{proposition}\label{prop:positive}
Let $\Pi$ be the set of exploring policies and let $\pi_0 \in \Pi \setminus \{\pi^*\}$.
Then, the optimal objective value is
\begin{equation*}
   v(\pi^*) = \sup_{\pi \in \Pi \setminus \{\pi^*\}} \Bigl\{u_{\dP_{\pi_0}}(\pi, \pi_0)
  - \frac{\lambda}{2} (b_{\dP_{\pi_0}}^0 (\pi, \pi_0) - b_{\dP_{\pi_0}}^1(\pi, \pi_0))^2 \Bigr\} \eqp
\end{equation*}
\end{proposition}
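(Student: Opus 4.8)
The engine of the argument is the inverse-propensity-score (IPS) identity asserted immediately before the statement: for any exploring logging policy $\pi_0$ and any target policy $\pi$, reweighting the observed data by $1/\pi_0(D=1\given x,z)$ recovers the true-distribution functionals, so that $u_{\dP_{\pi_0}}(\pi,\pi_0)=u_{\dP}(\pi)$ and $b_{\dP_{\pi_0}}^z(\pi,\pi_0)=b_{\dP}^z(\pi)$ for $z\in\{0,1\}$. The first step is to verify this identity by substituting eq.~\eqref{eq:imperfect-p} into eq.~\eqref{eq:UtilityStoch} and observing that the factor $\pi_0(D=1\given x,z)$ cancels; the exploring assumption---equivalently, absolute continuity $\dP\ll\dP_{\pi_0}$---is precisely what guarantees the weights are $\dP$-almost-surely finite, so the cancellation and the resulting equalities are legitimate. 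Plugging these identities into the bracketed expression on the right-hand side collapses it exactly to $v_{\dP}(\pi)=u_{\dP}(\pi)-\tfrac{\lambda}{2}(b_{\dP}^0(\pi)-b_{\dP}^1(\pi))^2$, so the claim reduces to showing $v(\pi^*)=\sup_{\pi\in\Pi\setminus\{\pi^*\}}v_{\dP}(\pi)$.

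For the inequality $\sup_{\pi\in\Pi\setminus\{\pi^*\}}v_{\dP}(\pi)\le v(\pi^*)$ I would invoke the fact, stated earlier in the excerpt, that the penalized objective in eq.~\eqref{eq:policy_learning} admits $\pi^*$ as its unique global maximizer (up to sets of $\dP$-measure zero). Since $\pi^*$ attains the fairness constraint exactly, the penalty vanishes and $v(\pi^*)=v_{\dP}(\pi^*)=u_{\dP}(\pi^*)$; global optimality then gives $v_{\dP}(\pi)\le v_{\dP}(\pi^*)$ for every $\pi$, and taking the supremum yields the bound.

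The reverse inequality is where the real work lies, because $\pi^*$ is a deterministic threshold rule and therefore not itself exploring---it assigns zero probability to $d=1$ on $W_0(\pi^*)$---so it is excluded from the feasible set and the supremum is approached rather than attained. The plan is to exhibit a sequence of genuinely exploring policies converging to $\pi^*$, for instance $\pi_n(D=1\given x,z):=\max\{\pi^*(D=1\given x,z),\tfrac1n\}$, each of which lies in $\Pi\setminus\{\pi^*\}$ and differs from $\pi^*$ only on $W_0(\pi^*)$ by the small constant $\tfrac1n$. I would then show $v_{\dP}(\pi_n)\to v_{\dP}(\pi^*)$: the utility gap equals $\tfrac1n\,\E_{x,z\sim\dP}[\B{1}_{W_0(\pi^*)}(\prob(Y=1\given x,z)-c)]=O(1/n)$, and each benefit $b_{\dP}^z(\pi_n)$ converges to $b_{\dP}^z(\pi^*)$ by the same argument, whence the squared penalty converges by continuity. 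Boundedness of the integrands $y-c$ and $f(d,y)$ supplies the dominating function needed to pass the limit inside the expectations. This gives $\sup_{\pi\in\Pi\setminus\{\pi^*\}}v_{\dP}(\pi)\ge v_{\dP}(\pi^*)=v(\pi^*)$, and combining with the upper bound closes the proof.

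The main obstacle is thus not the algebraic IPS reduction, which is essentially given, but the approximation step: constructing feasible exploring policies that recover the excluded deterministic optimum in the limit, and justifying the interchange of limit and expectation for the full penalized objective (including the nonlinear benefit-difference term). Care is also warranted in the degenerate case where $W_0(\pi^*)$ has $\dP$-measure zero, in which $\pi^*$ is itself exploring and the approximation is trivial.
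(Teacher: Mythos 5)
Your proposal is correct and follows essentially the same route as the paper: the paper's proof likewise takes the upper bound as given and constructs the approximating sequence $\pi_n = 1$ on $W_1(\pi^*)$ and $\nicefrac{1}{n}$ on its complement (identical to your $\max\{\pi^*,\nicefrac{1}{n}\}$), then passes to the limit in the utility and benefit integrals by splitting over $W_1(\pi^*)$ and bounding the complementary term by a finite constant times $\nicefrac{1}{n}$. The only cosmetic difference is that the paper explicitly restricts the benefit limit computation to $f$ linear in its arguments (e.g., $f(d,y)=d$), whereas you appeal to boundedness and dominated convergence.
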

\begin{proof}
We already know that the supremum is upper bounded by $v(\pi^*)$, i.e., it suffices to construct a sequence of policies $\{\pi_n\}_{n\in \bN_{>0}} \subset \Pi \setminus \{\pi^*\}$ such that $v(\pi_n) \to v(\pi^*)$ for $n \to \infty$.
Using notation from the proof of Proposition~\ref{prop:limitation}, we define
\begin{equation*}
\pi_n(D = 1 \given x,z) :=
\begin{cases}
1 &\text{if } (x, z) \in W_1(\pi^*)\eqc\\
\frac{1}{n} &\text{otherwise}\eqp
\end{cases}
\end{equation*}
It is clear that $\pi_n$ is exploring, i.e., $\pi_n \in \Pi$, for all $n \in \bN_{>0}$ as well as that $\pi_n \ne \pi^*$.
To compute
\begin{equation*}
\lim_{n \to \infty} v_{\dP_{\pi_0}}(\pi_n, \pi_0)
= \lim_{n \to \infty} \Bigl( u_{\dP_{\pi_0}}(\pi_n, \pi_0)
- \frac{\lambda}{2} \bigl(b_{\dP_{\pi_0}}^0(\pi_n, \pi_0) - b_{\dP_{\pi_0}}^1(\pi_n, \pi_0)\bigr)^2 \Bigr)
\end{equation*}
we look at the individual limits.
For the utility we have
\begin{align*}
\lim_{n \to \infty} u_{\dP_{\pi_0}}(\pi_n, \pi_0)
&=
\lim_{n \to \infty}
    \E_{x, z, y \sim \dP_{\pi_0}(X, Z, Y)}
      \left[
        \frac{\pi_n(D = 1 \given x, z)}{\pi_0(D = 1 \given x, z)} (y - c)
      \right] \\
&=
\int_{W_1(\pi^*)} \frac{\dP(Y=1 \given x, z) - c}{\pi_0(D=1 \given x, z)}\, d\dP_{\pi_0}(x,z)\; +
\\
&\quad\;
\lim_{n\to \infty} \frac{1}{n}
\usub{=: C_1 \text{ with } |C_1| < \infty \text{ for any given exploring } \pi_0 \in \Pi}{\int_{W_1(\pi^*)^{\complement}} \frac{\dP(Y=1 \given x, z) - c}{\pi_0(D=1 \given x, z)}\, d\dP_{\pi_0}(x,z)}\\
&= \int_{W_1(\pi^*)} (y-c)\, d\dP(x,z,y) + \lim_{n\to \infty} \frac{C_1}{n}\\
&= u_{\dP}(\pi^*) \eqp
\end{align*}
Similarly, for the benefit terms that are linear in both arguments, such as $f(d, y) = d$, we have for $z \in \{0,1\}$
\begin{align*}
\lim_{n \to \infty} b_{\dP_{\pi_0}}^z(\pi_n, \pi_0)
&= \E_{x, y \sim \dP_{\pi_0}(X, Y \given z)}
          \left[
            \frac{f(\pi_n(D = 1 \given x, z), y)}{\pi_0(D = 1 \given x, z)}
          \right]
\\
&= \int_{W_1(\pi^*)} \frac{f(1, \dP(Y=1 \given x, z))}{\pi_0(D=1 \given x, z)}\, d\dP_{\pi_0}(x \given z)\; +
\\
&\quad\;
\lim_{n\to \infty} \frac{1}{n}
\usub{=: C_2^z \text{ with } |C_2^z| < \infty \text{ for any given exploring } \pi_0 \in \Pi}{\int_{W_1(\pi^*)^{\complement}} \frac{f(1, \dP(Y=1 \given x, z))}{\pi_0(D=1 \given x, z)}\, d\dP_{\pi_0}(x \given z)}\\
&= \int_{W_1(\pi^*)} f(1, y)\, d\dP(x, y \given z) + \lim_{n\to \infty} \frac{C_2^z}{n}\\
&= b_{\dP}^z(\pi^*) \eqp
\end{align*}
Because all the limits are finite, via the rules for sums and products of limits we get
\begin{align*}
  \lim_{n \to \infty} v_{\dP_{\pi_0}}(\pi_n, \pi_0)
 &= \lim_{n \to \infty} u_{\dP_{\pi_0}}(\pi_n, \pi_0) - \frac{\lambda}{2} (\lim_{n \to \infty} b_{\dP_{\pi_0}}^0(\pi_n, \pi_0) - \lim_{n \to \infty} b_{\dP_{\pi_0}}^1(\pi_n, \pi_0))^2 \\
 &= u_{\dP}(\pi^*) - \frac{\lambda}{2} (b_{\dP}^0(\pi^*) - b_{\dP}^1(\pi^*))^2 \\
 &= v_{\dP}(\pi^*) \eqp
\end{align*}
\end{proof}
This shows that---unlike within deterministic threshold models---within exploring policies we can learn the optimal policy using only data from an induced distribution.
Finally, we would like to highlight that not all exploring policies may be (equally) acceptable to society.
For example, in lending scenarios without fairness constraints (i.e., $\lambda=0)$, it may appear wasteful to deny a loan with probability greater than zero to individuals who are believed to repay by the current model.
In those cases, one may like to consider exploring policies that, given sufficient evidence, decide $d=1$ deterministically, i.e., $\pi_0(D = 1 \given x, z) = 1$ for some values of $x, z$.
We will operationalize this notion in \secref{sec:algorithm} as what we call the \emph{semi-logistic policy}.
Other settings, like the criminal justice system, call for a more general discussion about the ethics of non-deterministic decision making.

\section{How to learn exploring policies}
\label{sec:algorithm}

\begin{algorithm}[t!]
\caption{\textsc{ConsequentialLearning}: train a sequence of policies $\pi_{\btheta_t}$ of increasing $v_{\dP}(\pi_{\btheta_t})$.}
\label{algo:SGDpolicylearning}
  \begin{algorithmic}[1]
    \Input cost $c$, time steps $T$, decisions $N$, iterations $M$, minibatch size $B$, penalty $\lambda$, learning rate $\alpha$
        \State $\btheta_0 \gets \Call{InitializePolicy}{{}}$
        \For{$t = 0, \ldots, T-1$} \Comment{time steps}
            \State $\cD^t \gets \Call{CollectData}{{\btheta_t, N}}$
            \State $\btheta_{t+1} \gets \Call{UpdatePolicy}{{\btheta_t, \cD^t, M, B, \alpha}}$
        \EndFor
        \State \Return $\{\pi_{\btheta_{t}}\}_{t=0}^{T}$
    \Statex
    \Function{CollectData}{$\btheta$, $N$}
        \State $\cD \gets \emptyset$
        \For{$i = 1, \ldots, N$} \Comment{$N$ decisions}
            \State $(x_i, z_i) \sim \dP(X,Z)$ and $d_i \sim \pi_{\btheta}(x_i,z_i)$
            \If{$d_i=1$} \Comment{positive decision}
                \State $\cD \gets \cD \cup \{(x_i,z_i,y_i)\}$ with $y_i \sim \dP(Y\given x_i, z_i)$
            \EndIf
        \EndFor
        \State \Return $\cD$ \Comment{data observed under $\pi_{\btheta}$}
    \EndFunction
    \Statex
    \Function{UpdatePolicy}{$\btheta'$, $\cD$, $M$, $B$, $\alpha$}
        \State $\btheta^{(0)} \gets \btheta'$
        \For{$j=1,\ldots, M$} \Comment{iterations}
        \State $\cD^{(j)} \gets \Call{Minibatch}{{\cD}, B}$ \Comment{sample minibatch}
        \State $\nabla \gets 0$, $n_j \gets 0$
        \For{$(x, z, y) \in \cD^{(j)}$} \Comment{accumulate gradients}
                \State $d \sim \pi_{\btheta^{(j)}}(x, z)$
                \If{$d = 1$}
                  \State $n_j \gets n_j + 1$
                  \State $%
                    \nabla \gets \nabla +
                    \nabla_{\btheta} v(\pi_{\btheta}, \pi_{\btheta'}) |_{\btheta=\btheta^{(j)}}$
                \EndIf
         \EndFor
         \State $\btheta^{(j+1)} \gets \btheta^{(j)} + \alpha \, \frac{\nabla}{n_j}$
        \EndFor
        \State \Return $\btheta^{M}$
    \EndFunction
  \end{algorithmic}
\end{algorithm}

In this section, we exemplify Proposition~\ref{prop:positive} via a simple, yet practical, gradient-based algorithm to find the solution to eq.~\eqref{eq:policy_learning} within a (differentiable) parameterized class of exploring policies $\Pi(\Theta)$ using data gathered by a given, already deployed, exploring policy $\pi_0$.
To this end, we consider a class of parameterized exploring policies $\Pi(\Theta)$ and we aim to find the policy $\pi_{\btheta^*} \in \Pi(\Theta)$ that solves the optimization problem in eq.~\eqref{eq:policy_learning}.

We use stochastic gradient ascent (SGA) \citep{kiefer1952stochastic} to learn the parameters of the new policy, i.e.,
\begin{equation*}
  \btheta_{i + 1} = \btheta_{i} + \alpha_{i} \nabla_{\btheta} v_{\dP}(\pi_{\btheta}) |_{\btheta = \btheta_{i}}\eqc
\end{equation*}
where
\begin{equation*}
\nabla_{\btheta} v_{\dP}(\pi_{\btheta}) = \nabla_{\btheta} u_{\dP}(\pi_{\btheta}) - \lambda (b_{\dP}^0(\pi_{\btheta}) - b_{\dP}^1(\pi_{\btheta})) (\nabla_{\btheta} b_{\dP}^0(\pi_{\btheta}) - \nabla_{\btheta}b_{\dP}^1(\pi_{\btheta}))\eqc
\end{equation*}
and $\alpha_i > 0$ is the learning rate at step $i \in \bN$.
With the reweighting from eq.~\eqref{eq:UtilityStoch} and the log-derivative trick \citep{williams1992simple}, we can compute the gradient of the utility and the benefits as
\begin{align}
\begin{split}\label{eq:gradient-pi0}
  \nabla_{\btheta} u_{\dP}(\pi_{\btheta})
  &= \E_{\substack{x, z, y \sim \dP_{\pi_0} \\ d \sim \pi_{\btheta}(x, z)}}
  \Big[ \frac{d\, (y - c) \nabla_{\btheta}\log \pi_{\btheta} }{\pi_0(D = 1 \given x, z)} \Big] \eqc \\
  \nabla_{\btheta} b_{\dP}^z(\pi_{\btheta})
  &= \E_{\substack{x, z, y \sim \dP_{\pi_0} \\ d \sim \pi_{\btheta}(x, z)}}
  \Big[ \frac{f(d,y) \nabla_{\btheta}\log \pi_{\btheta}}{\pi_0(D = 1 \given x, z)} \Big] \eqc
\end{split}
\end{align}
where $\nabla_{\btheta} \log \pi_{\btheta} := \nabla_{\btheta} \log \pi_{\btheta} (D \given x, z)$ is the score function \citep{Hyvarinen05:ScoreMatching}.
Thus, our implementation resembles a REINFORCE algorithm with horizon one.

Note that we can obtain an expression for $\nabla_{\btheta_t} v_{\dP}(\pi_{\btheta_t})$ by simply replacing $\pi_0$ with $\pi_{\btheta_{t-1}}$ in eq.~\eqref{eq:gradient-pi0}.
Thus we can estimate the gradient with samples $(x_i, z_i, y_i)$ from the distribution $\dP_{\pi_{t-1}}$ induced by the previous policy $\pi_{t-1}$, and sample the decisions from the policy under consideration $d_i \sim \pi_{\btheta_t}$.
This yields an unbiased finite sample Monte-Carlo estimator for the gradients
\begin{align}
\begin{split}\label{eq:gradutilestim}
  & \nabla_{\btheta_t} u(\pi_{\btheta_t}, \pi_{\btheta_{t-1}}) \approx \frac{1}{n_{t-1}}
  \sum_{i = 1}^{n_{t-1}} \frac{d_i (y_i - c)}{\pi_{\btheta_{t-1}}(D = 1 \given x_i, z_i)}\, \nabla_{\btheta_t} \log \pi_{\btheta_t}(D = d_i \given x_i, z_i) \eqc
  \\
  & \nabla_{\btheta_t} b^z(\pi_{\btheta_t}, \pi_{\btheta_{t-1}}) \approx \frac{1}{n_{t-1}}
  \sum_{i = 1}^{n_{t-1}} \frac{f(d_i, y_i)}{\pi_{\btheta_{t-1}}(D = 1 \given x_i, z_i)}\, \nabla_{\btheta_t} \log \pi_{\btheta_t}(D = d_i \given x_i, z_i) \eqc
\end{split}
\end{align}
where $n_{t-1}$ is the number of positive decisions taken by $\pi_{\btheta_{t-1}}$.
Here, it is important to notice that, while the decisions by $\pi_{\btheta_{t-1}}$ were actually taken and, as a result, (feature and label) data was gathered under $\pi_{\btheta_{t-1}}$, the decisions $d_i \sim \pi_{\btheta_{t}}$ are just sampled to implement SGA.
The overall policy learning process is summarized in Algorithm~\ref{algo:SGDpolicylearning}, where \textsc{Minibatch}$(\cD, B)$ samples a minibatch of size $B$ from the dataset $\cD$ and \textsc{InitializePolicy}$()$ initializes the policy parameters.

Unfortunately, the above procedure has two main drawbacks.
First, it may require an abundance of data from $\dP_{\pi_0}$, which can be unacceptable in terms of utility if $\pi_0$ is far from optimal.
Second, if $\pi_0(D = 1 \given x, z)$ is small in a region where $\pi_{\btheta}$ often takes positive decisions, one may expect that an empirical estimate of the above gradient will have high variance, due to similar arguments as in weighted inverse propensity scoring \citep{Sutton98:RL}.
On the other hand, in most practical applications updating the model after every single decision is impractical.
Typically, a fixed model will be deployed for a certain period, before it is updated using the data collected within this period.
This is also a natural mode of operation for predictive models in real-world applications.

\begin{figure}\label{fig:syntsetting}
\centering
\begin{minipage}{0.5\textwidth}
\centering
\begin{tikzpicture}%
    \node[anchor=south west,inner sep=0] (image) at (0,0) {%
      \includegraphics[width=\columnwidth]{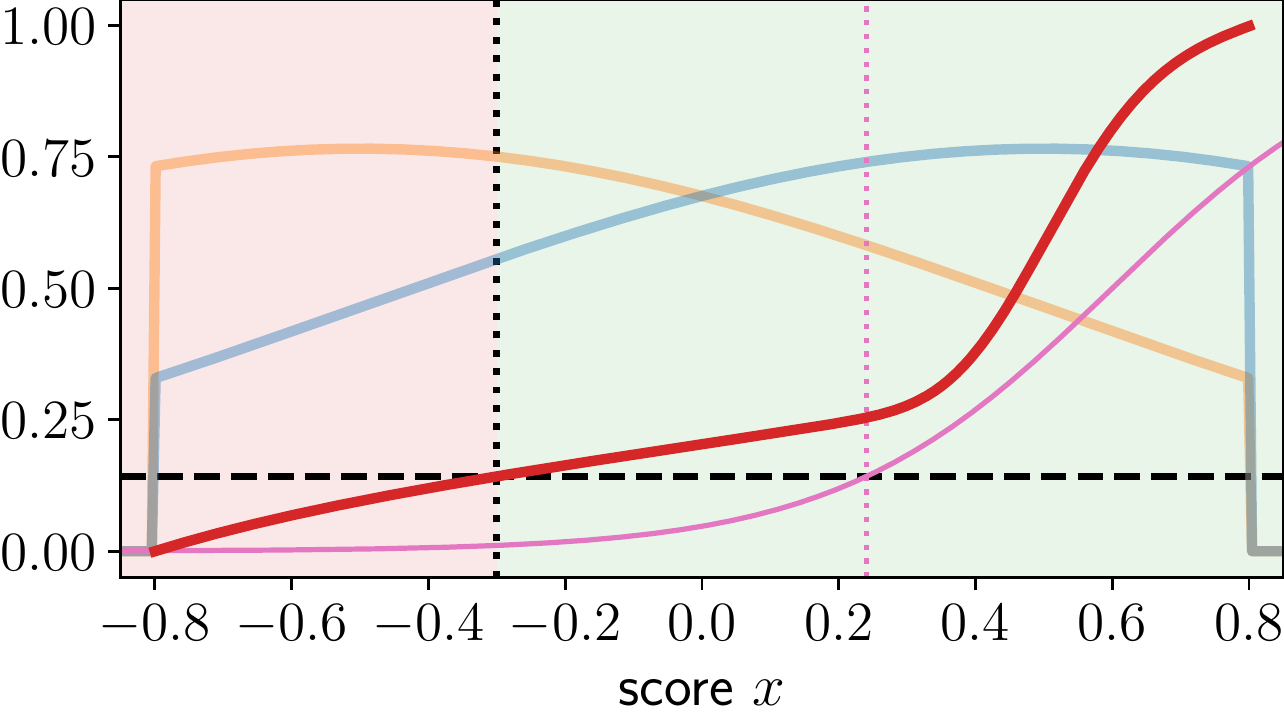}};
    \begin{scope}[x={(image.south east)}, y={(image.north west)}]
        \node at (1.25,0.6) {{\small \textbf{First Setting}}};
    \end{scope}
\end{tikzpicture}\\
\begin{tikzpicture}%
    \node[anchor=south west,inner sep=0] (image) at (0,0) {%
      \includegraphics[width=\columnwidth]{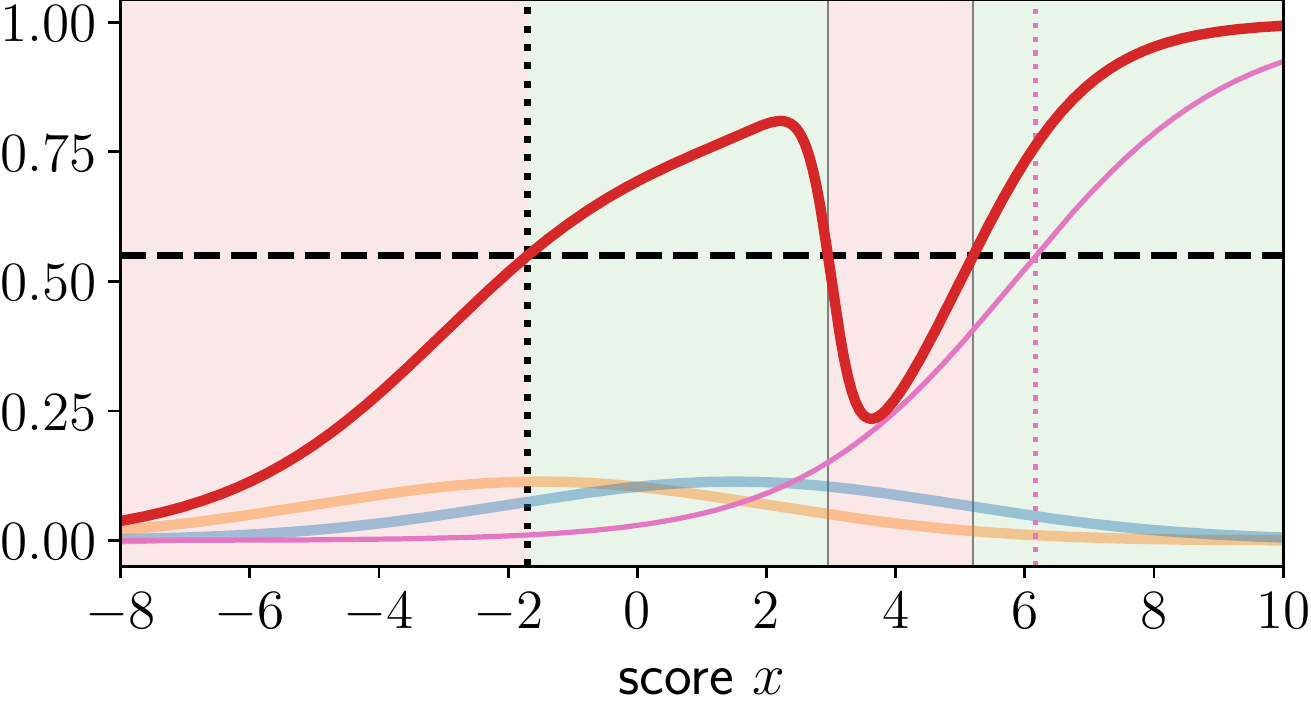}};
    \begin{scope}[x={(image.south east)}, y={(image.north west)}]
        \node at (1.25,0.4) {{\small \textbf{Second Setting}}};
    \end{scope}
\end{tikzpicture}%
\end{minipage}
\hspace{0.5cm}
\begin{minipage}{0.3\columnwidth}
\includegraphics[width=\columnwidth]{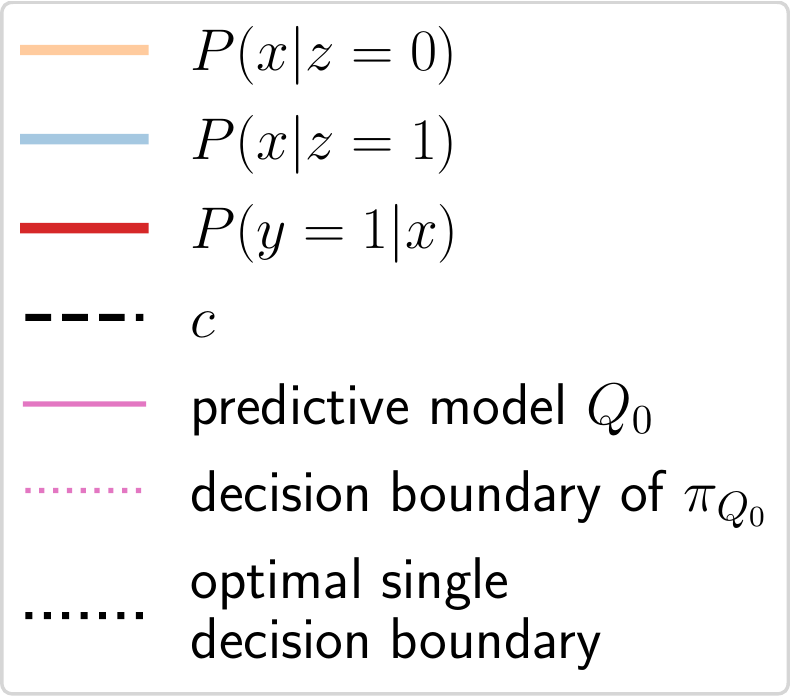}%
\end{minipage}
\caption[Illustration and description of two synthetic settings to learn fair decisions]{%
Two synthetic settings.
In red, we show $\dP(Y = 1\given x)$, where the score $x$ is drawn from different distributions for the two groups (blue/orange).
For given $c$ (black, dashed), the optimal policy decides $d=1$ ($d=0$) in the shaded green (red) regions.
The vertical black, dotted line shows the best policy achievable with a single threshold on $x$.
In pink, we show a possible imperfect logistic predictive model and its corresponding (suboptimal) threshold in $x$.}
\label{fig:synthetic-setting}
\end{figure}

To overcome these drawbacks, we build two types of sequences of policies $\{ \pi_{\btheta_t} \}_{t=0}^{T}$:
\begin{enumerate}[label=\alph*)]
  \item the \emph{iterative sequence} $\pi_{t+1} := \mathcal{A}(\pi_t, \cD^t)$ with $\cD^t \sim \dP_{\pi_t}(X,Z,Y)$, where only the data gathered by the immediately previous policy are used to update the current policy;
  \item the \emph{aggregated sequence} $\pi_{t+1} := \mathcal{A}(\pi_t, \bigcup_{i=0}^t \cD^i)$ with $\cD^i \sim \dP_{\pi_i}(X,Z,Y)$, where the data gathered by all previous policies are used to update the current policy.
\end{enumerate}

\xhdr{Remarks}
Note that in Algorithm~\ref{algo:SGDpolicylearning} we learn each policy $\pi_t$ only using data from the previous policy $\pi_{t-1}$.
This may readily be generalized to a mix of multiple previous policies $\pi_{t'}$ in eq.~\eqref{eq:gradutilestim}.
Averaging multiple gradient estimators for several $t' < t$ is again an unbiased gradient estimator.
To reduce variance, in practice one may consider recent policies $\pi_{t'}$ most similar to $\pi_t$.

The way in which we use weighted sampling to estimate the above gradients closely relates to the concept of weighted inverse propensity scoring (wIPS), commonly used in counterfactual learning \citep{Bottou13:Counterfactual,Swaminathan15:CRM}, off-policy reinforcement learning \citep{Sutton98:RL}, and contextual bandits \citep{Langford08:ES}.
However, a key difference is that, in wIPS, the labels $y$ are always observed.
As an example, in the case of counterfactual learning one may interpret $\pi_0(x,z)$ in eq.~\eqref{eq:imperfect-p} as a treatment assignment mechanism in a randomized controlled trial.
Under this interpretation, the two most prominent differences with respect to the literature become
apparent. First, we do not observe outcomes in the control group. Second, in observational studies for treatment effect estimation \citep{Rubin05:PO}, one usually estimates
the direct causal effect of $d$ on $y$, i.e., $\dP(Y \given do(D=d'), x, z)$, in the presence of confounders
$x, z$ that affect both $d$ and $y$. This could be evaluated in a (partially) randomized controlled trial, where wIPS also comes in naturally \citep{Pearl2009}.
In contrast, in our setting, the true label $y$ is independent of the decision $d$ and we estimate the conditional $\dP(Y \given x,z)$ using data from the induced distribution $\dP_{\pi_0}(X,Z) \propto \dP(X,Z) \pi_0(D = 1\given x,z)$.
With exploring policies, we obtain indirect access to the true data distribution $\dP(x,z)$ (positivity), and thus to an unbiased estimator of the conditional distribution $\dP(Y\given x,z)$ (consistency).

Despite this difference, we believe that recent advances to reduce the variance of the gradients in weighted inverse propensity scoring, such as
clipped-wIPS \citep{Bottou13:Counterfactual}, self-normalized estimator \citep{Swaminathan15:SEC}, or doubly robust estimators \citep{Dudik11:DoublyRobust},
may also be applicable to our setting.

Finally, we opt for the simple SGA approach on \mbox{(semi-)logistic} policies over, e.g., contextual bandits algorithms, because it provides a direct and fairer comparison with commonly used prediction based decision policies (e.g., logistic regression), also often trained via SGA.

While our algorithm works for any differentiable class of exploring policies, here we consider two examples of exploring policy classes in particular.

\xhdr{Logistic policy}
Our first concrete parameterization of $\pi_{\btheta}$, a \emph{logistic policy} is given by
\begin{equation*}
    \pi_{\btheta}(D = 1\given x, z) = \sigma(\bphi(x,z)^{\top} \btheta) \in (0,1) \eqc
\end{equation*}
where $\sigma(a) := \frac{1}{1+\exp(-a)}$ is the logistic function, $\btheta \in \Theta \subset \bR^m$ are the model parameters, and
$\bphi: \bR^d \times \{0,1\} \to \bR^m$ is a fixed feature map.
Note that any logistic policy is an exploring policy and we can analytically compute its score function $\nabla_{\btheta_t} \log \pi_{\btheta_t} (D=1\given x, z)$ as
\begin{equation*}
 \nabla_{\btheta_t} \log(\sigma(\bphi_i^{\top} \btheta_t))
 = \frac{\bphi_i}{1 + e^{\bphi_i^{\top} \btheta_t}} \in \bR^m \eqc
\end{equation*}
where $\bphi_i := \bphi(x_i, z_i)$.
Using this expression, we can rewrite the empirical estimator for the gradient in eq.~\eqref{eq:gradutilestim}
\begin{align*}
  &\nabla_{\btheta_t} u(\pi_{\btheta_t}, \pi_{\btheta_{t-1}}) \approx \frac{1}{n_{t-1}}
  \sum_{i = 1}^{n_{t-1}} \frac{1 + e^{-\bphi_i^{\top} \btheta_{t-1}}}{1 + e^{\bphi_i^{\top} \btheta_t}}\, d_i\, (y_i - c)\, \bphi_i \eqc
  \\
  & \nabla_{\btheta_t} b^z(\pi_{\btheta_t}, \pi_{\btheta_{t-1}}) \approx \frac{1}{n_{t-1}}
  \sum_{i = 1}^{n_{t-1}} \frac{1 + e^{-\bphi_i^{\top} \btheta_{t-1}}}{1 + e^{\bphi_i^{\top} \btheta_t}}\, f(d_i, y_i)\, \bphi_i \eqp
\end{align*}
Given the above expression, we have all the necessary ingredients to implement Algorithm~\ref{algo:SGDpolicylearning}.

\xhdr{Semi-logistic policy}
As discussed in the previous section, randomizing decisions may be questionable in certain practical scenarios.
For example, in loan decisions, it may appear wasteful for the bank and contestable for the applicant to deny a loan with probability greater than zero to individuals who are believed to repay by the current model.
In those cases, one may consider the following modification of the logistic policy, which we refer to as \emph{semi-logistic policy}:
\begin{equation*}
    \pitil_{\btheta}(D = 1\given x, z) =
    \begin{cases}
    1 & \text{ if } \bphi(x, z)^{\top} \btheta \ge 0 \eqc \\
    \sigma(\bphi(x, z)^{\top} \btheta) &\text{ if } \bphi(x, z)^{\top} \btheta < 0 \eqp
    \end{cases}
\end{equation*}
Similarly as in the logistic policy, we can compute the score function analytically as:
\begin{equation*}
    \nabla_{\btheta} \log \pitil_{\btheta}(D = 1 \given x, z) =
    \frac{\bphi(x, z)}{1 + e^{\bphi(x, z)^{\top}\btheta}} \, \B{1}[\bphi(x, z)^{\top}\btheta < 0] \eqc
\end{equation*}
and use this expression to compute an unbiased estimator for the gradient in eq.~\eqref{eq:gradutilestim} as:
\begin{align*}
  \nabla_{\btheta_t} u(\pi_{\btheta_t}, \pi_{\btheta_{t-1}})
  &\approx \frac{1}{n_{t-1}}
  \sum_{\substack{i=1 \\ \bphi_i^{\top}\btheta_t < 0}}^{n_{t-1}} \frac{d_i\, (y_i - c)\, \bphi_i} {1 + e^{\bphi_i^{\top} \btheta_t}}
  \times
  \begin{cases}
      1 &\text{ if } \bphi_i^{\top} \btheta_{t-1} \ge 0 \eqc \\
      (1 + e^{-\bphi_i^{\top}\btheta_{t-1}}) &\text{ if } \bphi_i^{\top} \btheta_{t-1} < 0 \eqp
  \end{cases}
  \\
  \nabla_{\btheta_t} b^z(\pi_{\btheta_t}, \pi_{\btheta_{t-1}})
  &\approx \frac{1}{n_{t-1}}
  \sum_{\substack{i=1 \\ \bphi_i^{\top}\btheta_t < 0}}^{n_{t-1}} \frac{f(d_i, y_i)\, \bphi_i} {1 + e^{\bphi_i^{\top} \btheta_t}}
  \times
  \begin{cases}
      1 &\text{ if } \bphi_i^{\top} \btheta_{t-1} \ge 0 \eqc \\
      (1 + e^{-\bphi_i^{\top}\btheta_{t-1}}) &\text{ if } \bphi_i^{\top} \btheta_{t-1} < 0 \eqp
  \end{cases}
\end{align*}
Note that the semi-logistic policy is an exploring policy and thus satisfies the assumptions of Proposition~\ref{prop:positive}. Finally, in all our experiments, we directly work with the available features $x$ as inputs and add a constant offset, i.e., $\phi(x, z) = (1, x)$.

\section{Experiments}
\label{sec:experiments}

We learn a sequence of policies $\{ \pi_{\btheta_t} \}_{t=1}^{T}$ using the following strategies:

\begin{description}
\item[Optimal:] decisions are taken by the optimal deterministic threshold rule $\pi^{*}$ given by eq.~\eqref{eq:detthresh}, i.e., $\pi_t = \pi^{*}$ for all $t$.
It can only be computed when the ground truth conditional $\dP(Y \given x, z)$ is known.
\item[Deterministic:] decisions are taken by deterministic threshold policies $\pi_t = \pi_{Q_t}$, where $Q_t$ are logistic predictive models maximizing label likelihood trained either in an iterative or aggregate sequence.
\item[Logistic:] decisions are taken by logistic policies $\pi_t = \pi_{\btheta_t}$ trained via Algorithm~\ref{algo:SGDpolicylearning} either in an iterative or aggregate sequence.
\item[Semi-logistic:] decisions are taken by semi-logistic policies $\pitil_t = \pitil_{\btheta_t}$ trained via Algorithm~\ref{algo:SGDpolicylearning} either in an iterative or aggregate sequence.
\end{description}

It is crucial that while each of the above methods decides over the same set of proposed $\{(x_i, z_i)\}_{i=1}^N$ at each time step $t$, depending on their decisions, they may collect labels for differing subsets and thus receive different amounts of new training data.
During learning, we record the following metrics:\footnote{For readability we only show medians over 30 runs. Figures with 25 and 75 percentiles are in Appendix~\ref{chap:aistats:additional}.}

\begin{description}
\item[Utility:] the utility $u_{\dP}(\pi_t)$ achieved by the current policy $\pi_t$ estimated empirically on a held-out dataset, the \emph{test set}, sampled i.i.d.\ from the ground truth distribution $\dP(X,Z,Y)$.
This is the utility that the decision maker would obtain if they deployed the current policy $\pi_t$ at large in the population.
\item[Effective utility:] the utility realized during the learning process up to time $t$, i.e.,
\begin{equation*}
\hat{u}(t) = \frac{1}{N\cdot t} \sum_{t' \leq t} \sum_{(x_i, z_i, y_i) \in \cD^{t'}} (y_i - c)\eqc
\end{equation*}
where $\cD^{t'}$ are the data in which the policy $\pi_{t'}(x_i, z_i)$ took positive decisions $d_i = 1$ and $N$ is the number of considered examples at each time step $t$.
This is the utility accumulated by the decision maker while learning better policies.
\item[Fairness:] the difference in group benefits between sensitive groups $\Delta b_{\dP}(\pi) := b_{\dP}^0(\pi) - b_{\dP}^1(\pi)$ for disparate impact: $f(d, y) = d$.
A policy fully satisfies the chosen criterion if and only if $\Delta b_{\dP}(\pi) = 0$.
Again, we estimate fairness empirically on the test set and thus measure the level of fairness $\pi_t$ would achieve in the entire population.
\end{description}

Detailed parameter settings for the experiments are explained in \secref{sec:app:parameters} in Appendix~\ref{chap:aistats:additional}.

\subsection{Experiments on synthetic data}

\begin{figure}
\centering
\def\figheight{4cm}
\includegraphics[width=\textwidth]{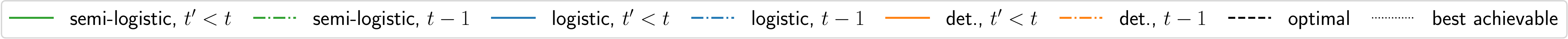}\\
\textbf{First Setting}\\
\includegraphics[height=\figheight]{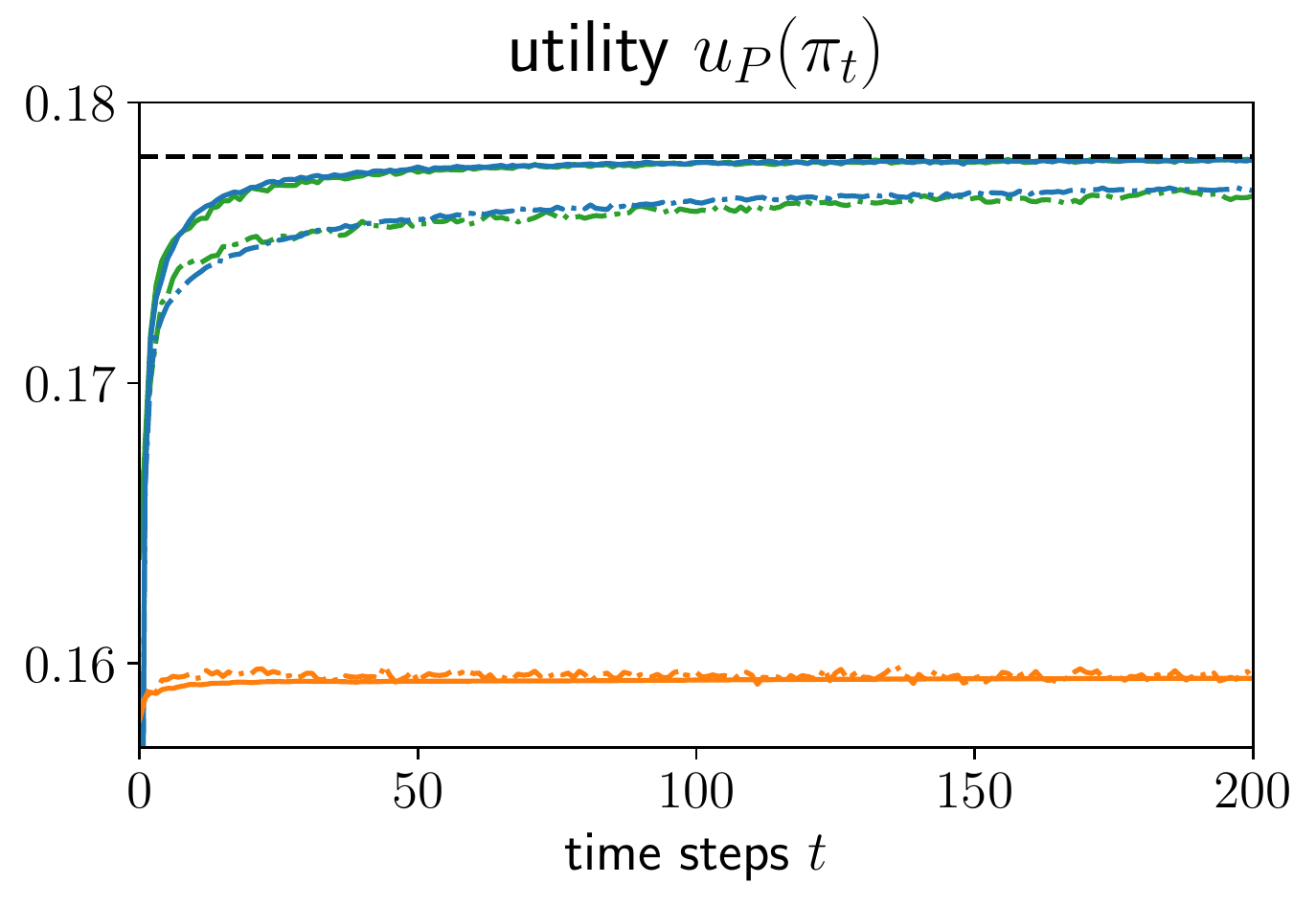}%
\hspace{1cm}
\includegraphics[height=\figheight]{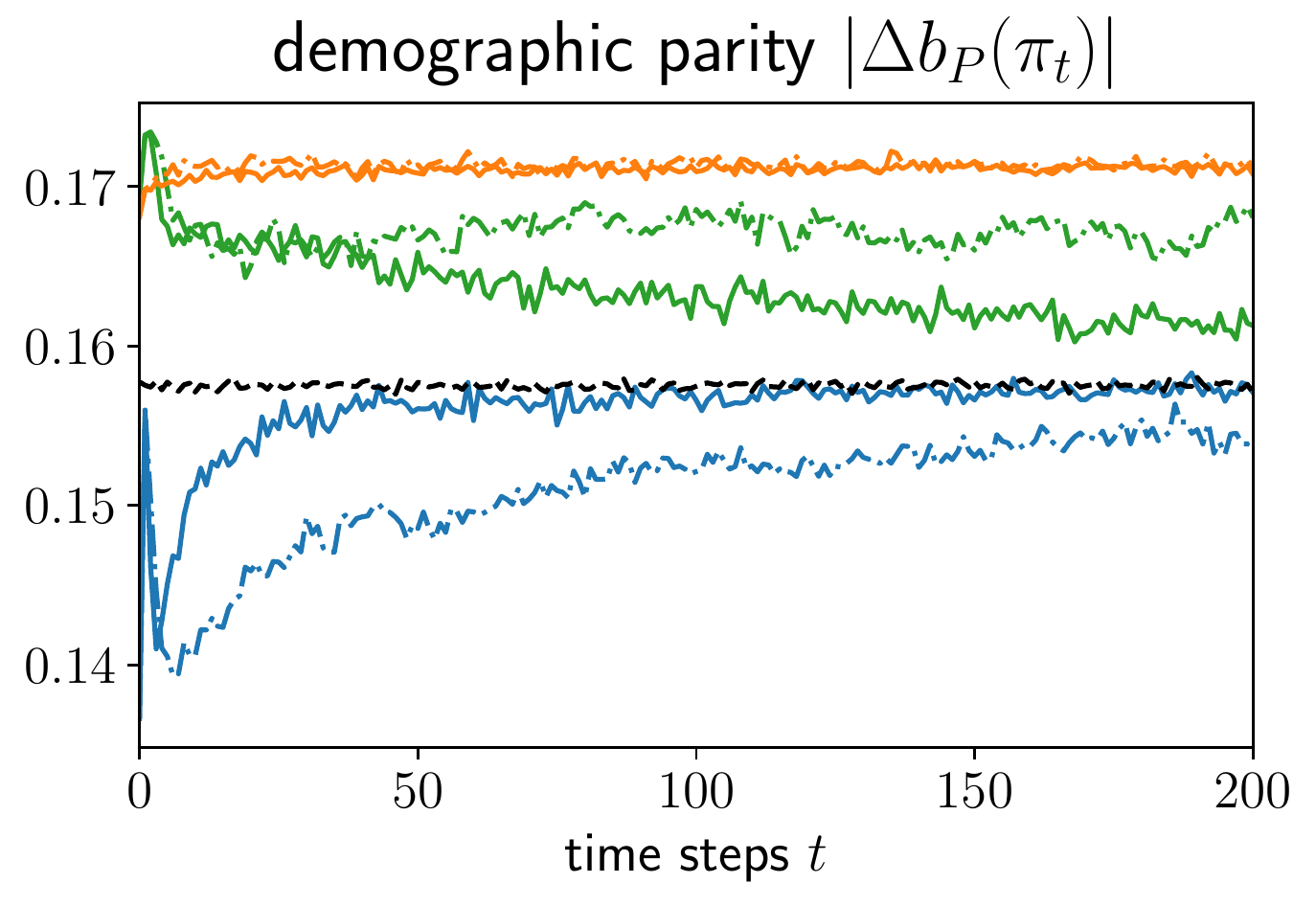}\\%
\textbf{Second Setting}\\
\includegraphics[height=\figheight]{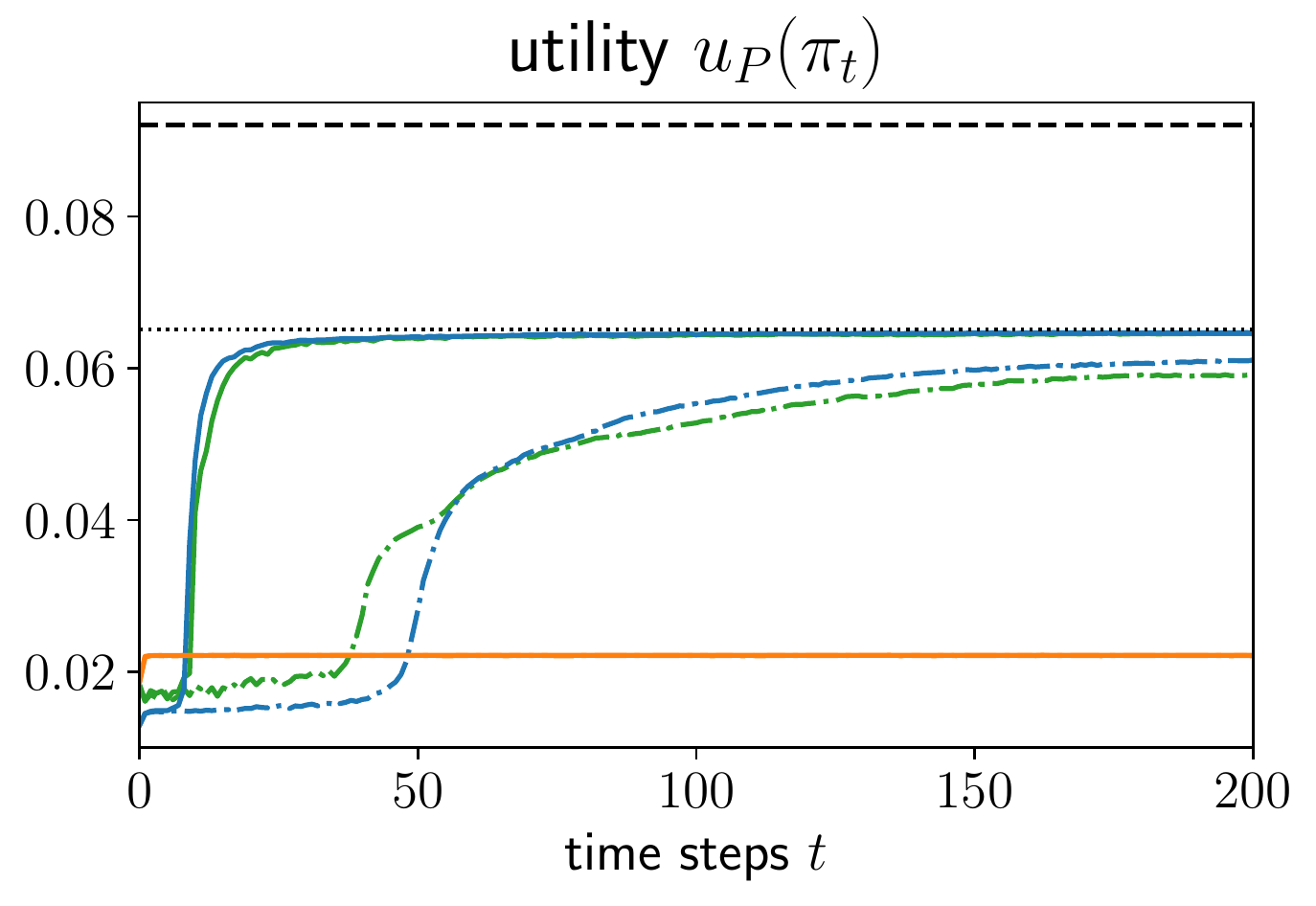}%
\hspace{1cm}
\includegraphics[height=\figheight]{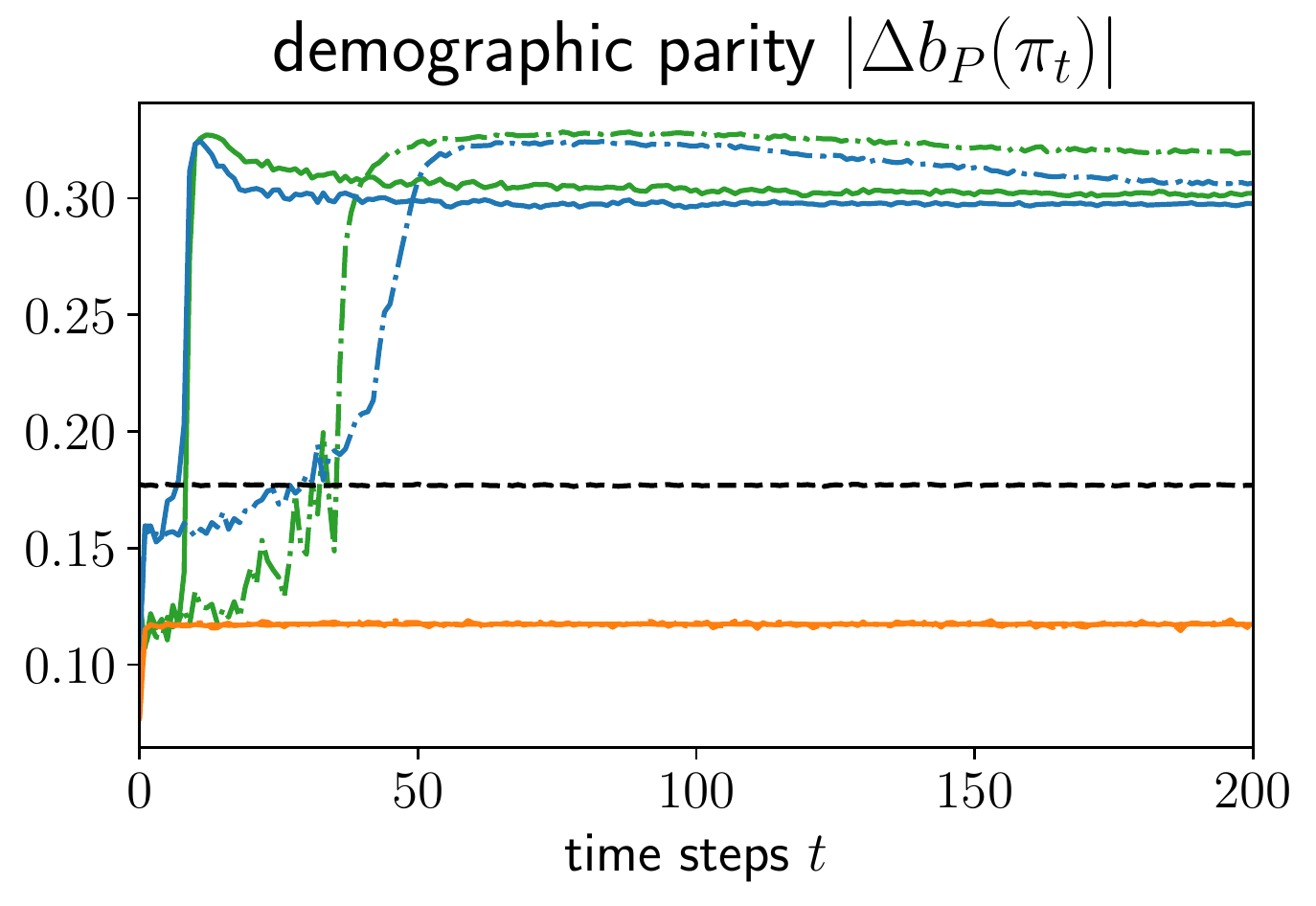}%
\caption[Empirical results of utility and fairness when learning decisions with exploring policies in two synthetic settings]{Utility and demographic parity in the synthetic settings of Figure~\ref{fig:synthetic-setting} without enforcing fairness constraints, i.e., $\lambda=0$.}
\label{fig:results-synthetic}
\end{figure}

We assume that there is a single non-sensitive feature $x \in \bR$ per individual---similar to the lending example in \secref{sec:sequential}---and a sensitive attribute $z \in \{0,1\}$.
While $\dP(X \given z=0) \ne \dP(X \given z=1)$, in our experiments the policies only take $x$ as input, and \emph{not} the sensitive attribute, which is only used for the fairness constraints.
We consider two different settings, illustrated in Figure~\ref{fig:synthetic-setting}, where $z \sim \mathrm{Ber}(0.5)$ and the distributions over $x$ differ for the two groups, see Appendix~\ref{chap:aistats:additional}.
In the first setting, the conditional probability $\dP(Y = 1 \given x)$ is strictly monotonic in the score and does not depend on $z$, but is not well calibrated, i.e., not directly proportional to $x$.
In the second setting, the conditional probability $\dP(Y = 1\given x)$ crosses the cost threshold $c$ multiple times resulting in two disjoint intervals for which the optimal decision is $d=1$ (green areas).

Figure~\ref{fig:results-synthetic} summarizes the results for $\lambda = 0$, i.e., without explicitly enforcing fairness constraints.
Our method outperforms prediction based deterministic threshold rules in terms of utility in both settings.
This can be easily understood from the evolution of policies illustrated in Figure~\ref{fig:results_synthetic_evolution} in Appendix~\ref{chap:aistats:additional}.
In the first setting, exploring policies locate the optimal decision boundary, whereas the deterministic threshold rules get stuck, even though $\dP(Y = 1\given x)$ is monotonic in $x$.
In the second setting, our methods explore more and eventually identify the best single threshold at the black vertical dotted line in Figure~\ref{fig:synthetic-setting}.
In contrast, non-exploring deterministic threshold rules converge to a suboptimal threshold at $x\approx 5$, ignoring the left green region.

In the first setting, we also observe that the suboptimal predictive models amplify unfairness beyond the levels exhibited by the optimal policy.
For our approach, levels of unfairness are comparable to or even below those of the optimal policy.
The second setting shows that depending on the ground truth distribution, higher utility can be directly linked to larger fairness violations.
In such cases, our approach allows to explicitly control for fairness.
Results on utility and demographic parity under fairness constraints with different $\lambda$ are shown in Figure~\ref{fig:results-synthetic-final-dp} in Appendix~\ref{chap:aistats:additional}.
In essence, $\lambda$ trades off utility and fairness violations to the point of perfect fairness in the ground truth distribution.

\subsection{Experiments on real data}

\begin{figure}
\centering
\includegraphics[width=\textwidth]{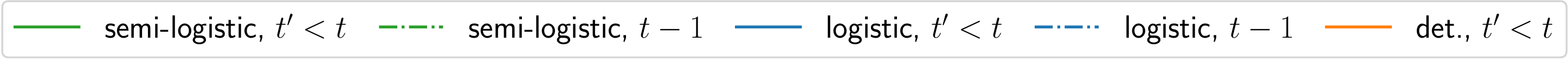}\\
\includegraphics[width=0.32\textwidth]{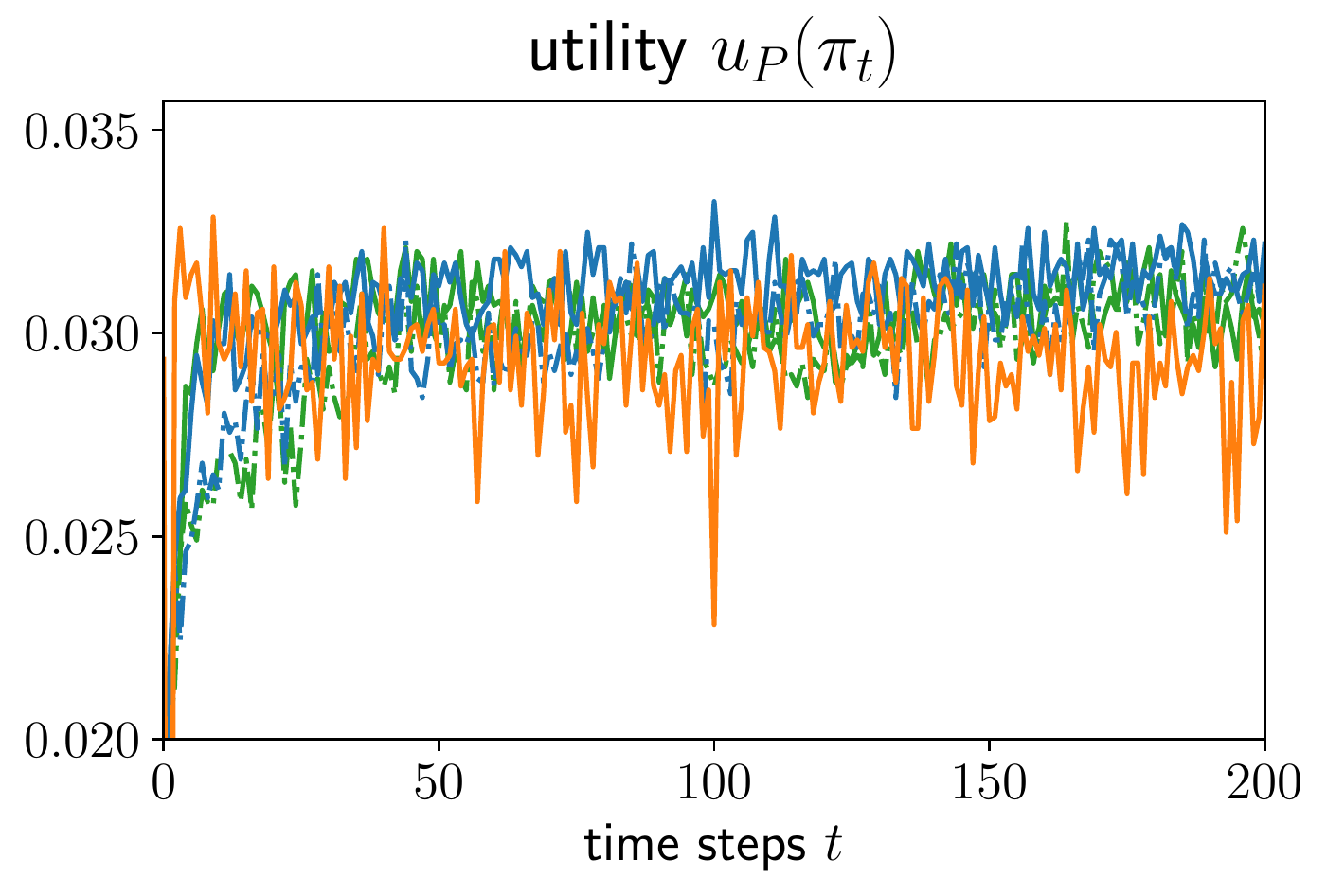}
\includegraphics[width=0.32\textwidth]{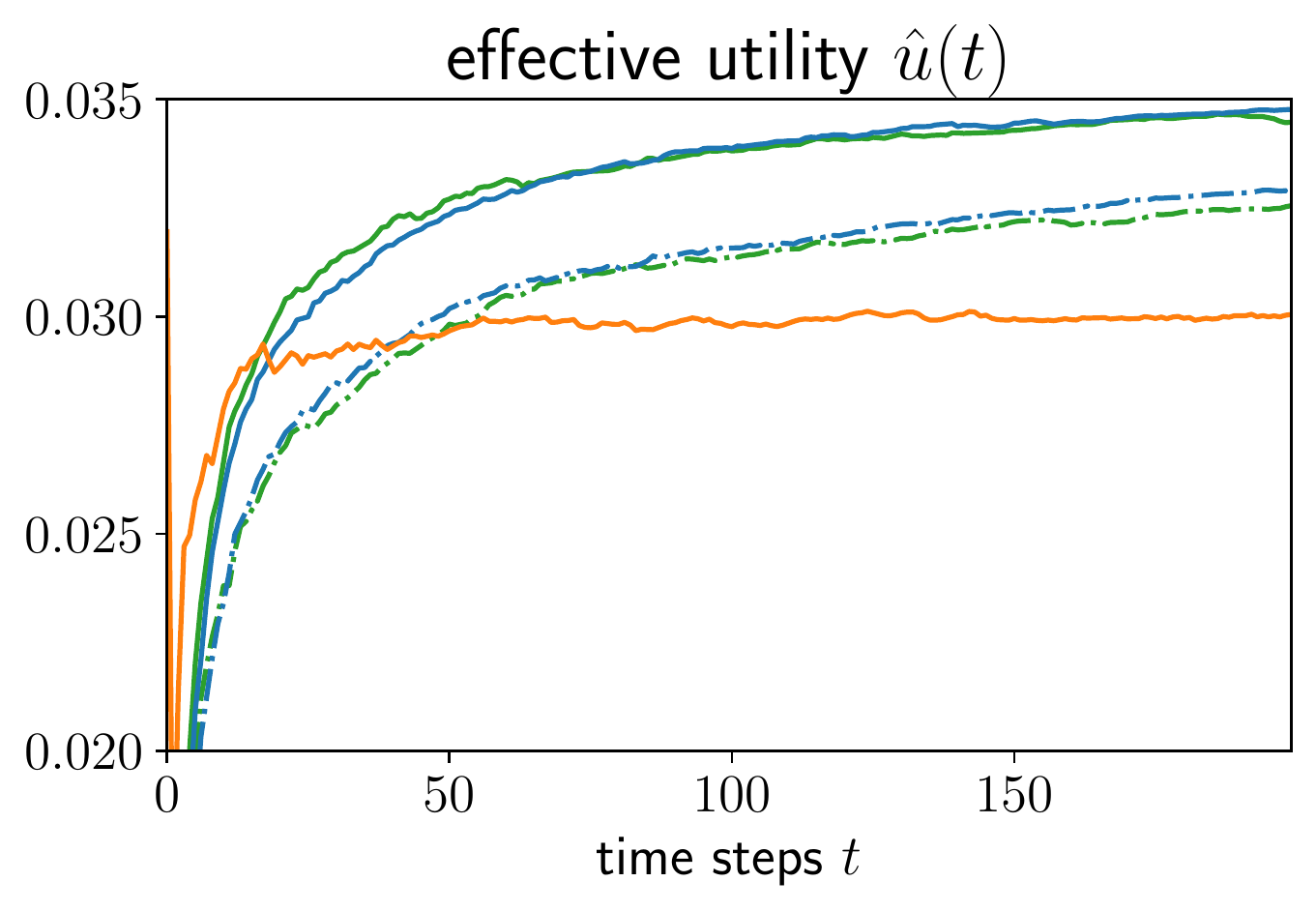}
\includegraphics[width=0.32\textwidth]{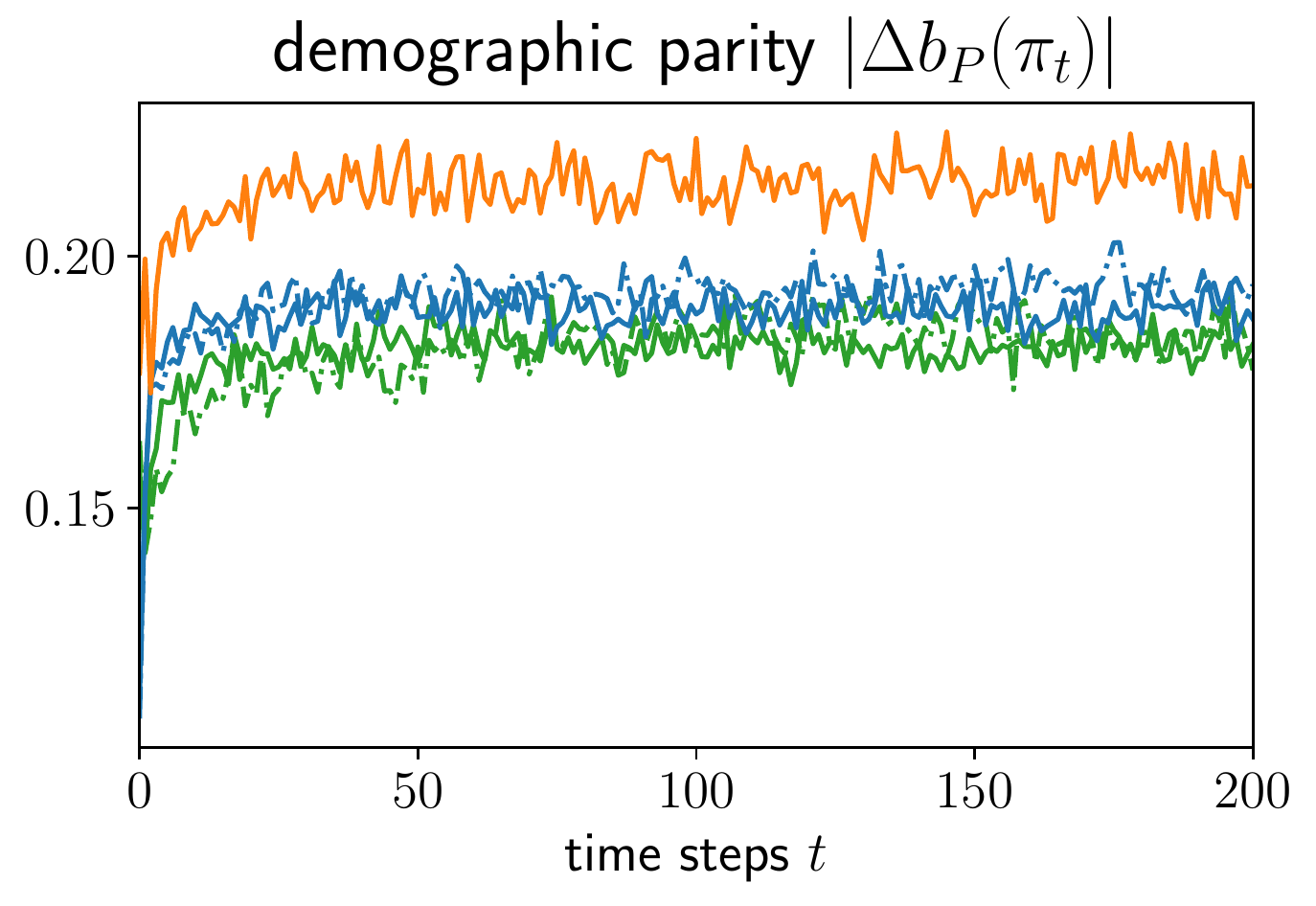}
\caption[Training progress of learning decisions on COMPAS data without fairness constraints]{Training progress on COMPAS data for $\lambda = 0$, i.e., without fairness constraints.}
\label{fig:results-real-time}
\end{figure}
\begin{figure}
\centering
\def\figheight{4cm}
\def\figspacing{1}
\includegraphics[width=\textwidth]{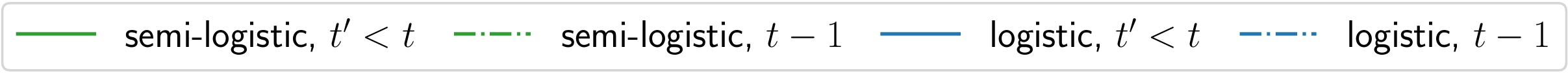}\\
\includegraphics[height=\figheight]{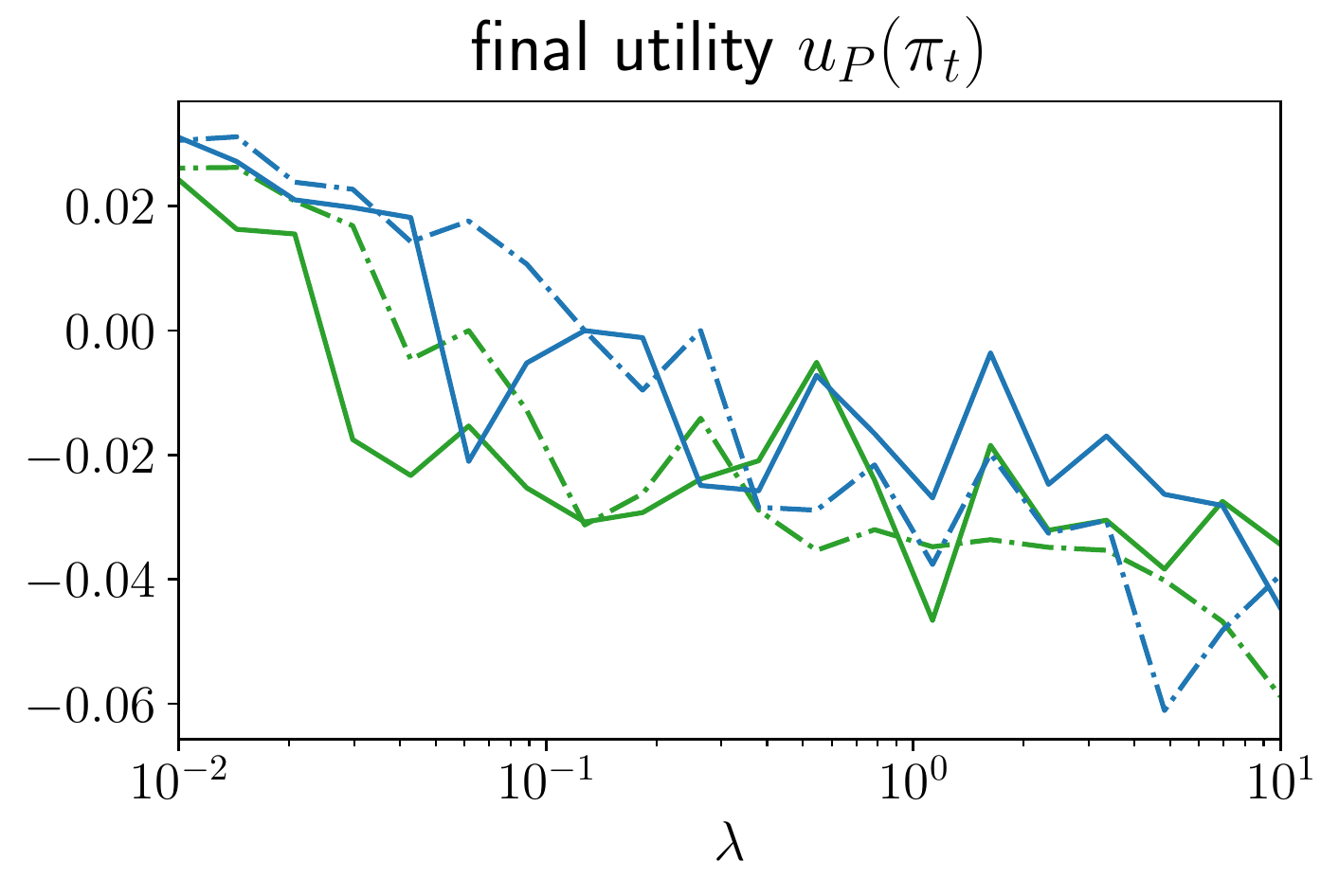}
\hspace{\figspacing cm}
\includegraphics[height=\figheight]{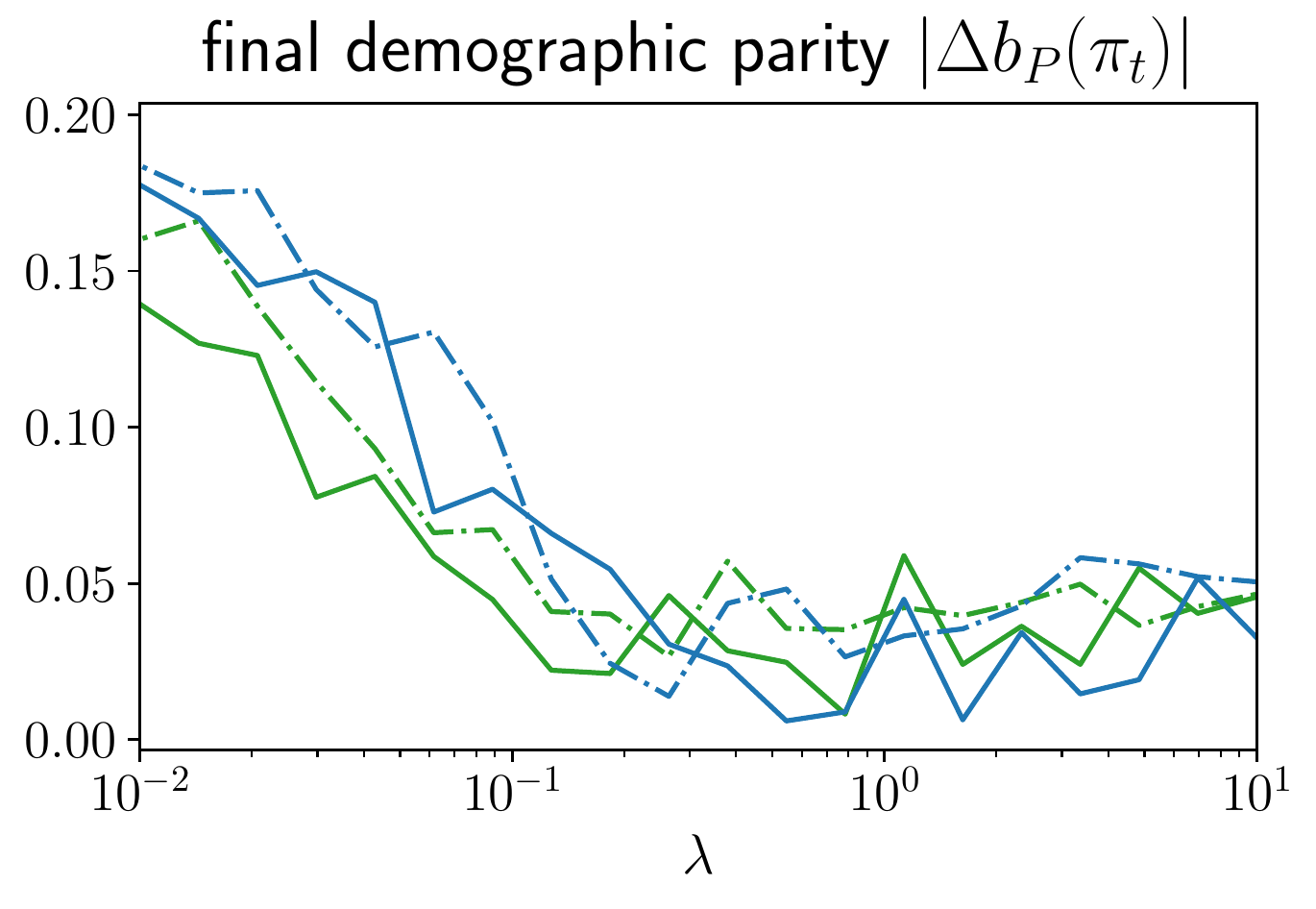}
\caption[Fairness evaluation of learning decisions on COMPAS data]{Fairness evaluation on COMPAS data for the final ($t=200$) policy as a function of $\lambda$ for demographic parity.
All quantities are estimated on the held-out set.}
\label{fig:results-real-lambda}
\end{figure}

Here, we use the COMPAS recidivism dataset compiled by ProPublica \citep{Angwin2016}, which comprises of information about criminal offenders screened through the COMPAS tool in Broward County, Florida during 2013-2014.
For each offender, the dataset contains a set of demographic features, the criminal history, and the risk score assigned by COMPAS.
Moreover, ProPublica collected whether or not these individuals were rearrested within two years of the screening.
In our experiments, $z \in \{0, 1\}$ indicates whether individuals were identified ``white'', $y$ indicates rearrest, and $d \sim \pi(x, z)$ determines whether an individual is let out on parole.
Again, $z$ is not used as an input.
We use 80\% of the data for training, where at each step $t$, we sample (with replacement) $N$ individuals, and the remaining 20\% as a held-out set to evaluate each learned policy in the population of interest.

We first summarize the results for $\lambda=0$, i.e., without fairness constraints in Figure~\ref{fig:results-real-time}.
A slight initial utility advantage of the deterministic threshold rule is quickly overcome by our exploring policies.
This is best seen when looking at \emph{effective utility}, the average utility accumulated by the decision maker on training data up to time $t$, for which our strategies dominate after $t=100$.
Hence, early exploration not only pays off to eventually be able to take better decisions, but also reaps higher profit during training.
Moreover, all strategies based on exploring policies consistently achieve lower violations of demographic parity than the deterministic threshold rules.
In summary, even without fairness constraints, i.e., in a pure utility maximization setting, exploring policies achieve higher utility and simultaneously reduce unfairness compared to deterministic threshold rules.

In Figure~\ref{fig:results-real-lambda}, we show how utility and demographic parity of the final policy $\pi_{t=200}$ changes as a function of $\lambda$ when constraining demographic parity.
As expected, while we are able to achieve perfect demographic parity, this comes with a drop in utility.
All remaining metrics under fairness constraints are shown in Figure~\ref{fig:results-real-final} in Appendix~\ref{chap:aistats:additional}.
Finally, two remarks are in order.
First, for real-world data we cannot evaluate the optimal policy and do not expect it to reside in our model class.
However, even when logistic models do not perfectly capture the conditional $\dP(Y = 1 \given x)$, our comparisons here are ``fair'' in that all strategies have equal modeling capacity.
Second, we take the COMPAS dataset as our (empirical) ground truth distribution even though it likely also suffered from selective labels.
To learn about the real distribution underlying the dataset, we would need to actually deploy our strategy.

\section{Conclusion}
\label{sec:conclusions}

In this chapter, we have analyzed consequential decision making using imperfect predictive models, which are learned from data gathered by potentially biased historical decisions.
First, we have articulated how this approach fails to optimize utility when starting with a non-optimal deterministic policy.
Next, we have presented how directly learning to decide with exploring policies avoids this failure mode while respecting a common fairness constraint.
Finally, we have introduced and evaluated a simple, yet practical gradient-based algorithm to learn fair exploring policies.

Unlike most previous work on fairness in machine learning, which phrases decision making directly as a prediction problem, we argue for a shift from ``learning to predict'' to ``learning to decide''.
In particular, we propose to not simply equate decisions with predictions obtained directly from limited available data, but to remain cognizant of how made decisions can affect and interfere with future data collection by continued exploration.
Not only does this lead to improved fairness in this context, but it also establishes connections to other areas such as counterfactual inference, reinforcement learning and contextual bandits.
Within reinforcement learning, it would be interesting to move beyond a static distribution $\dP$ by incorporating feedback from decisions or non-static externalities.
Moreover, since we have shown how shifting focus from learning predictions to learning decisions requires exploration, we hope to stimulate future research on how to explore ethically in different domains.

The crucial difference between mere predictions and actual decisions has been further highlighted in concurrent and later work \citep{kleinberg2018algorithmic,rambachan2020economic}.
There, the authors argue for a social planner taking decisions to maximize a social welfare function that may also include fairness preferences.
If the social planner has access to predictions from machine learning systems, it is optimal to keep the decision and prediction stage completely separate and train the machine learning pipeline to maximize predictive performance without any additional constraints.
This suggests that machine learning systems should indeed be used for prediction alone, but there is a separate optimization problem in deriving decisions from these predictions.
It is this second stage in which we must account for fairness considerations.
However, these approaches do not consider selective labels.
In general, as soon as the observed data depends on the chosen decision rule, the prediction and decision stage may not be easily decoupled anymore.
We believe the importance to clearly distinguish between predictions and decisions on a technical level reflects our arguments in \chapref{chap:intro} that discrimination is usually a systemic cultural issue.
Merely adjusting or improving predictive models does not suffice to make beneficial decisions.

\chapter{Conclusion}
\label{chap:conclusion}

In this thesis we developed methodology and tools to overcome certain restrictive assumptions commonly made in fair machine learning for consequential decision making. 
Thereby we broaden the applicability of such systems and bring them closer to application.
This chapter summarizes these contributions, draws conclusions, and suggests directions for future work.
At the end, we again put our contributions into the broader context of socially beneficial machine learning.

\section{Summary of contributions}

In \chapref{chap:causal} we applied a causal lens to see beyond observational group matching criteria and put the causal data generating mechanisms in the center of the discussion.
This allowed us to overcome fundamental limitations of observational fairness criteria and revealed some subtle---yet crucial---aspects relating to the meaning we assign to protected attributes.
The main constituents of our conceptual framework are resolving variables and proxy variables, which play a dual role in defining types of discrimination from a skeptic or benevolent viewpoint.
We developed a practical procedure to remove proxy discrimination given
the structural equation model is of a certain functional form, and analyzed a similar approach for unresolved discrimination.
Whilst not always feasible, the causal approach naturally
creates an incentive to scrutinize the data more closely and work out plausible
assumptions to be discussed alongside any conclusions regarding fairness.
In particular, our framework assumes that we have access to the true underlying causal graph.

Since this is a strong assumption in itself, in \chapref{chap:sensitivity} we also developed tools to analyze the sensitivity of counterfactually fair classifiers to potential unmeasured confounding.
A grid-based approach was introduced for confounding---modeled as unobserved correlation between error terms---between two variables, and an optimization-based approach for confounding in the general multivariate case.
These methods for sensitivity analysis are a step towards extending the applicability of causal methods to assess and mitigate discrimination in real-world settings.

In \chapref{chap:blindjustice} we addressed a dilemma of traditional approaches to fairness:
in order to enforce fairness, sensitive attributes must be examined;
yet in many situations, users may refuse to provide them, or modelers may be prohibited from collecting them.
We adopted and improved recent methods in secure multi-party computation and demonstrated that it is practical on real-world datasets to:
\begin{enumerate*}[label=\alph*)]
\item certify and sign a model as fair;
\item learn a fair model; and
\item verify that a fair-certified model has indeed been used;
\end{enumerate*}
all without users ever having to disclose their sensitive attributes in the clear to anyone and without the modeler having to disclose their model to any other party.
These techniques empower users to retain control over data and modelers to preserve their intellectual property rights while still being able to train fair models.
Thereby, our contributions are an important step towards jointly addressing concerns in privacy, algorithmic fairness and accountability.

\Chapref{chap:decisions} scrutinized the traditional framework of predictive modeling for consequential decisions.
Under certain conditions, predictive models can lead to optimal decisions, even in terms of fairness.
However, we argued that one of the required assumptions, namely that we have access to i.i.d.\ labeled data, is commonly violated in practice.
Often, outcomes only exist when a certain decision is made.
We have shown how deterministic threshold rules from predictive models fail in this scenario.
Further, we introduced an approach of directly learning decisions with exploring policies instead and demonstrated its efficacy on synthetic and real-world data via a simple, gradient-based implementation.
Our results strongly point towards the importance of distinguishing predictive modeling and decision making in consequential settings.
It thereby contributes both conceptually as well as methodologically to our understanding of the applicability of algorithmic fairness techniques.

\section{Conclusions and directions for future work}

This thesis is a first step towards being more cognizant of the confines of unrealistic technical assumptions underlying traditional approaches to fair machine learning and revealing paths to overcome these limitations.
\emph{One of the key conclusions from our findings is that as machine learning researchers, we must expand our horizon in terms of what we factor into the modeling process.}

The causal framework introduced in \chapref{chap:causal} suggests deeper scrutiny of how the data came about, forcing us to bring the data collection phase to the center of our attention.
It requires us to hypothesize or ideally even validate assumptions about why the data are what the data are.
Making such assumptions is difficult and typically requires a much deeper understanding of the subject matter than machine learning researchers or engineers can be expected to bring to the table.
However, from our findings we conclude that instead of avoiding the discomfort of making and communicating assumptions all together, to build socially beneficial systems we must start to become comfortable with potentially untestable modeling assumptions.
As a positive side effect, we hope that such discomfort will encourage researchers and engineers to initiate a dialog with domain experts to seek further backing or refutation of their assumptions.
From a technical angle, \chapref{chap:sensitivity} also develops methodological tools to alleviate the uncertainty around potentially wrong assumptions.

Looking forward, we believe that causal modeling of societal interactions between humans and machines will play a crucial part in building fair, accountable and explainable systems.
In future work, concrete technical advances could be to extend our procedures for avoiding causal measures of discrimination to larger function classes and allowing for a more fine-grained division into fair and unfair pathways.
More broadly, we will also need to improve our understanding of the ontological stability of various social constructs that are relevant for fair machine learning systems, such as protected attributes.
Identifying ontologically stable concepts as well as robust causal mechanisms in socioalgorithmic systems seems to be a challenge that can only be tackled effectively in an interdisciplinary endeavor.
Since causal insights can only be obtained when we are willing to make assumptions (or provide existing knowledge), an interesting and important direction for future work is also to develop more flexible tools for sensitivity analysis.
Concretely, how can we scale our method for unobserved confounding efficiently to large systems, how can we formulate restrictions on observed confounding more flexibly, or how can we extend the formalism to also allow for structural misspecifications of edges in the causal graph?
These tools can uncover when there is ``wiggle room'' for causal assumptions, which alleviates the cognitive burden of making them confidently, which may otherwise lead to inaction.

In \chapref{chap:blindjustice}, it became apparent that the modes of practical data availability and transportability must also be taken into account by machine learning researchers and data scientists.
As the collection, storage and transport of data become a key driver of economic success as well as subject to data privacy regulations, we must move away from the common idea that ``all data is always available everywhere''.
Instead, data availability must become a first-class citizen in the modeling process, especially for fairness sensitive applications.
Therefore, we believe that further adaptations of existing as well as developments of new cryptographic techniques for privacy preserving machine learning will be fruitful directions for future work.
Specifically, it would be interesting to combine our methods from \chapref{chap:blindjustice}, which ensure data security in transport and at rest, with privacy guarantees from differential privacy.
More broadly, can we leverage ideas from (partially homomorphic encryption), secure multi-party computation, or zero-knowledge proofs to hold modelers accountable by proving publicly that their models satisfy given fairness properties without having to disclose their intellectual property?
Can we practically scale such applications to large models and datasets?

Finally, just like modeling should take into account data collection, it must also consider the future impact of decisions.
\Chapref{chap:decisions} demonstrated the importance of including downstream consequences in the selective labels setting, where future observations depend on the current model.
As pointed out in the introductory chapters, this is but one possibility of a broader phenomenon sometimes referred to as \emph{performativity} \citep{perdomo2020performative}.
While in our setup we assumed the ground truth distribution to be static, an interesting direction for future work would be to also allow the ground truth distribution to change over time depending on how decisions are taken.
We currently lack reliable models of how humans react to and adjust their behavior when facing automated decision systems.
An important question for the future will be how to navigate the fine line between building models that are robust to individual attempts of cheating the system while broadly incentivizing desirable behavior.
This hints at the intersection between mechanism design, game theory, and machine learning to be a promising area for further advances in socially beneficial automated decision making.

\emph{Tied together, our contributions on relaxing specific assumptions at different stages of the data science loop all highlight the importance of taking into account the entire interacting socioalgorithmic system from the get-go.}

As a second insight, we note that our solutions predominantly rely upon combining and advancing recent ideas and concepts from \emph{different fields of research}.
\Chapref{chap:causal} raises questions in the intersection of sociology, philosophy and causal inference regarding ontological stability of socially constructed quantities.
In \chapref{chap:sensitivity}, we combined advances in efficient gradient-based optimization and modern, highly tuned auto differentiation frameworks to perform sensitivity analysis without one-size-fits-all identifiability assumptions.
In \chapref{chap:blindjustice}, we exploited recent theoretical and implementation advances in cryptography, particularly secure multi-party computation, to reconcile fairness with privacy and accountability concerns.
Finally, in \chapref{chap:decisions}, we borrowed ideas from bandit and reinforcement learning to uncover inadequate assumptions about the interaction between predictions and consequential decisions.
Adapting concepts around exploration versus exploitation allowed us to accommodate downstream effects of decisions in our models, and raised further interesting questions about the relation between exploration, stochasticity and ethical decision-making.

\emph{Hence, we conclude that when it comes to fair machine learning systems, bridging the gap between theory and practice appears to require a truly interdisciplinary approach that will benefit from considering advances in adjacent as well as more distant fields of research.}

\section{Final remarks}

Let us end by circling back to the introduction, reflecting on the goals we set ourselves and to what extent we achieved them.
There are myriads of ways in which injustice and discrimination manifest themselves in our society.
We have briefly elaborated on some of the broader challenges of achieving fairness when making consequential decisions in \chapref{chap:intro}.
While we repeatedly emphasized that machine learning certainly cannot solve these problems single-handedly, we ended \secref{sec:takeaway} on an optimistic note.
Hopefully, the subsequently developed concepts and techniques support this optimistic view in showing that there are paths to extend the narrow, unrealistic formalization of ``fair consequential decisions'', perhaps even to the point where their application in the real world does more good than harm.
That said, let us now re-emerge from these formally defined settings and acknowledge that racism, gender discrimination, xenophobia, and stereotyping are still deeply ingrained in our society.
While data-driven decision systems play a role, I appeal to all of us to listen to and learn from the ones suffering the hardship so we can give them agency.
Smarter algorithms are no substitute for empathy, mutual respect, and activism.

\begin{spacing}{0.9}
\bibliographystyle{refstyle}
\cleardoublepage
\bibliography{references}

\begin{thebibliography}{212}
\providecommand{\natexlab}[1]{#1}
\providecommand{\url}[1]{\texttt{#1}}
\expandafter\ifx\csname urlstyle\endcsname\relax
  \providecommand{\doi}[1]{doi: #1}\else
  \providecommand{\doi}{doi: \begingroup \urlstyle{rm}\Url}\fi

\bibitem[Abebe et~al.(2020)Abebe, Barocas, Kleinberg, Levy, Raghavan, and
  Robinson]{abebe2020roles}
Abebe, R., Barocas, S., Kleinberg, J., Levy, K., Raghavan, M., and Robinson,
  D.~G.
\newblock Roles for computing in social change.
\newblock In \emph{Proceedings of the 2020 Conference on Fairness,
  Accountability, and Transparency}, pp.\  252--260, 2020.

\bibitem[Agarwal et~al.(2014)Agarwal, Hsu, Kale, Langford, Li, and
  Schapire]{Agarwalb14:Monster}
Agarwal, A., Hsu, D., Kale, S., Langford, J., Li, L., and Schapire, R.
\newblock Taming the monster: A fast and simple algorithm for contextual
  bandits.
\newblock In \emph{Proceedings of the 31st International Conference on Machine
  Learning}, volume~32 of \emph{Proceedings of Machine Learning Research}, pp.\
   1638--1646, Bejing, China, 2014. PMLR.

\bibitem[Al{-}Rubaie et~al.(2017)Al{-}Rubaie, Wu, Chang, and Kung]{pca}
Al{-}Rubaie, M., Wu, P.~Y., Chang, J.~M., and Kung, S.
\newblock Privacy-preserving {PCA} on horizontally-partitioned data.
\newblock In \emph{{DSC}}, pp.\  280--287. {IEEE}, 2017.

\bibitem[Angrist \& Krueger(2001)Angrist and Krueger]{Angrist2001}
Angrist, J. and Krueger, A.~B.
\newblock Instrumental variables and the search for identification: From supply
  and demand to natural experiments.
\newblock Technical report, National Bureau of Economic Research, 2001.

\bibitem[Angwin et~al.(2016)Angwin, Larson, Mattu, and Kirchner]{Angwin2016}
Angwin, J., Larson, J., Mattu, S., and Kirchner, L.
\newblock Machine bias: There is software used across the country to predict
  future criminals. and it is biased against blacks.
\newblock \emph{ProPublica, May}, 23, 2016.
\newblock URL
  \url{https://www.propublica.org/article/machine-bias-risk-assessments-in-criminal-sentencing}.

\bibitem[Athey \& Wager(2017)Athey and Wager]{athey2017efficient}
Athey, S. and Wager, S.
\newblock Efficient policy learning.
\newblock \emph{arXiv preprint arXiv:1702.02896}, 2017.

\bibitem[Awad et~al.(2018)Awad, Dsouza, Kim, Schulz, Henrich, Shariff,
  Bonnefon, and Rahwan]{awad2018moral}
Awad, E., Dsouza, S., Kim, R., Schulz, J., Henrich, J., Shariff, A., Bonnefon,
  J.-F., and Rahwan, I.
\newblock The moral machine experiment.
\newblock \emph{Nature}, 563\penalty0 (7729):\penalty0 59--64, 2018.

\bibitem[Bagdasaryan et~al.(2019)Bagdasaryan, Poursaeed, and
  Shmatikov]{bagdasaryan2019differential}
Bagdasaryan, E., Poursaeed, O., and Shmatikov, V.
\newblock Differential privacy has disparate impact on model accuracy.
\newblock In \emph{Advances in Neural Information Processing Systems}, pp.\
  15479--15488, 2019.

\bibitem[Barber(2009)]{barber:09}
Barber, D.
\newblock Identifying graph clusters using variational inference and link to
  covariance parametrization.
\newblock \emph{Philosophical Transactions of the Royal Society A},
  367:\penalty0 4407--4426, 2009.

\bibitem[Barocas \& Selbst(2016{\natexlab{a}})Barocas and Selbst]{Barocas2016}
Barocas, S. and Selbst, A.~D.
\newblock Big data's disparate impact.
\newblock \emph{California Law Review}, 104, 2016{\natexlab{a}}.
\newblock URL
  \url{https://papers.ssrn.com/sol3/papers.cfm?abstract_id=2477899}.

\bibitem[Barocas \& Selbst(2016{\natexlab{b}})Barocas and
  Selbst]{barocas2016big}
Barocas, S. and Selbst, A.~D.
\newblock Big data's disparate impact.
\newblock \emph{California Law Review}, 104:\penalty0 671--732,
  2016{\natexlab{b}}.

\bibitem[Barocas et~al.(2019)Barocas, Hardt, and
  Narayanan]{barocas-hardt-narayanan}
Barocas, S., Hardt, M., and Narayanan, A.
\newblock \emph{Fairness and Machine Learning}.
\newblock fairmlbook.org, 2019.
\newblock \url{http://www.fairmlbook.org}.

\bibitem[Benjamin(2019)]{benjamin2019race}
Benjamin, R.
\newblock \emph{Race after technology: Abolitionist tools for the new jim
  code}.
\newblock Polity, 2019.
\newblock ISBN 1509526404.

\bibitem[Bennett~Capers(2012)]{capers2012blind}
Bennett~Capers, I.
\newblock Blind justice.
\newblock \emph{Yale Journal of Law \& Humanities}, 24:\penalty0 179, 2012.

\bibitem[Bentham(1780)]{bentham1780introduction}
Bentham, J.
\newblock \emph{An Introduction to the Principles of Morals and Legislation}.
\newblock Dover Publications, 1780.

\bibitem[Berk et~al.(2017)Berk, Heidari, Jabbari, Kearns, and Roth]{Berk2017}
Berk, R., Heidari, H., Jabbari, S., Kearns, M., and Roth, A.
\newblock Fairness in criminal justice risk assessments: The state of the art.
\newblock \emph{Sociological Methods \& Research}, 2017.
\newblock \doi{10.1177/0049124118782533}.
\newblock URL \url{https://doi.org/10.1177/0049124118782533}.

\bibitem[Bertrand \& Mullainathan(2004)Bertrand and
  Mullainathan]{bertrand2004emily}
Bertrand, M. and Mullainathan, S.
\newblock Are emily and greg more employable than lakisha and jamal? a field
  experiment on labor market discrimination.
\newblock \emph{American economic review}, 94\penalty0 (4):\penalty0 991--1013,
  2004.

\bibitem[Bickel et~al.(1975)Bickel, Hammel, O{'C}onnell, et~al.]{Bickel1975}
Bickel, P.~J., Hammel, E.~A., O{'C}onnell, J.~W., et~al.
\newblock Sex bias in graduate admissions: Data from berkeley.
\newblock \emph{Science}, 187\penalty0 (4175):\penalty0 398--404, 1975.

\bibitem[Biddle(2006)]{biddle2006adverse}
Biddle, D.
\newblock \emph{Adverse impact and test validation: A practitioner's guide to
  valid and defensible employment testing}.
\newblock Gower Publishing, Ltd., 2006.

\bibitem[Binns(2018)]{pmlr-v81-binns18a}
Binns, R.
\newblock Fairness in machine learning: Lessons from political philosophy.
\newblock In Friedler, S.~A. and Wilson, C. (eds.), \emph{Proceedings of the
  1st Conference on Fairness, Accountability and Transparency}, volume~81 of
  \emph{Proceedings of Machine Learning Research}, pp.\  149--159, New York,
  NY, USA, 23--24 Feb 2018. PMLR.
\newblock URL \url{http://proceedings.mlr.press/v81/binns18a.html}.

\bibitem[Blair(1995)]{blair1995cognitive}
Blair, R. J.~R.
\newblock A cognitive developmental approach to morality: Investigating the
  psychopath.
\newblock \emph{Cognition}, 57\penalty0 (1):\penalty0 1--29, 1995.

\bibitem[Bolukbasi et~al.(2016)Bolukbasi, Chang, Zou, Saligrama, and
  Kalai]{bolukbasi2016man}
Bolukbasi, T., Chang, K.-W., Zou, J.~Y., Saligrama, V., and Kalai, A.~T.
\newblock Man is to computer programmer as woman is to homemaker? debiasing
  word embeddings.
\newblock In \emph{Advances in Neural Information Processing Systems}, pp.\
  4349--4357, 2016.

\bibitem[Bonchi et~al.(2017)Bonchi, Hajian, Mishra, and Ramazzotti]{Bonchi2015}
Bonchi, F., Hajian, S., Mishra, B., and Ramazzotti, D.
\newblock Exposing the probabilistic causal structure of discrimination.
\newblock \emph{International Journal of Data Science and Analytics},
  3\penalty0 (1):\penalty0 1--21, 2017.

\bibitem[Bongers et~al.(2016)Bongers, Forr{\'e}, Peters, Sch{\"o}lkopf, and
  Mooij]{bongers2016foundations}
Bongers, S., Forr{\'e}, P., Peters, J., Sch{\"o}lkopf, B., and Mooij, J.~M.
\newblock Foundations of structural causal models with cycles and latent
  variables.
\newblock \emph{arXiv preprint arXiv:1611.06221}, 2016.

\bibitem[Bottou et~al.(2013)Bottou, Peters, {n}onero Candela, Charles,
  Chickering, Portugaly, Ray, Simard, and Snelson]{Bottou13:Counterfactual}
Bottou, L., Peters, J., {n}onero Candela, J.~Q., Charles, D.~X., Chickering,
  D.~M., Portugaly, E., Ray, D., Simard, P., and Snelson, E.
\newblock Counterfactual reasoning and learning systems: The example of
  computational advertising.
\newblock \emph{Journal of Machine Learning Research}, 14:\penalty0 3207--3260,
  2013.

\bibitem[Boyd \& Vandenberghe(2004)Boyd and Vandenberghe]{boydsbook}
Boyd, S. and Vandenberghe, L.
\newblock \emph{Convex optimization}.
\newblock Cambridge university press, 2004.

\bibitem[Broussard(2018)]{broussard2018artificial}
Broussard, M.
\newblock \emph{Artificial unintelligence: How computers misunderstand the
  world}.
\newblock MIT Press, 2018.

\bibitem[Buolamwini \& Gebru(2018)Buolamwini and Gebru]{buolamwini2018gender}
Buolamwini, J. and Gebru, T.
\newblock Gender shades: Intersectional accuracy disparities in commercial
  gender classification.
\newblock In \emph{Conference on Fairness, Accountability and Transparency},
  pp.\  77--91, 2018.

\bibitem[Calders \& Verwer(2010)Calders and Verwer]{Calders2010}
Calders, T. and Verwer, S.
\newblock Three naive bayes approaches for discrimination-free classification.
\newblock \emph{Data Mining and Knowledge Discovery}, 21\penalty0 (2):\penalty0
  277--292, 2010.

\bibitem[Caliskan et~al.(2017)Caliskan, Bryson, and
  Narayanan]{caliskan2017semantics}
Caliskan, A., Bryson, J.~J., and Narayanan, A.
\newblock Semantics derived automatically from language corpora contain
  human-like biases.
\newblock \emph{Science}, 356\penalty0 (6334):\penalty0 183--186, 2017.

\bibitem[Calmon et~al.(2017)Calmon, Wei, Vinzamuri, Ramamurthy, and
  Varshney]{calmon2017optimized}
Calmon, F., Wei, D., Vinzamuri, B., Ramamurthy, K.~N., and Varshney, K.~R.
\newblock Optimized pre-processing for discrimination prevention.
\newblock In \emph{Advances in Neural Information Processing Systems}, pp.\
  3992--4001, 2017.

\bibitem[Chen et~al.(2019)Chen, Kallus, Mao, Svacha, and
  Udell]{chen2019fairness}
Chen, J., Kallus, N., Mao, X., Svacha, G., and Udell, M.
\newblock Fairness under unawareness: Assessing disparity when protected class
  is unobserved.
\newblock In \emph{Proceedings of the Conference on Fairness, Accountability,
  and Transparency}, FAT* ’19, pp.\  339–348, New York, NY, USA, 2019.
  Association for Computing Machinery.
\newblock ISBN 9781450361255.
\newblock \doi{10.1145/3287560.3287594}.
\newblock URL \url{https://doi.org/10.1145/3287560.3287594}.

\bibitem[Chiappa \& Gillam(2018)Chiappa and Gillam]{chiappa2018path}
Chiappa, S. and Gillam, T.
\newblock Path-specific counterfactual fairness.
\newblock \emph{Thirty-Second AAAI Conference on Artificial Intelligence},
  2018.

\bibitem[Chouldechova(2017)]{Chouldechova2017}
Chouldechova, A.
\newblock Fair prediction with disparate impact: A study of bias in recidivism
  prediction instruments.
\newblock \emph{Big data}, 5\penalty0 (2):\penalty0 153--163, 2017.

\bibitem[Chouldechova \& Roth(2020)Chouldechova and
  Roth]{chouldechova2020snapshot}
Chouldechova, A. and Roth, A.
\newblock A snapshot of the frontiers of fairness in machine learning.
\newblock \emph{Communications of the ACM}, 63\penalty0 (5):\penalty0 82--89,
  2020.

\bibitem[Cinelli et~al.(2019)Cinelli, Kumor, Chen, Pearl, and
  Bareinboim]{cinelli19a}
Cinelli, C., Kumor, D., Chen, B., Pearl, J., and Bareinboim, E.
\newblock Sensitivity analysis of linear structural causal models.
\newblock In \emph{Proceedings of the 36th International Conference on Machine
  Learning}, 2019.

\bibitem[Corbett-Davies et~al.(2016)Corbett-Davies, Pierson, Feller, and
  Sharad]{corbett-davies2016}
Corbett-Davies, S., Pierson, E., Feller, A., and Sharad, G.
\newblock A computer program used for bail and sentencing decisions was labeled
  biased against blacks. it’s actually not that clear.
\newblock \emph{The Washington Post, October}, 2016.
\newblock URL
  \url{https://www.washingtonpost.com/news/monkey-cage/wp/2016/10/17/can-an-algorithm-be-racist-our-analysis-is-more-cautious-than-propublicas}.

\bibitem[Corbett-Davies et~al.(2017)Corbett-Davies, Pierson, Feller, Goel, and
  Huq]{corbett2017algorithmic}
Corbett-Davies, S., Pierson, E., Feller, A., Goel, S., and Huq, A.
\newblock Algorithmic decision making and the cost of fairness.
\newblock In \emph{Proceedings of the 23rd ACM SIGKDD International Conference
  on Knowledge Discovery and Data Mining}, pp.\  797--806. ACM, 2017.

\bibitem[Cornia \& Mooij(2014)Cornia and Mooij]{Cornia2014}
Cornia, N. and Mooij, J.~M.
\newblock Type-ii errors of independence tests can lead to arbitrarily large
  errors in estimated causal effects: An illustrative example.
\newblock In \emph{Proceedings of the Workshop on Causal Inference (UAI)}, pp.\
   35--42, 2014.

\bibitem[Coston et~al.(2019)Coston, Ramamurthy, Wei, Varshney, Speakman,
  Mustahsan, and Chakraborty]{coston2019fair}
Coston, A., Ramamurthy, K.~N., Wei, D., Varshney, K.~R., Speakman, S.,
  Mustahsan, Z., and Chakraborty, S.
\newblock Fair transfer learning with missing protected attributes.
\newblock In \emph{Proceedings of the 2019 AAAI/ACM Conference on AI, Ethics,
  and Society}, AIES ’19, pp.\  91–98, New York, NY, USA, 2019. Association
  for Computing Machinery.
\newblock ISBN 9781450363242.
\newblock \doi{10.1145/3306618.3314236}.
\newblock URL \url{https://doi.org/10.1145/3306618.3314236}.

\bibitem[Crawford(2013)]{crawford2013hidden}
Crawford, K.
\newblock The hidden biases in big data.
\newblock \emph{Harvard Business Review}, 1, 2013.

\bibitem[Cummings et~al.(2019)Cummings, Gupta, Kimpara, and
  Morgenstern]{cummings2019compatibility}
Cummings, R., Gupta, V., Kimpara, D., and Morgenstern, J.
\newblock On the compatibility of privacy and fairness.
\newblock In \emph{Adjunct Publication of the 27th Conference on User Modeling,
  Adaptation and Personalization}, pp.\  309--315, 2019.

\bibitem[Damgård et~al.(2012)Damgård, Pastro, Smart, and
  Zakarias]{damgard_multiparty_2012}
Damgård, I., Pastro, V., Smart, N.~P., and Zakarias, S.
\newblock Multiparty computation from somewhat homomorphic encryption.
\newblock In \emph{{CRYPTO}}, volume 7417 of \emph{Lecture Notes in Computer
  Science}, pp.\  643--662. Springer, 2012.

\bibitem[Dembroff et~al.(2020)Dembroff, Kohler-Hausmann, and
  Sugarman]{dembroff2020taylor}
Dembroff, R., Kohler-Hausmann, I., and Sugarman, E.
\newblock What taylor swift and beyonc{\'e} teach us about sex and causes.
\newblock \emph{University of Pennsylvania Law Review, Forthcoming}, 2020.

\bibitem[Demmler et~al.(2015{\natexlab{a}})Demmler, Dessouky, Koushanfar,
  Sadeghi, Schneider, and Zeitouni]{DDKSSZ15}
Demmler, D., Dessouky, G., Koushanfar, F., Sadeghi, A., Schneider, T., and
  Zeitouni, S.
\newblock Automated synthesis of optimized circuits for secure computation.
\newblock In \emph{{ACM} Conference on Computer and Communications Security},
  pp.\  1504--1517. {ACM}, 2015{\natexlab{a}}.

\bibitem[Demmler et~al.(2015{\natexlab{b}})Demmler, Schneider, and
  Zohner]{demmler_aby_2015}
Demmler, D., Schneider, T., and Zohner, M.
\newblock {ABY} -- a framework for efficient mixed-protocol secure two-party
  computation.
\newblock In \emph{{NDSS}}. The Internet Society, 2015{\natexlab{b}}.

\bibitem[Despart(2019)]{harris2019}
Despart, Z.
\newblock Proposed bail lawsuit settlement includes child care, phones, rides
  for poor defendants, 2019.
\newblock URL
  \url{https://www.houstonchronicle.com/news/houston-texas/houston/article/Proposed-bail-lawsuit-settlement-includes-child-13764225.php}.

\bibitem[Dieterich et~al.(2016)Dieterich, Mendoza, and Brennan]{Dieterich2016}
Dieterich, W., Mendoza, C., and Brennan, T.
\newblock Compas risk scales: Demonstrating accuracy equity and predictive
  parity.
\newblock Technical report, Technical report, Northpointe, July 2016.
  http://www. northpointeinc. com/northpointe-analysis, 2016.

\bibitem[Dimitrakakis et~al.(2019)Dimitrakakis, Liu, Parkes, and
  Radanovic]{dimitrakakis2019bayesian}
Dimitrakakis, C., Liu, Y., Parkes, D., and Radanovic, G.
\newblock Bayesian fairness.
\newblock In \emph{AAAI}, 2019.

\bibitem[Ding et~al.(2020)Ding, Zhang, Li, Wang, Yu, and
  Pan]{ding2020differentially}
Ding, J., Zhang, X., Li, X., Wang, J., Yu, R., and Pan, M.
\newblock Differentially private and fair classification via calibrated
  functional mechanism.
\newblock In \emph{Proceedings of the AAAI Conference on Artificial
  Intelligence}, volume~34, pp.\  622--629, 2020.

\bibitem[Dobbie et~al.(2018)Dobbie, Goldin, and Yang]{dobbie2018effects}
Dobbie, W., Goldin, J., and Yang, C.~S.
\newblock The effects of pretrial detention on conviction, future crime, and
  employment: Evidence from randomly assigned judges.
\newblock \emph{American Economic Review}, 108\penalty0 (2):\penalty0 201--40,
  2018.

\bibitem[Doerner(2018)]{liback}
Doerner, J.
\newblock Absentminded crypto kit.
\newblock \url{https://bitbucket.org/jackdoerner/absentminded-crypto-kit},
  2018.

\bibitem[Dong et~al.(2018)Dong, Roth, Schutzman, Waggoner, and
  Wu]{dong2018strategic}
Dong, J., Roth, A., Schutzman, Z., Waggoner, B., and Wu, Z.~S.
\newblock Strategic classification from revealed preferences.
\newblock In \emph{Proceedings of the 2018 ACM Conference on Economics and
  Computation}, pp.\  55--70, 2018.

\bibitem[Dorie et~al.(2016)Dorie, Harada, Carnegie, and
  Hill]{dorie2016flexible}
Dorie, V., Harada, M., Carnegie, N.~B., and Hill, J.
\newblock A flexible, interpretable framework for assessing sensitivity to
  unmeasured confounding.
\newblock \emph{Statistics in Medicine}, 35\penalty0 (20):\penalty0 3453--3470,
  2016.

\bibitem[Dougherty(2015)]{Dougherty2015}
Dougherty, C.
\newblock Google photos mistakenly labels black people ‘gorillas’.
\newblock \emph{The New York Times, Bits, July}, 2015.
\newblock URL
  \url{https://bits.blogs.nytimes.com/2015/07/01/google-photos-mistakenly-labels-black-people-gorillas}.

\bibitem[Drton \& Richardson(2004)Drton and Richardson]{drton:04}
Drton, M. and Richardson, T.
\newblock Iterative conditional fitting for {G}aussian ancestral graph models.
\newblock \emph{Proceedings of the Twentieth Conference on Uncertainty in
  Artificial Intelligence}, pp.\  130--137, 2004.

\bibitem[Dudík et~al.(2011)Dudík, Langford, and Li]{Dudik11:DoublyRobust}
Dudík, M., Langford, J., and Li, L.
\newblock {Doubly Robust Policy Evaluation and Learning}.
\newblock In \emph{{Proceedings of the 28th International Conference on Machine
  Learning}}, pp.\  1097--1104. {Omnipress}, 2011.

\bibitem[Dwork et~al.(2012)Dwork, Hardt, Pitassi, Reingold, and
  Zemel]{Dwork2012}
Dwork, C., Hardt, M., Pitassi, T., Reingold, O., and Zemel, R.
\newblock Fairness through awareness.
\newblock In \emph{Proceedings of the 3rd Innovations in Theoretical Computer
  Science Conference}, ITCS '12, pp.\  214--226. ACM, 2012.

\bibitem[Edwards \& Storkey(2016)Edwards and Storkey]{Edwards2015}
Edwards, H. and Storkey, A.
\newblock Censoring representations with an adversary.
\newblock In \emph{Proceedings of the International Conference on Learning
  Representations ({ICLR})}, 2016.

\bibitem[Ensign et~al.(2018{\natexlab{a}})Ensign, Friedler, Neville,
  Scheidegger, and Venkatasubramanian]{ensign2017decision}
Ensign, D., Friedler, S.~A., Neville, S., Scheidegger, C., and
  Venkatasubramanian, S.
\newblock Decision making with limited feedback: Error bounds for recidivism
  prediction and predictive policing.
\newblock \emph{JMLR}, 2018{\natexlab{a}}.

\bibitem[Ensign et~al.(2018{\natexlab{b}})Ensign, Friedler, Neville,
  Scheidegger, and Venkatasubramanian]{ensign2017runaway}
Ensign, D., Friedler, S.~A., Neville, S., Scheidegger, C., and
  Venkatasubramanian, S.
\newblock Runaway feedback loops in predictive policing.
\newblock \emph{FAT}, 2018{\natexlab{b}}.

\bibitem[Eubanks(2018)]{eubanks2018automating}
Eubanks, V.
\newblock \emph{Automating inequality: How high-tech tools profile, police, and
  punish the poor}.
\newblock St. Martin's Press, 2018.

\bibitem[Faiedh et~al.(2001)Faiedh, Gafsi, and Besbes]{sigmoid-approx}
Faiedh, H., Gafsi, Z., and Besbes, K.
\newblock Digital hardware implementation of sigmoid function and its
  derivative for artificial neural networks.
\newblock \emph{Proceeding of the 13th International Conference on
  Microelectronics, 2001.}, pp.\  189 -- 192, 11 2001.

\bibitem[Feldman et~al.(2015)Feldman, Friedler, Moeller, Scheidegger, and
  Venkatasubramanian]{Feldman2015}
Feldman, M., Friedler, S.~A., Moeller, J., Scheidegger, C., and
  Venkatasubramanian, S.
\newblock Certifying and removing disparate impact.
\newblock In \emph{Proceedings of the 21th ACM SIGKDD International Conference
  on Knowledge Discovery and Data Mining}, pp.\  259--268, 2015.

\bibitem[Flores et~al.(2016)Flores, Bechtel, and Lowenkamp]{flores2016false}
Flores, A.~W., Bechtel, K., and Lowenkamp, C.~T.
\newblock False positives, false negatives, and false analyses: A rejoinder to
  machine bias: There's software used across the country to predict future
  criminals. and it's biased against blacks.
\newblock \emph{Fed. Probation}, 80:\penalty0 38, 2016.

\bibitem[Fredrikson et~al.(2015)Fredrikson, Jha, and
  Ristenpart]{fredrikson2015model}
Fredrikson, M., Jha, S., and Ristenpart, T.
\newblock Model inversion attacks that exploit confidence information and basic
  countermeasures.
\newblock In \emph{Proceedings of the 22nd ACM SIGSAC Conference on Computer
  and Communications Security}, pp.\  1322--1333, 2015.

\bibitem[Friedler et~al.(2016)Friedler, Scheidegger, and
  Venkatasubramanian]{Friedler2016}
Friedler, S.~A., Scheidegger, C., and Venkatasubramanian, S.
\newblock On the (im) possibility of fairness.
\newblock \emph{arXiv preprint arXiv:1609.07236}, 2016.

\bibitem[Friedler et~al.(2019)Friedler, Scheidegger, Venkatasubramanian,
  Choudhary, Hamilton, and Roth]{friedler2018comparative}
Friedler, S.~A., Scheidegger, C., Venkatasubramanian, S., Choudhary, S.,
  Hamilton, E.~P., and Roth, D.
\newblock A comparative study of fairness-enhancing interventions in machine
  learning.
\newblock In \emph{Proceedings of the Conference on Fairness, Accountability,
  and Transparency}, FAT* ’19, pp.\  329–338, New York, NY, USA, 2019.
  Association for Computing Machinery.
\newblock ISBN 9781450361255.
\newblock \doi{10.1145/3287560.3287589}.
\newblock URL \url{https://doi.org/10.1145/3287560.3287589}.

\bibitem[Friedman \& Nissenbaum(1996)Friedman and Nissenbaum]{friedman1996bias}
Friedman, B. and Nissenbaum, H.
\newblock Bias in computer systems.
\newblock \emph{ACM Transactions on Information Systems (TOIS)}, 14\penalty0
  (3):\penalty0 330--347, 1996.

\bibitem[Gascón et~al.(2017)Gascón, Schoppmann, Balle, Raykova, Doerner,
  Zahur, and Evans]{gascon_privacy-preserving_2017}
Gascón, A., Schoppmann, P., Balle, B., Raykova, M., Doerner, J., Zahur, S.,
  and Evans, D.
\newblock Privacy-{Preserving} {Distributed} {Linear} {Regression} on
  {High}-{Dimensional} {Data}.
\newblock \emph{Proceedings on Privacy Enhancing Technologies}, 2017\penalty0
  (4):\penalty0 345--364, October 2017.

\bibitem[Gebhard et~al.(2017)Gebhard, Kilbertus, Parascandolo, Harry, and
  Sch{\"o}lkopf]{GebKilParHarSch17}
Gebhard, T., Kilbertus, N., Parascandolo, G., Harry, I., and Sch{\"o}lkopf, B.
\newblock Convwave: Searching for gravitational waves with fully convolutional
  neural nets.
\newblock In \emph{Workshop on Deep Learning for Physical Sciences (DLPS) at
  the 31st Conference on Neural Information Processing Systems}, December 2017.
\newblock URL
  \url{https://dl4physicalsciences.github.io/files/nips_dlps_2017_13.pdf}.

\bibitem[Gebhard et~al.(2019)Gebhard, Kilbertus, Harry, and
  Sch\"olkopf]{PhysRevD.100.063015}
Gebhard, T.~D., Kilbertus, N., Harry, I., and Sch\"olkopf, B.
\newblock Convolutional neural networks: A magic bullet for gravitational-wave
  detection?
\newblock \emph{Phys. Rev. D}, 100:\penalty0 063015, Sep 2019.
\newblock \doi{10.1103/PhysRevD.100.063015}.
\newblock URL \url{https://link.aps.org/doi/10.1103/PhysRevD.100.063015}.

\bibitem[Gillen et~al.(2018)Gillen, Jung, Kearns, and Roth]{gillen2018online}
Gillen, S., Jung, C., Kearns, M., and Roth, A.
\newblock Online learning with an unknown fairness metric.
\newblock In Bengio, S., Wallach, H., Larochelle, H., Grauman, K.,
  Cesa-Bianchi, N., and Garnett, R. (eds.), \emph{Advances in Neural
  Information Processing Systems 31}, pp.\  2600--2609. Curran Associates,
  Inc., 2018.
\newblock URL
  \url{http://papers.nips.cc/paper/7526-online-learning-with-an-unknown-fairness-metric.pdf}.

\bibitem[Goldreich(2004)]{goldreichbook}
Goldreich, O.
\newblock \emph{The Foundations of Cryptography -- Volume 2, Basic
  Applications}.
\newblock Cambridge University Press, 2004.

\bibitem[Goldreich et~al.(1987)Goldreich, Micali, and
  Wigderson]{DBLP:conf/stoc/GoldreichMW87}
Goldreich, O., Micali, S., and Wigderson, A.
\newblock How to play any mental game or {A} completeness theorem for protocols
  with honest majority.
\newblock In \emph{{STOC}}, pp.\  218--229. {ACM}, 1987.

\bibitem[Graham(2017)]{Graham17}
Graham, C.
\newblock {NHS cyber attack: Everything you need to know about 'biggest
  ransomware' offensive in history}.
\newblock \emph{Telegraph, May 20}, 2017.
\newblock URL
  \url{http://www.telegraph.co.uk/news/2017/05/13/nhs-cyber-attack-everything-need-know-biggest-ransomware-offensive/}.

\bibitem[Green \& Hu(2018)Green and Hu]{green2018the}
Green, B. and Hu, L.
\newblock The myth in the methodology: Towards a recontextualization of
  fairness in machine learning.
\newblock \emph{Machine Learning: The Debates workshop at the 35th
  International Conference on Machine Learning (ICML)}, 2018.

\bibitem[Grgi{\'c}-Hla{\v{c}}a et~al.(2016)Grgi{\'c}-Hla{\v{c}}a, Zafar,
  Gummadi, and Weller]{grgic2016case}
Grgi{\'c}-Hla{\v{c}}a, N., Zafar, M.~B., Gummadi, K.~P., and Weller, A.
\newblock The case for process fairness in learning: Feature selection for fair
  decision making.
\newblock In \emph{NIPS Symposium on Machine Learning and the Law, Barcelona,
  Spain}, volume~8, 2016.

\bibitem[Grgi{\'c}-Hla{\v{c}}a et~al.(2018{\natexlab{a}})Grgi{\'c}-Hla{\v{c}}a,
  Redmiles, Gummadi, and Weller]{grgic2018human}
Grgi{\'c}-Hla{\v{c}}a, N., Redmiles, E.~M., Gummadi, K.~P., and Weller, A.
\newblock Human perceptions of fairness in algorithmic decision making: A case
  study of criminal risk prediction.
\newblock In \emph{WWW}, 2018{\natexlab{a}}.

\bibitem[Grgi{\'c}-Hla{\v{c}}a et~al.(2018{\natexlab{b}})Grgi{\'c}-Hla{\v{c}}a,
  Zafar, Gummadi, and Weller]{grgic2018}
Grgi{\'c}-Hla{\v{c}}a, N., Zafar, M.~B., Gummadi, K.~P., and Weller, A.
\newblock Beyond distributive fairness in algorithmic decision making: Feature
  selection for procedurally fair learning.
\newblock In \emph{AAAI}, 2018{\natexlab{b}}.

\bibitem[Grgi{\'c}-Hla{\v{c}}a et~al.(2020)Grgi{\'c}-Hla{\v{c}}a, Weller, and
  Redmiles]{grgic2020dimensions}
Grgi{\'c}-Hla{\v{c}}a, N., Weller, A., and Redmiles, E.~M.
\newblock Dimensions of diversity in human perceptions of algorithmic fairness.
\newblock \emph{arXiv preprint arXiv:2005.00808}, 2020.

\bibitem[Handel et~al.(2014)Handel, Skog, Wahlstrom, Bonawiede, Welch, Ohlsson,
  and Ohlsson]{handel2014insurance}
Handel, P., Skog, I., Wahlstrom, J., Bonawiede, F., Welch, R., Ohlsson, J., and
  Ohlsson, M.
\newblock Insurance telematics: Opportunities and challenges with the
  smartphone solution.
\newblock \emph{IEEE Intelligent Transportation Systems Magazine}, 6\penalty0
  (4):\penalty0 57--70, 2014.

\bibitem[Hanna \& Linden(2012)Hanna and Linden]{hanna2012discrimination}
Hanna, R.~N. and Linden, L.~L.
\newblock Discrimination in grading.
\newblock \emph{American Economic Journal: Economic Policy}, 4\penalty0
  (4):\penalty0 146--68, 2012.

\bibitem[Hardt et~al.(2016{\natexlab{a}})Hardt, Megiddo, Papadimitriou, and
  Wootters]{hardt2016strategic}
Hardt, M., Megiddo, N., Papadimitriou, C., and Wootters, M.
\newblock Strategic classification.
\newblock In \emph{Proceedings of the 2016 ACM Conference on Innovations in
  Theoretical Computer Science}, ITCS ’16, pp.\  111–122, New York, NY,
  USA, 2016{\natexlab{a}}. Association for Computing Machinery.
\newblock ISBN 9781450340571.
\newblock \doi{10.1145/2840728.2840730}.
\newblock URL \url{https://doi.org/10.1145/2840728.2840730}.

\bibitem[Hardt et~al.(2016{\natexlab{b}})Hardt, Price, Srebro,
  et~al.]{Hardt2016}
Hardt, M., Price, E., Srebro, N., et~al.
\newblock Equality of opportunity in supervised learning.
\newblock In \emph{Advances in Neural Information Processing Systems}, pp.\
  3315--3323, 2016{\natexlab{b}}.

\bibitem[Hashimoto et~al.(2018)Hashimoto, Srivastava, Namkoong, and
  Liang]{pmlr-v80-hashimoto18a}
Hashimoto, T., Srivastava, M., Namkoong, H., and Liang, P.
\newblock Fairness without demographics in repeated loss minimization.
\newblock In Dy, J. and Krause, A. (eds.), \emph{Proceedings of the 35th
  International Conference on Machine Learning}, volume~80 of \emph{Proceedings
  of Machine Learning Research}, pp.\  1929--1938, Stockholmsmässan, Stockholm
  Sweden, 10--15 Jul 2018. PMLR.
\newblock URL \url{http://proceedings.mlr.press/v80/hashimoto18a.html}.

\bibitem[H{\'e}bert-Johnson et~al.(2018)H{\'e}bert-Johnson, Kim, Reingold, and
  Rothblum]{hebert2017calibration}
H{\'e}bert-Johnson, U., Kim, M., Reingold, O., and Rothblum, G.
\newblock Multicalibration: Calibration for the
  ({C}omputationally-identifiable) masses.
\newblock In Dy, J. and Krause, A. (eds.), \emph{Proceedings of the 35th
  International Conference on Machine Learning}, volume~80 of \emph{Proceedings
  of Machine Learning Research}, pp.\  1939--1948, Stockholmsmässan, Stockholm
  Sweden, 10--15 Jul 2018. PMLR.
\newblock URL \url{http://proceedings.mlr.press/v80/hebert-johnson18a.html}.

\bibitem[Heidari \& Krause(2018)Heidari and Krause]{heidari2018preventing}
Heidari, H. and Krause, A.
\newblock Preventing disparate treatment in sequential decision making.
\newblock In \emph{IJCAI}, pp.\  2248--2254, 2018.

\bibitem[Helmbold et~al.(2000)Helmbold, Littlestone, and
  Long]{helmbold2000apple}
Helmbold, D.~P., Littlestone, N., and Long, P.~M.
\newblock Apple tasting.
\newblock \emph{Information and Computation}, 161\penalty0 (2):\penalty0
  85--139, 2000.

\bibitem[Holstein et~al.(2019)Holstein, Wortman~Vaughan, Daum\'{e}, Dudik, and
  Wallach]{holstein2018improving}
Holstein, K., Wortman~Vaughan, J., Daum\'{e}, H., Dudik, M., and Wallach, H.
\newblock Improving fairness in machine learning systems: What do industry
  practitioners need?
\newblock In \emph{Proceedings of the 2019 CHI Conference on Human Factors in
  Computing Systems}, CHI ’19, pp.\  1–16, New York, NY, USA, 2019.
  Association for Computing Machinery.
\newblock ISBN 9781450359702.
\newblock \doi{10.1145/3290605.3300830}.
\newblock URL \url{https://doi.org/10.1145/3290605.3300830}.

\bibitem[Hoyer et~al.(2008)Hoyer, Janzing, Mooij, Peters, and
  Sch{\"o}lkopf]{hoyer2009nonlinear}
Hoyer, P.~O., Janzing, D., Mooij, J.~M., Peters, J., and Sch{\"o}lkopf, B.
\newblock Nonlinear causal discovery with additive noise models.
\newblock \emph{Advances in Neural Information Processing Systems},
  21:\penalty0 689--696, 2008.

\bibitem[Hron et~al.(2020)Hron, Krauth, Jordan, and
  Kilbertus]{hron2020exploration}
Hron, J., Krauth, K., Jordan, M.~I., and Kilbertus, N.
\newblock Exploration in two-stage recommender systems.
\newblock \emph{arXiv preprint arXiv:2009.08956}, 2020.

\bibitem[Hu \& Chen(2018)Hu and Chen]{hucheng2018}
Hu, L. and Chen, Y.
\newblock A short-term intervention for long-term fairness in the labor market.
\newblock In \emph{World Wide Web Conference}, WWW '18, pp.\  1389--1398,
  Republic and Canton of Geneva, Switzerland, 2018. International World Wide
  Web Conferences Steering Committee.
\newblock ISBN 978-1-4503-5639-8.

\bibitem[Hu \& Kohler-Hausmann(2020)Hu and Kohler-Hausmann]{hu2020whats}
Hu, L. and Kohler-Hausmann, I.
\newblock What{\textquoteright}s sex got to do with machine learning?
\newblock In \emph{ACM Conference on Fairness, Accountability, and
  Transparency}, 2020.

\bibitem[Hyv\"{a}rinen(2005)]{Hyvarinen05:ScoreMatching}
Hyv\"{a}rinen, A.
\newblock Estimation of non-normalized statistical models by score matching.
\newblock \emph{Journal of Machine Learning Research}, 6:\penalty0 695--709,
  2005.

\bibitem[Ingold \& Soper(2016)Ingold and Soper]{Ingold2016}
Ingold, D. and Soper, S.
\newblock Amazon doesn’t consider the race of its customers. should it?
\newblock \emph{Bloomberg, April}, 2016.
\newblock URL \url{https://www.bloomberg.com/graphics/2016-amazon-same-day}.

\bibitem[Jabbari et~al.(2017)Jabbari, Joseph, Kearns, Morgenstern, and
  Roth]{jabbari2016fairness}
Jabbari, S., Joseph, M., Kearns, M., Morgenstern, J., and Roth, A.
\newblock Fairness in reinforcement learning.
\newblock In Precup, D. and Teh, Y.~W. (eds.), \emph{Proceedings of the 34th
  International Conference on Machine Learning}, volume~70 of \emph{Proceedings
  of Machine Learning Research}, pp.\  1617--1626, International Convention
  Centre, Sydney, Australia, 06--11 Aug 2017. PMLR.
\newblock URL \url{http://proceedings.mlr.press/v70/jabbari17a.html}.

\bibitem[Jagielski et~al.(2019)Jagielski, Kearns, Mao, Oprea, Roth, Malvajerdi,
  and Ullman]{pmlr-v97-jagielski19a}
Jagielski, M., Kearns, M., Mao, J., Oprea, A., Roth, A., Malvajerdi, S.~S., and
  Ullman, J.
\newblock Differentially private fair learning.
\newblock In Chaudhuri, K. and Salakhutdinov, R. (eds.), \emph{Proceedings of
  the 36th International Conference on Machine Learning}, volume~97 of
  \emph{Proceedings of Machine Learning Research}, pp.\  3000--3008, Long
  Beach, California, USA, 09--15 Jun 2019. PMLR.
\newblock URL \url{http://proceedings.mlr.press/v97/jagielski19a.html}.

\bibitem[Joseph et~al.(2016)Joseph, Kearns, Morgenstern, and
  Roth]{joseph2016fairness}
Joseph, M., Kearns, M., Morgenstern, J.~H., and Roth, A.
\newblock Fairness in learning: Classic and contextual bandits.
\newblock In \emph{Advances in Neural Information Processing Systems}, pp.\
  325--333, 2016.

\bibitem[Jung et~al.(2018)Jung, Shroff, Feller, and Goel]{jung2018algorithmic}
Jung, J., Shroff, R., Feller, A., and Goel, S.
\newblock Algorithmic decision making in the presence of unmeasured
  confounding.
\newblock \emph{arXiv preprint arXiv:1805.01868}, 2018.

\bibitem[Juvekar et~al.(2018)Juvekar, Vaikuntanathan, and
  Chandrakasan]{gazelle}
Juvekar, C., Vaikuntanathan, V., and Chandrakasan, A.
\newblock {G}azelle: A {L}ow {L}atency {F}ramework for {S}ecure {N}eural
  {N}etwork {I}nference.
\newblock \emph{IACR Cryptology ePrint Archive}, 2018:\penalty0 73, 2018.

\bibitem[Kahneman et~al.(1986)Kahneman, Knetsch, and
  Thaler]{kahneman1986fairness}
Kahneman, D., Knetsch, J.~L., and Thaler, R.
\newblock Fairness as a constraint on profit seeking: Entitlements in the
  market.
\newblock \emph{The American economic review}, pp.\  728--741, 1986.

\bibitem[Kallus(2018)]{kallus2018balanced}
Kallus, N.
\newblock Balanced policy evaluation and learning.
\newblock In \emph{Advances in Neural Information Processing Systems}, pp.\
  8909--8920, 2018.

\bibitem[Kallus \& Zhou(2018)Kallus and Zhou]{kallus2018residual}
Kallus, N. and Zhou, A.
\newblock Residual unfairness in fair machine learning from prejudiced data.
\newblock In Dy, J. and Krause, A. (eds.), \emph{Proceedings of the 35th
  International Conference on Machine Learning}, volume~80 of \emph{Proceedings
  of Machine Learning Research}, pp.\  2439--2448, Stockholmsmässan, Stockholm
  Sweden, 10--15 Jul 2018. PMLR.

\bibitem[Kamiran et~al.(2013)Kamiran, {\v{Z}}liobait{\.e}, and
  Calders]{Kamiran2013}
Kamiran, F., {\v{Z}}liobait{\.e}, I., and Calders, T.
\newblock Quantifying explainable discrimination and removing illegal
  discrimination in automated decision making.
\newblock \emph{Knowledge and information systems}, 35\penalty0 (3):\penalty0
  613--644, 2013.

\bibitem[Kamishima et~al.(2012)Kamishima, Akaho, Asoh, and
  Sakuma]{kamishima2012fairness}
Kamishima, T., Akaho, S., Asoh, H., and Sakuma, J.
\newblock Fairness-aware classifier with prejudice remover regularizer.
\newblock In \emph{Joint European Conference on Machine Learning and Knowledge
  Discovery in Databases}, pp.\  35--50. Springer, 2012.

\bibitem[Kay et~al.(2015)Kay, Matuszek, and Munson]{kay2015unequal}
Kay, M., Matuszek, C., and Munson, S.~A.
\newblock Unequal representation and gender stereotypes in image search results
  for occupations.
\newblock In \emph{Proceedings of the 33rd Annual ACM Conference on Human
  Factors in Computing Systems}, pp.\  3819--3828. ACM, 2015.

\bibitem[Kearns \& Roth(2019)Kearns and Roth]{kearns2019ethical}
Kearns, M. and Roth, A.
\newblock \emph{The ethical algorithm: The science of socially aware algorithm
  design}.
\newblock Oxford University Press, 2019.

\bibitem[Kearns et~al.(2018)Kearns, Neel, Roth, and Wu]{pmlr-v80-kearns18a}
Kearns, M., Neel, S., Roth, A., and Wu, Z.~S.
\newblock Preventing fairness gerrymandering: Auditing and learning for
  subgroup fairness.
\newblock In Dy, J. and Krause, A. (eds.), \emph{Proceedings of the 35th
  International Conference on Machine Learning}, volume~80 of \emph{Proceedings
  of Machine Learning Research}, pp.\  2564--2572, Stockholmsmässan, Stockholm
  Sweden, 10--15 Jul 2018. PMLR.
\newblock URL \url{http://proceedings.mlr.press/v80/kearns18a.html}.

\bibitem[Keller et~al.(2013)Keller, Scholl, and Smart]{sha-mpc}
Keller, M., Scholl, P., and Smart, N.~P.
\newblock An architecture for practical actively secure {MPC} with dishonest
  majority.
\newblock In \emph{{ACM} Conference on Computer and Communications Security},
  pp.\  549--560. {ACM}, 2013.

\bibitem[Keller et~al.(2017)Keller, Orsini, Rotaru, Scholl, Soria{-}Vazquez,
  and Vivek]{KellerORSSV17}
Keller, M., Orsini, E., Rotaru, D., Scholl, P., Soria{-}Vazquez, E., and Vivek,
  S.
\newblock Faster secure multi-party computation of {AES} and {DES} using lookup
  tables.
\newblock In \emph{{ACNS}}, volume 10355 of \emph{Lecture Notes in Computer
  Science}, pp.\  229--249. Springer, 2017.

\bibitem[Keller et~al.(2018)Keller, Pastro, and
  Rotaru]{DBLP:conf/eurocrypt/KellerPR18}
Keller, M., Pastro, V., and Rotaru, D.
\newblock Overdrive: Making {SPDZ} great again.
\newblock In \emph{{EUROCRYPT} {(3)}}, volume 10822 of \emph{Lecture Notes in
  Computer Science}, pp.\  158--189. Springer, 2018.

\bibitem[Kiefer et~al.(1952)Kiefer, Wolfowitz, et~al.]{kiefer1952stochastic}
Kiefer, J., Wolfowitz, J., et~al.
\newblock Stochastic estimation of the maximum of a regression function.
\newblock \emph{The Annals of Mathematical Statistics}, 23\penalty0
  (3):\penalty0 462--466, 1952.

\bibitem[Kilbertus et~al.(2017)Kilbertus, Carulla, Parascandolo, Hardt,
  Janzing, and Sch{\"o}lkopf]{kilbertus2017avoiding}
Kilbertus, N., Carulla, M.~R., Parascandolo, G., Hardt, M., Janzing, D., and
  Sch{\"o}lkopf, B.
\newblock Avoiding discrimination through causal reasoning.
\newblock In \emph{Advances in Neural Information Processing Systems}, pp.\
  656--666, 2017.

\bibitem[Kilbertus et~al.(2018{\natexlab{a}})Kilbertus, Gascon, Kusner, Veale,
  Gummadi, and Weller]{kilbertus18a}
Kilbertus, N., Gascon, A., Kusner, M., Veale, M., Gummadi, K., and Weller, A.
\newblock Blind justice: Fairness with encrypted sensitive attributes.
\newblock In Dy, J. and Krause, A. (eds.), \emph{Proceedings of the 35th
  International Conference on Machine Learning}, volume~80 of \emph{Proceedings
  of Machine Learning Research}, pp.\  2630--2639, Stockholmsmässan, Stockholm
  Sweden, 10--15 Jul 2018{\natexlab{a}}. PMLR.
\newblock URL \url{http://proceedings.mlr.press/v80/kilbertus18a.html}.

\bibitem[Kilbertus et~al.(2018{\natexlab{b}})Kilbertus, Parascandolo, and
  Sch{\"o}lkopf]{kilbertus2018generalization}
Kilbertus, N., Parascandolo, G., and Sch{\"o}lkopf, B.
\newblock Generalization in anti-causal learning.
\newblock \emph{arXiv preprint arXiv:1812.00524}, 2018{\natexlab{b}}.

\bibitem[Kilbertus et~al.(2019)Kilbertus, Ball, Kusner, Weller, and
  Silva]{kilbertus19sensitivity}
Kilbertus, N., Ball, P.~J., Kusner, M.~J., Weller, A., and Silva, R.
\newblock The sensitivity of counterfactual fairness to unmeasured confounding.
\newblock In \emph{Proceedings of the 35th Conference on Uncertainty in
  Artificial Intelligence (UAI)}, pp.\  213. {AUAI} Press, July 2019.
\newblock URL \url{http://auai.org/uai2019/proceedings/papers/213.pdf}.

\bibitem[Kilbertus et~al.(2020{\natexlab{a}})Kilbertus, Kusner, and
  Silva]{kilbertus2020class}
Kilbertus, N., Kusner, M.~J., and Silva, R.
\newblock A class of algorithms for general instrumental variable models.
\newblock In \emph{Advances in Neural Information Processing Systems 33},
  2020{\natexlab{a}}.

\bibitem[Kilbertus et~al.(2020{\natexlab{b}})Kilbertus, Rodriguez, Sch\"olkopf,
  Muandet, and Valera]{kilbertus2018fair}
Kilbertus, N., Rodriguez, M.~G., Sch\"olkopf, B., Muandet, K., and Valera, I.
\newblock Fair decisions despite imperfect predictions.
\newblock In Chiappa, S. and Calandra, R. (eds.), \emph{Proceedings of the 23rd
  International Conference on Artificial Intelligence and Statistics}, volume
  108 of \emph{Proceedings of Machine Learning Research}, pp.\  277--287,
  Online, 26--28 Aug 2020{\natexlab{b}}. PMLR.
\newblock URL \url{http://proceedings.mlr.press/v108/kilbertus20a.html}.

\bibitem[Kim et~al.(2020)Kim, Chen, and Talwalkar]{kim2020model}
Kim, J.~S., Chen, J., and Talwalkar, A.
\newblock Model-agnostic characterization of fairness trade-offs.
\newblock In \emph{Proceedings of the 37th International Conference on Machine
  Learning}, Proceedings of Machine Learning Research. PMLR, 2020.

\bibitem[Kim et~al.(2018)Kim, Reingold, and Rothblum]{kim2018fairness}
Kim, M., Reingold, O., and Rothblum, G.
\newblock Fairness through computationally-bounded awareness.
\newblock In Bengio, S., Wallach, H., Larochelle, H., Grauman, K.,
  Cesa-Bianchi, N., and Garnett, R. (eds.), \emph{Advances in Neural
  Information Processing Systems 31}, pp.\  4842--4852. Curran Associates,
  Inc., 2018.
\newblock URL
  \url{http://papers.nips.cc/paper/7733-fairness-through-computationally-bounded-awareness.pdf}.

\bibitem[Kingma \& Welling(2014)Kingma and Welling]{kingma2013auto}
Kingma, P.~D. and Welling, M.
\newblock Auto-encoding variational bayes.
\newblock In \emph{Proceedings of the International Conference on Learning
  Representations ({ICLR})}, volume~1, 2014.

\bibitem[Kleinberg et~al.(2017{\natexlab{a}})Kleinberg, Lakkaraju, Leskovec,
  Ludwig, and Mullainathan]{kleinberg2017human}
Kleinberg, J., Lakkaraju, H., Leskovec, J., Ludwig, J., and Mullainathan, S.
\newblock Human decisions and machine predictions.
\newblock \emph{The quarterly journal of economics}, 133\penalty0 (1):\penalty0
  237--293, 2017{\natexlab{a}}.

\bibitem[Kleinberg et~al.(2017{\natexlab{b}})Kleinberg, Mullainathan, and
  Raghavan]{Kleinberg2016}
Kleinberg, J., Mullainathan, S., and Raghavan, M.
\newblock {Inherent Trade-Offs in the Fair Determination of Risk Scores}.
\newblock In Papadimitriou, C.~H. (ed.), \emph{8th Innovations in Theoretical
  Computer Science Conference (ITCS 2017)}, volume~67 of \emph{Leibniz
  International Proceedings in Informatics (LIPIcs)}, pp.\  43:1--43:23,
  Dagstuhl, Germany, 2017{\natexlab{b}}. Schloss Dagstuhl--Leibniz-Zentrum fuer
  Informatik.
\newblock ISBN 978-3-95977-029-3.
\newblock \doi{10.4230/LIPIcs.ITCS.2017.43}.
\newblock URL \url{http://drops.dagstuhl.de/opus/volltexte/2017/8156}.

\bibitem[Kleinberg et~al.(2018{\natexlab{a}})Kleinberg, Ludwig, Mullainathan,
  and Rambachan]{kleinberg2018algorithmic}
Kleinberg, J., Ludwig, J., Mullainathan, S., and Rambachan, A.
\newblock Algorithmic fairness.
\newblock In \emph{Aea papers and proceedings}, volume 108, pp.\  22--27,
  2018{\natexlab{a}}.

\bibitem[Kleinberg et~al.(2018{\natexlab{b}})Kleinberg, Ludwig, Mullainathan,
  and Sunstein]{kleinberg2018discrimination}
Kleinberg, J., Ludwig, J., Mullainathan, S., and Sunstein, C.~R.
\newblock Discrimination in the age of algorithms.
\newblock \emph{Journal of Legal Analysis}, 10, 2018{\natexlab{b}}.

\bibitem[Kohler-Hausmann(2018)]{kohler2018eddie}
Kohler-Hausmann, I.
\newblock Eddie murphy and the dangers of counterfactual causal thinking about
  detecting racial discrimination.
\newblock \emph{Nw. UL Rev.}, 113:\penalty0 1163, 2018.

\bibitem[Kroll et~al.(2016)Kroll, Huey, Barocas, Felten, Reidenberg, Robinson,
  and Yu]{kroll2016accountable}
Kroll, J.~A., Huey, J., Barocas, S., Felten, E.~W., Reidenberg, J.~R.,
  Robinson, D.~G., and Yu, H.
\newblock Accountable algorithms.
\newblock \emph{University of Pennsylvania Law Review}, 165, 2016.

\bibitem[Kusner et~al.(2017)Kusner, Loftus, Russell, and Silva]{Kusner2017}
Kusner, M.~J., Loftus, J., Russell, C., and Silva, R.
\newblock Counterfactual fairness.
\newblock In Guyon, I., Luxburg, U.~V., Bengio, S., Wallach, H., Fergus, R.,
  Vishwanathan, S., and Garnett, R. (eds.), \emph{Advances in Neural
  Information Processing Systems 30}, pp.\  4069--4079. Curran Associates,
  Inc., 2017.
\newblock URL
  \url{http://papers.nips.cc/paper/6995-counterfactual-fairness.pdf}.

\bibitem[Lahoti et~al.(2019)Lahoti, Gummadi, and
  Weikum]{lahoti2019operationalizing}
Lahoti, P., Gummadi, K.~P., and Weikum, G.
\newblock Operationalizing individual fairness with pairwise fair
  representations.
\newblock \emph{Proc. VLDB Endow.}, 13\penalty0 (4):\penalty0 506–518,
  December 2019.
\newblock ISSN 2150-8097.
\newblock \doi{10.14778/3372716.3372723}.
\newblock URL \url{https://doi.org/10.14778/3372716.3372723}.

\bibitem[Lahoti et~al.(2020)Lahoti, Beutel, Chen, Lee, Prost, Thain, Wang, and
  Chi]{lahoti2020fairness}
Lahoti, P., Beutel, A., Chen, J., Lee, K., Prost, F., Thain, N., Wang, X., and
  Chi, E.~H.
\newblock Fairness without demographics through adversarially reweighted
  learning.
\newblock \emph{arXiv preprint arXiv:2006.13114}, 2020.

\bibitem[Lakkaraju \& Rudin(2017)Lakkaraju and Rudin]{lakkaraju2017learning}
Lakkaraju, H. and Rudin, C.
\newblock {Learning Cost-Effective and Interpretable Treatment Regimes}.
\newblock In Singh, A. and Zhu, J. (eds.), \emph{Proceedings of the 20th
  International Conference on Artificial Intelligence and Statistics},
  volume~54 of \emph{Proceedings of Machine Learning Research}, pp.\  166--175,
  Fort Lauderdale, FL, USA, 20--22 Apr 2017. PMLR.

\bibitem[Lakkaraju et~al.(2017)Lakkaraju, Kleinberg, Leskovec, Ludwig, and
  Mullainathan]{lakkaraju2017selective}
Lakkaraju, H., Kleinberg, J., Leskovec, J., Ludwig, J., and Mullainathan, S.
\newblock The selective labels problem: Evaluating algorithmic predictions in
  the presence of unobservables.
\newblock In \emph{Proceedings of the 23rd ACM SIGKDD International Conference
  on Knowledge Discovery and Data Mining}, pp.\  275--284. ACM, 2017.

\bibitem[Langford et~al.(2008)Langford, Strehl, and Wortman]{Langford08:ES}
Langford, J., Strehl, A., and Wortman, J.
\newblock Exploration scavenging.
\newblock In \emph{Proceedings of the 25th International Conference on Machine
  Learning}, ICML '08, pp.\  528--535, New York, NY, USA, 2008. ACM.

\bibitem[Lee(2016)]{Lee2016}
Lee, P.
\newblock Learning from tay’s introduction.
\newblock \emph{Official Microsoft Blog, March}, 2016.
\newblock URL
  \url{https://blogs.microsoft.com/blog/2016/03/25/learning-tays-introduction}.

\bibitem[Lichman(2013)]{uci}
Lichman, M.
\newblock {UCI} machine learning repository, 2013.
\newblock URL \url{http://archive.ics.uci.edu/ml}.

\bibitem[Lindell(2016)]{lindell_how_2016}
Lindell, Y.
\newblock How {To} {Simulate} {It} -- {A} {Tutorial} on the {Simulation}
  {Proof} {Technique}.
\newblock \emph{IACR Cryptology ePrint Archive}, 2016:\penalty0 46, 2016.

\bibitem[Liu et~al.(2017)Liu, Juuti, Lu, and Asokan]{minionn}
Liu, J., Juuti, M., Lu, Y., and Asokan, N.
\newblock Oblivious neural network predictions via minionn transformations.
\newblock In \emph{{CCS}}, pp.\  619--631. {ACM}, 2017.

\bibitem[Liu et~al.(2018)Liu, Dean, Rolf, Simchowitz, and Hardt]{liu18c}
Liu, L.~T., Dean, S., Rolf, E., Simchowitz, M., and Hardt, M.
\newblock Delayed impact of fair machine learning.
\newblock In Dy, J. and Krause, A. (eds.), \emph{Proceedings of the 35th
  International Conference on Machine Learning}, volume~80 of \emph{Proceedings
  of Machine Learning Research}, pp.\  3150--3158, Stockholmsmässan, Stockholm
  Sweden, 10--15 Jul 2018. PMLR.
\newblock URL \url{http://proceedings.mlr.press/v80/liu18c.html}.

\bibitem[Loftus et~al.(2018)Loftus, Russell, Kusner, and
  Silva]{loftus2018causal}
Loftus, J.~R., Russell, C., Kusner, M.~J., and Silva, R.
\newblock Causal reasoning for algorithmic fairness.
\newblock \emph{arXiv preprint arXiv:1805.05859}, 2018.

\bibitem[Louizos et~al.(2016)Louizos, Swersky, Li, Welling, and
  Zemel]{louizos2015variational}
Louizos, C., Swersky, K., Li, Y., Welling, M., and Zemel, R.
\newblock The variational fair autoencoder.
\newblock In \emph{International Conference on Learning Representations}, 2016.

\bibitem[Lum \& Isaac(2016)Lum and Isaac]{lum2016to}
Lum, K. and Isaac, W.
\newblock To predict and serve?
\newblock \emph{Significance}, 13\penalty0 (5):\penalty0 14--19, 2016.
\newblock \doi{10.1111/j.1740-9713.2016.00960.x}.
\newblock URL
  \url{https://rss.onlinelibrary.wiley.com/doi/abs/10.1111/j.1740-9713.2016.00960.x}.

\bibitem[Madras et~al.(2018)Madras, Creager, Pitassi, and Zemel]{madras18a}
Madras, D., Creager, E., Pitassi, T., and Zemel, R.
\newblock Learning adversarially fair and transferable representations.
\newblock In Dy, J. and Krause, A. (eds.), \emph{Proceedings of the 35th
  International Conference on Machine Learning}, volume~80 of \emph{Proceedings
  of Machine Learning Research}, pp.\  3384--3393, Stockholmsmässan, Stockholm
  Sweden, 10--15 Jul 2018. PMLR.
\newblock URL \url{http://proceedings.mlr.press/v80/madras18a.html}.

\bibitem[Meyer et~al.(2019)Meyer, Heck, Holtzman, Anderson, Cai, Watts, and
  Chabris]{Meyer2018objecting}
Meyer, M.~N., Heck, P.~R., Holtzman, G.~S., Anderson, S.~M., Cai, W., Watts,
  D.~J., and Chabris, C.~F.
\newblock Objecting to experiments that compare two unobjectionable policies or
  treatments.
\newblock \emph{Proceedings of the National Academy of Sciences}, 2019.
\newblock ISSN 0027-8424.
\newblock \doi{10.1073/pnas.1820701116}.

\bibitem[Miller et~al.(2020)Miller, Milli, and Hardt]{miller2020strategic}
Miller, J., Milli, S., and Hardt, M.
\newblock Strategic classification is causal modeling in disguise.
\newblock In \emph{Proceedings of the 37th International Conference on Machine
  Learning}, Proceedings of Machine Learning Research. PMLR, 2020.

\bibitem[Milli et~al.(2019)Milli, Miller, Dragan, and Hardt]{milli2019social}
Milli, S., Miller, J., Dragan, A.~D., and Hardt, M.
\newblock The social cost of strategic classification.
\newblock In \emph{Proceedings of the Conference on Fairness, Accountability,
  and Transparency}, pp.\  230--239, 2019.

\bibitem[Mitchell et~al.(2018)Mitchell, Potash, and
  Barocas]{mitchell2018prediction}
Mitchell, S., Potash, E., and Barocas, S.
\newblock Prediction-based decisions and fairness: A catalogue of choices,
  assumptions, and definitions.
\newblock \emph{arXiv preprint arXiv:1811.07867}, 2018.

\bibitem[Mohamed et~al.(2020)Mohamed, Png, and Isaac]{mohamed2020decolonial}
Mohamed, S., Png, M.-T., and Isaac, W.
\newblock Decolonial ai: Decolonial theory as sociotechnical foresight in
  artificial intelligence.
\newblock \emph{Philosophy \& Technology}, pp.\  1--26, 2020.

\bibitem[Mohassel \& Zhang(2017)Mohassel and Zhang]{mohassel2017secureml}
Mohassel, P. and Zhang, Y.
\newblock {SecureML: A system for scalable privacy-preserving machine
  learning}.
\newblock In \emph{IEEE Symposium on Security and Privacy (SP)}, pp.\  19--38,
  2017.

\bibitem[Moulin(2004)]{moulin2004fair}
Moulin, H.
\newblock \emph{Fair division and collective welfare}.
\newblock MIT press, 2004.

\bibitem[Mouzannar et~al.(2019)Mouzannar, Ohannessian, and
  Srebro]{mouzannar2019fair}
Mouzannar, H., Ohannessian, M.~I., and Srebro, N.
\newblock From fair decision making to social equality.
\newblock In \emph{FAT}, 2019.

\bibitem[Mu{\~n}oz et~al.(2016)Mu{\~n}oz, Smith, and Patil]{House2016}
Mu{\~n}oz, C., Smith, M., and Patil, D.
\newblock Big data: A report on algorithmic systems, opportunity, and civil
  rights.
\newblock \emph{Washington, DC: Executive Office of the President, White
  House}, 2016.
\newblock URL
  \url{https://obamawhitehouse.archives.gov/sites/default/files/microsites/ostp/2016_0504_data_discrimination.pdf}.

\bibitem[Nabi \& Shpitser(2018)Nabi and Shpitser]{Nabi2017}
Nabi, R. and Shpitser, I.
\newblock Fair inference on outcomes.
\newblock In \emph{AAAI Conference on Artificial Intelligence}, volume 2018,
  pp.\  1931. NIH Public Access, 2018.

\bibitem[Nikolaenko et~al.(2013{\natexlab{a}})Nikolaenko, Ioannidis, Weinsberg,
  Joye, Taft, and Boneh]{nikolaenko_matrix-factorisation}
Nikolaenko, V., Ioannidis, S., Weinsberg, U., Joye, M., Taft, N., and Boneh, D.
\newblock Privacy-preserving matrix factorization.
\newblock In \emph{{ACM} Conference on Computer and Communications Security},
  pp.\  801--812. {ACM}, 2013{\natexlab{a}}.

\bibitem[Nikolaenko et~al.(2013{\natexlab{b}})Nikolaenko, Weinsberg, Ioannidis,
  Joye, Boneh, and Taft]{nikolaenko_privacy-preserving_2013}
Nikolaenko, V., Weinsberg, U., Ioannidis, S., Joye, M., Boneh, D., and Taft, N.
\newblock Privacy-preserving ridge regression on hundreds of millions of
  records.
\newblock In \emph{{IEEE} Symposium on Security and Privacy}, pp.\  334--348.
  {IEEE} Computer Society, 2013{\natexlab{b}}.

\bibitem[Noble(2018)]{noble2018algorithms}
Noble, S.~U.
\newblock \emph{Algorithms of Oppression: How Search Engines Reinforce Racism}.
\newblock NYU Press, 2018.
\newblock ISBN 9781479849949.

\bibitem[O'Neil(2016)]{o2016weapons}
O'Neil, C.
\newblock \emph{Weapons of math destruction: How big data increases inequality
  and threatens democracy}.
\newblock Broadway Books, 2016.

\bibitem[Parascandolo et~al.(2018)Parascandolo, Kilbertus, Rojas-Carulla, and
  Sch{\"o}lkopf]{parascandolo2018learning}
Parascandolo, G., Kilbertus, N., Rojas-Carulla, M., and Sch{\"o}lkopf, B.
\newblock Learning independent causal mechanisms.
\newblock In Dy, J. and Krause, A. (eds.), \emph{Proceedings of the 35th
  International Conference on Machine Learning}, volume~80 of \emph{Proceedings
  of Machine Learning Research}, pp.\  4036--4044, Stockholmsmässan, Stockholm
  Sweden, 10--15 Jul 2018. PMLR.
\newblock URL \url{http://proceedings.mlr.press/v80/parascandolo18a.html}.

\bibitem[Paszke et~al.(2017)Paszke, Gross, Chintala, Chanan, Yang, DeVito, Lin,
  Desmaison, Antiga, and Lerer]{paszke2017automatic}
Paszke, A., Gross, S., Chintala, S., Chanan, G., Yang, E., DeVito, Z., Lin, Z.,
  Desmaison, A., Antiga, L., and Lerer, A.
\newblock Automatic differentiation in {PyTorch}.
\newblock In \emph{NIPS Autodiff Workshop}, 2017.

\bibitem[Pearl(2009)]{Pearl2009}
Pearl, J.
\newblock \emph{Causality}.
\newblock Cambridge University Press, 2009.

\bibitem[Perdomo et~al.(2020)Perdomo, Zrnic, Mendler-D{\"u}nner, and
  Hardt]{perdomo2020performative}
Perdomo, J.~C., Zrnic, T., Mendler-D{\"u}nner, C., and Hardt, M.
\newblock Performative prediction.
\newblock In \emph{Proceedings of the 37th International Conference on Machine
  Learning}, Proceedings of Machine Learning Research. PMLR, 2020.

\bibitem[Peters et~al.(2017)Peters, Janzing, and
  Sch{\"o}lkopf]{peters2017elements}
Peters, J., Janzing, D., and Sch{\"o}lkopf, B.
\newblock \emph{Elements of causal inference: foundations and learning
  algorithms}.
\newblock MIT press, 2017.

\bibitem[{Picker Institute Europe}(2015)]{nhs2014staffsurvey}
{Picker Institute Europe}.
\newblock {N}ational {H}ealth {S}ervice national staff survey, 2014, 2015.
\newblock URL \url{http://doi.org/10.5255/UKDA-SN-7776-1}.

\bibitem[Pleiss et~al.(2017)Pleiss, Raghavan, Wu, Kleinberg, and
  Weinberger]{pleiss2017fairness}
Pleiss, G., Raghavan, M., Wu, F., Kleinberg, J., and Weinberger, K.~Q.
\newblock On fairness and calibration.
\newblock In \emph{Advances in Neural Information Processing Systems}, pp.\
  5680--5689, 2017.

\bibitem[Qureshi et~al.(2019)Qureshi, Kamiran, Karim, Ruggieri, and
  Pedreschi]{Qureshi2016}
Qureshi, B., Kamiran, F., Karim, A., Ruggieri, S., and Pedreschi, D.
\newblock Causal inference for social discrimination reasoning.
\newblock \emph{Journal of Intelligent Information Systems}, 54\penalty0
  (2):\penalty0 425--437, November 2019.
\newblock \doi{10.1007/s10844-019-00580-x}.
\newblock URL \url{https://doi.org/10.1007/s10844-019-00580-x}.

\bibitem[Rambachan et~al.(2020)Rambachan, Kleinberg, Mullainathan, and
  Ludwig]{rambachan2020economic}
Rambachan, A., Kleinberg, J., Mullainathan, S., and Ludwig, J.
\newblock An economic approach to regulating algorithms.
\newblock Technical report, National Bureau of Economic Research, 2020.

\bibitem[Rastegarpanah et~al.(2020)Rastegarpanah, Crovella, and
  Gummadi]{rastegarpanah2020fair}
Rastegarpanah, B., Crovella, M., and Gummadi, K.~P.
\newblock Fair inputs and fair outputs: The incompatibility of fairness in
  privacy and accuracy.
\newblock \emph{arXiv preprint arXiv:2005.09209}, 2020.

\bibitem[Rawls(1958)]{rawls1958justice}
Rawls, J.
\newblock Justice as fairness.
\newblock \emph{The philosophical review}, 67\penalty0 (2):\penalty0 164--194,
  1958.

\bibitem[Rawls(2009)]{rawls2009theory}
Rawls, J.
\newblock \emph{A theory of justice}.
\newblock Harvard university press, 2009.

\bibitem[Richardson(2003)]{richardson:03}
Richardson, T.
\newblock Markov properties for acyclic directed mixed graphs.
\newblock \emph{Scandinavian Journal of Statistics}, 30:\penalty0 145--157,
  2003.

\bibitem[Robins et~al.(2000)Robins, Rotnitzky, and Scharfstein]{Robins2000}
Robins, J.~M., Rotnitzky, A., and Scharfstein, D.~O.
\newblock Sensitivity analysis for selection bias and unmeasured confounding in
  missing data and causal inference models.
\newblock In \emph{Statistical Models in Epidemiology, the Environment, and
  Clinical Trials}, 2000.

\bibitem[Roemer(1998)]{roemer1998theories}
Roemer, J.~E.
\newblock \emph{Theories of distributive justice}.
\newblock Harvard University Press, 1998.

\bibitem[Roemer(2009)]{roemer2009equality}
Roemer, J.~E.
\newblock \emph{Equality of opportunity}.
\newblock Harvard University Press, 2009.

\bibitem[Rosenbaum(2002)]{rosenbaum:02}
Rosenbaum, P.
\newblock \emph{Observational Studies}.
\newblock Springer, 2002.

\bibitem[Rosenbaum \& Rubin(1983)Rosenbaum and Rubin]{Rosenbaum1983}
Rosenbaum, P.~R. and Rubin, D.~B.
\newblock The central role of the propensity score in observational studies for
  causal effects.
\newblock \emph{Biometrika}, pp.\  41--55, 1983.

\bibitem[Rubin(2005)]{Rubin05:PO}
Rubin, D.
\newblock Causal inference using potential outcomes.
\newblock \emph{Journal of the American Statistical Association}, 100\penalty0
  (469):\penalty0 322--331, 2005.

\bibitem[Russell et~al.(2017)Russell, Kusner, Loftus, and
  Silva]{russell2017worlds}
Russell, C., Kusner, M.~J., Loftus, J., and Silva, R.
\newblock When worlds collide: integrating different counterfactual assumptions
  in fairness.
\newblock In \emph{Advances in Neural Information Processing Systems}, pp.\
  6414--6423, 2017.

\bibitem[Sandel(2010)]{sandel2010justice}
Sandel, M.~J.
\newblock \emph{Justice: What's the right thing to do?}
\newblock Macmillan, 2010.

\bibitem[Shpitser(2013)]{shpitser:13}
Shpitser, I.
\newblock Counterfactual graphical models for longitudinal mediation analysis
  with unobserved confounding.
\newblock \emph{Cognitive Science}, 37:\penalty0 1011–1035, 2013.

\bibitem[Silva et~al.(2007)Silva, Chu, and Ghahramani]{silva_nips:07}
Silva, R., Chu, W., and Ghahramani, Z.
\newblock Hidden common cause relations in relational learning.
\newblock \emph{Advances in Neural Information Processing Systems},
  20:\penalty0 1345--1352, 2007.

\bibitem[Silver et~al.(2014)Silver, Lever, Heess, Degris, Wierstra, and
  Riedmiller]{Silver14:DPG}
Silver, D., Lever, G., Heess, N., Degris, T., Wierstra, D., and Riedmiller, M.
\newblock Deterministic policy gradient algorithms.
\newblock In \emph{Proceedings of the 31st International Conference on Machine
  Learning}, pp.\  387--395. JMLR.org, 2014.

\bibitem[Snow(2018)]{Snow2018}
Snow, J.
\newblock Amazon’s face recognition falsely matched 28 members of congress
  with mugshots.
\newblock \emph{American Civil Liberties Union of Northern California, July},
  2018.
\newblock URL
  \url{https://www.aclunc.org/blog/amazon-s-face-recognition-falsely-matched-28-members-congress-mugshots}.

\bibitem[Sokolic et~al.(2017)Sokolic, Rodrigues, Qiu, and
  Sapiro]{sokolic2017learning}
Sokolic, J., Rodrigues, M.~R., Qiu, Q., and Sapiro, G.
\newblock Learning to identify while failing to discriminate.
\newblock In \emph{ICCV Workshops}, pp.\  2537--2544, 2017.

\bibitem[Speicher et~al.(2018)Speicher, Heidari, Grgi{\'c}-Hla{\v{c}}a,
  Gummadi, Singla, Weller, and Zafar]{speicher2018unified}
Speicher, T., Heidari, H., Grgi{\'c}-Hla{\v{c}}a, N., Gummadi, K.~P., Singla,
  A., Weller, A., and Zafar, M.~B.
\newblock A unified approach to quantifying algorithmic unfairness: Measuring
  individual \& group unfairness via inequality indices.
\newblock In \emph{Proceedings of the 24th ACM SIGKDD International Conference
  on Knowledge Discovery \& Data Mining}, pp.\  2239--2248. ACM, 2018.

\bibitem[Sutton \& Barto(1998)Sutton and Barto]{Sutton98:RL}
Sutton, R.~S. and Barto, A.~G.
\newblock \emph{Introduction to Reinforcement Learning}.
\newblock MIT Press, Cambridge, MA, USA, 1st edition, 1998.
\newblock ISBN 0262193981.

\bibitem[Swaminathan \& Joachims(2015{\natexlab{a}})Swaminathan and
  Joachims]{Swaminathan15:CRM}
Swaminathan, A. and Joachims, T.
\newblock Counterfactual risk minimization: Learning from logged bandit
  feedback.
\newblock In \emph{ICML}, volume~37 of \emph{JMLR Workshop and Conference
  Proceedings}, pp.\  814--823. JMLR.org, 2015{\natexlab{a}}.

\bibitem[Swaminathan \& Joachims(2015{\natexlab{b}})Swaminathan and
  Joachims]{Swaminathan15:SEC}
Swaminathan, A. and Joachims, T.
\newblock The self-normalized estimator for counterfactual learning.
\newblock In \emph{Proceedings of the 28th International Conference on Neural
  Information Processing Systems}, pp.\  3231--3239, Cambridge, MA, USA,
  2015{\natexlab{b}}. MIT Press.

\bibitem[Sweeney(2013)]{sweeney2013discrimination}
Sweeney, L.
\newblock Discrimination in online ad delivery.
\newblock \emph{Queue}, 11\penalty0 (3):\penalty0 10, 2013.

\bibitem[Tram{\`e}r et~al.(2016)Tram{\`e}r, Zhang, Juels, Reiter, and
  Ristenpart]{tramer2016stealing}
Tram{\`e}r, F., Zhang, F., Juels, A., Reiter, M.~K., and Ristenpart, T.
\newblock Stealing machine learning models via prediction apis.
\newblock In \emph{USENIX Security Symposium}, pp.\  601--618, 2016.

\bibitem[Tr{\"a}uble et~al.(2020)Tr{\"a}uble, Creager, Kilbertus, Goyal,
  Locatello, Sch{\"o}lkopf, and Bauer]{trauble2020independence}
Tr{\"a}uble, F., Creager, E., Kilbertus, N., Goyal, A., Locatello, F.,
  Sch{\"o}lkopf, B., and Bauer, S.
\newblock Is independence all you need? on the generalization of
  representations learned from correlated data.
\newblock \emph{arXiv preprint arXiv:2006.07886}, 2020.

\bibitem[Turiel(2002)]{turiel2002culture}
Turiel, E.
\newblock \emph{The culture of morality: Social development, context, and
  conflict}.
\newblock Cambridge University Press, 2002.

\bibitem[Valera et~al.(2018)Valera, Singla, and
  Gomez-Rodriguez]{valera2018enhancing}
Valera, I., Singla, A., and Gomez-Rodriguez, M.
\newblock Enhancing the accuracy and fairness of human decision making.
\newblock In \emph{Neural Information Processing Systems}, 2018.

\bibitem[VanderWeele \& Robinson(2014)VanderWeele and
  Robinson]{VanderWeele2014}
VanderWeele, T.~J. and Robinson, W.~R.
\newblock On causal interpretation of race in regressions adjusting for
  confounding and mediating variables.
\newblock \emph{Epidemiology}, 25\penalty0 (4):\penalty0 473, 2014.

\bibitem[Veale \& Binns(2017)Veale and Binns]{VealeBinns2017}
Veale, M. and Binns, R.
\newblock Fairer machine learning in the real world: Mitigating discrimination
  without collecting sensitive data.
\newblock \emph{Big Data \& Society}, 4\penalty0 (2), 2017.

\bibitem[Veale \& Edwards(2018)Veale and Edwards]{vealeedwardsa29}
Veale, M. and Edwards, L.
\newblock {Clarity, Surprises, and Further Questions in the Article 29 Working
  Party Draft Guidance on Automated Decision-Making and Profiling}.
\newblock \emph{Computer Law \& Security Review}, 2018.
\newblock \doi{10.1016/j.clsr.2017.12.002}.

\bibitem[Wang et~al.(2019)Wang, Grgi{\'c}-Hla{\v{c}}a, Lahoti, Gummadi, and
  Weller]{wang2019empirical}
Wang, H., Grgi{\'c}-Hla{\v{c}}a, N., Lahoti, P., Gummadi, K.~P., and Weller, A.
\newblock An empirical study on learning fairness metrics for compas data with
  human supervision.
\newblock \emph{arXiv preprint arXiv:1910.10255}, 2019.

\bibitem[Wick et~al.(2019)Wick, panda, and Tristan]{wick2019unlocking}
Wick, M., panda, s., and Tristan, J.-B.
\newblock Unlocking fairness: a trade-off revisited.
\newblock In Wallach, H., Larochelle, H., Beygelzimer, A., d\textquotesingle
  Alch\'{e}-Buc, F., Fox, E., and Garnett, R. (eds.), \emph{Advances in Neural
  Information Processing Systems 32}, pp.\  8783--8792. Curran Associates,
  Inc., 2019.
\newblock URL
  \url{http://papers.nips.cc/paper/9082-unlocking-fairness-a-trade-off-revisited.pdf}.

\bibitem[Wightman(1998)]{wightman1998lsac}
Wightman, L.~F.
\newblock {LSAC} national longitudinal bar passage study.
\newblock \emph{{LSAC} Research Report Series.}, 1998.

\bibitem[Williams(1992)]{williams1992simple}
Williams, R.~J.
\newblock Simple statistical gradient-following algorithms for connectionist
  reinforcement learning.
\newblock \emph{Machine learning}, 8\penalty0 (3-4):\penalty0 229--256, 1992.

\bibitem[Woodworth et~al.(2017)Woodworth, Gunasekar, Ohannessian, and
  Srebro]{woodworth2017learning}
Woodworth, B., Gunasekar, S., Ohannessian, M.~I., and Srebro, N.
\newblock Learning non-discriminatory predictors.
\newblock In Kale, S. and Shamir, O. (eds.), \emph{Proceedings of the 2017
  Conference on Learning Theory}, volume~65 of \emph{Proceedings of Machine
  Learning Research}, pp.\  1920--1953, Amsterdam, Netherlands, 07--10 Jul
  2017. PMLR.

\bibitem[Xu et~al.(2020)Xu, Du, and Wu]{xu2020removing}
Xu, D., Du, W., and Wu, X.
\newblock Removing disparate impact of differentially private stochastic
  gradient descent on model accuracy.
\newblock \emph{arXiv preprint arXiv:2003.03699}, 2020.

\bibitem[Yaari \& Bar-Hillel(1984)Yaari and Bar-Hillel]{yaari1984dividing}
Yaari, M.~E. and Bar-Hillel, M.
\newblock On dividing justly.
\newblock \emph{Social choice and welfare}, 1\penalty0 (1):\penalty0 1--24,
  1984.

\bibitem[Yao(1986)]{yao_how_1986}
Yao, A. C.-C.
\newblock How to {Generate} and {Exchange} {Secrets} ({Extended} {Abstract}).
\newblock In \emph{{FOCS}}, pp.\  162--167. IEEE Computer Society, 1986.

\bibitem[Yona \& Rothblum(2018)Yona and Rothblum]{yona2018probably}
Yona, G. and Rothblum, G.
\newblock Probably approximately metric-fair learning.
\newblock In \emph{International Conference on Machine Learning}, pp.\
  5666--5674, 2018.

\bibitem[Young(1995)]{young1995equity}
Young, H.~P.
\newblock \emph{Equity: in theory and practice}.
\newblock Princeton University Press, 1995.

\bibitem[Zafar et~al.(2017{\natexlab{a}})Zafar, Valera, G\'{o}mez~Rodriguez,
  and Gummadi]{Zafar2017a}
Zafar, M.~B., Valera, I., G\'{o}mez~Rodriguez, M., and Gummadi, K.~P.
\newblock Fairness beyond disparate treatment \& disparate impact: Learning
  classification without disparate mistreatment.
\newblock In \emph{Proceedings of the 26th International Conference on World
  Wide Web}, pp.\  1171--1180, 2017{\natexlab{a}}.

\bibitem[Zafar et~al.(2017{\natexlab{b}})Zafar, Valera, Rogriguez, and
  Gummadi]{Zafar2017}
Zafar, M.~B., Valera, I., Rogriguez, M.~G., and Gummadi, K.~P.
\newblock {Fairness Constraints: Mechanisms for Fair Classification}.
\newblock In \emph{Proceedings of the 20th International Conference on
  Artificial Intelligence and Statistics}, pp.\  962--970, 2017{\natexlab{b}}.

\bibitem[Zahur \& Evans(2015)Zahur and Evans]{zahur2015obliv}
Zahur, S. and Evans, D.
\newblock Obliv-c: A language for extensible data-oblivious computation.
\newblock \emph{IACR Cryptology ePrint Archive}, 2015:\penalty0 1153, 2015.

\bibitem[Zemel et~al.(2013)Zemel, Wu, Swersky, Pitassi, and Dwork]{Zemel2013}
Zemel, R.~S., Wu, Y., Swersky, K., Pitassi, T., and Dwork, C.
\newblock Learning fair representations.
\newblock \emph{Proceedings of the International Conference of Machine
  Learning}, 28:\penalty0 325--333, 2013.
\newblock URL \url{http://jmlr.org/proceedings/papers/v28/zemel13.pdf}.

\bibitem[Zhang \& Bareinboim(2018)Zhang and Bareinboim]{zhang2018fairness}
Zhang, J. and Bareinboim, E.
\newblock Fairness in decision-making--the causal explanation formula.
\newblock In \emph{32nd AAAI Conference on Artificial Intelligence}, 2018.

\bibitem[Zhang \& Wu(2017)Zhang and Wu]{Zhang2017b}
Zhang, L. and Wu, X.
\newblock Anti-discrimination learning: a causal modeling-based framework.
\newblock \emph{International Journal of Data Science and Analytics}, pp.\
  1--16, 2017.
\newblock ISSN 2364-4168.

\bibitem[{\v{Z}}liobait{\.{e}} \& Custers(2016){\v{Z}}liobait{\.{e}} and
  Custers]{Zliobaite2016}
{\v{Z}}liobait{\.{e}}, I. and Custers, B.
\newblock Using sensitive personal data may be necessary for avoiding
  discrimination in data-driven decision models.
\newblock \emph{Artificial Intelligence and Law}, 24\penalty0 (2):\penalty0
  183--201, 2016.

\end{thebibliography}
\end{spacing}

\begin{appendices}
\chapter{Additional experiments for \chapref{chap:blindjustice}}
\label{chap:icml:appendix}

\graphicspath{{figs/chap6/}}

\section{Results on remaining datasets}

Analogously to Figures~\ref{fig:ppercent_acc} and Figure~\ref{fig:ppercent_fair} we report the results on test accuracy as well as the mitigation of disparate impact for the Lagrangian multiplier method in Figure~\ref{fig:ppercent_rest_accs}.
In the Adult dataset we are able to mitigate disparate impact with slightly worse accuracy as compared to the baseline.
Note that the German dataset contains only 512 training and 200 test examples, which explains the discrete jumps in accuracy in minimal steps of $\nicefrac{1}{200} = 0.005$.
Hence, even though the Lagrangian multiplier technique here consistently removes disparate impact to a similar extent as the baseline, interpretations of results on such small datasets require great care.
For the much larger stop, question and frisk dataset we again observe the curious initial increase in accuracy similar to our observations for the Bank dataset.
In this dataset about $93\%$ of all samples have positive labels, which explains the near optimal accuracy when collapsing to always predict 1, which happens for the baseline as well for our method at a similar rate as $c$ decreases.

\begin{figure}
\centering
\includegraphics[width=\textwidth]{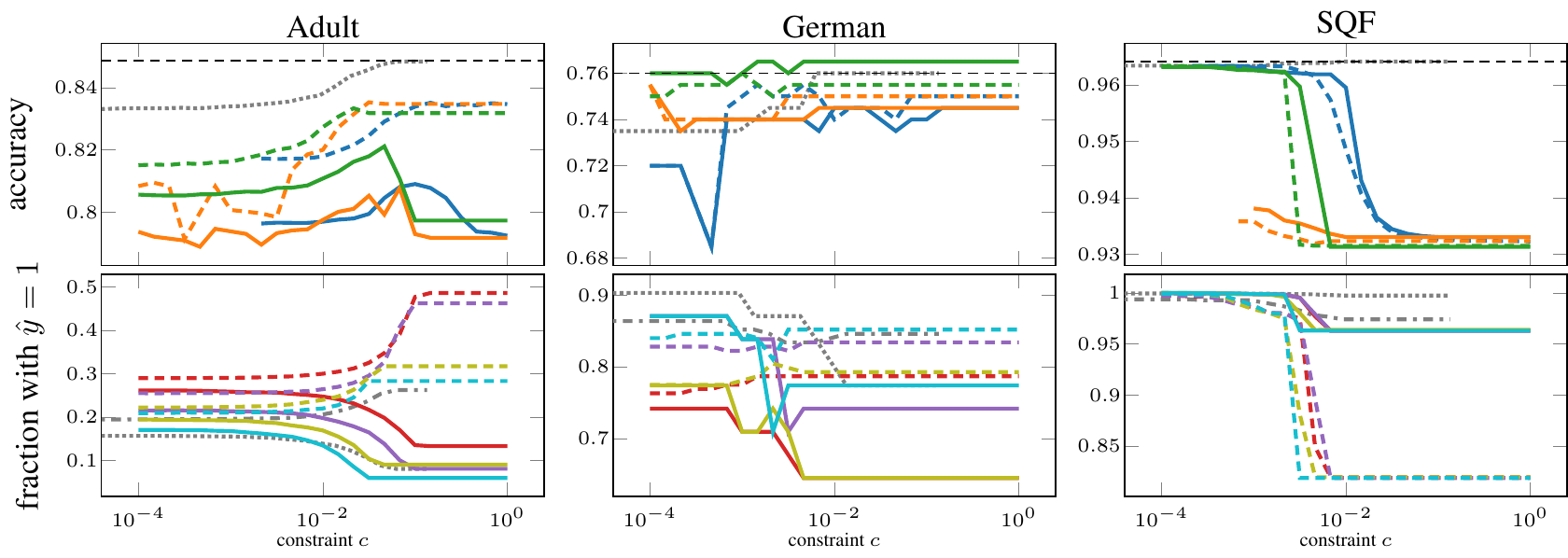}
\caption[Accuracy and fractions of positive outcomes across sensitive groups for different optimization settings and datasets]{\textbf{First row:} The color code is {\color{py-3-1} blue: iplb}, {\color{py-3-2} orange: projected}, {\color{py-3-3} green: Lagrange} with \emph{continuous} lines for no approximation and \emph{dashed} lines for piecewise linear approximation.
The gray dotted line is the baseline and the dashed black line marks unconstrained logistic regression.
\textbf{Second row:} \emph{Continuous/dotted} lines correspond to $z=0$ and \emph{dashed/dash-dotted} lines to $z=1$. The color code is
({\color{py-4-1}red: no approx. + float},
{\color{py-4-2}purple: no approx. + fixed},
{\color{py-4-3}yellow: pw linear + float},
{\color{py-4-4}turquoise: pw linear + fixed},
gray: baseline).}
\label{fig:ppercent_rest_accs}
\end{figure}

\begin{figure}
\centering
\includegraphics[width=\textwidth]{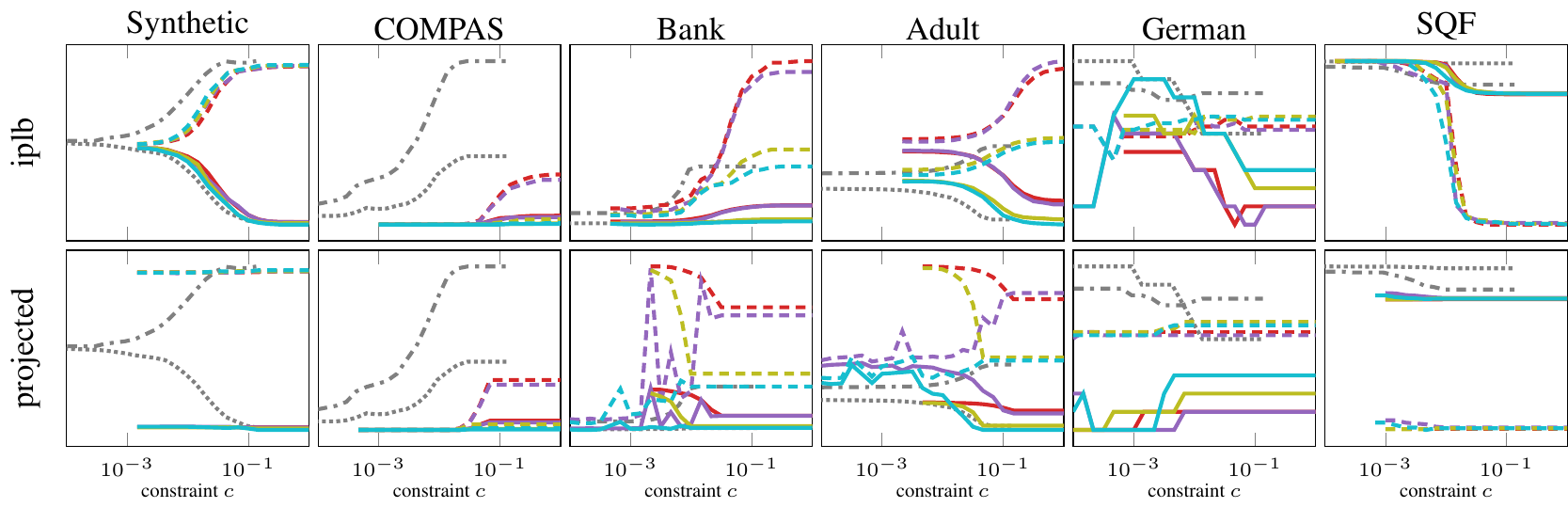}
\caption[Fractions of positive outcomes across sensitive groups for different optimization settings and datasets]{We plot the fraction of people with $z=0$ (\emph{continuous/dotted}) and with $z=1$ (\emph{dashed/dash-dotted}) who get assigned positive outcomes over the constraint $c$ for 6 different datasets. The different colors correspond to
({\color{py-4-1}red: no approximation + floats},
{\color{py-4-2}purple: no approximation + fixed-point},
{\color{py-4-3}yellow: piecewise linear + floats},
{\color{py-4-4}turquoise: piecewise linear + fixed-point},
gray: baseline).}
\label{fig:iplb_and_projected}
\end{figure}

\section{Disadvantages of other optimization methods}

In \secref{sec:icml:experiments} we suggest the Lagrangian multiplier technique for fair model training using fixed-point numbers.
Here we substantiate this suggestion with further empirical evidence.
Figure~\ref{fig:iplb_and_projected} shows analogous results to Figure~\ref{fig:ppercent_fair} and the second row of Figure~\ref{fig:ppercent_rest_accs}.
These plots reveal the shortcomings of the interior point logarithmic barrier and the projected gradient methods.

\paragraph{Interior Point Logarithmic Barrier method.}
While the interior point logarithmic barrier method does balance the fractions of people being assigned positive outcomes between the two different demographic groups when the constraint is tightened, it soon breaks down entirely due to overflow and underflow errors.
The number of failed runs was substantially higher than for the Lagrangian multiplier technique.
As explained by \citet{boydsbook}, when we increase the parameter~$t$ of the interior point logarithmic barrier method during training, the barrier becomes steeper, approaching the function
\begin{equation*}
I_{-}(x) =
\begin{cases}
    0 & \text{for } x \le 0 \eqc \\
    \infty & \text{for } x > 0 \eqp
  \end{cases}
\end{equation*}
From this it becomes obvious that when facing tight constraints, the gradients might change from almost zero to large values within a single update of the parameters~$\btheta$.
Moreover, iplb requires careful tuning and scheduling of~$t$.
Hence, the interior point logarithmic barrier method, while achieving good results over some domains, is not well suited for MPC.

\paragraph{Projected gradient method.}
In Figure~\ref{fig:iplb_and_projected}, we observe that the projected gradient method seems to fail in most cases, since it does not actually balance the fractions of positive outcomes across the sensitive groups.
There is a simple explanation why it can satisfy the constraint~$\F(\btheta) \le 0$ for the $p\%$-rule even with small~$\const$ and still retain near optimal accuracy.
Note that the accuracy only depends on the direction of~$\btheta$, i.e., it is invariant to arbitrary rescaling of~$\btheta$.
Since the constraint~$\F(\btheta) = |\A \btheta | - \const \le 0$ is always satisfied for $\btheta = 0$, dividing any $\btheta$ by a large enough factor will result in a classifier that achieves equal accuracy and satisfies the constraint (by continuity).
However, minimizing the loss in the original logistic regression optimization problem (or equivalently maximizing the likelihood), which is not invariant under rescaling of $\btheta$, counteracts shrinking $\btheta$ as it enforces high confidence of decisions, i.e., large~$\btheta$.
The projection method produces high accuracy classifiers with small weights that formally fulfill the fairness constraint, but do not properly mitigate disparate impact as measured by the true $p$\%-rule instead of the computational proxy.
It also often fails for small constraint values, as the projection matrix in eq.~\eqref{eq:projected} turns out to become near singular producing over- and underflow errors.

\chapter{Additional experiments for \chapref{chap:decisions}}
\label{chap:aistats:additional}

\graphicspath{{figs/chap7/}}

\begin{figure}
\centering
\def\figheight{3.5cm}
\includegraphics[width=\textwidth]{legend-synthetic-time.pdf}\\
\includegraphics[height=\figheight]{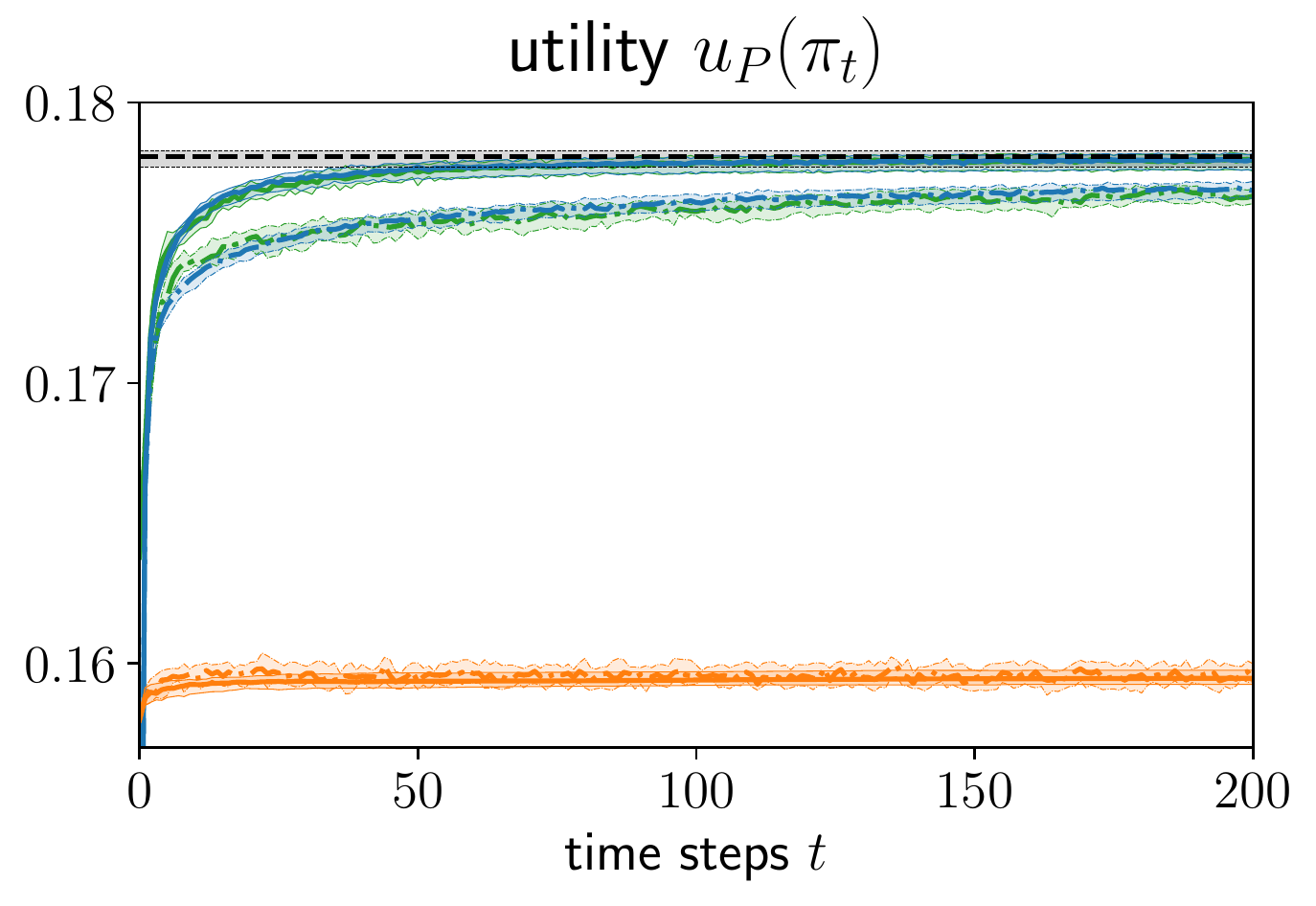}%
\hfill
\includegraphics[height=\figheight]{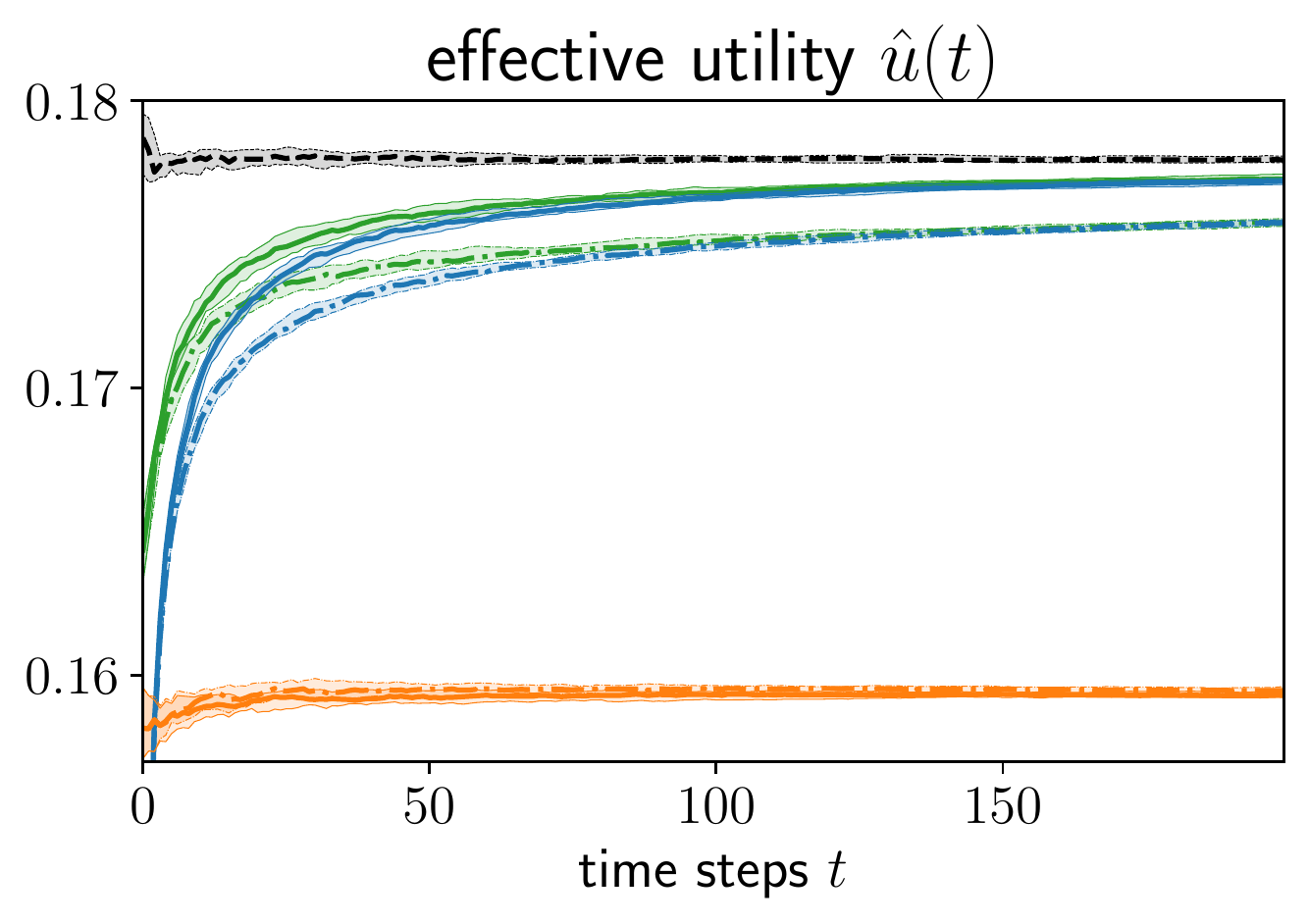}
\hfill
\includegraphics[height=\figheight]{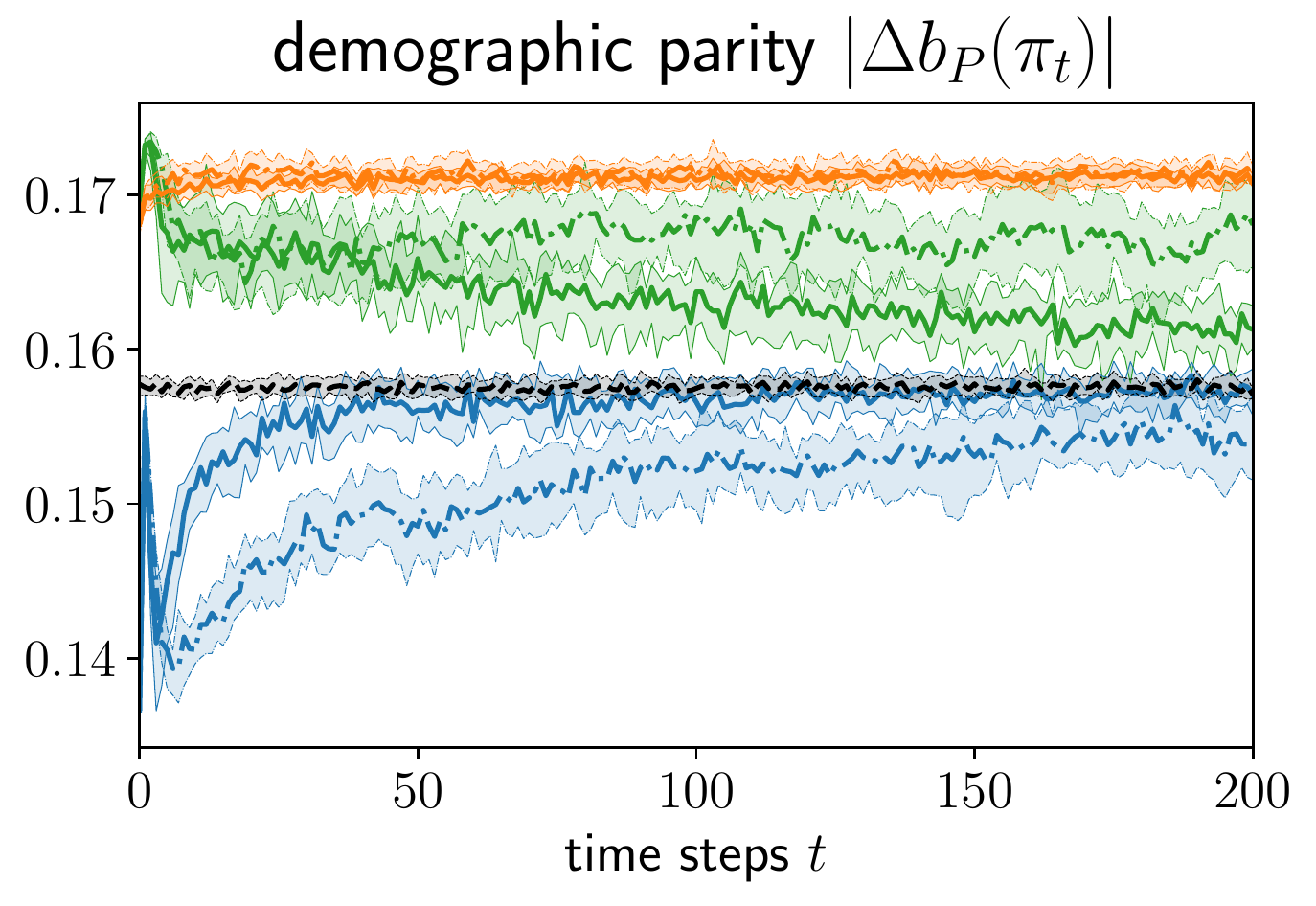}%
\\
\includegraphics[height=\figheight]{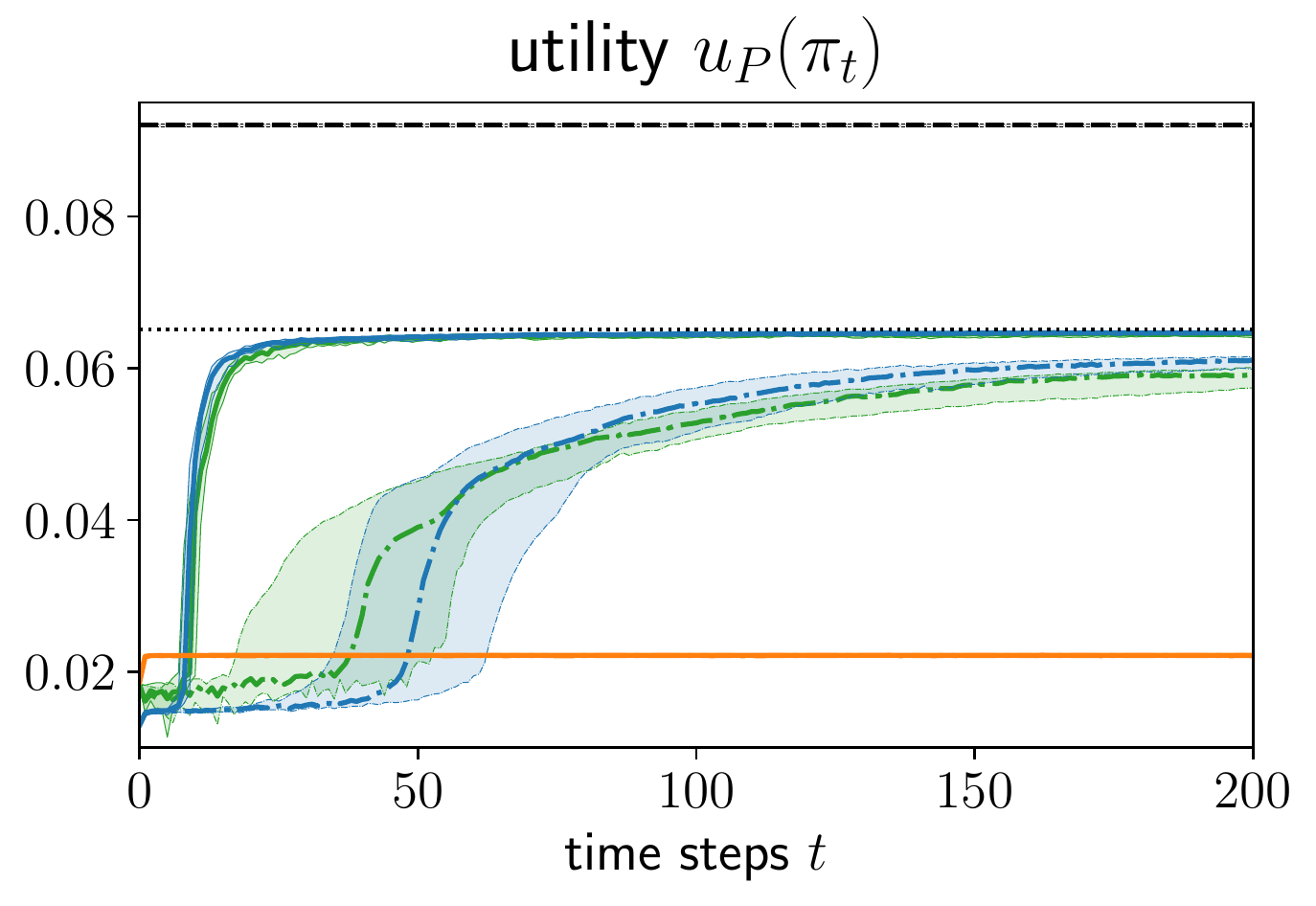}%
\hfill
\includegraphics[height=\figheight]{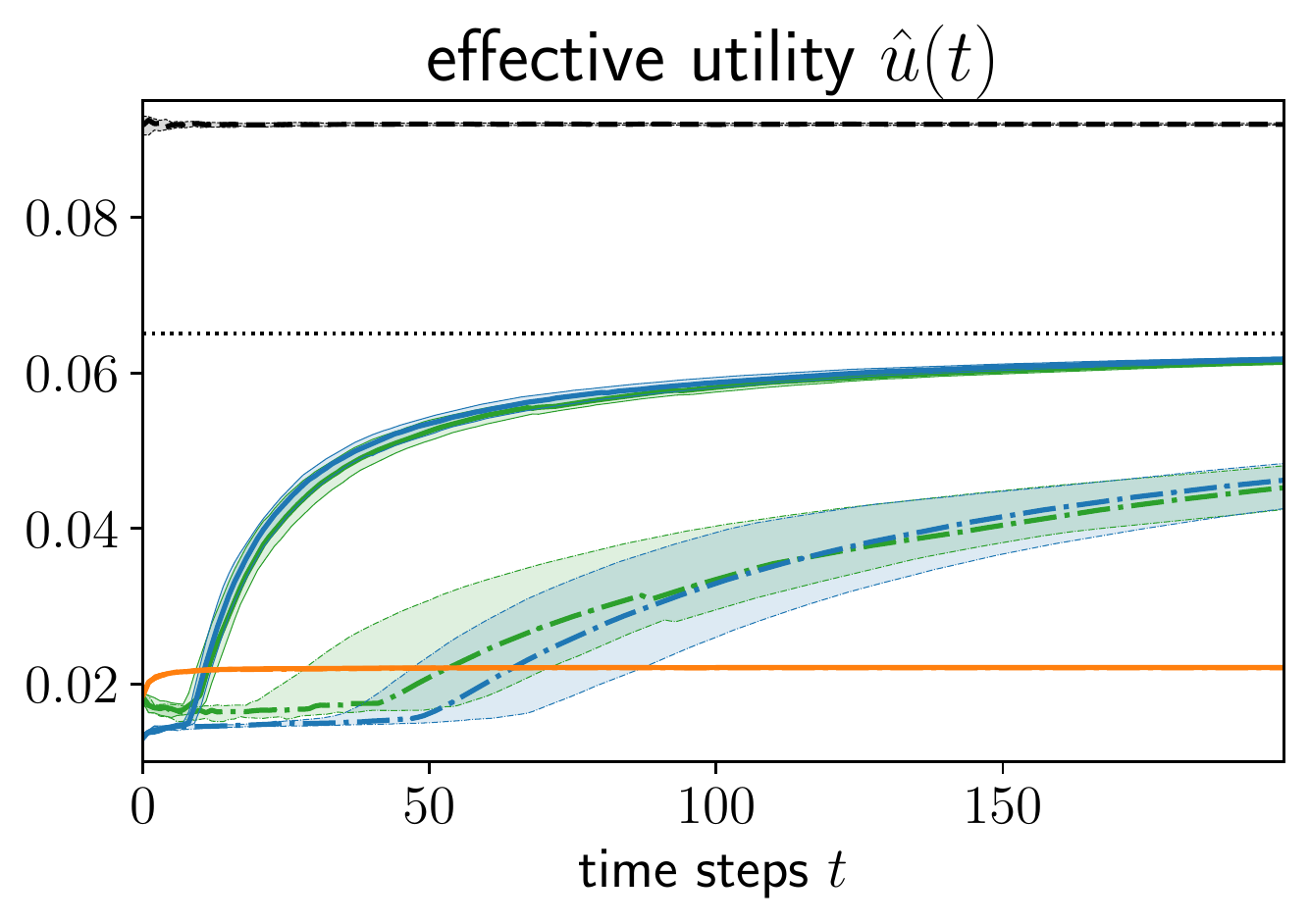}
\hfill
\includegraphics[height=\figheight]{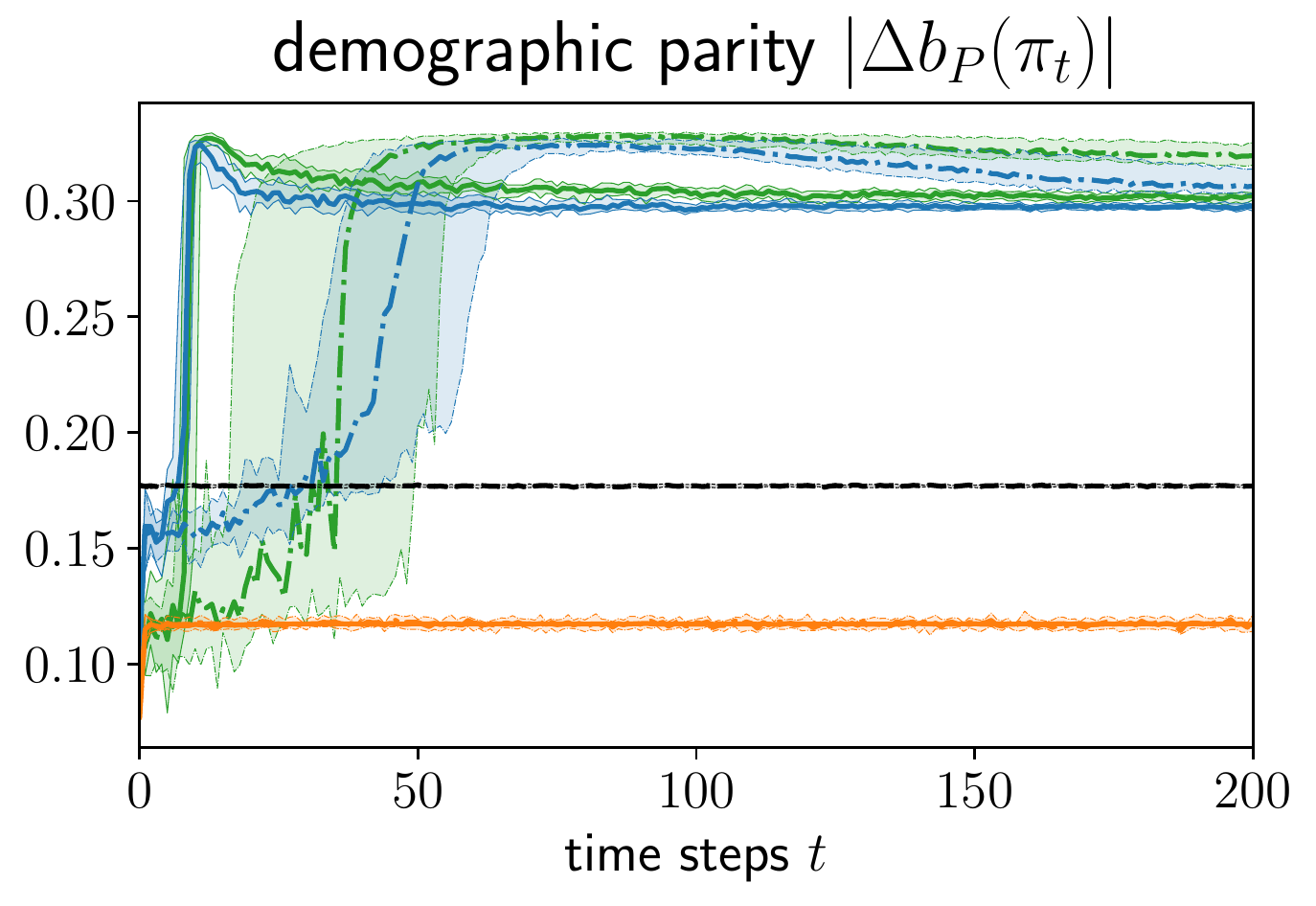}%
\caption[Utility and fairness of learning decisions with exploring policies in two synthetic settings]{Utility, effective utility, and demographic parity in the synthetic settings of Figure~\ref{fig:synthetic-setting}.}
\label{fig:results-synthetic-errs}
\end{figure}

\begin{figure}
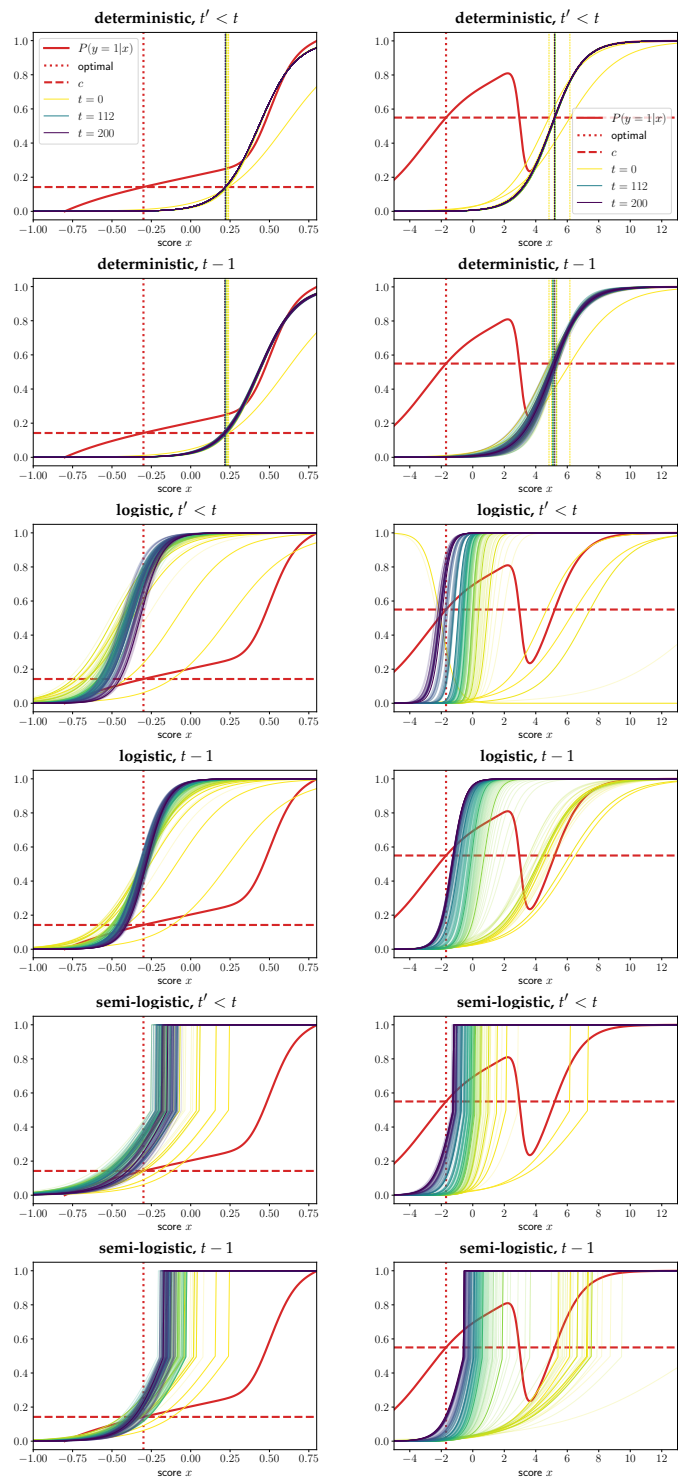

\captionsetup[subfigure]{labelformat=empty}
\captionsetup[subfloat]{farskip=-1mm,captionskip=-0.5mm}
\centering
\def\figwid{0.27}
\subfloat[\tiny\textbf{deterministic, $t'<t$}]{%
\includegraphics[width=\figwid \textwidth]{%
evolution_uncalibrated_det_all.pdf}}
\hspace{0.5cm}
\subfloat[\tiny\textbf{deterministic, $t'<t$}]{%
\includegraphics[width=\figwid \textwidth]{%
evolution_split_det_all.pdf}}
\\
\subfloat[\tiny\textbf{deterministic, $t-1$}]{%
\includegraphics[width=\figwid \textwidth]{%
evolution_uncalibrated_det_recent.pdf}}
\hspace{0.5cm}
\subfloat[\tiny\textbf{deterministic, $t-1$}]{%
\includegraphics[width=\figwid \textwidth]{%
evolution_split_det_recent.pdf}}
\\
\subfloat[\tiny\textbf{logistic, $t' < t$}]{%
\includegraphics[width=\figwid \textwidth]{%
evolution_uncalibrated_log_all.pdf}}
\hspace{0.5cm}
\subfloat[\tiny\textbf{logistic, $t' < t$}]{%
\includegraphics[width=\figwid \textwidth]{%
evolution_split_log_all.pdf}}
\\
\subfloat[\tiny\textbf{logistic, $t-1$}]{%
\includegraphics[width=\figwid \textwidth]{%
evolution_uncalibrated_log_recent.pdf}}
\hspace{0.5cm}
\subfloat[\tiny\textbf{logistic, $t-1$}]{%
\includegraphics[width=\figwid \textwidth]{%
evolution_split_log_recent.pdf}}
\\
\subfloat[\tiny\textbf{semi-logistic, $t'<t$}]{%
\includegraphics[width=\figwid \textwidth]{%
evolution_uncalibrated_semilog_all.pdf}}
\hspace{0.5cm}
\subfloat[\tiny\textbf{semi-logistic, $t'<t$}]{%
\includegraphics[width=\figwid \textwidth]{%
evolution_split_semilog_all.pdf}}
\\
\subfloat[\tiny\textbf{semi-logistic, $t-1$}]{%
\includegraphics[width=\figwid \textwidth]{%
evolution_uncalibrated_semilog_recent.pdf}}
\hspace{0.5cm}
\subfloat[\tiny\textbf{semi-logistic, $t-1$}]{%
\includegraphics[width=\figwid \textwidth]{%
evolution_split_semilog_recent.pdf}}
\caption[Learned predictive models for deterministic threshold rules and learned decision rules for the (semi-)logistic policies]{Learned predictive models for deterministic threshold rules and learned decision rules for the (semi-)logistic policies.
The columns correspond to the two synthetic settings.
We overlay the ground truth distribution $\dP(Y=1\given x)$ (red line), cost parameter $c$ (dashed, red), and optimal single decision boundary in $x$ within our model class (dotted, red).
We describe the plots in detail in the text.}
\label{fig:results_synthetic_evolution}
\end{figure}

\section{Experiments on synthetic data}

\xhdr{Setup}
The precise setup for the two different synthetic settings, illustrated in Figure~\ref{fig:synthetic-setting}, is as follows.
The only feature $x$ is a scalar score and $z \sim \mathrm{Ber}(0.5)$.
In the first setting, $x$ is sampled from a normal distribution $\mathcal{N}(\mu = 0.5 - z, \sigma = 1)$ truncated to $x \in [-0.8, 0.8]$, and the conditional probability $\dP(Y \given x)$ is strictly monotonic in the score and does not explicitly depend on $s$.
As a result, for any $c$, there exists a single decision boundary for the score that results in the optimal policy, which is contained in the class of logistic policies.
Note, however, that the score is not well calibrated, i.e., $\dP(Y \given x)$ is not directly proportional to~$x$.

In the second setting, $x \sim \mathcal{N}(\mu = 3 (0.5 - z), \sigma = 3.5)$.
Here, the conditional probability $\dP(Y \given x)$ crosses the cost threshold $c$ multiple times, resulting in two disjoint intervals of scores for which the optimal decision is $d=1$ (green areas).
Consequently, the optimal policy cannot be implemented by a deterministic threshold rule based on a logistic predictive model.
We show the best achievable single decision threshold in Figure~\ref{fig:synthetic-setting}.

\xhdr{Repeated figure}
First, in Figure~\ref{fig:results-synthetic-errs} we again show the contents of Figure~\ref{fig:results-synthetic} in the main text, but added effective utility and also show shaded regions for the 25th and 75th percentile over 30 runs.

\xhdr{Evolution of policies}
In Figure~\ref{fig:results_synthetic_evolution} we show for a representative run at $\lambda = 0$ how the different policies evolve in the two synthetic settings over time.
The two columns correspond to the two different synthetic settings.
For all policies, we show snapshots at a fixed number of logarithmically spaced time steps between $t=0$ and $t=200$.
For deterministic threshold rules, we show the logistic function of the underlying predictive model.
The vertical dashed line corresponds to the decision boundary in $x$.
For the logistic and semi-logistic policies, the lines correspond to $\pi_t(D=1 \given x)$, i.e., to the probability of giving a positive decision for a given input $x$.
Note that the semi-logistic policies have a discontinuity, because we do not randomize when the model believes $d = 1$ is a favorable decision with more than 50\% certainty.
For reference, we also show the true conditional distribution, the cost parameter, as well as the best achievable single decision boundary.

In the first setting, the exploring policies locate the optimal decision boundary, whereas the deterministic threshold rules, which are based on learned predictive models, do not, even though $\dP(Y = 1\given x)$ is monotonic in $x$ and has a sigmoidal shape.
The predictive models focus on fitting the rightmost part of the conditional well, but ignore the left region, from which they never receive data.

In the second setting, our methods explore more and eventually take positive decisions for $x$ right of the vertical dotted line in Figure~\ref{fig:synthetic-setting}, which is indeed the best achievable single threshold policy.
In contrast, non-exploring deterministic threshold rules again suffer from the same issue as in the first setting and converge to a suboptimal threshold at $x\approx 5$.
They ignore the left green region in Figure~\ref{fig:synthetic-setting} and do not overcome the dip of $\dP(Y=1\given x)$ below $c$, because they never receive data for $x \le 4$.

\begin{figure}
\centering
\def\figheight{3.5cm}
\includegraphics[width=\textwidth]{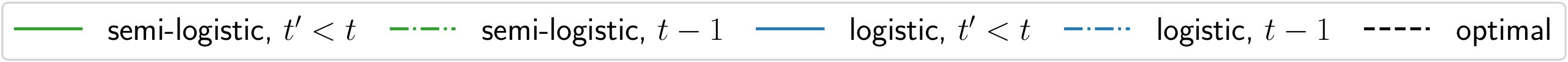}\\
\includegraphics[height=\figheight]{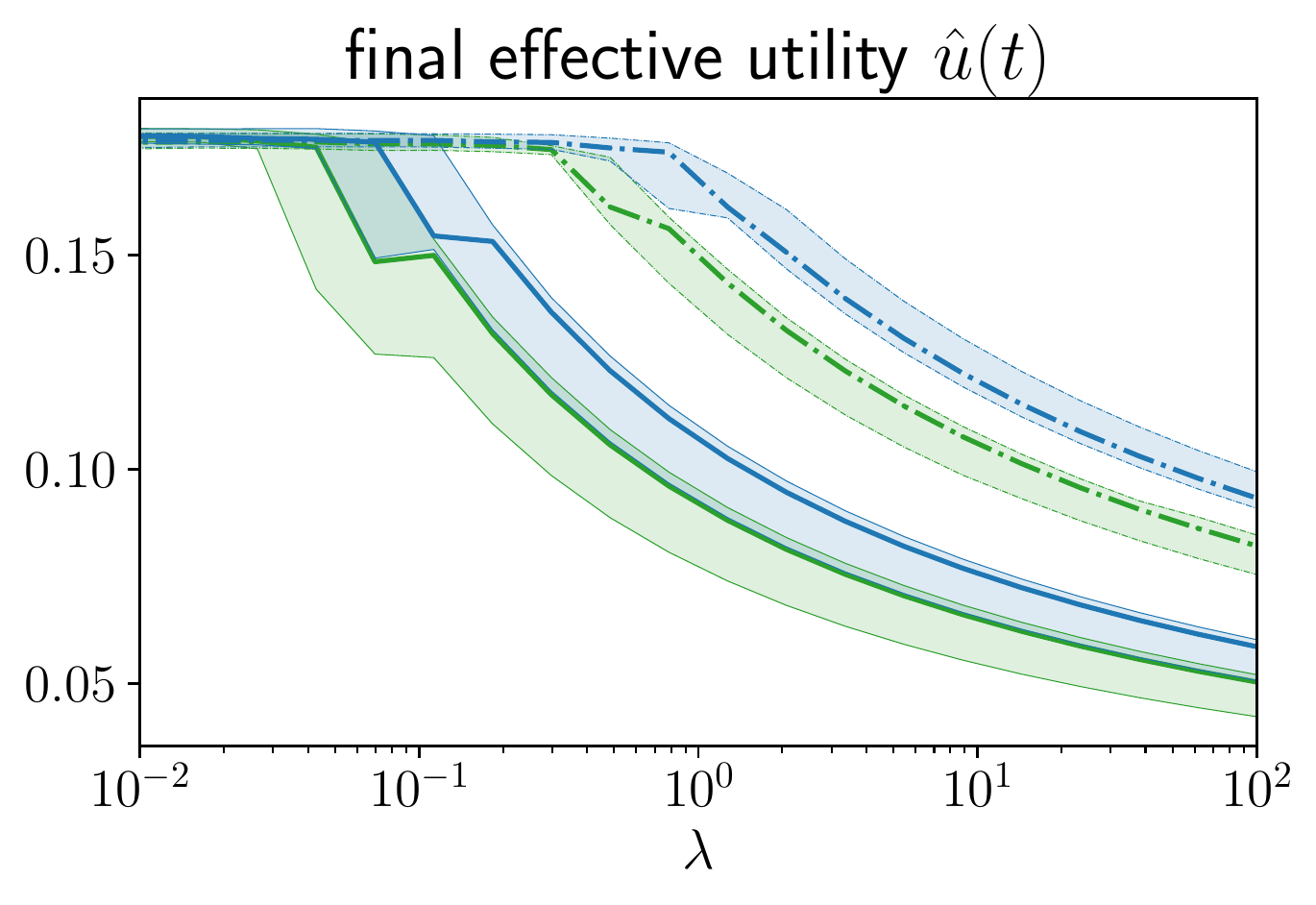}%
\hfill
\includegraphics[height=\figheight]{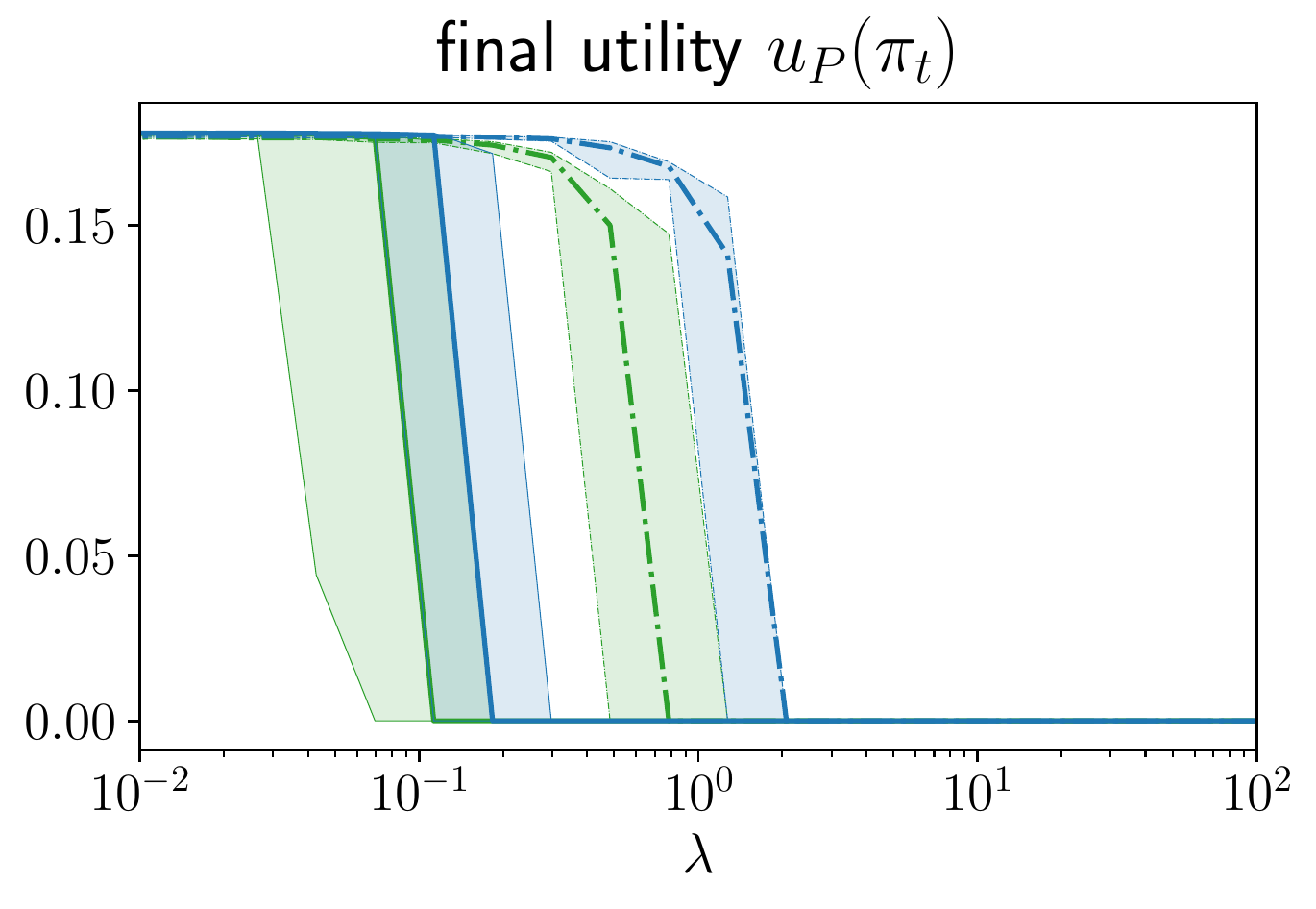}%
\hfill
\includegraphics[height=\figheight]{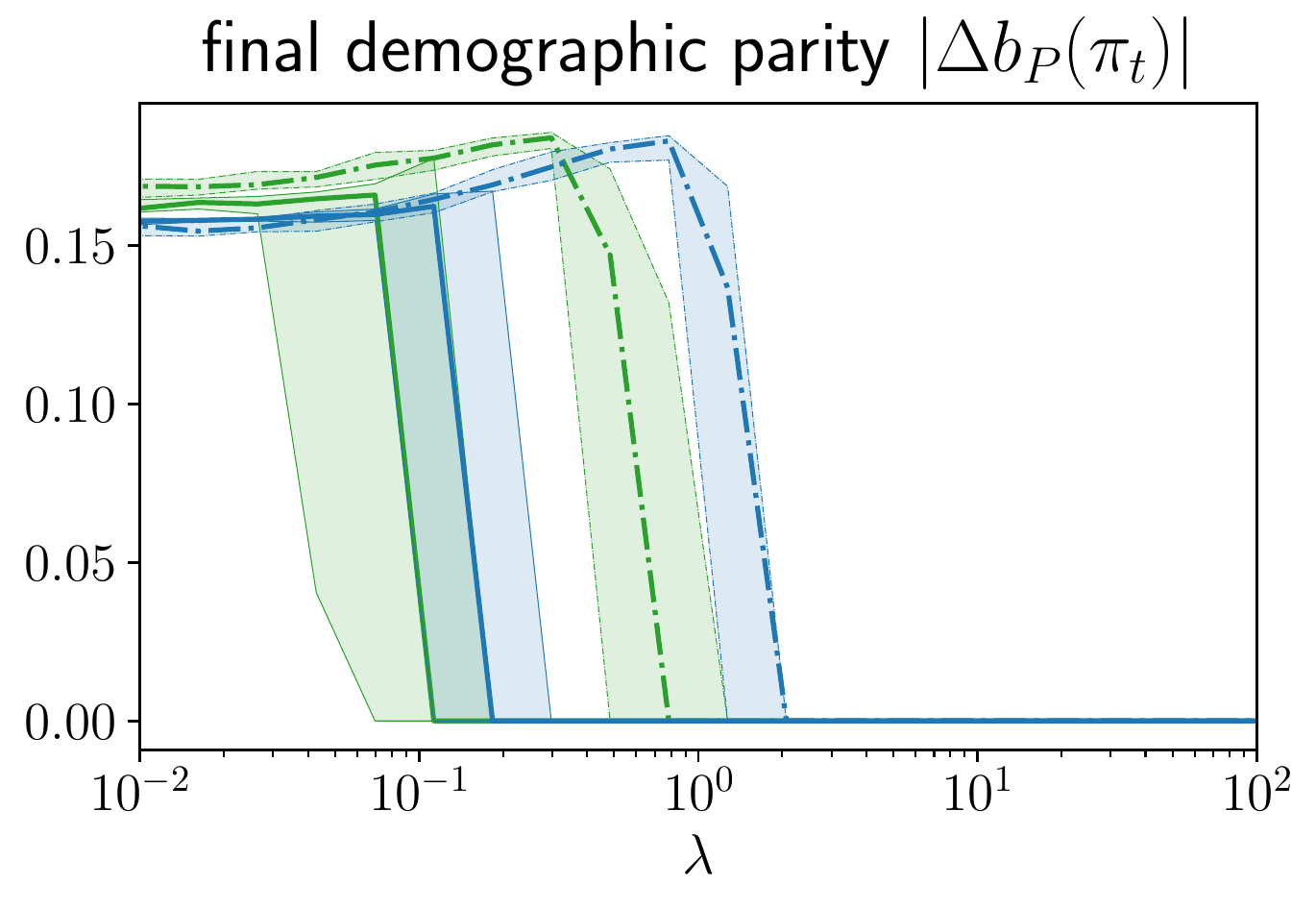}%
\\
\includegraphics[height=\figheight]{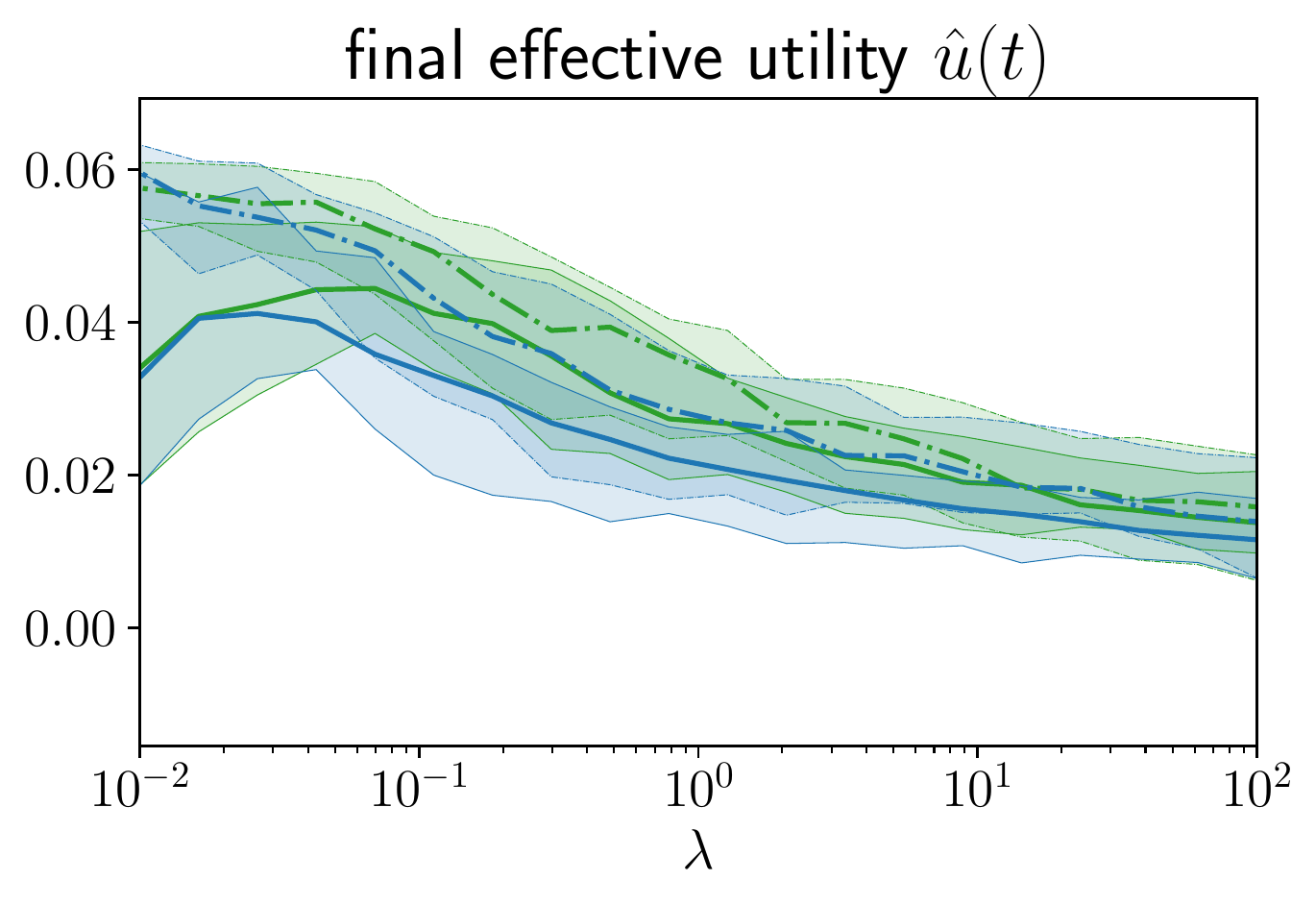}%
\hfill
\includegraphics[height=\figheight]{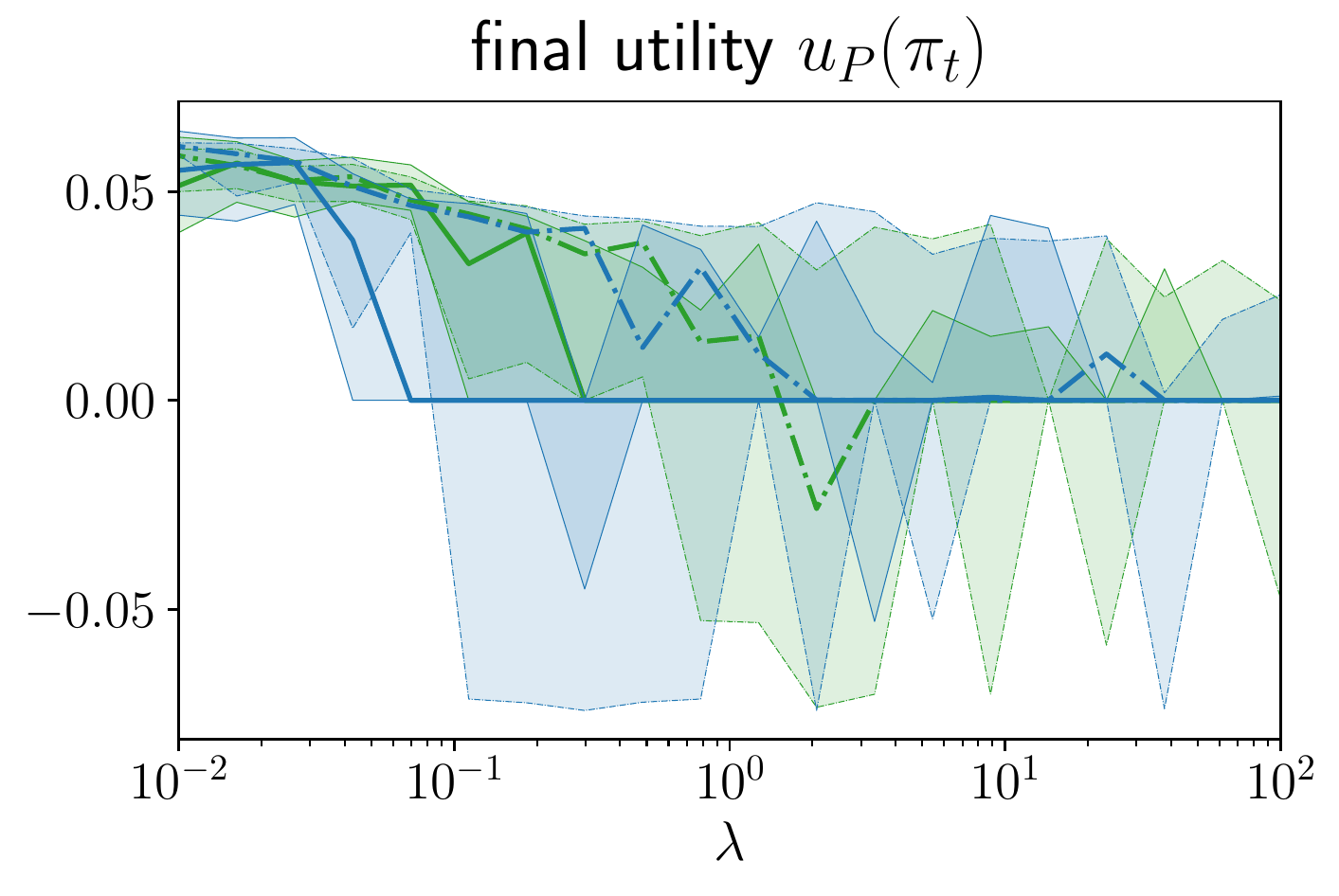}%
\hfill
\includegraphics[height=\figheight]{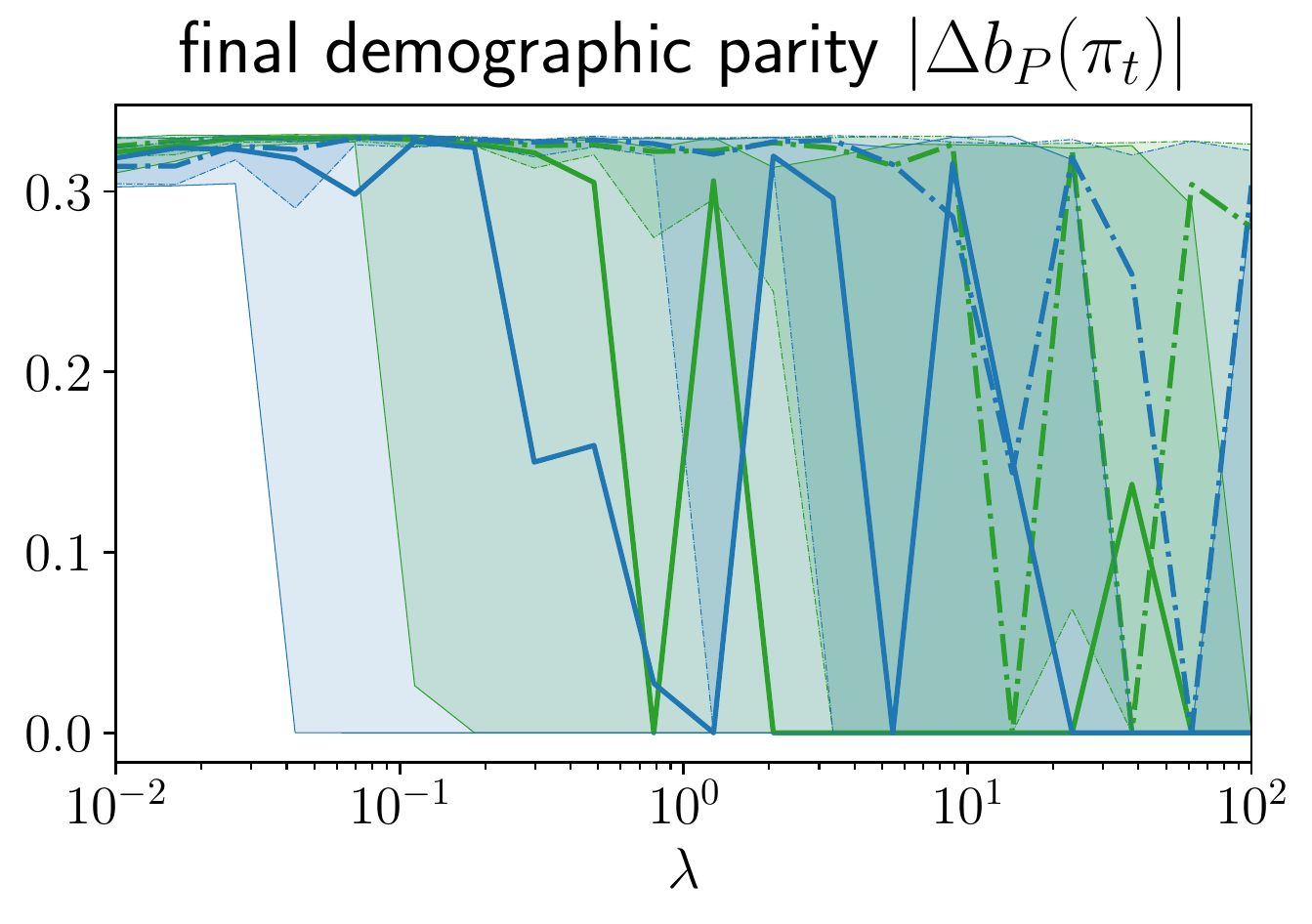}%
\caption[Utility and fairness of learning decisions with exploring policies in two synthetic settings when constraining demographic parity]{We show utility, effective utility, and demographic parity (columns) at the final time step $t=200$ as a function of $\lambda$ where we constrain demographic parity ($f(d, y) = d$).
The first row corresponds to the first setting and the second row corresponds to the second setting.}
\label{fig:results-synthetic-final-dp}
\end{figure}

\xhdr{Adding fairness constraints}
Figure~\ref{fig:results-synthetic-final-dp} shows how all metrics at the final time step $t = 200$ evolve as $\lambda$ is increased over the range $[10^{-0.5}, 10^4]$.
We use the benefit function for demographic parity in the fairness constraint, i.e., $f(d, y) = d$.
The first row corresponds to the first setting and the second row corresponds to the second setting.
In both cases, our approach achieves perfect fairness for sufficiently large $\lambda$ at the expected cost of a drop in (effective) utility.

\section{Experiments on real data}

\begin{figure}
\centering
\def\figheight{3.4cm}
\includegraphics[width=\textwidth]{legend-compas-time.pdf}\\
\includegraphics[height=\figheight]{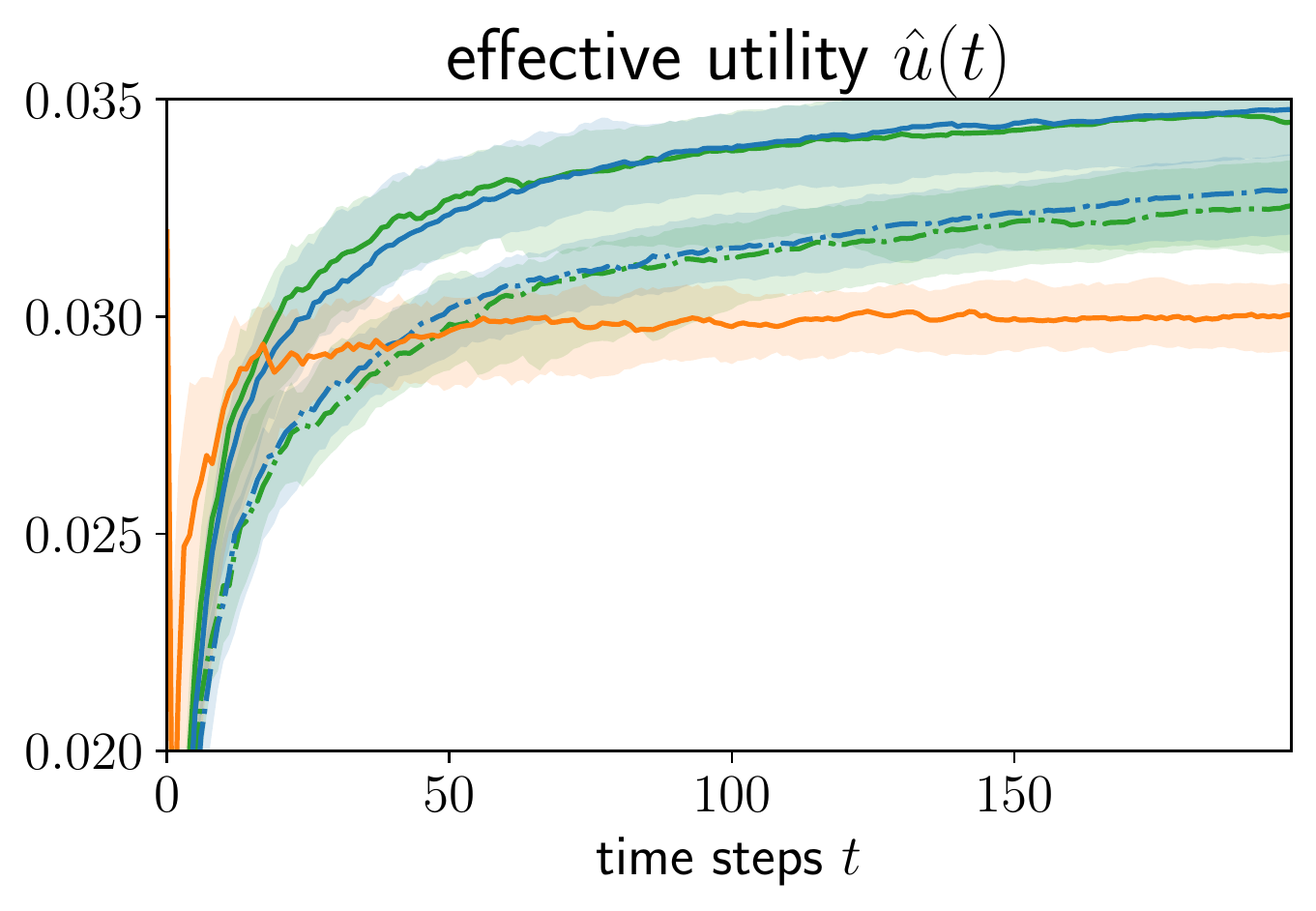}
\hfill
\includegraphics[height=\figheight]{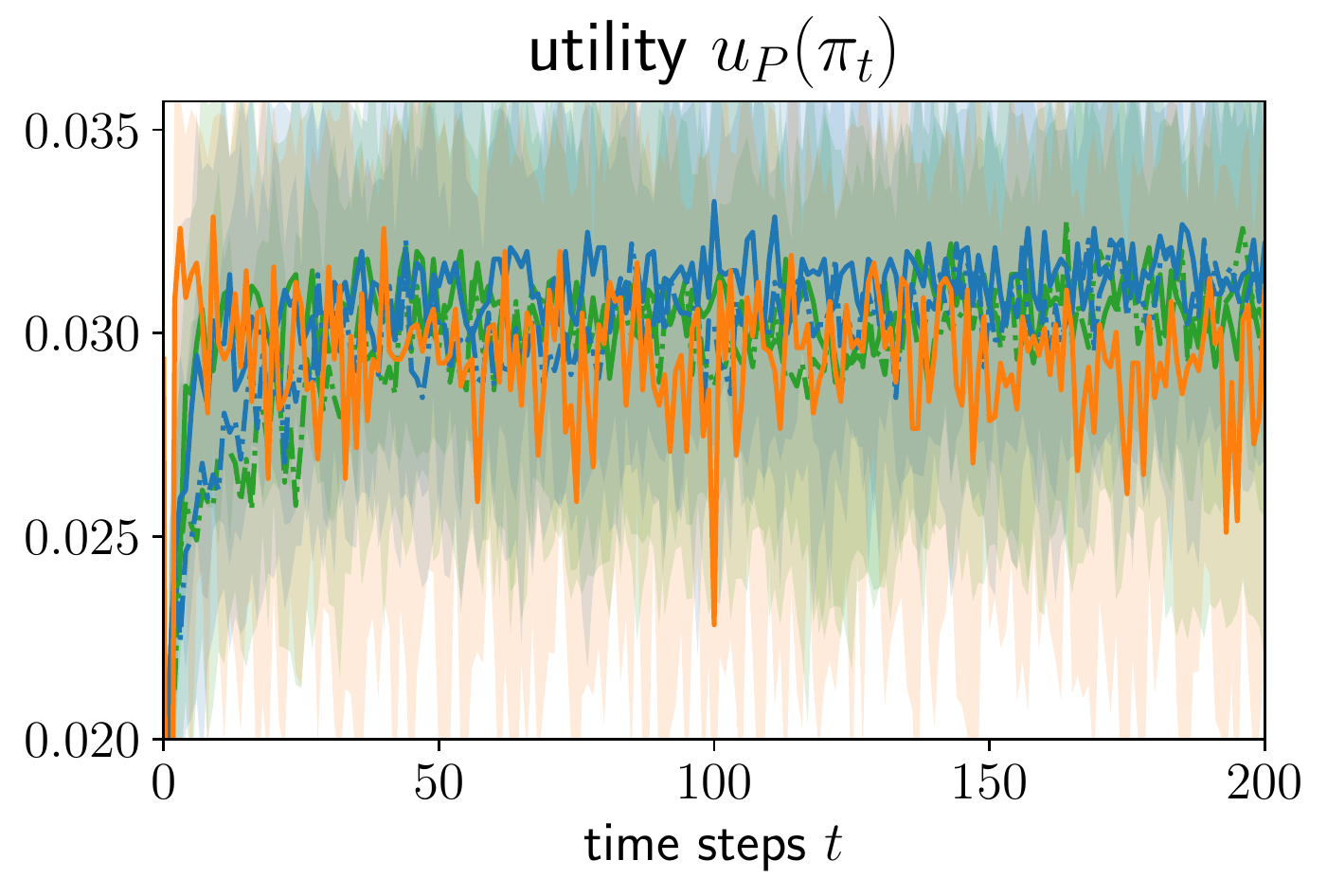}
\hfill
\includegraphics[height=\figheight]{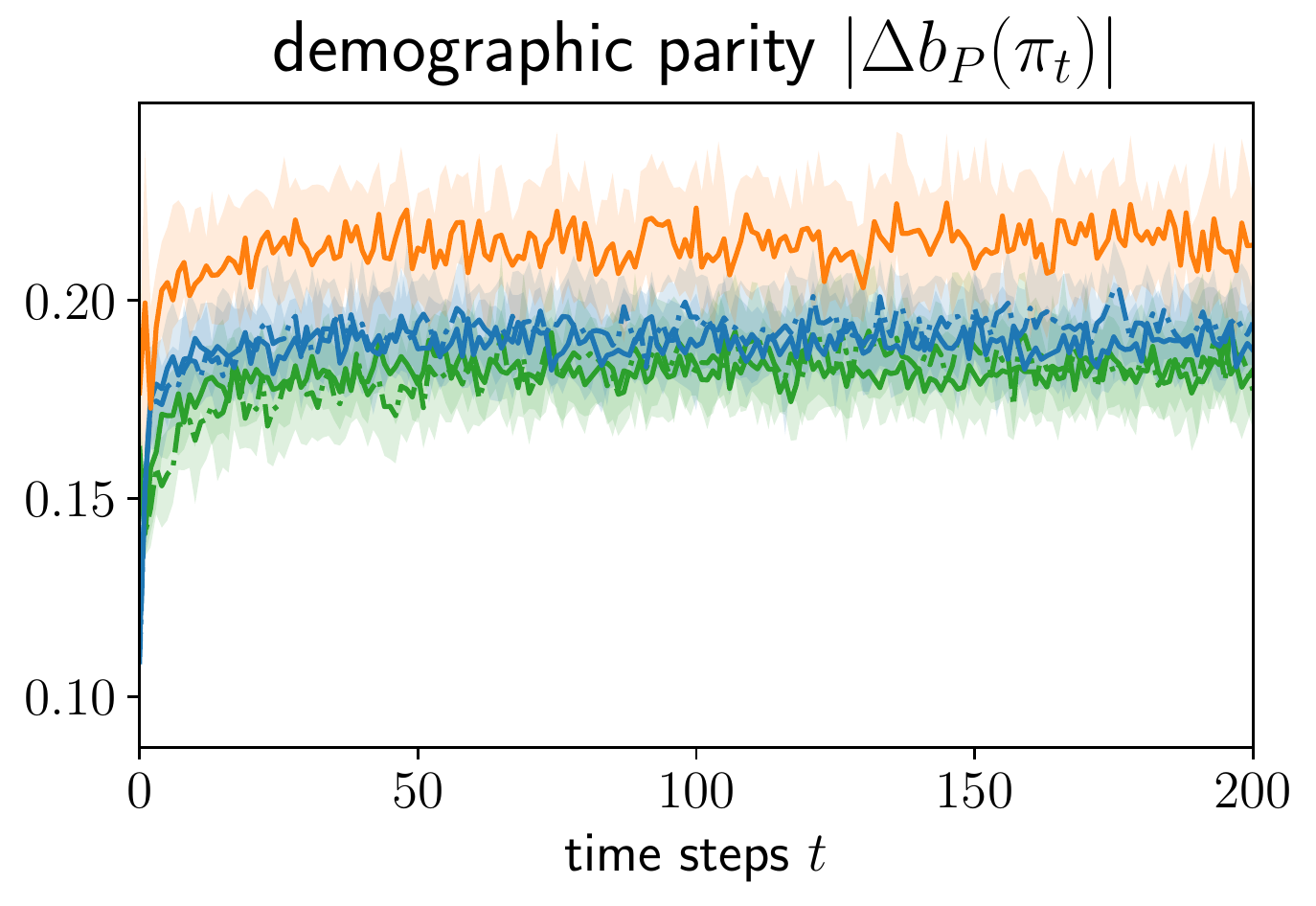}
\caption[Utility and fairness of learning decisions with exploring policies on COMPAS data]{Performance on COMPAS data.
We show the training progress for $\lambda = 0$, where all metrics are estimated on the held-out dataset.}
\label{fig:results-real-errs}
\end{figure}

First, in Figure~\ref{fig:results-real-errs} we again show the contents of Figure~\ref{fig:results-real-time} in the main text with shaded regions for the 25th and 75th percentile over 30 runs.
Analogously to Figure~\ref{fig:results-synthetic-final-dp}, we show the effect of enforcing fairness constraints in the COMPAS dataset in Figure~\ref{fig:results-real-final}.
The overall trends are similar to the results we have observed in the synthetic settings, reinforcing the applicability of our approach on real-world data.

\begin{figure}
\centering
\def\figheight{3.4cm}
\includegraphics[width=\textwidth]{legend-compas-lambda.pdf}\\
\includegraphics[height=\figheight]{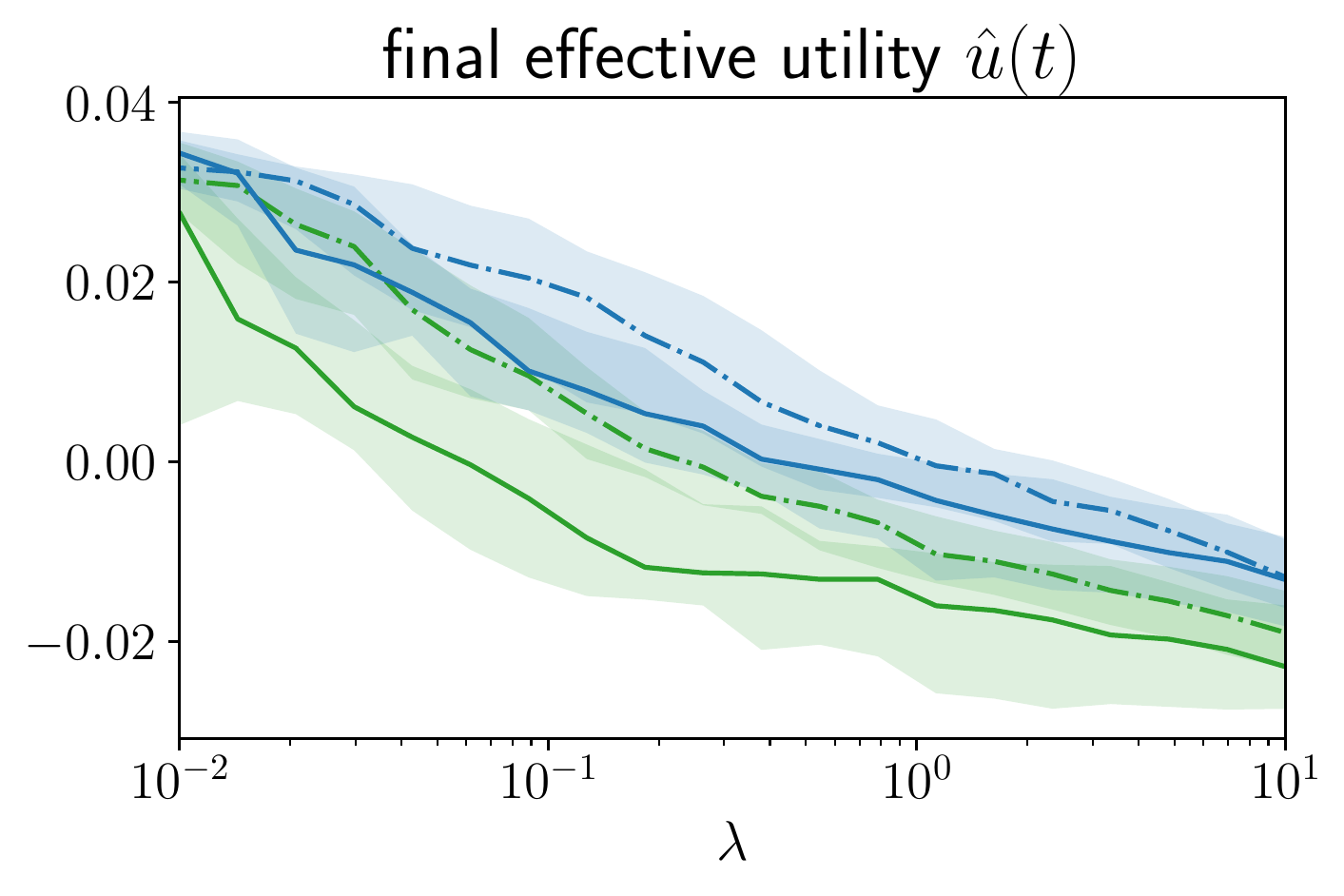}%
\hfill
\includegraphics[height=\figheight]{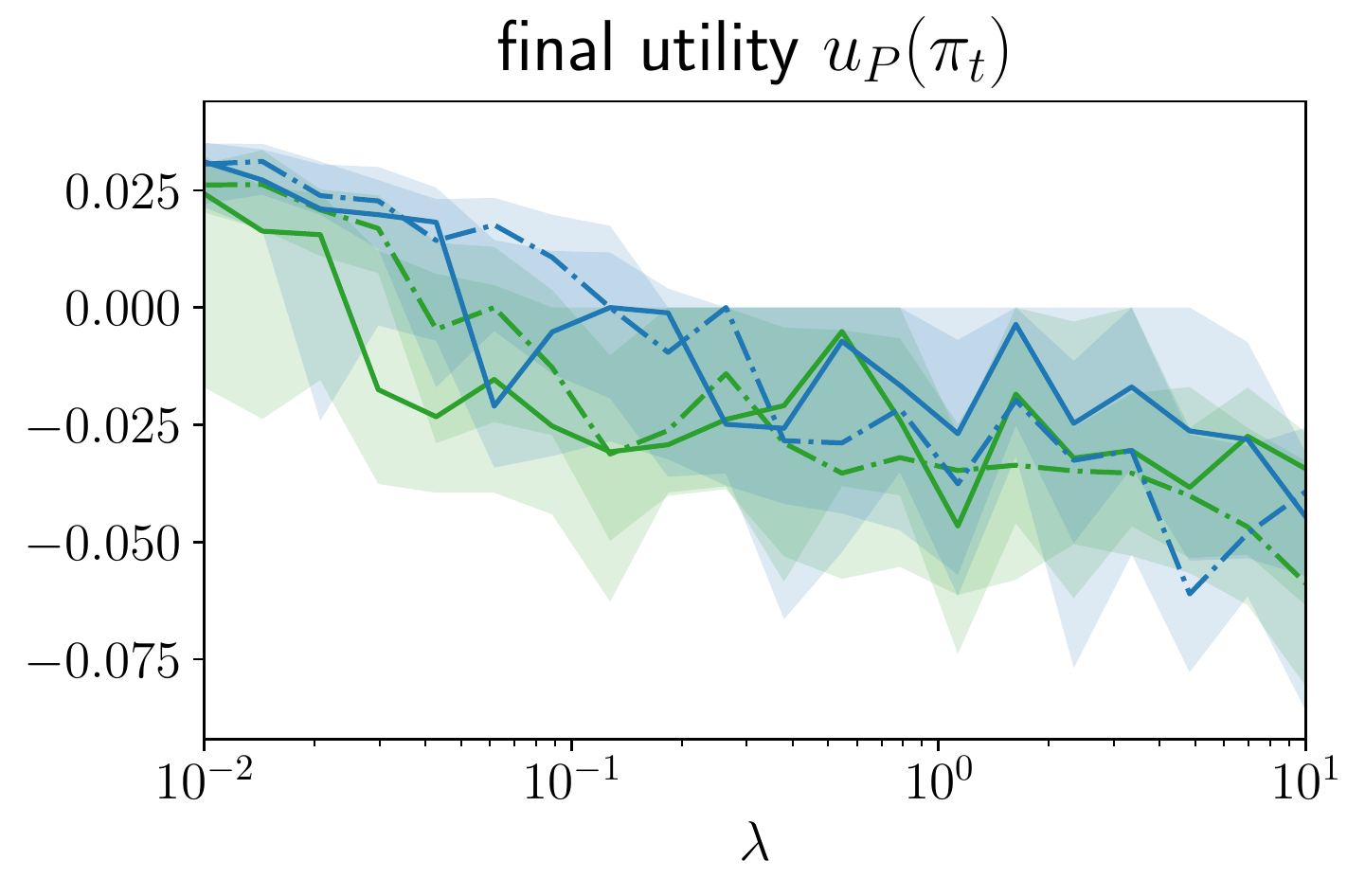}%
\hfill
\includegraphics[height=\figheight]{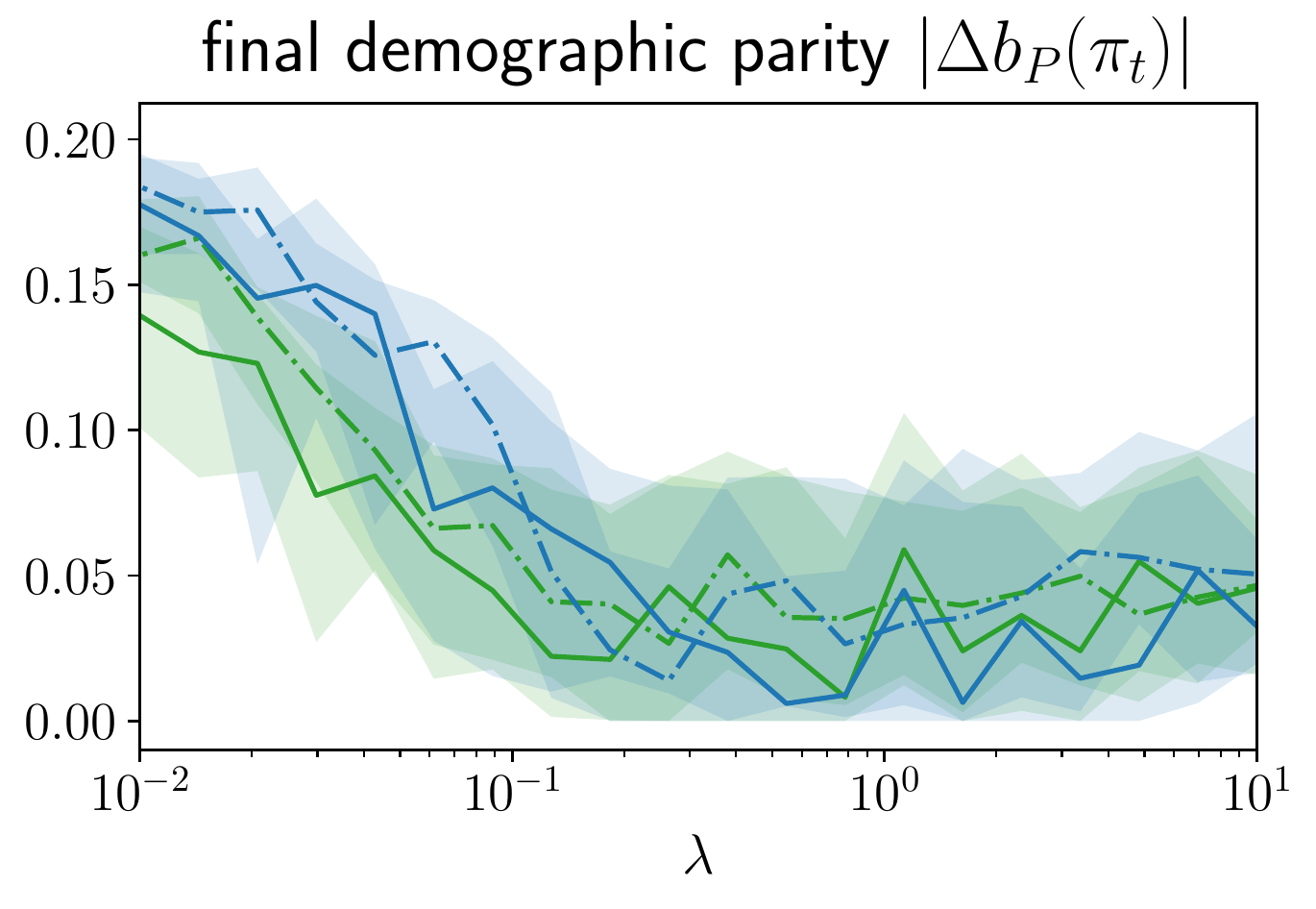}%
\caption[Utility and fairness of learning decisions with exploring policies on COMPAS data when enforcing demographic parity]{We show (effective) utility and demographic parity (columns) for the COMPAS dataset at the final time step $t=200$ estimated on the held-out dataset as a function of $\lambda$.}
\label{fig:results-real-final}
\end{figure}

\section{Parameter settings}
\label{sec:app:parameters}

The parameters used for the different experiments have been found by few iterations of manual trial.
The number of time steps is $T = 200$ for all datasets.
For the first synthetic setting we used $\alpha = 1$, $B = 256$, $M = 128$, $N = B \cdot M$, and $c \approx 0.142$ (chosen such that the optimal decision boundary is at $x = -0.3$).
For the second synthetic setting we used $\alpha = 0.5$, $B = 128$, $M = 32$, $N = B \cdot M$, and $c = 0.55$.
Here we also decay the learning rate by a factor of $0.8$ every 30 time steps.
For the COMPAS dataset we used $\alpha = 0.1$, $B = 64$, $M = 40\cdot B$, $N = B^2$, and $c = 0.6$.
While the initialization for the synthetic settings can be seen in Figure~\ref{fig:results_synthetic_evolution}, for COMPAS we trained a logistic predictive model on 500 i.i.d.\ examples for initializing policies and predictive models.

\end{appendices}

\end{document}